\crefname{ineq}{Inequality}{Inequality}
\crefname{sub}{Subsection}{Subsection}
\crefname{sdp}{SDP}{SDP}
\crefname{lp}{LP}{LP}
\newtheorem{theorem}{Theorem}[section]
\newtheorem{question}[theorem]{Question}
\newtheorem{lemma}[theorem]{Lemma}
\newtheorem{informal theorem}[theorem]{Theorem (informal statement)}
\newtheorem{proposition}[theorem]{Proposition}
\newtheorem{claim}[theorem]{Claim}
\newtheorem{fact}[theorem]{Fact}
\newtheorem{remark}[theorem]{Remark}
\newtheorem{infproposition}[theorem]{(Informal) Proposition}
\newtheorem{definition}[theorem]{Definition}
\newcommand{\eqdef}{\coloneqq}
\crefname{question}{question}{questions}
\def\colorful{0}
\newcommand{\new}[1]{{\color{red} #1}}
\newcommand{\newblue}[1]{{\color{blue} #1}}
\newcommand{\new}[1]{{#1}}
\newcommand{\newblue}[1]{{#1}}
\newcommand{\wperp}{(\vec w^*)^{\perp_\vec w}}
\newcommand\snorm[2]{\left\| #2 \right\|_{#1}}
\renewcommand\vec[1]{\mathbf{#1}}
\DeclareMathOperator*{\pr}{\mathbf{Pr}}
\DeclareMathOperator*{\E}{\mathbf{E}}
\newcommand{\proj}{\mathrm{proj}}
\def\d{\mathrm{d}}
\newcommand{\normal}{\mathcal{N}}
\DeclareMathOperator*{\argmin}{argmin}
\newcommand{\sample}[2]{#1^{(#2)}}
\newcommand{\tr}{\mathrm{tr}}
\newcommand{\bx}{\mathbf{x}}
\newcommand{\by}{\mathbf{y}}
\newcommand{\bw}{\mathbf{w}}
\newcommand{\p}{\mathbf{P}}
\newcommand{\R}{\mathbb{R}}
\newcommand{\Z}{\mathbb{Z}}
\newcommand{\eps}{\epsilon}
\newcommand{\poly}{\mathrm{poly}}
\newcommand{\var}{\mathbf{Var}}
\newcommand{\sgn}{\mathrm{sign}}
\newcommand{\sign}{\mathrm{sign}}
\newcommand{\calL}{{\cal L}}
\newcommand{\opt}{\mathrm{OPT}}
\newcommand{\D}{\mathcal{D}}
\newcommand{\Ind}{\mathds{1}}
\newcommand{\1}{\Ind}
\newcommand{\matr}{\vec}
\newcommand{\littlesum}{\mathop{\textstyle \sum}}
\newcommand{\wt}{\widetilde}
\newcommand{\wh}{\widehat}
\newcommand{\dotp}[2]{ #1 \cdot #2 }
\newcommand{\wstar}{\bw^{\ast}}
\newcommand{\x}{\vec x}
\newcommand{\ith}{^{(i)}}
\title{Learning {\em General} Halfspaces with {\em General} Massart Noise\\ 
under the Gaussian Distribution}
\author{
Ilias Diakonikolas\thanks{Supported by NSF Medium Award CCF-2107079,
NSF Award CCF-1652862 (CAREER), a Sloan Research Fellowship, and
a DARPA Learning with Less Labels (LwLL) grant.}\\
UW Madison\\
{\tt ilias@cs.wisc.edu}\\
\and
Daniel M. Kane\thanks{Supported by NSF Medium Award CCF-2107547,
NSF Award CCF-1553288 (CAREER), and a Sloan Research Fellowship.}\\
UC San-Diego \\
{\tt dakane@ucsd.edu}\\
\and
Vasilis Kontonis\\
UW Madison\\
{\tt kontonis@wisc.edu }\\
\and
Christos Tzamos\\
UW Madison\\
{\tt tzamos@wisc.edu}
\and
Nikos Zarifis\thanks{Supported in part by NSF Award CCF-1652862 (CAREER) and a DARPA Learning with Less Labels (LwLL) grant.}\\
UW Madison\\
{\tt zarifis@wisc.edu}\\
}
\begin{document}

\maketitle
\begin{abstract}
We study the problem of PAC learning halfspaces on $\R^d$ with Massart noise 
under the Gaussian distribution. In the Massart model, an adversary is allowed to
flip the label of each point $\x$ with unknown probability $\eta(\x) \leq \eta$, for
some parameter $\eta \in [0,1/2]$.  The goal of the learner is to output a
hypothesis with misclassification error of $\opt + \eps$, where $\opt$ is the
error of the target halfspace.  This problem \new{had been previously studied} under two assumptions: 
(i) the target halfspace is {\em homogeneous} (i.e.,
the separating hyperplane goes through the origin), and (ii) the parameter $\eta$ is
{\em strictly} smaller than $1/2$. \new{Prior to this work, 
no nontrivial bounds were known for the general case when either of these
assumptions is removed. Here we study the general problem
and establish the following results:} 
\begin{itemize}[leftmargin = *]

\item For $\eta <1/2$, we give a learning algorithm for general halfspaces
with sample and computational complexity $d^{O_{\eta}(\log(1/\gamma))} \poly(1/\eps)$,
where $\gamma \eqdef \max\{\eps,  \min\{\pr[f(\bx) = 1],  \pr[f(\bx) = -1]\} \}$ \new{is the ``bias'' 
of the target halfspace $f$}. \new{Prior efficient algorithms could only handle the special 
case of $\gamma = 1/2$. Interestingly, we also establish a qualitatively matching 
lower bound of $d^{\Omega(\log(1/\gamma))}$ on the complexity of any Statistical Query (SQ) algorithm of the problem.}

\item For $\eta = 1/2$, we give a learning algorithm for general halfspaces 
with sample and computational complexity $O_\eps(1) \,d^{O(\log(1/\eps))}$. 
\new{This result is new even for the subclass of homogeneous halfspaces; prior algorithms
for homogeneous Massart halfspaces provide vacuous guarantees for $\eta=1/2$.
We complement our upper bound with a nearly-matching SQ lower bound of $d^{\Omega(\log(1/\eps) )}$, 
which holds even for the special case of homogeneous halfspaces.}

\end{itemize}
Taken together, \new{our results qualitatively characterize the complexity of
learning general halfspaces with general Massart noise under Gaussian marginals.}
Our techniques rely on determining the existence (or non-existence) of low-degree 
polynomials whose expectations distinguish Massart halfspaces from random noise.
\end{abstract}

\setcounter{page}{0}
\thispagestyle{empty}
\newpage

\section{Introduction}
This work focuses on the distribution-specific PAC learning of halfspaces in the presence of label
noise.  Before we describe our contributions, we provide the context for this work.

\subsection{Background} \label{ssec:background}
A {\em halfspace} or Linear Threshold Function (LTF) is any Boolean-valued function $f: \R^d \mapsto
\{\pm 1\}$ of the form $f(\x) = \sgn(\vec w^\ast \cdot \x - t^\ast)$, for a vector $\vec w^\ast \in
\R^d$ (known as the weight vector) and a scalar $t^\ast \in \R$ (known as the  threshold).
Halfspaces are a central class of Boolean functions in several areas of computer science, including
complexity theory, learning theory, and optimization~\cite{Rosenblatt:58, Novikoff:62,
MinskyPapert:68, Yao:90, GHR:92, FreundSchapire:97, Vapnik:98, CristianiniShaweTaylor:00,
ODonnellbook}. In this work, we focus on the algorithmic problem of learning halfspaces from labeled
examples, arguably one of the most extensively studied and influential problems in machine learning.

The computational problem of PAC learning halfspaces is known to be efficiently solvable without
noise (see, e.g.,~\cite{MT:94}) in the distribution-independent setting. The complexity of the
problem in the presence of noisy data crucially depends on the noise model and the underlying
distributional assumptions. In this work, we study the problem of distribution-specific PAC learning
of halfspaces in the presence of Massart noise.  Formally, we have the following definition.

\begin{definition}[Distribution-specific PAC Learning with Massart Noise]
\label{def:massart-learning} 
Let $\mathcal{C}$ be a concept class of Boolean functions over $X= \R^d$, $\mathcal{F}$ be a {\em
known family} of structured distributions on $X$,  $0 \leq \eta \leq 1/2$, and $0< \eps <1$.  Let
$f$ be an unknown target function in $\mathcal{C}$.  A {\em noisy example oracle},
$\mathrm{EX}^{\mathrm{Mas}}(f, \mathcal{F}, \eta)$, works as follows: Each time
$\mathrm{EX}^{\mathrm{Mas}}(f, \mathcal{F}, \eta)$ is invoked, it returns a labeled example $(\bx,
y)$, such that: (a) $\bx \sim \D_{\bx}$, where $\D_{\bx}$ is a fixed distribution in $\mathcal{F}$,
and (b) $y = f(\bx)$ with probability $1-\eta(\bx)$ and $y = -f(\bx)$ with probability $\eta(\bx)$,
for an {\em unknown} function $\eta(\bx) \leq \eta$.  Let $\D$ denote the joint distribution on
$(\bx, y)$ generated by the above oracle.  A learning algorithm is given i.i.d.\ samples from $\D$
and its goal is to output a hypothesis $h$ such that with high probability it holds $\pr_{(\bx, y)
\sim \D}[h(\bx) \neq y ] \leq  \opt +  \eps$, where $\opt = \min_{c \in \mathcal{C}} \pr_{(\bx, y)
\sim \D}[c(\bx) \neq y ]$.  
\end{definition}

\begin{remark} \label{rem:one-half}
{\em The noise rate parameter $\eta$ in \Cref{def:massart-learning} is allowed to be {\em equal} to
$1/2$. This is consistent with the original definition of the Massart model in~\cite{Massart2006},
and --- as we argue in the subsequent discussion --- is well-motivated in a number of practical
applications. On the other hand, prior algorithmic work in the theoretical machine learning
community imposed the crucial requirement that $\eta$ is {\em strictly smaller} than $1/2$.  This
distinction turns out to be very significant and serves as one of the main motivations for the
current work.}
\end{remark}

The Massart noise model in the above form was defined in~\cite{Massart2006}.  A very similar noise
model had been defined in the 80s by Sloan and Rivest~\cite{Sloan88, Sloan92, RivestSloan:94,
Sloan96}, and a related definition had been considered even earlier by Vapnik~\cite{Vapnik82}.  The
Massart model is a generalization of the Random Classification Noise (RCN) model~\cite{AL88} (where
the flipping probability is uniform) and is a special case of the agnostic model (where the label
noise is fully adversarial)~\cite{Haussler:92, KSS:94}.

The Massart model is a natural semi-random noise model that is more realistic than RCN.
Specifically, label noise can reflect computational difficulty, ambiguity, or random factors.  For
example, a cursive  ``e'' might be substantially more likely to be misclassified as ``a'' than an
upper case Roman letter.  Massart noise allows for such variations in misclassification rates 
without knowledge of which instances are more likely to be misclassified.  That is, Massart
noise-tolerant learners are less brittle than RCN tolerant learners.  Agnostic learning is, of
course, even more robust; unfortunately, agnostic learning is known to be computationally
intractable in many settings of interest~\cite{GR:06, FeldmanGKP06, Daniely16, DKZ20, GGK20,
DKPZ21}.

\new{\subsection{Motivation for this Work} \label{ssec:motivation}
The algorithmic task of PAC learning with Massart noise is a classical problem in computational
learning theory. In the distribution-independent setting, 
known efficient algorithms~\cite{DGT19,CKMY20} achieve error $\eta+\eps$ and 
we now know~\cite{DK20-SQ-Massart} that this error bound is close to best possible
for efficient Statistical Query (SQ) algorithms~\cite{Kearns:98}, 
even if $\opt = \E[\eta(\x)]$ is very small. This lower bound further motivates
the study of the distribution-specific setting (the focus of the current work).

The work of~\cite{AwasthiBHU15} initiated the algorithmic study of learning Massart
halfspaces under structured distributions. This work focused on the class of {\em homogeneous}
halfspaces, i.e., functions of the form $f(\x) = \sgn(\vec w^\ast \cdot \x)$, and 
gave a polynomial-time learning algorithm with $\opt+\eps$ error under 
the uniform distribution on the unit sphere. The~\cite{AwasthiBHU15} algorithm succeeds 
when the parameter $\eta$ is smaller than a sufficiently small constant ($\approx 10^{-6}$).
A sequence of subsequent works~\cite{AwasthiBHZ16, YanZ17, ZhangLC17,
BZ17, MangoubiV19, DKTZ20,ZSA20,ZL21} have given efficient learning algorithms 
for homogeneous Massart halfspaces that succeed for all $\eta<1/2$ and under 
weaker distributional assumptions. The current state-of-the-art algorithms~\cite{DKTZ20,ZSA20,ZL21} 
have sample and computational complexity $\poly(d, 1/\eps, 1/(1-2\eta))$ and succeed for all 
$\eta<1/2$ under a class of distributions including isotropic log-concave distributions.

To summarize the preceding discussion, {\em known efficient algorithms for Massart halfspaces
in the distribution-specific setting succeed under two crucial assumptions:
\begin{itemize}
\item[(i)] The target halfspace is homogeneous (i.e., has zero threshold), and
\item[(ii)] The upper bound parameter of the Massart noise satisfies $\eta<1/2$.
\end{itemize}}

Perhaps surprisingly, prior to the current work, no non-trivial bounds were known
for the {\em general case} of this learning problem, where either of these two assumptions is removed.
This represents a fundamental gap in our algorithmic understanding of learning halfspaces 
in the Massart model and serves as the main motivation of the current work.

In this work, we study the general version of this problem for the prototypical setting
that the examples are drawn from the Gaussian distribution. As our main contribution, {\em we essentially
characterize the complexity of the problem by giving the first efficient learning algorithms coupled with 
qualitatively matching SQ lower bounds. }
}

\new{In the following paragraphs, we provide a more detailed technical motivation of the regimes
we study followed by a detailed description of our results.}

\paragraph{Massart Learning of {\em General} Halfspaces}
\new{Suppose that the Massart noise rate $\eta$ is a constant {\em strictly smaller} than $1/2$. 
Even for this ``low-noise'' regime, all previous efficient learning algorithms that achieve error $\opt+\eps$
require that the unknown halfspace is {\em homogeneous}.
Superficially, this might seem like an innocuous assumption.  
After all, it seems straightforward to reduce a general halfspace
to a homogeneous one by adding an extra constant coordinate to every sample.
Given this, one could use a learner for the homogeneous case on the modified instance.
It turns out that this intuition is fundamentally flawed. While such a reduction is valid
in the distribution-independent setting, it does not work in the distribution-specific setting because
it alters the marginal distribution on the examples.}
In light of this state of affairs, it is natural to ask whether an efficient Massart learner exists
for {\em general} halfspaces in the low-noise noise regime where previous algorithms succeeded.  

\begin{question} \label{que:biased}
\emph{What is the complexity of learning \emph{general} halfspaces in the 
\emph{constant-bounded} Massart noise setting, i.e., when $\eta = 1/2-c$ for some universal
constant $c>0$?  }
\end{question}
 
\new{
As we will show, the complexity of the problem in this regime is characterized by the ``bias'' $\gamma$
of the target halfspace (see \Cref{def:bias}) and (inherently) scales {\em quasi-polynomially} 
with $1/\gamma$ (\Cref{thm:intro-bounded-massart-upper-bound,thm:intro-biased-sq-lb}). 
We note that previous algorithms only handle the special case of $\gamma = 1/2$.}

\paragraph{\new{Learning Halfspaces with {\em General} Massart Noise}} 
The original definition of the Massart noise model~\cite{Massart2006} allows 
the upper bound on the noise rate to be {\em equal} to $1/2$. On the other
hand, all known Massart learning algorithms crucially rely on the assumption that $\eta<1/2$. 
\new{The latter assumption might have been motivated by the random classification noise model,
where the $\eta = 1/2$ regime is not meaningful. In the Massart model, however,
it may well be the case that $\eta(\bx) = 1/2$, for a small probability subset 
of the domain and $\eta(\bx)$ is small otherwise, in which case the optimal misclassification
error $\opt$ will be similarly small.}

Understanding the complexity of learning halfspaces \new{with {\em general} Massart noise, 
i.e., in the regime where $\eta = 1/2$,} is both of theoretical
and practical significance. From the theoretical standpoint, the ``high-noise'' regime subsumes the
well-studied Tsybakov model~\cite{MT99,tsybakov2004optimal} and can be viewed as the strongest known
non-adversarial noise model in the literature.  Interestingly, the \new{general Massart noise model} 
has also been previously studied in the statistical learning theory literature 
under the name {\em benign noise}, see, e.g.,~\cite{hanneke09, HannekeY15}.  
In the ``benign noise'' model, the only assumption made about the
label noise is that the {\em Bayes optimal classifier} lies in the target class. (It is an easy
exercise, see \Cref{app:benign-massart}, that the benign noise model is equivalent to the
\new{general} Massart model.) In addition to its theoretical interest, the \new{general} Massart model 
naturally arises in a number of practical applications. A concrete example is that of \emph{human annotator
noise}~\cite{chhikara1984linear, beigman2009annotator, beigman2009learning,KB10}, where it has been
shown \cite{KB10} that human annotators (especially non-experts) often flip coins 
(corresponding to $\eta(\x) = 1/2$) when presented with a hard-to-classify example. 
The preceding discussion motivates the following question:

\begin{question} \label{que:high-noise}
\emph{What is the complexity of learning halfspaces \new{with {\em general} Massart noise?}}
\end{question}

\new{Interestingly, prior to this work, this question remained wide-open even for the special case 
of homogeneous halfspaces.
Specifically, previous Massart learning algorithms (for homogeneous halfspaces) 
provide {\em vacuous guarantees} for $\eta = 1/2$, 
because they require sample complexity scaling polynomially 
with the parameter $1/\beta$, where $\beta := 1-2\eta$. }
It is worth noting that a dependence on $\beta$
is not information-theoretically required for the problem.
Specifically, $O(d/\eps^2)$ samples suffice to achieve error $\opt+\eps$ (see, e.g.,~\cite{Massart2006}), 
alas with an exponential time algorithm.  

At a high-level, the $\beta$-dependence in previous approaches
is due to the fact that previous algorithms solve the (harder) {\em parameter recovery} problem, i.e.,
they approximate the hidden weight vector $\vec w^\ast$ (e.g., within small angle). 
While the sample complexity of PAC learning (\Cref{def:massart-learning}) is independent of $\beta$, 
parameter learning requires at least $1/\beta$ samples. 
Consequently, a genuinely new approach is required to handle the $\beta  = 0$ regime.  
That is, to answer \Cref{que:high-noise}, we need to understand to what extent it is
possible to PAC learn without relying on parameter recovery.

\new{As we will show, handling the $\beta = 0$ case comes at a cost. Our results 
(\Cref{thm:intro-high-noise,thm:intro-homogeneous-sq-lb}) establish an
information-computation tradeoff for this problem (that persists even for homogeneous halfspaces) 
scaling {\em quasi-polynomially} with the parameter $1/\eps$.}

\subsection{Our Results} \label{ssec:results}

\new{In this work, we answer~\Cref{que:high-noise,que:biased} by providing
both efficient learning algorithms and nearly-matching lower bounds in the Statistical Query (SQ) model.
Perhaps surprisingly, we show that the complexity of our learning problem scales quasi-polynomially
with $1/\eps$, where $\eps$ is the excess error.}
A conceptual implication of our findings is that \Cref{que:high-noise,que:biased}, 
while seemingly orthogonal, are in fact intimately connected \new{at a technical level}.

\paragraph{Learning with Constant-Bounded \new{Massart} Noise}
\new{Here we address the regime of learning general halfspaces with $\eta$-Massart noise, 
where $\eta \leq 1/2-c$ for some constant $c>0$.
It turns out that the complexity of the problem in this setting depends on 
the ``bias'' of the target halfspace.}

\begin{definition}[$(1-\gamma)$-biased Halfspace] \label{def:bias} 
\new{For $\gamma \in [0,1/2]$,} we say that the halfspace \new{$f$} is at most $(1-\gamma)$-biased 
\new{(with respect to $\D$)} if 
$\min(\pr_{(\x,y) \sim \D}[f(\x) = +1],  \pr_{(\x,y) \sim \D}[f(\x) = -1]) \geq \gamma$.
\end{definition}

\new{Note that homogeneous halfspaces correspond to the special case of $\gamma = 1/2$. 
Recall that prior literature gave $\poly(d/\eps)$-time learners for homogeneous Massart 
halfspaces with $\eta<1/2$. Our first main result is a learning algorithm for general halfspaces 
whose complexity scales quasi-polynomially with $1/\gamma$.}
More specifically, we establish the following theorem (see also \Cref{thm:massart-upper-bound}):

\begin{theorem}[Learning General Massart Halfspaces with Constant-Bounded Noise] 
\label{thm:intro-bounded-massart-upper-bound}
Let $\D$ be a distribution on $\R^d\times\{\pm1\}$ with standard normal $\x$-marginal that satisfies
the constant-bounded Massart noise condition with respect to an at most $(1-\gamma)$-biased
halfspace. There exists an algorithm that draws $N = d^{O(\log( \min(1/\gamma, 1/\eps)))}
\poly(1/\eps)$ samples from $\D$, runs in time $\poly(N, d)$, and computes a halfspace $h$ such that
with high probability it holds \(\pr_{(\x, y) \sim \D}[h(\x) \neq y] \leq \opt  + \eps\).
\end{theorem}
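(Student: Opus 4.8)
The plan is to reduce PAC learning to the problem of finding a low-degree polynomial $p:\R^d \to \R$ whose thresholded version $\sgn(p(\x) - \theta)$ (for a suitable threshold $\theta$) approximates the target halfspace $f$ on all but an $\eps/\poly$ fraction of the Gaussian mass, and then to learn such a polynomial via $L_1$-polynomial regression against the observed (noisy) labels. The key structural fact to establish is that for an (at most) $(1-\gamma)$-biased halfspace $f(\x) = \sgn(\vec w^\ast \cdot \x - t^\ast)$ under the standard Gaussian, there is a polynomial of degree $D = O(\log(\min(1/\gamma,1/\eps)))$ that is $\eps'$-close to $f$ in $L_1(\normal)$, where $\eps' = \eps \cdot (1-2\eta)/\poly$. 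This is where the bias enters: when $\gamma$ is bounded below, the threshold $|t^\ast|$ is $O(\sqrt{\log(1/\gamma)})$, so the univariate function $\sgn(z - t^\ast)$ in the direction $\vec w^\ast$ can be approximated to error $\eps'$ by a polynomial of degree $O(\log(1/\gamma) + \log(1/\eps'))$ using standard Chebyshev/Hermite truncation (e.g. approximating $\sgn$ on $[-R,R]$ by a degree-$O(R\log(1/\eps'))$ polynomial, but here the relevant bound is the milder $L_1(\normal)$-approximation which only needs logarithmic degree because the Gaussian tail past $t^\ast$ already has mass $\gamma$).

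Next I would argue that $L_1$-regression recovers a good polynomial. Concretely, run the standard $L_1$ polynomial-regression algorithm of \cite{KKMS:05} over the degree-$D$ monomials, minimizing $\E_{(\x,y)\sim\D}|p(\x) - y|$ over low-degree $p$. Using the Massart structure, if $q$ denotes the (low-degree polynomial approximation of the) target, then $\E[|q(\x) - y|] \le \E[|q(\x) - f(\x)|] + 2\,\E[\eta(\x)\,|f(\x)|\cdot\mathbf{1}] = (\text{small}) + 2\,\E[\eta(\x)]$ — more carefully, one shows the population $L_1$ error of the best degree-$D$ polynomial exceeds that of the constant predictor $y$ only by $O(\eps')$, and crucially that any polynomial achieving $L_1$ error close to optimal must, after thresholding at the median-type threshold $\theta$, disagree with $f$ on at most an $O(\eps'/(1-2\eta)) = O(\eps/\poly)$ fraction; this last implication is where the $(1-2\eta)$ gap is used, exactly as in the Massart-to-agnostic comparisons, because under Massart noise the conditional label expectation $\E[y\mid \x]$ has the same sign as $f(\x)$ and magnitude at least $1-2\eta$. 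Then $\sgn(p(\x)-\theta)$ has $0$-$1$ error at most $\opt + \eps$ against $\D$: indeed the excess over $\opt$ is bounded by the disagreement probability with $f$ times $1$, plus lower-order terms.

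The last ingredient is choosing the threshold $\theta$ and confirming sample/time bounds: we do not know $t^\ast$, so we try a polynomial-size grid of candidate thresholds $\theta$ (there are $\poly(1/\eps)$ relevant values since the $\x$-marginal of $p(\x)$ is a one-dimensional distribution we can estimate), evaluate the empirical $0$-$1$ error $\pr_\D[\sgn(p(\x)-\theta)\neq y]$ for each on a fresh sample, and output the best. The number of monomials of degree $\le D$ is $d^{O(D)} = d^{O(\log(\min(1/\gamma,1/\eps)))}$, so both the LP defining the regression and the sample complexity needed for uniform convergence over this hypothesis class are $d^{O(\log(\min(1/\gamma,1/\eps)))}\poly(1/\eps)$, with running time $\poly(N,d)$ since the regression LP has $N$ constraints and $d^{O(D)}$ variables. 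I expect the main obstacle to be the \emph{polynomial approximation lemma with bias-dependent degree}: one must show carefully that the degree needed for an $L_1(\normal)$-approximation of $\sgn(z-t^\ast)$ to error $\eps'$ scales like $O(\log(1/\gamma) + \log(1/\eps))$ rather than $\poly(1/\eps')$, by exploiting that the "hard region" near the threshold has Gaussian mass only $O(\gamma)$ and that on the bulk one only needs sign-agreement, and then to propagate this single-direction approximation to the $d$-dimensional polynomial while controlling how the resulting thresholded error interacts with the Massart gap $1-2\eta = \Omega(1)$.
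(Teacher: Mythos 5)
Your reduction hinges on a structural lemma that is false: there is no polynomial of degree $O(\log(1/\gamma)+\log(1/\eps))$ that approximates $\sgn(z-t^\ast)$ to $L_1(\normal)$ error $\eps'=\eps(1-2\eta)/\poly$. Already in the unbiased case $\gamma=1/2$ (so $t^\ast=0$), $L_1(\normal)$-approximation of the sign function to error $\eps$ requires degree $\Omega(1/\eps^2)$ (this is exactly Proposition~2.1 of \cite{DKPZ21}, which the present paper cites when explaining why an approximation-based route cannot work), and for general $\gamma=\Omega(\eps)$ the small Gaussian density $\phi(t^\ast)\approx\gamma\sqrt{\log(1/\gamma)}$ near the threshold only relaxes the requirement to forcing the polynomial to track the sign outside a window of width roughly $\eps/\gamma$ around $t^\ast$, which still costs degree $\poly(\gamma/\eps)$, i.e.\ $\poly(1/\eps)$ when $\gamma$ is a constant --- not logarithmic. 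A quick sanity check: if your lemma held, then $L_1$-regression \`a la \cite{KKMS:05} would give an \emph{agnostic} learner for halfspaces with complexity $d^{O(\log(1/\eps))}\poly(1/\eps)$, contradicting the $d^{\Omega(1/\eps^2)}$ SQ lower bound of \cite{DKPZ21} that this paper itself invokes to separate Massart from agnostic learning. The downstream steps (Massart gap $1-2\eta$ converting near-optimal $L_1$ error into small disagreement with $f$, grid over thresholds) are not where the difficulty lies; the approximation step is, and it cannot be repaired within an $L_1$/$L_2$-regression framework at the claimed degree.

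The paper's proof exploits the Massart condition in a structurally different way precisely to avoid approximating the threshold function. It only needs objects that put relatively more weight on the disagreement region than the agreement region: a certificate of the form $T(\x)=\1\{r_1\leq\ell(\x)\leq r_2\}\,q^2(\x)$ with $q$ of degree $\Theta(\log(1/(\gamma\beta)))$, built from the square of the Taylor expansion of an exponential shift $e^{c\,\vec w^\ast\cdot\x}$ (\Cref{lem:taylor-exponential-polynomial}, \Cref{pro:certificate-Biased-Massart}). Such certificates are found by an SDP over squares of low-degree polynomials restricted to bands (\Cref{prop:sdp-oracle}) and fed to an online-convex-optimization outer loop (\Cref{pro:certificate-reduction-formal}), which converges to a halfspace with error $\opt+\eps$. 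The degree is logarithmic because a ratio condition like \Cref{eq:ratio} (relative weight $1/\beta$ on the positive side) is achievable with degree $O(\log(1/(\gamma\beta)))$, whereas pointwise or $L_1$ closeness to the sign is not; if you want to pursue a regression-free fix of your proposal, this correlation/ratio viewpoint is the ingredient you would need to substitute for the approximation lemma.
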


\new{
Qualitatively, \Cref{thm:intro-bounded-massart-upper-bound} yields a $\poly(d/\eps)$-time
algorithm for halfspaces with bias bounded away from zero, specifically $\gamma \geq c$
where $c>0$ is an absolute constant. Furthermore, for general halfspaces it yields
a PTAS with quasi-polynomial dependence on $1/\eps$, i.e., with runtime $d^{O(\log(1/\eps))}$.

Perhaps surprisingly, we provide strong evidence that the upper bound of 
\Cref{thm:intro-bounded-massart-upper-bound} is best possible 
for any value of $\gamma$. Specifically, we prove a matching lower bound in 
the Statistical Query (SQ) model of~\cite{Kearns:98}. We recall that 
SQ algorithms are a broad class of algorithms that are only allowed
to query expectations of bounded functions of the distribution rather than directly access samples; 
see \Cref{ssec:SQ-prelims} for the definition and additional discussion.}
Formally, we prove  (see also \Cref{thm:lower_bound}):

\begin{theorem}[SQ Lower Bound for Massart Halfspaces with Constant-Bounded Noise] \label{thm:intro-biased-sq-lb}
Let $\D$ be a distribution on $\R^d\times\{\pm1\}$ with standard normal $\x$-marginal that
satisfies the Massart noise condition with parameter $\eta = 1/2 - \Omega(1) \new{<1/2}$ with respect to an at most
$(1-\gamma)$-biased halfspace. For any $\gamma > \eps$, any SQ algorithm that \newblue{for any such distribution $\D$} 
learns a hypothesis $h:\R^d \mapsto \{\pm 1\}$ such that $\pr_{(\x,y)\sim \D}[h(\x)\neq y]\leq \opt+\eps$, either
requires queries with tolerance at most $d^{-\Omega(\log(1/\gamma))}$ or makes at least
$2^{d^{\Omega(1)}}$ statistical queries.
\end{theorem}

\new{
Informally, \Cref{thm:intro-biased-sq-lb} shows that no SQ algorithm can learn
the subclass of $(1-\gamma)$-biased halfspaces in the constant-bounded Massart noise model 
with sub-exponential in $d^{\Omega(1)}$ many queries, 
unless it uses queries of very small tolerance  -- that would require at least $d^{\Omega(\log(1/\gamma))}$
samples to simulate (as long as $\gamma >\eps$\footnote{Of course, if $\gamma \leq \eps$,
one of the two constant functions suffices.}). 
This ``fine-grained'' lower bound that can be viewed as an information-computation 
tradeoff for the problem (within the class of SQ algorithms) and matches the upper bound of 
\Cref{thm:intro-bounded-massart-upper-bound}. 
As a corollary, we obtain that learning general halfspaces with
constant-bounded Massart noise has SQ complexity $d^{\Theta(\log(1/\eps))}$.

\Cref{thm:intro-biased-sq-lb,thm:intro-bounded-massart-upper-bound} together qualitatively
characterize the complexity of learning general halfspaces in the constant-bounded Massart
setting. We view this inherent quasi-polynomial dependence as rather surprising.
Even though the class of general halfspaces has one additional parameter compared to the homogeneous case 
(the unknown threshold), the learning problem becomes harder and exhibits a quasi-polynomial dependence 
on the inverse of the bias parameter $\gamma$.
}

\new{
\begin{remark} \label{rem:eta-close-to-half}
{\em While the algorithm of \Cref{thm:intro-bounded-massart-upper-bound} 
is essentially optimal in the constant-bounded Massart regime, its running time
has quasi-polynomial dependence on $1/\beta$, where $\beta := 1-2\eta$. 
As a result, the algorithm is not optimal (as far as we know) when the parameter 
$\eta$ is {\em very} close to $1/2$, e.g., $\eta = 1/2-1/d$. 
Characterizing the complexity of the learning problem when $\eta$ is very close to (but not equal to) $1/2$ 
is left as an open problem for future work.}
\end{remark}

}

\paragraph{Learning with \new{{\em General} Massart Noise}}
\new{Our second main result essentially characterizes the complexity
of learning halfspaces with general Massart noise, i.e., the $\eta = 1/2$ case.
On the positive end, we develop an algorithm for this general setting with the following guarantees.}

\begin{theorem}[Learning \new{General} Halfspaces \new{with General Massart Noise}] \label{thm:intro-high-noise} 
Let $\D$ be a distribution on $\R^{d} \times \{\pm 1\}$ whose $\x$-marginal is the standard normal
and such that $\D$ satisfies the Massart noise condition for $\eta = 1/2$ with respect to a general 
target halfspace. There is an algorithm that draws $N = d^{ O(\log(1/\eps))}$ samples
from $\D$, runs in time $2^{\poly(1/\eps)} \poly(N, d)$, and computes a halfspace $h$ such that with
high probability $\pr_{(\x, y) \sim \D}[h(\x) \neq y] \leq \opt  + \eps$.
\end{theorem}

\new{
We reiterate that no nontrivial algorithm was known in the general Massart model, 
even for homogeneous halfspaces (our algorithm works for general halfspaces).
Qualitatively, \Cref{thm:intro-high-noise} gives a PTAS with 
runtime $O_{\eps}(1) d^{O(\log(1/\eps))}$ for the general problem. 
Note that if $\eps \geq 1/\log^{\Omega(1)}(d)$, the runtime of our algorithm is $d^{O(\log(1/\eps))}$.

It is worth comparing \Cref{thm:intro-high-noise} with the algorithm of \Cref{thm:intro-bounded-massart-upper-bound},
which handles the case where $\eta <c$ for some constant $c<1/2$. 
The latter result yields a $\poly(d/\eps)$ time algorithm when the bias of the target halfspace is a positive constant. 
On the other hand, the algorithm 
of \Cref{thm:intro-high-noise} has a quasi-polynomial dependence in $1/\eps$, even for the subclass
of homogeneous halfspaces. 

Perhaps surprisingly, we establish an SQ lower bound suggesting that this quasi-polynomial dependence
is necessary in the general Massart model, even for homogeneous halfspaces.
}

\begin{theorem}[SQ Lower Bound for Homogeneous Halfspaces \new{with General Massart Noise}] 
\label{thm:intro-homogeneous-sq-lb}
Let $\D$ be a distribution on $\R^d\times\{\pm 1\}$ whose $\x$-marginal is the standard normal and
such that $\D$ satisfies the Massart noise condition for $\eta = 1/2$ with respect to a homogeneous
halfspace. Then any SQ algorithm that \newblue{for any such distribution $\D$} 
outputs a hypothesis $h$ with $\pr_{(\x,y)\sim \D}[h(\x)\neq y]\leq\opt +\eps$, 
either requires queries with tolerance at most $d^{-\Omega(\log(1/\eps))}$ or
makes at least $2^{d^{\Omega(1)}}$ statistical queries.
\end{theorem}

(For a more detailed statement, see \Cref{thm:lower_bound_benign}.)
Informally, \Cref{thm:intro-homogeneous-sq-lb} shows that no SQ algorithm can learn
homogeneous halfspaces \new{in the general Massart model} with sub-exponential in $d^{\Omega(1)}$ many queries, 
unless it uses queries of very small tolerance  -- that would require at least $d^{\Omega(\log(1/\eps))}$
samples to simulate. \new{Note that the sample complexity of the algorithm establishing 
\Cref{thm:intro-high-noise} (which can  be implemented in the SQ model) matches our SQ lower bound. 
Furthermore, this implies that the runtime of our algorithm is essentially optimal (within SQ algorithms) 
for all \new{$\eps \geq 1/\log^{\Omega(1)}(d)$}.}

\begin{remark} \label{rem:agnostic-separation}
{\em It is worth noting that \Cref{thm:intro-high-noise,thm:intro-homogeneous-sq-lb} provably separate 
the SQ complexity of learning halfspaces with \new{general} Massart noise from the SQ complexity 
of the corresponding agnostic learning problem. For agnostically learning halfspaces under Gaussian marginals, the
$L_1$-regression algorithm~\cite{KKMS:08} has complexity $d^{O(1/\eps^2)}$ (and is known to be
implementable in SQ). Moreover, a matching SQ lower bound of $d^{\Omega(1/\eps^2)}$ is
known~\cite{DKPZ21}. That is, while agnostic learning requires runtime $d^{\poly(1/\eps)}$, 
Massart learning can be achieved in time $O_{\eps}(1) \, d^{O(\log(1/\eps))}$.}
\end{remark}

\begin{remark}\label{rem:larger-than-half}
{\em Interestingly, the Massart model remains meaningful even for the ``very large'' noise setting,
where $\eta>1/2$. For this extreme regime, it is not hard to show
(see~\Cref{app:semi-random-agnostic}) that this model becomes {\em equivalent} to the agnostic
model.}
\end{remark}

\begin{figure}[h!]
    \centering
\begin{tikzpicture}
    \pgftransformscale{.8}

\draw[very thick] (8,0) -- (-8,0);
    
\node at (0,2.5)[text width=3cm,align=center] {\scriptsize{Homogeneous}};
    \node at (0,2)[text width=3cm,align=center] {\scriptsize{$\eta=1/2-\Omega(1)$}};
    \node at (0,0.8)[text width=3cm,align=center] {\small{$\poly(d/\eps)$} };
\draw[dashed,color=red] (-2.3,0) parabola bend (0,4.5) (8,0);
    \node at (4,2.4)[text width=3cm,align=center] {\scriptsize{Homogeneous}};
    \node at (4,2)[text width=3cm,align=center] {\scriptsize{$\eta=1/2$}};
    \node  at (4.3,1)[text width=3cm,align=center] {$d^{\Omega(\log(1/\eps))}$};
    \node  at (4.3,0.5)[text width=3cm,align=center] {\scriptsize{\Cref{thm:intro-homogeneous-sq-lb}} };
\draw [dashed, color=blue](2.3,0) parabola bend (0,4.5) (-8,0);
    \node at (-4,2.4)[text width=3cm,align=center]  {\scriptsize{General}};
    \node at (-4,2)[text width=3cm,align=center] {\scriptsize{$\eta=1/2-\Omega(1)$}};
    \node  at (-4.3,1)[text width=3cm,align=center] {$d^{\Theta(\log(1/\eps))}$};
    \node  at (-4.7,0.5)[text width=5cm,align=center] {\scriptsize{\Cref{thm:intro-bounded-massart-upper-bound}, 
    \Cref{thm:intro-biased-sq-lb} } 
    };
    
\draw (-8,0) parabola bend (0,7) (8,0);
    \node at (0,6.4) [text width=3.8cm,align=center]  {\scriptsize{General, $\eta=1/2$}};
    \node at (0,5.5) [text width=3.8cm,align=center]  {$d^{O(\log(1/\eps))} 2^{\poly(1/\eps)} $};
    \node at (0,5) [text width=3.8cm,align=center]  {\scriptsize{\Cref{thm:intro-high-noise}} };
\draw (-8,0) parabola bend (0,9) (8,0);
    \node at (0,8.2) {\scriptsize{General, $\eta > 1/2$}};
    \node at (0,7.6) [text width=3.8cm,align=center]  {$d^{\Theta(1/\eps^2)}$};
    
    \end{tikzpicture}

    \caption{Overview of the (SQ) complexity of learning halfspaces with Massart noise.
    (1) For homogeneous halfspaces and $\eta < 1/2$, efficient algorithms were previously known
    under Gaussian or isotropic log-concave marginals. (2) For $\eta>1/2$, the problem is equivalent 
    to agnostic learning, for which tight upper and lower bounds were previously known.
    (3) The remaining regimes (general halfspaces and/or $\eta = 1/2$) are characterized in the current paper.}
    \label{fig:classes}
\end{figure}
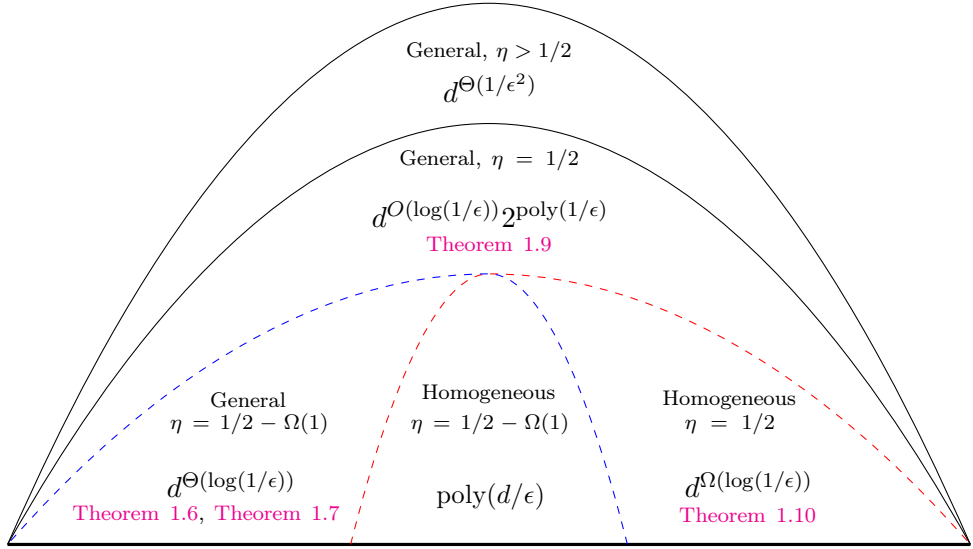

\subsection{Brief Overview of Techniques} \label{ssec:tec-short}

\paragraph{Leveraging Low-Degree Moments} 
The unifying theme of both our upper and lower bound techniques is the use of low-degree moments.
For our purposes, the low-degree moments of a distribution correspond to the values of
$\E_{(\x,y)\sim \D}[p(\x)y]$ for low-degree polynomials $p$. It is clear that one can compute
approximations of \new{the (up to)} $k$-degree moments of such a distribution \new{with} $\poly(d^k/\eps)$ samples
and time.  At a high-level, we are looking for a moment that will provide us with information about
the correlation between $\x$ and $y$. In more detail, we would like to find polynomials $p$ such
that $\E_{(\x,y)\sim \D}[p(\x)y] \neq \E_{\x\sim \D_\x}[p(\x)]\E_{y\sim \D_y}[y]$, or equivalently
find polynomials $p$ with $\E_{\x\sim \D_\x}[p(\x)] = 0$ and $\E_{(\x,y)\sim \D}[p(\x)y] \neq 0$. 
If no such polynomial exists, we can leverage this fact to prove SQ lower bounds; if such
polynomials exist, we can hope to \new{use them} as a starting point for an algorithm.

\paragraph{Learning with Constant-Bounded Massart Noise: \Cref{thm:intro-bounded-massart-upper-bound}} 
\new{At a high-level, our algorithm fits in the certificate-based framework developed in}
\cite{DKTZ20b,DKKTZ20,DKKTZ21b}. (A similar framework was developed independently in \cite{CKMY20}
to properly learn Massart halfspaces in the distribution-free setting, matching the error of
the~\cite{DGT19} algorithm.) The framework relies on the following fact: if the true halfspace is
given by $f(\x) = \sgn(\ell^\ast(\x))$, where $\ell^\ast(\x) = \vec w^\ast \cdot \x  - t^\ast$, then
for any affine function $\ell(\x)$ we have that $\E_{(\x,y)\sim \D}[\ell(\x) y T(\x)] \geq 0$ for
all non-negative functions $T(\x)$ if and only if $\ell = \ell^\ast$.  We can think of this as an
(infinite) linear program that can be used to solve for $\vec w^\ast$ and $t^\ast$. In order to
solve this program, we need a separation oracle. In particular, for any hypothesis $\vec w$ and $t$
that is too far from the truth, i.e., $\sgn(\ell(\x))$ has error greater than $\opt+ \eps$, we need
to be able to find an explicit non-negative function $T(\x)$ such that the above constraint is
violated. This essentially amounts to finding a \new{function $T(\x)$} that concentrates on the
values of $\x$ for which our current hypothesis $\ell(\x)$ is incorrect.

\new{
Our main new idea here is that for any sub-optimal halfspace guess $h$, there exists a certificate
function of the form $T(\x) = \1_{S}(\x) e^{\vec v \cdot \x}$ (\emph{exponential-shift certificate}),
where $S$ is a thin strip around the current (known) halfspace $h$ and $\vec v$ is an appropriate
weight vector.  However, finding a good weight vector $\vec v$ is a non-convex (hard) optimization
problem in general.  We simulate the behavior of the exponential-shift certificate by taking $T(\x)$
to be the square of a low-degree polynomial $q(\x)$ restricted to an appropriately chosen thin strip
$S$.  To construct our polynomial certificate, we essentially need to prove that for any at most
$(1-\gamma)$-biased one-dimensional threshold function $f$, there exists a polynomial $q$ of degree
$O(\log(1/\gamma))$ such that $q^2(\x)$ concentrates on the positive values of $f$; see
\Cref{lem:sos-ratio}. To prove the existence of such a polynomial, we establish that considering the
$O(\log(1/\gamma))$-degree Taylor approximation of the exponential-shift $e^{\vec v \cdot \x}$ suffices; see \Cref{lem:taylor-exponential-polynomial}. We can efficiently compute a
polynomial certificate since, once we have a fixed strip $S$, we can compute the low-degree moments
of $y$ restricted to $S$. From there, finding an appropriate polynomial $q$ amounts to finding a
negative eigenvalue of a quadratic form. For more details, we refer the reader to
\Cref{ssec:general-bounded-overview} and \Cref{sec:constant-bounded-arbitrary}.
}

\paragraph{Learning with \emph{General} Massart Noise: \Cref{thm:intro-high-noise}} 
\new{
In the case of learning with \emph{general} Massart noise ($\eta = 1/2$), the certificate approach 
fails, as it requires polynomials of large degree --- polynomial in $1/\eps$ --- to get concentration in
the disagreement region. This is prohibitively large as it would result in a runtime of
$d^{\poly(1/\eps)}$, which we can readily get by simply running the agnostic learner of
\cite{KKMS:08}.  
The main idea to obtain an improved bound is to bypass the limitations of the certificate approach
by relying on \emph{correlation} properties rather than \emph{concentration}.  As we show, there is
always some polynomial $p$ of low degree, logarithmic in $1/\eps$, that correlates with the label
$y$, i.e., $\E_{(\x,y)\sim \D}[p(\x)y] > 0$. Our algorithm exploits this weaker property and turns it
into an iterative process that constantly improves the current guess.  

Since low-degree polynomials are able to get non-trivial correlation, if we
compute the low-order moment tensors of $y$, i.e., $\E_{(\x, y) \sim \D}[ \x^{\otimes k} y]$, we can use
them to compute a low-dimensional subspace $V$ onto which $\vec w^\ast$ has non-trivial
($\poly(\eps)$) projection; see
\Cref{prop:nontriavial_angle} and \Cref{lem:tensor-flattening}.  This means that picking a random
element $\vec v\in V$ will, with reasonable probability (say $1/3$), have non-trivial, i.e.,
$\poly(\eps)$, correlation with $\vec w^\ast$. 
Our algorithm improves an initial guess $\vec w$ of the optimal weight vector as follows: by
applying the above technique to a thin strip perpendicular to our current guess $\vec w$, we can --
with some non-trivial probability -- find a vector that correlates non-trivially with the projection
of $\vec w^\ast$ on the orthogonal complement of $\vec w$.  This, in turn, will allow us to compute
a new guess $\vec w'$ with a slightly better correlation with $\vec w^\ast$; see
\Cref{lem:corr-improv}.  Repeating this process $\poly(1/\eps)$ times will produce a vector with at
most $\opt+ \eps$ error; see \Cref{ssec:high-noise}.  Each iteration of the above algorithm requires
$d^{O(\log(1/\eps))}$ samples and time to compute the moments of order $O(\log(1/\eps))$. We then
need to run the algorithm many times to find a trial in which we get lucky for $\poly(1/\eps)$
rounds in a row. This latter operation increases the complexity by a $2^{\poly(1/\eps)}$ factor.

The crucial structural result that we exploit is that for any $\eps$-biased halfspace $f(\x)$ we can
construct a \emph{mean-zero} polynomial $p$, which is a function of $\vec w^\ast \cdot \x$, and 
matches the sign of $f(\x)$ everywhere, see \Cref{infpro:sign-matching-polynomial} and
\Cref{sub:sign-mathcing-polynomial}. 
We then show that, even when Massart noise with $\eta = 1/2$
is applied to $f$, this polynomial $p$ will achieve non-trivial correlation with $f$, i.e.,
$\E_{(\x,y)\sim \D}[p(\x)y]>0$.  This implies that some low-order moment of our distribution must
have a non-trivial component in the $\vec w^\ast$-direction; see \Cref{sub:chow-tensors}. Flattening
the moment tensors and performing SVD on the resulting matrices, we can efficiently construct a
subspace (spanned by the top eigenvectors) onto which $\vec w^\ast$ has non-trivial projection.
}

\paragraph{SQ Lower Bounds:~\Cref{thm:intro-homogeneous-sq-lb,thm:intro-biased-sq-lb}}
\new{The connection between low-degree moments and the correlation between $\x$ and $y$} is even
more evident when trying to establish SQ lower bounds. Using the framework introduced in
\cite{DKS17-sq}, we can show the following: if there exists a Massart halfspace whose degree-$k$
moments have $\x$ independent of $y$, then distinguishing a random rotation of this distribution
from one in which $\x$ is {\em truly} independent of $y$ is roughly $d^{\Omega(k)}$-hard in the SQ
model. Thus, proving SQ lower bounds amounts to finding Massart halfspace distributions that fool
low-order moments in this way. 

We note that the moment-matching and noise conditions amount to a system of linear inequalities in
terms of the noise function $\eta(\vec x)$.  To construct our hard examples, we use linear
programming (LP) duality to show the existence of solutions to this system.  It should be noted that
LP duality has been previously used to provide SQ lower bounds for agnostic learning of halfspaces,
see, e.g.,~\cite{DFTWW15, DKPZ21}.  In the current setting, however, we have to construct instances
that satisfy the much more restrictive (constant-bounded) Massart noise assumption. 

To achieve this, we show that it is possible to add constant-bounded Massart noise to a classifier
$f(\x)$ in order to cause it to fool polynomials of degree at most $k$ if and only if there is no
polynomial $p$ of degree at most $k$ with $\E_{\x\sim \D_\x}[p(\x)]=0$, such that the sign of
$p(\x)$ agrees everywhere with the optimal halfspace $f(\x)$.  In order to prove that there exists
no such low-degree, zero-mean, sign-matching polynomial, we establish a much more general result:
the sign of any low-degree, zero-mean polynomial $p(\x)$ cannot be too biased, and therefore $p(\x)$
cannot match the sign of any heavily-biased function; see \Cref{lem:infisibility-dual}.  In
particular, if $f$ is a halfspace with bias $(1-\gamma)$, this is true for all
zero-mean polynomials of degree $k$ up to (roughly) $\log(1/\gamma)$, implying the SQ lower bound of
\Cref{thm:intro-biased-sq-lb}.
\new{We remark that our main structural lemma (\Cref{lem:infisibility-dual}) can readily be used to
establish SQ lower bounds for Massart learning other geometric concept classes such as intersections of
two homogeneous halfspaces. }

\new{It remains to provide a proof overview of \Cref{thm:intro-homogeneous-sq-lb} for 
learning homogeneous halfspaces with {\em general} Massart noise ($\eta = 1/2$).
Interestingly, instead of adapting the methodology described in the preceding paragraphs,}
we prove \Cref{thm:intro-homogeneous-sq-lb} via a ``reduction'' to our SQ lower bound 
of \Cref{thm:intro-biased-sq-lb}, i.e., for learning (general) halfspaces with {\em constant-bounded} Massart noise. 
\new{At a high-level,} this is achieved by using the noise to ``wash out'' any information except when $\x$ lies in a thin strip 
not passing through the origin. Restricted to this strip, $f(\x)$ is now a non-homogeneous halfspace, and our
lower bound for general halfspaces applies.

\subsection{Organization}

\new{The structure of this paper is as follows:
In \Cref{sec:technical-overview}, we provide a detailed overview of our techniques.  In
\Cref{sec:prelims}, we introduce the required notation and preliminaries. In \Cref{sec:constant-bounded-arbitrary}, we give our algorithm for
learning general halfspaces with constant-bounded Massart noise, establishing
\Cref{thm:intro-bounded-massart-upper-bound}.
In \Cref{sec:benign}, we
give our algorithm for learning halfspaces under general Massart noise, establishing
\Cref{thm:intro-high-noise}.   Finally, in \Cref{sec:massart-biased-lower-bound}, we
prove our SQ lower bounds, \Cref{thm:intro-biased-sq-lb} and \Cref{thm:intro-homogeneous-sq-lb}.
}

\section{Detailed Technical Overview}
\label{sec:technical-overview}

\subsection{Learning General Halfspaces with Constant-Bounded Massart Noise:
\Cref{thm:intro-bounded-massart-upper-bound}}
\label[sub]{ssec:general-bounded-overview}

Our learning algorithm for this regime leverages the certificate framework developed in 
\cite{DKTZ20b,DKKTZ20,DKKTZ21b}.  This framework makes essential use of
the Massart noise condition, i.e., the fact that $ \eta(\x) \leq 1/2$ for every $\x \in \R^d$. 
Using this fact, we have the following characterization of the affine
function $\ell^\ast(\x) = \vec w^\ast \cdot \x - t^\ast$ defining the
target halfspace $f(\x) = \sgn(\ell^\ast(\x))$. For every non-negative
function $T(\x): \R^d \mapsto \R^+$, it holds that 
\[
\E_{(\x, y) \sim \D}[\ell^\ast(\x) y~T(\x)]
=
\E_{(\x, y) \sim \D}[\ell^\ast(\x) \sgn(\ell^\ast(\x)) (1- 2 \eta(\x))~T(\x)]
\geq 0 \,.
\]
On the other hand, when $\pr_{(\x, y) \sim \D}[\sgn(\ell(\x)) \neq y] \geq  \opt + \eps$, 
there exists a non-negative function $T(\x)$ such that $ \E_{(\x, y) \sim \D}[\ell(\x) y~T(\x)] < 0 $. 
We say that such a $T$ is a ``certifying'' function (or simply a certificate) for the guess $\ell(\x)$, 
because it proves that $\ell(\x)$ is not optimal.
\begin{definition}[Certificate Oracle]
	Let $\D$ be a distribution on $\R^d\times\{\pm 1\}$ with standard normal
	$\x$-marginal satisfying the Massart noise condition with respect to some
	halfspace.  Fix $\eps,\delta\in(0,1]$.  Given any affine function $\ell(\x)$
	such that $\pr_{(\x,y) \sim \D}[ \sgn(\ell(\x)) \neq y] \geq \opt + \eps$, a
	$\rho$-certificate oracle returns a non-negative function  $T:\R^d \mapsto \R_+$ 
	such that $\E_{(\x, y) \sim \D}[\ell(\x) y ~ T(\x)] \leq - \rho \|\ell(\x)\|_2$.
\end{definition}
In \cite{DKTZ20b,DKKTZ20,DKKTZ21b}, it was shown that under Massart label noise, 
the problem of efficiently learning an optimal halfspace can be reduced to the
problem of efficiently computing certifying functions.  At a high-level, the
certifying functions can be viewed either as separation oracles for a convex
program (that can be solved via the ellipsoid method) or as loss functions in an online
convex optimization problem (that can be solved via online gradient descent).  
In our setting, we need to adapt the proofs of \cite{DKTZ20b} slightly to work for 
general halfspaces. 
For completeness, we provide the details of this reduction in \Cref{ssec:online}.  
\begin{proposition}
	\label{pro:certificate-reduction}
	Let $\D$ be a distribution on $\R^d\times\{\pm 1\}$ whose $\x$-marginal is the standard normal. Assume that
	$\D$ satisfies the $\eta$-Massart noise condition with respect to some halfspace. Fix $\eps,\delta\in(0,1)$.
	Given a $\rho$-certificate oracle $\cal O$ with runtime $T_{\mathcal{O}}(\rho)$,
	there exists an algorithm that makes 
	$M = \poly(\frac{d}{\eps \rho})$ calls to $\mathcal O$, 
	draws $N=\poly(\frac{d}{\eps \rho})\log(1/\delta)$ samples from $\D$, 
	runs in $\poly(d,N, M) T_{\mathcal{O} }(\rho) $ time and computes a hypothesis $h$, 
	such that $\pr_{(\x,y)\sim\D}[h(\x)\neq y]\leq \opt+\eps$, with probability $1-\delta$.
\end{proposition}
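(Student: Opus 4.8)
The plan is to reduce to online convex optimization, following the certificate-to-learning framework of \cite{DKTZ20b,DKKTZ20,DKKTZ21b}; the only genuinely new point is carrying the threshold parameter along. Identify an affine function $\ell(\x)=\vec w\cdot\x-t$ with the vector $\theta=(\vec w,-t)\in\R^{d+1}$ via the feature map $\phi(\x)=(\x,1)$, so that $\ell(\x)=\langle\theta,\phi(\x)\rangle$ and $\|\ell\|_{L_2(\D_\x)}=\|\theta\|_2$; since $\sgn(\ell)$ is invariant under positive rescaling, normalize the target halfspace so that $\|\theta^\ast\|_2=1$. The algorithm runs online gradient descent on the unit sphere $S^d\subset\R^{d+1}$ (alternatively, the ellipsoid method on a bounded ball, with the certificates playing the role of separating hyperplanes): at round $i$, with the current unit vector $\theta_i$, it (a) queries the $\rho$-certificate oracle on $\ell_{\theta_i}$ to obtain a non-negative $T_i$ (normalized so that $\|T_i\|_{L_2(\D_\x)}\le 1$), (b) forms an empirical estimate $\widehat g_i$, from a fresh batch of samples, of the vector $g_i\eqdef-\E_{(\x,y)\sim\D}[\phi(\x)\,y\,T_i(\x)]\in\R^{d+1}$, (c) takes a geodesic (equivalently, project-and-renormalize) step $\theta_{i+1}\gets(\theta_i-\alpha\,\widehat g_i)/\|\theta_i-\alpha\,\widehat g_i\|_2$ with a suitable step size $\alpha$, and (d) records the empirical $0/1$ error of $\sgn(\ell_{\theta_i})$ measured on an independent held-out batch. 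After $M$ rounds it outputs $\sgn(\ell_{\theta_i})$ for the recorded index $i$ of smallest held-out error.

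The analysis uses two linear-in-$\theta$ facts about $g_i$. By the Massart condition, $\langle\theta^\ast,g_i\rangle=-\E[\ell^\ast(\x)\,y\,T_i(\x)]=-\E[|\ell^\ast(\x)|\,(1-2\eta(\x))\,T_i(\x)]\le 0$ for every non-negative $T_i$. And whenever the current guess is $\eps$-suboptimal, i.e. $\pr_{(\x,y)\sim\D}[\sgn(\ell_{\theta_i}(\x))\neq y]\ge\opt+\eps$, the defining property of the oracle yields $\langle\theta_i,g_i\rangle=-\E[\ell_{\theta_i}(\x)\,y\,T_i(\x)]\ge\rho\|\ell_{\theta_i}\|_2=\rho$ (here we use $\|\theta_i\|_2=1$, which is why we descend on the sphere rather than the ball: it keeps the certificate margin a clean constant $\rho$ instead of the degenerate $\rho\|\theta_i\|_2$). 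Suppose toward a contradiction that all guesses $\theta_1,\dots,\theta_M$ are $\eps$-suboptimal. The standard $O(\sqrt M)$ regret bound for online gradient descent on the (bounded-diameter) sphere against the comparator $\theta^\ast$, namely $\sum_{i\le M}\big(\langle\theta_i,g_i\rangle-\langle\theta^\ast,g_i\rangle\big)\le O\!\big(1/\alpha+\alpha M G^2\big)$ with $G\eqdef\sqrt{\E[\|\phi(\x)\|_2^2]}=O(\sqrt d)$ bounding $\|g_i\|_2$ (by Cauchy--Schwarz and $\|T_i\|_{L_2}\le 1$), combined with the two facts above (which give $\langle\theta_i,g_i\rangle-\langle\theta^\ast,g_i\rangle\ge\rho$ at every step) and with the sampling error $\|\widehat g_i-g_i\|_2$ absorbed into the regret, forces $\rho M\le O(G\sqrt M)$ once $\alpha$ is optimized, i.e. $M\le O(G^2/\rho^2)=\poly(d/\rho)$. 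Hence running for $M$ slightly above this threshold guarantees that some recorded guess $\theta_i$ has true $0/1$ error below $\opt+\eps$; a uniform-convergence bound over the $M$ candidates (a held-out sample of size $\poly(1/\eps)\log(M/\delta)$) then certifies that the output has error $\le\opt+\eps$ with probability $1-\delta$. Summing the per-round costs gives $M=\poly(d/(\eps\rho))$ oracle calls, $N=\poly(d/(\eps\rho))\log(1/\delta)$ samples, and running time $\poly(d,N,M)\,T_{\mathcal O}(\rho)$, as claimed.

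The step that needs the most care — and the place where adapting \cite{DKTZ20b} is not entirely mechanical — is controlling the normalization: the certificate guarantee $\E[\ell_i y T_i]\le-\rho\|\ell_i\|_2$ is homogeneous of degree one in $\ell_i$, so it is only quantitatively meaningful on a fixed-norm slice, and one must (i) always query the oracle on the unit-norm representative, (ii) normalize the returned $T_i$ to unit $L_2(\D_\x)$-norm (the natural normalization, since $\rho$ is really a reweighted correlation of $\ell$ and $y$) without degrading $\rho$ by more than a constant, and (iii) run the descent on the sphere so that the $O(\alpha^2)$ curvature/renormalization corrections stay lower-order and the contradiction above is not spoiled; a plain projected descent on the Euclidean ball does \emph{not} work here, since there the iterates can drift toward the origin where the $\rho$ bound is vacuous. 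The remaining ingredients are routine: a vector Bernstein inequality together with sub-Gaussianity of $\phi(\x)$ shows that $\poly(d/\rho)\log(M/\delta)$ samples per round make the total sampling error contributed to the regret $o(\rho M)$ (with a union bound over the $M=\poly(d/\rho)$ rounds), and standard VC/uniform-convergence bounds handle the final model-selection step; these are exactly as in the reductions of \cite{DKTZ20b,DKKTZ20,DKKTZ21b,CKMY20}, which we follow.
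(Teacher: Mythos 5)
Your reduction follows the same general template as the paper (certificates as linear losses, online gradient descent, contradiction via a regret bound, final model selection on a held-out sample), but the step you flag yourself as delicate is exactly where the argument breaks. You run OGD on the unit sphere in $\R^{d+1}$ and invoke ``the standard regret bound'' there, asserting that the renormalization corrections are $O(\alpha^2)$ and lower order. The sphere is not convex, and the project-and-renormalize step is not non-expansive toward the comparator: if $\theta_i$ is on the sphere and $z=\theta_i-\alpha\widehat g_i$, then $1-\|z\|_2 \approx \alpha\langle\theta_i,\widehat g_i\rangle$, which is \emph{first} order in $\alpha$ and has the bad sign precisely in the regime you need ($\langle\theta_i,g_i\rangle\ge\rho>0$ pulls $z$ inside the ball, and pushing it back out to the sphere generically increases $\|\theta_{i+1}-\theta^\ast\|^2$ by $\Theta\bigl((1-\|z\|_2)\bigr)$ unless $z$ is nearly aligned with $\theta^\ast$). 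Summed over $M$ rounds this injects an extra $\Theta(\alpha G M)$ into the telescoping, i.e.\ an additive $\Theta(GM)$ in the regret that does not vanish as $\alpha\to0$; since $G=\Theta(\sqrt d)\gg\rho$, it swamps the $\rho M$ separation and no contradiction is obtained. So as written, neither option closes: on the ball the margin $\rho\|\theta_i\|_2$ degenerates at the origin (as you correctly note), and on the sphere the regret machinery you cite does not apply.

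The paper resolves this differently, and you should compare: it stays on a convex domain $\{\|\vec w\|_2\le 1,\ |t|\le 4\sqrt{\log(1/\eps)}\}$ (so Hazan's bound applies verbatim) and augments the loss, taking $r_i(\vec w,t)=-\E_{(\x,y)\sim\D}\bigl[(T^{(i)}(\x)+\rho)\,y\,(\x,1)\bigr]\cdot(\vec w,t)$. The extra $\rho$-multiple of the Chow vector moves the margin from the iterate side to the comparator side: one shows $r_i(\wstar,t^\ast)\le-\rho\,\E_{\x}[|\ell^\ast(\x)|\beta(\x)]\le-\Omega(\rho\eps^2)$ (this needs more than your observation $\langle\theta^\ast,g_i\rangle\le0$; it uses $\E_\x[\beta(\x)]\ge2\eps$ and Gaussian anticoncentration around the threshold, which is where the $\eps^2$ comes from), while $r_i(\vec w^{(i)},t^{(i)})\ge\rho\|\ell^{(i)}\|_2-\rho\,\E[\ell^{(i)}y]\ge0$ by Cauchy--Schwarz, \emph{even when the iterate is at or near the origin}. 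This gives a clean per-round separation of $\Omega(\rho\eps^2)$ on a convex set, and the rest (per-round gradient estimation in $L_4$, union bound, model selection) goes through as you describe. To repair your write-up you would either adopt this $\rho$-augmentation (in which case the sphere is unnecessary) or supply a genuinely different analysis for normalized iterates (e.g.\ an angle-based potential), which is not a routine consequence of standard OCO since linear losses are not geodesically convex on the sphere.
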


Given \Cref{pro:certificate-reduction}, it remains to construct an efficient certificate oracle.
The first step is to restrict our search over some \emph{parametric class} of non-negative functions.

\paragraph{Non-Continuous Certificates}
When we do not restrict our search to continuous functions, we can use
certificates of the form $T(\x) = \1\{\sgn(\ell(\x)) \neq \sgn(\vec v \cdot \x -
b)\}$, for some $\vec v \in \R^d, b \in \R$.  It is not hard to see 
(\Cref{app:non-continuous-certificate}) that, for any $\eta \in [0,1/2]$,
taking $T(\x) = \1\{ \sgn(\ell(\x)) \neq f(\x)\}$, 
i.e., the indicator of  the disagreement region of $\sgn(\ell(\x))$ and $f(\x)$, 
we obtain that $ \E_{(\x, y) \sim \D}[\ell(\x) y~T(\x)] \leq - \Omega(\eps^2 ) \|\ell(\x)\|_2 $.
Observe that, when we consider non-continuous certificates, there exist
$\poly(\eps)$-certificates independent of both the noise level $\eta$ and the
bias of the halfspace $\gamma$.  However, finding such certificates is computationally hard in general.
In \cite{DKKTZ20}, the authors provided a polynomial-time oracle for
non-continuous certificates, i.e., subsets of the disagreement region, under two
crucial assumptions. Specifically, they assumed that (1) the target halfspace
is homogeneous and (2) the distribution $\D$ satisfies the Tsybakov noise
condition, which is significantly weaker than the $1/2$-Massart noise
assumption. In particular, the technique of \cite{DKKTZ20} to isolate a
subset of the disagreement region does not work when we relax either of the
above assumptions. In what follows, we describe an efficient
certificate oracle for general halfspaces 
under the assumption that $\eta = 1/2- \Omega(1)$, 
i.e., in the constant-bounded Massart noise regime.

\paragraph{The Exponential \-Shift Certificate}
In order to get a handle on the optimization problem of finding a certificate, the first step is to
consider smooth function classes. 
\new{
We will show how to construct a certificate function against the constant guess $-1$, when the true
halfspace $f(\x) = \sign(\vec w^\ast \cdot \x - t^\ast)$ is $(1-\gamma)$-biased: the probability of the negative region is very large $\pr_{\x \sim \normal}[f(\x) = -1] = 1-\gamma$; see
\Cref{fig:constant-hypotheses}.   Finding certifying functions against the constant guess $-1$ captures
many of the challenges of the general case.}  The idea is to find a continuous function $T(\x)$ that
puts more weight on the disagreement region of $f(\x)$ (colored blue in
\Cref{fig:constant-hypotheses}) than the agreement region (colored red in
\Cref{fig:constant-hypotheses}).  In particular, in order to find a certificate against the constant
guess $-1$, we want to find some function $T(\x)$ so that the following ratio is 
a sufficiently large constant (greater than $1/\beta$):
\begin{equation}
	\label{eq:ratio}
	\frac{
		\E_{\x \sim \normal}[T(\x) \1\{f(\x) = +1\}]
	}
	{
		\E_{\x \sim \normal}[T(\x) \1\{f(\x) = -1\}] 
	} 
\end{equation}
\new{
Our key idea is to use an exponential-shift certificate of the form $T(\x) = e^{\vec v \cdot \x}$.  By
multiplying the Gaussian density with the exponential function $T(\x)$ we essentially shift the mean
of the Gaussian from $\vec 0$ to $\vec v$.  Setting $\vec v$ to be a large multiple of $\vec
w^\ast$, we can re-center the Gaussian of the ratio of \Cref{eq:ratio} to lie well within the
disagreement (blue) region; see \Cref{fig:gaussian-shift}.  In order to make the ratio of
\Cref{eq:ratio} sufficiently large, it suffices to set $\vec v = c~\vec w^\ast$ for $c =
\Theta(\sqrt{\log(1/\gamma) } )$; see also the proof of \Cref{lem:taylor-exponential-polynomial}.  
}

\def\FunctionF(#1){(-1 + 18 *(1 + 2/27 *(-3 + #1))^2 - 48 *(1 + 2/27 *(-3 + #1))^4 +
	32 *(1 + 2/27 *(-3 + #1))^6)^2-0.5}

\begin{figure}
	\centering
	\subfloat[]{\begin{tikzpicture}[scale=1]
			\label{fig:constant-hypotheses}
			\coordinate (start) at (0.5,0);
			\coordinate (center) at (0,0);
			\coordinate (end) at (0.5,0.5);
			\draw[->] (-1.5,0) -- (4.5,0) node[anchor=north west,black] {};
			\draw[->] (0,-1) -- (0,3) node[anchor=south east] {};
			\draw[fill=black] (2.3,0) circle (0.03) node[above right] {};
			\draw[fill=blue,opacity=0.2,draw=none] (2.3,-1)--(2.3,3)--(4.5,3)--(4.5,-1);
			\draw[fill=red,opacity=0.2,draw=none] (2.3,-1.)--(2.3,3)--(-1.5,3)--(-1.5,-1);
			\draw[black,thick] (2.3,-1) -- (2.3,3);
			\draw[->] (0,-1) -- (0,3) node[right=2mm, below] {$\x_{2}$};
			\draw[->] (-1.5,0) -- (4.5,0) node[right=2mm, below] {$\x_{1}$};
			\draw[->] (2.3,1.3) -- (2.7,1.3) node[right=2mm,below] {$\vec w^\ast$};
			\draw (2.3,0) node[below right] {$t^\ast$ };
			\draw[fill=black] (3.4,0) circle (0.03) node[above right] {};
			\draw (3.2,0) node[below right] {$c$ };
			\draw (0,0) node[below left] {$0$ };
\end{tikzpicture}}
	\subfloat[]{\begin{tikzpicture}\label{fig:gaussian-shift}
			\begin{axis}[ restrict y to domain=-4:4,xlabel=$\x_1$,ylabel=, legend pos=north west,axis y line =middle,
				axis x line =middle,
				axis on top=true,
				xmin=-1,
				xmax=4,
				ymin=-0.35,
				ymax=1,
				height=2.2in,
				width=3in,
xticklabels={,,,,$t^\ast$,$c$},
				yticklabels={},
				]
				\addplot[color=red, samples=1000, domain=-4:4,dashed]
				{e^(-8*x^2)/1.5};
				\addplot[color=black, samples=1000, domain=-4:4]
				{e^(-8*(x-3)^2)/1.5};
				\addplot[color=blue, samples=10, domain=2:4]
				{0.2};
				\addplot[color=red, samples=10, domain=-2:2]
				{-0.2};
				\draw[->] (axis cs:1,0.5)--(axis cs:2,0.5);
				\node at (axis cs:1.65,0.6){$ e^{c\x_1}$};
				\node at (axis cs:-0.2,-0.1){$ 0$};
\end{axis}
	\end{tikzpicture}}
	\caption{
		\textbf{(a)} An instance where $f(\x) = \vec w^\ast \cdot \x - t^\ast$ is very biased and
		the constant hypothesis $\ell(\x) = -1$ agrees with $f(\x)$ ``almost everywhere'', i.e., in
		the red region.  A non-negative certifying function against the hypothesis $-1$ must put
		significantly more weight to the disagreement region (colored in blue).  \new{Notice that by
		Gaussian concentration we have that $t^\ast = O(\sqrt{ \log(1/\gamma) } )$.}
		\\
		\textbf{(b)} The certifying function $T(\x) = e^{ c ~ (\vec w^\ast \cdot \x)}$ essentially
		moves the mean of the Gaussian from $\vec 0$ to $c \vec w^\ast$.  Choosing $c =
		\Theta(\sqrt{\log(1/\gamma}) )$ implies that the mass of the blue region with respect to the
		shifted normal $\normal(c \vec w^\ast, \vec I)$ will be much larger than the mass of the red
		region, making \Cref{eq:ratio} true.
	}
	\label{fig:constant-hypotheses-caption}
\end{figure}
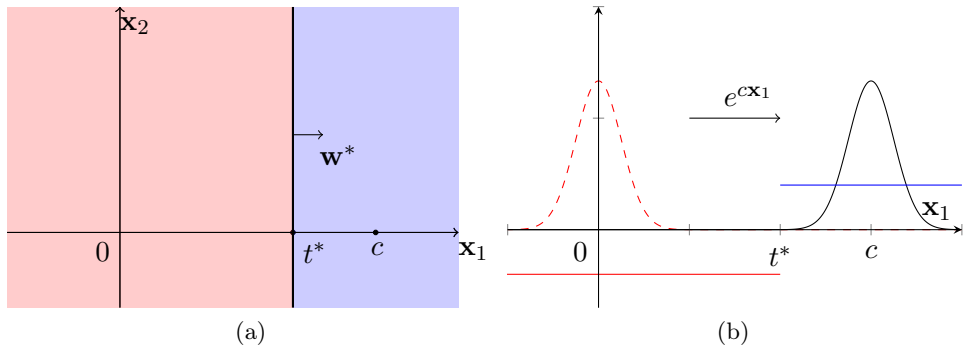

\paragraph{The Polynomial-Shift Certificate}
Even though we have shown that there exists an exponential-shift certificate, it
is still not easy to compute: minimizing $\E_{(\x, y) \sim \D}[\ell(\x) y e^{\vec v \cdot \x}]$ 
over $\vec v\in \R^d$ is a non-convex objective that, in general, is hard to optimize.
In order to circumvent this obstacle, we consider polynomial certificates 
of the form $q^2(\x)$ for some low-degree polynomial $q$. 
Such certificates were also used in \cite{DKTZ20b}
in order to learn homogeneous halfspaces with Tsybakov noise.  
The advantage of having $T(\x)= q^2(\x)$, for some low-degree polynomial $q$, 
is that we can efficiently compute certificates via semi-definite programming (SDP); see
\Cref{sub:certificate_optimization}.  

As with the exponential-shift certificate, we can provide a high-level overview of some of the ideas
involved by trying to construct a certificate against the constant hypothesis $-1$, when the optimal
halfspace is almost everywhere equal to $-1$, i.e., $\pr_{\x \sim \normal}[f(\x) = -1] = 1-\gamma$.
Our goal in this case is to find a polynomial that assigns much more weight to the positive region
(colored blue in \Cref{fig:constant-hypotheses}) than the negative region (colored red in
\Cref{fig:constant-hypotheses}).  The main structural result that enables our algorithm is the
following lemma, where we show that we can achieve the same effect of the exponential-shift
certificate discussed above using a low-degree polynomial
(see~\Cref{lem:taylor-exponential-polynomial} for a more detailed statement and proof).
\begin{lemma}[Low-Degree Polynomial Shift]
	\label{lem:sos-ratio}
	Let $f(\x)$ be an at most $(1-\gamma)$-biased halfspace. 
	There exists an absolute constant $C \geq 1$ such that
	\[
	e^{k/C - C \log(1/\gamma)}
	\leq 
	\max_{p: \mathrm{deg}(p) \leq k}
	\frac{\E_{\x \sim \normal}[p^2(\x) \1\{f(\x) = +1\}]}
	{\E_{\x \sim \normal} [p^2(\x) \1\{f(\x) = -1\}]}
	\leq 
	e^{C~ k \log k - \log(1/\gamma)/C}
	\,.
	\]
\end{lemma}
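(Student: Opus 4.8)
The plan is to reduce to a one‑dimensional statement and then prove the two bounds separately. By rotational invariance of the standard Gaussian, set $\vec u = \vec w^\ast/\|\vec w^\ast\|_2$, $z = \vec u\cdot\x\sim\normal(0,1)$, and $b = t^\ast/\|\vec w^\ast\|_2$, so that $f(\x) = \sgn(z-b)$; the biasedness hypothesis gives $|b|\le\sqrt{2\log(1/\gamma)}$, and I will assume the positive region is the minority, i.e. $\pr[z>b]=\gamma$ and $b=\Phi^{-1}(1-\gamma)\ge0$ (the other case only makes the ratio larger). For the lower bound it suffices to use test polynomials of the form $p(\x)=q(z)$; for the upper bound one conditions on the fiber $\{\x:\vec u\cdot\x=z\}$ and applies the one‑dimensional estimate pointwise. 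So everything reduces to bounding $R_k := \max_{\deg q\le k}\int_b^\infty q^2\phi \,/\, \int_{-\infty}^b q^2\phi$, where $\phi$ is the standard Gaussian density, and the goal becomes $e^{k/C-C\log(1/\gamma)}\le R_k\le e^{Ck\log k-\log(1/\gamma)/C}$.

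For the lower bound: if $\log(1/\gamma)\gtrsim k$ the constant polynomial already gives $R_k\ge\gamma/(1-\gamma)\ge e^{k/C-C\log(1/\gamma)}$, so assume $k$ is much larger than $\log(1/\gamma)$. Then I would take $q=q_c$, the degree‑$k$ Taylor polynomial of $e^{cz/2}$ with $c=\Theta(\sqrt k)$ chosen with a small enough constant; the idea is that $q_c^2$ simulates the exponential‑shift weight $e^{cz}$, which re‑centers the Gaussian at $c$ (since $e^{cz}\phi(z)=e^{c^2/2}\phi(z-c)$). The Lagrange/alternating‑series remainder bound gives $|q_c(z)-e^{cz/2}|\le (c|z|/2)^{k+1}/(k+1)!=e^{-\Omega(k)}$ on the window $|z|\lesssim\sqrt k$, which is wide enough to carry essentially all the $\normal(c,1)$ mass; there $q_c^2=(1\pm o(1))e^{cz}$, so $\int_b^\infty q_c^2\phi\ge\Omega(e^{c^2/2})$ while $\int_{-\infty}^b q_c^2\phi\le 2e^{c^2/2}\Phi(b-c)+e^{-\Omega(k)}\le e^{c^2/2-(c-b)^2/2+O(1)}$, the contribution of $|z|\gg\sqrt k$ being absorbed into $e^{-\Omega(k)}$ using only the crude bound $|q_c(z)|\le e^{c|z|/2}$ plus Gaussian decay. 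Hence $R_k\ge e^{(c-b)^2/2-O(1)}$, and since $(c-b)^2\ge c^2-2c|b|\ge c^2-2c\sqrt{2\log(1/\gamma)}$, the inequality $2c\sqrt{2\log(1/\gamma)}\le c^2/4+O(\log(1/\gamma))$ turns this into $R_k\ge e^{\Omega(k)-O(\log(1/\gamma))}\ge e^{k/C-C\log(1/\gamma)}$.

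For the upper bound I would split on whether $\gamma$ is exponentially small in $k$. If $\gamma\le 9^{-k}$: normalizing $\int q^2\phi=1$ and expanding $q=\sum_{j\le k}c_jh_j$ in the normalized Hermite basis, Cauchy–Schwarz gives $q^2\le\sum_{j\le k}h_j^2$ pointwise, and Gaussian hypercontractivity ($\|h_j\|_4\le 3^{j/2}\|h_j\|_2$) gives $\int_b^\infty h_j^2\phi=\E_{z\sim\normal(0,1)}[h_j(z)^2\1\{z>b\}]\le 3^j\sqrt{\pr[z>b]}=3^j\sqrt\gamma$; summing, $\int_b^\infty q^2\phi\le 3^{k+1}\sqrt\gamma\le 1/2$, so $R_k\le 2\cdot 3^{k+1}\sqrt\gamma\le e^{Ck\log k-\log(1/\gamma)/C}$. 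If instead $\gamma>9^{-k}$ — so $\log(1/\gamma)=O(k)$ and it suffices to prove $R_k\le e^{O(k\log k)}$ — I would use polynomial rigidity: with $M=\max_{[b-1,b]}|q|$, Markov's inequality forces $|q|\ge M/2$ on a subinterval of $[b-1,b]$ of length $1/(2k^2)$, hence $\int_{b-1}^b q^2\phi\ge\min_{[b-1,b]}\phi\cdot M^2/(8k^2)$; meanwhile the Chebyshev extremal bound $|q(b+s)|\le M\,T_k(1+2s)\le M(2+4s)^k$ for $s\ge0$ together with $\phi(b+s)\le\phi(b)e^{-s^2/2}$ gives $\int_b^\infty q^2\phi\le M^2\phi(b)\int_0^\infty(2+4s)^{2k}e^{-s^2/2}\,ds=M^2\phi(b)\,k^{O(k)}$; since $\phi(b)/\min_{[b-1,b]}\phi=O(1)$, dividing yields $R_k\le k^{O(k)}$. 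Combining the two cases (and absorbing the trivial regime where $\gamma$ and $k$ are both $O(1)$) gives the claimed upper bound.

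The main obstacle is the upper bound in the intermediate range of $\gamma$: the hypercontractive tail estimate only becomes useful once $\gamma$ is exponentially small in $k$, whereas the Chebyshev/Markov rigidity bound carries no $\gamma$ dependence at all, so one must produce a combination that is simultaneously $e^{O(k\log k)}$ and decaying polynomially in $\gamma$ — the case split at $\gamma\sim 9^{-k}$ is exactly what makes each estimate applicable where the other fails. On the lower‑bound side, the analogous subtlety is that the truncated exponential $q_c$ only approximates $e^{cz/2}$ well inside a window of width $\Theta(\sqrt k)$, which forces the calibration $c=\Theta(\sqrt k)$ and requires controlling the far tail $|z|\gg\sqrt k$ by a separate crude estimate.
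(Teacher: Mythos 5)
Your lower bound follows essentially the same route as the paper: you take the degree-$k$ truncation $S_k$ of the exponential shift $e^{cz}$ (with $c=\Theta(\sqrt k)$ rather than the paper's ``given $b,t$ choose $k=\Theta(c^2)$'' calibration, which is the same construction read in the other direction), control the Taylor remainder on a window of width $\Theta(\sqrt k)$, and kill the far tail with the crude bound $|S_k(w)|\leq e^{|w|}$ plus Gaussian decay --- this is exactly \Cref{lem:taylor-exponential-polynomial} and \Cref{clm:ell-one-approximation-taylor}. Your upper bound, however, is genuinely different. The paper proves it (\Cref{lem:taylor-exponential-polynomial-upper}) in one shot: Cauchy--Schwarz gives $\E[p^2\1\{x\geq t\}]\leq(\E[p^4])^{1/2}e^{-t^2/4}$, while Carbery--Wright anti-concentration gives $\E[p^2\1\{x\leq t\}]\gtrsim(12Ck)^{-2k}(\E[p^4])^{1/2}$, so the $(\E[p^4])^{1/2}$ factors cancel and the bound $(Ck)^{O(k)}e^{-t^2/4}$ carries the $k^{O(k)}$ term and the $\gamma^{\Omega(1)}$ decay simultaneously --- in other words, the ``main obstacle'' you identify (no single estimate that is both $e^{O(k\log k)}$ and polynomially decaying in $\gamma$) is precisely what anti-concentration resolves, with no case split. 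Your replacement --- hypercontractivity of the Hermite expansion when $\gamma$ is exponentially small in $k$, and Markov-brothers/Chebyshev-growth rigidity (which is $\gamma$-free but $k^{O(k)}$) when $\log(1/\gamma)=O(k)$ --- is correct and more elementary, at the price of the case split and of corner cases at very small $k$ (which the paper's statement glosses over equally). Three routine repairs you should make: (i) with the split at $\gamma\leq 9^{-k}$ you only get $3^{k+1}\sqrt\gamma\leq 3$, not $\leq 1/2$, so move the threshold to, say, $\gamma\leq 9^{-k-2}$; (ii) for $w=cz/2>0$ the Taylor remainder is not $w^{k+1}/(k+1)!$ but carries an extra $e^{w}$ factor, which is harmless for $c=\eps_0\sqrt k$ with $\eps_0$ small but should appear; (iii) in the fiber reduction, $g(z)=\E[p^2(\x)\mid \vec u\cdot\x=z]$ is a nonnegative univariate polynomial of degree $2k$, not the square of a degree-$k$ polynomial, so you must either write $g=q_1^2+q_2^2$ and use the mediant inequality, or rerun your two univariate estimates for nonnegative polynomials of degree $2k$; the paper sidesteps this by only stating the univariate bound (its Carbery--Wright argument extends verbatim to multivariate $p$).
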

The lower bound of \Cref{lem:sos-ratio} implies that there exists a polynomial certificate, namely a
polynomial that makes \Cref{eq:ratio} true, of degree $O(\log(1/\gamma) )$.  The upper bound
implies that degree $\Omega(\log(1/\gamma) )$ is essentially necessary. 

It is not hard to prove the upper bound using the anti-concentration of Gaussian polynomials; see
\Cref{app:upper-bound-polynomial}. To prove the lower bound of \Cref{lem:sos-ratio}, we construct a
low-degree polynomial using the Taylor approximation to the exponential-shift certificate that we
defined previously. For $c = \Theta(\sqrt{\log(1/ \gamma) } )$, we consider the \emph{square
of the Taylor expansion} $S_k(c x)$ of the function $e^{c x}$. Using the Gaussian concentration and
the fast convergence rate of the Taylor polynomial of $e^x$, in
\Cref{clm:ell-one-approximation-taylor}, we show that for degree $k = \Theta(c^2)= \Theta(
\log(1/\gamma) )$, $S_k^2(c ~\vec w^\ast \cdot \x)$ is very close to $e^{2 c ~ (\vec w^\ast
\cdot \x)}$ in the $L_1$ sense.  Therefore, we can use $T(\x) = S_k^2(c~\vec w^\ast \cdot \x)$ in
the ratio of \Cref{eq:ratio} and obtain the same guarantees (up to constant factors) with the
exponential shift $e^{2 c ~ (\vec w^\ast \cdot \x)}$ that we discussed previously.

Our certificate against general (non-constant) hypotheses is the product of (the square of) a
polynomial and a band around the current guess $\ell(\x)$, i.e., $T(\x) = \1\{r_1 \leq \ell(\x) \leq
r_2\} q^2 (\x)$.  Notice that, since the current guess $\ell(\x)$ is known to the certificate
algorithm, in order to find a band of the form $\1\{r_1 \leq \ell(\x) \leq r_2\}$, we simply need to
perform a brute-force search over the two thresholds $r_1, r_2$.  Our technical contribution here is
the following proposition, showing the existence of low-degree polynomial certificates for general
halfspaces with constant-bounded Massart noise (see \Cref{ssec:low-degree-sos-certificate} for a
more detailed statement and proof). 

\begin{proposition}[Polynomial Certificate for Halfspaces with Constant-Bounded Massart Noise]
	\label{pro:technical-overview-certificate-Biased-Massart}
	Let $\D$ be a distribution on $\R^d \times \{\pm 1\}$ whose $\x$-marginal is the standard
	normal.  Assume that $\D$ satisfies the constant-bounded Massart noise condition with respect to
	some at most $(1-\gamma)$-biased target halfspace $f(\x)$.  Let $\ell(\x)$ be  any linear
	function such that $\pr_{(\x, y) \sim \D}[\sgn(\ell(\x)) \neq y ] \geq \opt + \eps$.  There
	exist $r_1, r_2 \in \R$ and polynomial $q(\x)$ of degree $k=\Theta(\log(1/\gamma))$
	with $\|q(\x)\|_2 = 1$ such that
	\[
	\E_{(\x, y) \sim \D} \left[ \ell(\x) y \ \1\{r_1 \leq \ell(\x) \leq r_2\} q^2(\x) \right]
	\leq -  \eps^2  \poly(\gamma)  ~ \| \ell(\x) \|_2 
	\,.
	\]
\end{proposition}

\subsection{Learning Halfspaces with General Massart Noise: \Cref{thm:intro-high-noise}}
\label{ssec:high-noise}
For the purpose of this description, we will assume that the target halfspace $f(\x) = \sgn(\vec w^\ast \cdot \x)$ 
is homogeneous. This special case captures the key ideas of our algorithm;
given such a result, the generalization to general halfspaces (under general Massart noise) is fairly straightforward. 
(This generalization is carried out in \Cref{app:general_benign}.)
To facilitate the intuition, we present the main ideas behind the algorithm of 
\Cref{thm:intro-high-noise} in the following paragraphs. Our main technical contribution in this context 
is the construction of a low-degree sign-matching polynomial. 
This is presented at the end of this subsection and in full detail in \Cref{sub:sign-mathcing-polynomial}.

At a high-level, our algorithm for learning halfspaces in the $\eta = 1/2$ regime
consists of a random walk on the unit $d$-dimensional sphere, where 
we iteratively perform a random step in order to update our current guess $\vec w^{(t)}$, i.e., 
\begin{equation}
	\label{eq:projeected-update-rule}
	\vec w^{(t+1)} \gets \frac{\vec w^{(t)} + \lambda \vec v}
	{\| \vec w^{(t)} + \lambda \vec v\|_2} \,.
\end{equation}
Our goal is to find a way to sample the update vector $\vec v$, so that
at every step there is some non-trivial probability
that we make progress towards the optimal direction $\vec w^\ast$.
In order to make progress towards $\vec w^\ast$, 
it suffices to have an update vector $\vec v$ that correlates with $\vec w^\ast$ 
and belongs in the orthogonal complement, $\vec w^\perp$, of $\vec w$.  
In what follows, we will denote by $\wperp$ the normalized projection of $\vec w^\ast$
onto the subspace $\vec w^\perp$; see \Cref{fig:orth_proj}.
Given such a vector $\vec v$, we can show that there exists a step size $\lambda$
such that the update rule of \Cref{eq:projeected-update-rule}
moves $\vec w^{(t)}$ closer to $\vec w^\ast$ by a non-trivial amount with constant probability; see \Cref{lem:corr-improv}.
Observe that a uniformly random unit direction in $\R^d$ has roughly $1/\sqrt{d}$ correlation 
with $\wperp$ with constant probability.
However, in order to hit $\vec w^\ast$, we need to perform roughly $d$ 
consecutive successful updates (see \Cref{lem:corr-improv}),
resulting in an algorithm with $2^{d^{\Omega(1)} }$ runtime.  

The main algorithmic result of this section is the following proposition, 
which shows that we can efficiently sample an update vector $\vec v$
that improves the current guess with non-trivial probability.
\begin{proposition}[Correlated Update Oracle] \label{prop:tech-overview-warm-start}
	Let $\D$ be a distribution on $\R^{d} \times \{\pm 1\}$, with standard
	normal $\x$-marginal, that satisfies the Massart noise condition for $\eta =1/2$,
	 with respect to a target halfspace $f(\x) = \sgn(\vec w^\ast \cdot \x)$.  
	 Let $\vec w \in \R^d$ be a unit vector such that $\pr_{(\x, y) \sim \D}[\sign(\vec w \cdot \x) \neq y] \geq \opt + \eps$, 
	 for some $\eps \in (0,1]$.  There exists an algorithm that draws
	 $N=d^{O(\log(1/\eps))}\log(1/\delta)$ samples from $\D$, runs in time
	 $\poly(N,d)$, and with probability at least $1-\delta$ returns a
	 distribution $\mathcal V$ on $\R^d$ such that 
	\[ 
	\pr_{\vec v \sim \mathcal V}
	\left[\wperp \cdot \vec v \geq \poly(\eps) \right] \geq \frac{1}{3}\,.
	\]
	Moreover, $\mathcal V$ has description size $\poly(d/\eps)$ and can be sampled in $\poly(d/\eps)$ time.
\end{proposition}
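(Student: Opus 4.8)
Write the target direction as $\vec w^{\ast}=\cos\theta\,\vec w+\sin\theta\,\vec u$, where $\vec u=\wperp\in\vec w^{\perp}$ is a unit vector and $\theta$ is the angle between $\vec w$ and $\vec w^{\ast}$ (for concreteness I assume $\theta\le\pi/2$; the complementary regime is symmetric). The first step is to reformulate the hypothesis on $\vec w$: since the Massart oracle is with respect to $f(\x)=\sgn(\vec w^{\ast}\cdot\x)$ we have $\opt=\E_{\x}[\eta(\x)]$, and $\pr_{(\x,y)\sim\D}[\sgn(\vec w\cdot\x)\neq y]-\opt=\E_{\x}[(1-2\eta(\x))\,\1[\sgn(\vec w\cdot\x)\neq f(\x)]]$. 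Hence, writing $W=\{\sgn(\vec w\cdot\x)\neq\sgn(\vec w^{\ast}\cdot\x)\}$, the hypothesis gives $\E_{\x}[(1-2\eta(\x))\,\1_{W}]\ge\eps$; since $\pr[W]=\theta/\pi$ this also forces $\theta\ge\pi\eps$.

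\textbf{Reduction to a one-dimensional problem on a strip.} I would work inside thin strips orthogonal to $\vec w$. Fix $\sigma=\poly(\eps)$ (small enough, chosen below) and $T=O(\sqrt{\log(1/\eps)})$, so $\pr_{\x}[|\vec w\cdot\x|>T]\le\eps/2$; partition $[-T,T]$ into $O(T/\sigma)=\poly(1/\eps)$ intervals and let $\{S_{r}\}$ be the slabs $S_{r}=\{|\vec w\cdot\x-r|\le\sigma\}$. On any $S_{r}$ the coordinates $\x_{\perp\vec w}$ orthogonal to $\vec w$ are still distributed as $\mathcal{N}(0,I_{d-1})$ (independence of $\vec w\cdot\x$ and $\x_{\perp\vec w}$), and $f|_{S_{r}}(\x)=\sgn(\cos\theta\,(\vec w\cdot\x)+\sin\theta\,(\vec u\cdot\x))$ agrees, up to an interval of width $O(\sigma\cot\theta)=\poly(\eps)$ in the $\vec u$-direction, with the one-dimensional threshold $t\mapsto\sgn(t+r\cot\theta)$ at $t=\vec u\cdot\x$. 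Summing $\pr[S_{r}]\cdot\E_{\D\mid S_{r}}[(1-2\eta)\1_{W}]$ recovers $\E_{\x}[(1-2\eta)\1_{W}\1[|\vec w\cdot\x|\le T]]\ge\eps/2$ while $\sum_{r}\pr[S_{r}]\le 1$, so by pigeonhole there is an offset $r^{\ast}$ with $\E_{\D\mid S_{r^{\ast}}}[(1-2\eta)\1_{W}]\ge\eps/2$. Since $W\cap S_{r^{\ast}}$ coincides, up to a $\poly(\eps)$-width slab, with $\{\vec u\cdot\x<-r^{\ast}\cot\theta\}$ (say $r^{\ast}>0$), this yields $\Phi(-|r^{\ast}\cot\theta|)\ge\eps/4$, i.e.\ the induced threshold $\sgn(t+r^{\ast}\cot\theta)$ is at most $(1-\poly(\eps))$-biased.

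\textbf{A low-degree polynomial with $\poly(\eps)$ correlation (the crux).} For the at most $(1-\poly(\eps))$-biased one-dimensional threshold above, invoke the sign-matching polynomial construction (\Cref{infpro:sign-matching-polynomial}, obtained from the degree-$O(\log(1/\gamma))$ Taylor expansion of an exponential-shift certificate): there is a univariate $q$ of degree $K=O(\log(1/\eps))$ with $\E_{t\sim\mathcal{N}(0,1)}[q(t)]=0$, $\sgn(q(t))=\sgn(t+r^{\ast}\cot\theta)$ for all $t$, coefficients bounded by $\poly(1/\eps)$, and $|q(t)|\ge\Omega(1)$ whenever $t+r^{\ast}\cot\theta$ has magnitude at least $\poly(\eps)$ and lies in the bulk. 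Using $\E[y\mid\x]=(1-2\eta(\x))f(\x)$, on $S_{r^{\ast}}$ the product $q(\vec u\cdot\x)\,\E[y\mid\x]$ equals $|q(\vec u\cdot\x)|\,(1-2\eta(\x))\ge 0$ outside the $\poly(\eps)$-width interval where $f|_{S_{r^{\ast}}}$ and the threshold disagree, and on that interval $|q|\le\poly(1/\eps)$, contributing at most $\poly(1/\eps)\cdot\poly(\eps)$ in absolute value. Restricting to $W\cap S_{r^{\ast}}$, where $(1-2\eta)$ has weighted mass $\ge\eps/2$ and $|q(\vec u\cdot\x)|\ge\Omega(1)$ away from its $\poly(\eps)$-width transition, gives $\E_{\D\mid S_{r^{\ast}}}[q(\vec u\cdot\x)\,y]\ge\poly(\eps)$ once $\sigma$ is chosen small enough. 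I expect this step --- producing the polynomial and showing its correlation with $y$ survives Massart noise at $\eta=1/2$ --- to be the main obstacle; the remaining steps are standard.

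\textbf{From a scalar correlation to a subspace, and the output $\mathcal{V}$.} Expand $q=\sum_{j=1}^{K}c_{j}\bar h_{j}$ in the mean-zero univariate Hermite basis, $\sum_{j}c_{j}^{2}=\|q\|_{2}^{2}\le\poly(1/\eps)$, and use $\bar h_{j}(\vec u\cdot z)=\langle\bar H_{j}(z),\vec u^{\otimes j}\rangle$ to write $\E_{\D\mid S_{r^{\ast}}}[q(\vec u\cdot\x)\,y]=\sum_{j=1}^{K}c_{j}\langle T_{j},\vec u^{\otimes j}\rangle$, where $T_{j}:=\E_{\D\mid S_{r^{\ast}}}[\bar H_{j}(\x_{\perp\vec w})\,y]$ is a symmetric $j$-tensor on $\R^{d-1}$ whose entries are the degree-$j$ Hermite coefficients of the bounded function $z\mapsto\E[y\mid\x_{\perp\vec w}=z,\x\in S_{r^{\ast}}]$; hence $\|T_{j}\|_{F}\le 1$. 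Cauchy--Schwarz yields some $j\le K$ with $\langle T_{j},\vec u^{\otimes j}\rangle^{2}\ge\poly(\eps)$ while $\|T_{j}\|_{F}\le 1$. Flattening $T_{j}$ into a matrix $M$ of shape $(d-1)^{\lceil j/2\rceil}\times(d-1)^{\lfloor j/2\rfloor}$ and setting $N=MM^{\top}$, we have $N\succeq 0$, $\tr(N)\le 1$, and $(\vec u^{\otimes\lceil j/2\rceil})^{\top}N\,\vec u^{\otimes\lceil j/2\rceil}\ge\poly(\eps)$, so the span of the eigenvectors of $N$ with eigenvalue $\ge\poly(\eps)$ has dimension $\poly(1/\eps)$ and captures a $\poly(\eps)$ fraction of $\vec u^{\otimes\lceil j/2\rceil}$; a tensor-flattening/peeling argument (\Cref{lem:tensor-flattening}, \Cref{prop:nontriavial_angle}) then extracts from it a subspace $V_{r^{\ast}}\subseteq\vec w^{\perp}$ with $\dim V_{r^{\ast}}=\poly(1/\eps)$ and $\|\proj_{V_{r^{\ast}}}\vec u\|\ge\poly(\eps)$. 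Finally set $V=\sum_{r}V_{r}$ (union over the $\poly(1/\eps)$ offsets, so $\dim V=\poly(1/\eps)$) and let $\mathcal{V}$ be the uniform distribution on the unit sphere of $V$, which has description size $\poly(d/\eps)$ and is sampled by drawing a standard Gaussian in $V$ and normalizing. Since $\|\proj_{V}\vec u\|\ge\|\proj_{V_{r^{\ast}}}\vec u\|\ge\poly(\eps)$ and $m:=\dim V=\poly(1/\eps)$, a uniformly random unit $\vec v\in V$ has $\vec v\cdot\wperp=\vec v\cdot\proj_{V}\vec u$ distributed as $\|\proj_{V}\vec u\|$ times the first coordinate of a uniform unit vector of $\R^{m}$, which exceeds $\|\proj_{V}\vec u\|/(C\sqrt{m})\ge\poly(\eps)$ with probability at least $1/3$ (the symmetry $\vec v\leftrightarrow-\vec v$ fixing the sign). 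For the resource bounds, estimating all $T_{j}$ ($j\le K=O(\log(1/\eps))$) over all $\poly(1/\eps)$ offsets to operator-norm accuracy $\poly(\eps)$ costs $N=d^{O(\log(1/\eps))}\log(1/\delta)$ samples (the $1/\pr[S_{r}]=\poly(1/\eps)$ conditioning factor is absorbed) and $\poly(N,d)$ time, and a union bound with matrix concentration shows the empirical tensors, their relevant eigenspaces, and hence $V$, are accurate enough that the returned $\mathcal{V}$ has the stated property with probability $\ge 1-\delta$.
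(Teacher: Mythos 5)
Your proposal follows essentially the same route as the paper's proof of \Cref{prop:warm-start}: localize to a thin band orthogonal to $\vec w$ by an averaging argument (the paper's \Cref{lem:band-projection} and \Cref{clm:random_band}), build a zero-mean sign-matching polynomial of degree $O(\log(1/\eps))$ for the induced one-dimensional threshold and show its correlation with $y$ survives both the $\eta=1/2$ noise and the thin misalignment region (\Cref{lem:correlation-polynomial}, \Cref{clm:zero-mean-polynomial}), convert that correlation into a statement about low-order Chow/Hermite tensors and extract a $\poly(1/\eps)$-dimensional subspace via flattening and SVD (\Cref{lem:tensor-flattening}), then union over all candidate bands and sample a random unit vector (\Cref{fct:random_initialization}). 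Working with the conditional, unprojected distribution and the polynomial in $\vec u\cdot\x$, rather than explicitly projecting and defining $\eta^\perp$, is a cosmetic difference, and your direct pigeonhole over slabs even saves the $\sqrt{\log(1/\eps)}$ factor the paper loses.

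Three steps do not hold as literally written and need the paper's actual statements. First, in the tensor step you flatten $T_j$ into a balanced $(d-1)^{\lceil j/2\rceil}\times(d-1)^{\lfloor j/2\rfloor}$ matrix, form $MM^\top$, and then appeal to an unspecified ``peeling'' to go from a tensor-space subspace capturing part of $\vec u^{\otimes\lceil j/2\rceil}$ back to a subspace of $\vec w^\perp$ capturing part of $\vec u$; that extraction is not automatic and is not how \Cref{lem:tensor-flattening} works — the paper uses the $d\times d^{j-1}$ flattening precisely so that the relevant left singular vectors already live in $\R^d$, and you should do the same. Second, the quantitative claims about $q$ are overstated: the construction does not give a pointwise bound $|q|\le\poly(1/\eps)$ on the transition interval (the coefficient bound alone gives a quasi-polynomial bound at arguments of size $\sqrt{\log(1/\eps)}$), nor $|q|\ge\Omega(1)$ at distance $\poly(\eps)$ from the root; the correct substitutes are a Cauchy--Schwarz bound $\E[|q|\1_{\mathrm{bad}}]\le\|q\|_2\sqrt{\pr[\mathrm{bad}]}=\poly(\eps)$ for the bad region and the lower bound $|q|\ge c^{k}\poly(\rho)=\poly(\eps)$ together with monotonicity past the root (items 2 and 3 of \Cref{clm:zero-mean-polynomial}); these weaker bounds still give the $\poly(\eps)$ correlation after choosing $\sigma$ a sufficiently high power of $\eps$. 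Third, ``the complementary regime is symmetric'' hides the case $\theta$ close to $\pi$, where $\sin\theta$ is tiny, the transition width $\sigma|\cot\theta|$ and the induced threshold are uncontrolled, and the claim as stated is not provable; this is exactly why the paper's formal \Cref{prop:warm-start} adds the hypothesis $\theta(\vec w,\wstar)\le\pi-\eps$ and the outer algorithm also tests $-\vec w$.
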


(See \Cref{prop:warm-start} for a more detailed statement.) 
Our plan is to construct a subspace $V$ of $\R^d$ such that
$\|\proj_V(\wperp)\|_2 \geq \poly(\eps)$. To sample good update 
vectors $\vec v$, as claimed in \Cref{prop:tech-overview-warm-start}, 
we can generate a random vector $\vec v$ on the unit sphere of $V$.  
However, we need to make sure that the dimension of $V$ is sufficiently small, 
namely at most $\poly(1/\eps)$.  

\paragraph{Improving the Constant Guess}
Let us assume for now that our current guess is $\vec w = \vec 0$.
Then, in order to make progress towards $\vec w^\ast$, one can simply
use the degree-one Chow parameters of $y$, i.e., $\E_{(\x, y) \sim \D}[y \vec x]$.
Observe that the degree-one Chow parameters have positive correlation
with the optimal direction $\vec w^\ast$, since 
\(
(\E_{(\x, y) \sim \D}[y \vec x ]) \cdot \vec w^\ast 
=
\E_{(\x, y) \sim \D}[(1-2 \eta(\x)) \sgn(\vec w^\ast \cdot \vec x)~(\vec x \cdot \vec w^\ast)] 
> 0\,.
\)
Therefore, the degree-one Chow parameters are a good first update to the guess 
$\vec w = \vec 0$.  

\paragraph{Projecting onto $\vec w^\perp$: \Cref{sub:projection}}
In order to further improve a non-trivial guess $\vec w$, 
we need to find a good update direction $\vec v$ that correlates non-trivially 
with $\wperp$. A natural attempt to do so would be to project $\x$ onto the orthogonal complement of 
the current guess, i.e., $\vec w^\perp$, and then compute the Chow parameters of the projected points.
However, by doing so, the optimal classifier of the projected examples will no longer be a halfspace;
see \Cref{fig:orth_proj}.
In particular, the noise function $\eta^\perp(\x^\perp)$ after projection
will be larger than $1/2$ for a large fraction of the points.
To make the dataset  nearly separable by a halfspace with normal vector 
$\wperp$, we condition on a thin band and then project $\x$ onto $\vec w^\perp$.  
Doing so, apart from a small region close to the optimal classifier, $f^\perp$ (see \Cref{fig:orth_proj}), 
the noise function will be at most $1/2$.
By making the band sufficiently thin, we can control the probability of this ``high-noise''
area. Notice that in the projected instance the optimal halfspace 
is no longer homogeneous.  When the band (that we condition on) 
is far from the origin, the resulting optimal halfspace $f^\perp$ of the projected instance 
will be potentially very biased; see \Cref{fig:orth_proj}.

Assuming that our current halfspace $\sgn(\vec w \cdot \x)$ is at least
$\eps$ suboptimal compared to $\vec w^\ast$, we show that there exists a thin
band conditional on which the current hypothesis is roughly
$\eps/\sqrt{\log(1/\eps)}$-suboptimal.  Moreover, this band is not very far from
the origin, which implies that the optimal halfspace $f^\perp$ conditional on
the band will not be very biased: its threshold will be at most
$O(\sqrt{\log(1/\eps)})$.  
	It is worth noting that a similar orthogonal projection step onto $\vec w^\perp$ was
	used in \cite{DKKTZ20} to learn homogeneous halfspaces with Tsybakov noise. 
	The major difference between the setting of the current paper and \cite{DKKTZ20} 
	is that in the general Massart regime it is not possible to control the
	distance of the band from the origin. Specifically, it could be the case that 
	$\eta(\x) = 1/2$ for all $\x$ close to $\vec w \cdot \x = 0$, forcing us to pick a band 
	whose optimal halfspace $f^\perp$ in the subspace $\vec w^\perp$ actually 
	has threshold $\Omega(\sqrt{\log(1/\eps)})$, see \Cref{fig:orth_proj}.  
	In contrast, in \cite{DKKTZ20}, the ``soft'' Tsybakov noise condition allows
	for the band to be picked arbitrarily close to origin resulting in nearly
	homogeneous halfspaces $f^\perp$.
For the details of this projection step, see \Cref{lem:band-projection}. In
what follows, we denote the distribution of the projected instance over $\vec
w^\perp \times \{\pm 1\}$ by $\D^\perp$. We elaborate further on this step in \Cref{sub:projection}.

\begin{figure}[ht]
	\centering
	\subfloat[
	]{
		\label{fig:orth_proj}
		\begin{tikzpicture}
\coordinate (start) at (0.5,0);
			\coordinate (center) at (0,0);
			\coordinate (end) at (0.5,0.5);
			\draw (-2,1) node[left] {};
			\draw (-2,0.5) node[left] {};
			\draw (2,2) node[above] {};
			\draw (3,1.2) node[above] {};
			\draw[black, thick,black](-2,1) -- (3.75,1);
			\draw[black, thick,black](-2,0.5) -- (3.75,0.5);
			\draw[|<->|] (-2,0.5)--(-2,1) node[midway,left] {$B$};
			\draw[->] (-2,0) -- (3.8,0) node[anchor=north west,black] {};
			\draw[->] (0,-1) -- (0,2.5) node[anchor=south east] {};
			\draw[fill=blue, opacity=0.5,draw=none] (1,1) -- (0.5 ,0.5)--(3.75,0.5)--(3.75,1);
			\draw[fill=red, opacity=0.5,draw=none] (1,1) -- (0.5 ,0.5)--(-2,0.5)--(-2,1);
			\draw[fill=green, opacity=0.5,draw=none] (1,1) -- (0.5 ,0.5)--(0.5,1)--(1,1);
			\draw[thick,->] (0,0) -- (-0.7,0.7) node[anchor= south east,below,left=0.1mm] {$\scriptstyle \wstar$};
			\draw[black] (-1,-1) -- (2,2);
			\draw[thick ,->] (0,0) -- (0,1) node[left=2mm,below] {$\scriptstyle \bw$};
			\draw[thick ,->] (0.5,1.8) -- (0,1.8) node[left=1mm] {$\scriptstyle (\bw^{\ast})^{\bot_{\vec w}}$};
			\node (spy2) at     (0.5,0.5) {};
			\node (spy3) at     (1,1) {};
\draw[black](0.5,-1) -- (0.5,2.5);
\draw [-latex, thick, black] (2,2) to[out=-240, in=-180] (2.8,2.5);
			\draw (3.3,2.5) node[] {$f(\x)$};
			\draw [-latex, thick, black] (0.5,-1) to[out=180, in=0] (-1.1,-1.5);
			\draw (-1.8,-1.5) node[] {$f^\perp(\x^\perp)$};
			\draw [-latex, thick, black] (0.7,0.9) to[out=-90, in=100] (1.1,-1.75);
			\draw (0.9,-2.1) node[right] {$\eta^\perp(\x^\perp)\geq 1/2$};
	\end{tikzpicture}}
	\centering
	\subfloat[
	]{ \label{fig:polynomial}
		\centering
		\begin{tikzpicture}[scale=0.8]
			\begin{axis}[ restrict y to domain=-4:4,xlabel=$x$,ylabel=$y$, legend pos=north west,axis y line =middle,
				axis x line =middle,
				axis on top=true,
				xmin=-2,
				xmax=3,
				ymin=-2,
				ymax=2,
				yticklabel style = {font=\tiny,xshift=0.5ex},
				xticklabel style = {font=\tiny,yshift=0.5ex}
				]
				\addplot[color=red, samples=1000, domain=-4:4]
				{x^9 +(1/14)*x^6-15/14};
				\addplot[color=blue, samples=10, domain=1:3]
				{1};
				\addplot[color=blue, samples=10, domain=-2:1]
				{-1};
				\addplot[color=orange] table{\mytable};
				\node [left, orange] at (axis cs: 2.9,0.7) {\tiny{$L_2$-Approximation}};
				\node [left, red] at (axis cs: 2.6,1.7) {\tiny{Sign-Matching} };
			\end{axis}
	\end{tikzpicture}}
	\caption{
\textbf{(a)}
		After we condition on the band $B$, we project $\x$ to the subspace $\wperp$
		and nearly maintain the Massart noise property with respect to the biased halfspace 
		$f^\perp(\x^\perp) = \sgn(\wperp \cdot \x^\perp + b)$. 
		In particular, it holds that  $\eta^\perp(\x^\perp) \leq 1/2$ everywhere apart from a small area (green).  
		Since the underlying distribution is the standard Gaussian, a band with large negative mass
		(blue) cannot be very far from the origin, and therefore we have that $|b| = O(\sqrt{\log(1/\eps)})$.
		\\
\textbf{(b)} 
		The sign-matching polynomial that corresponds to the red curve
		does not need to closely approximate the threshold function. Its degree scales as
		$\Theta(b^2)$, which is at most $O(\log(1/\eps))$ for an $\eps$-biased halfspace.
		On the other hand, to get an $L_2$ or $L_1$ approximation to error $\eps$ (orange curve), 
		it is known that $\poly(1/\eps)$ degree is necessary.
	}
\end{figure}
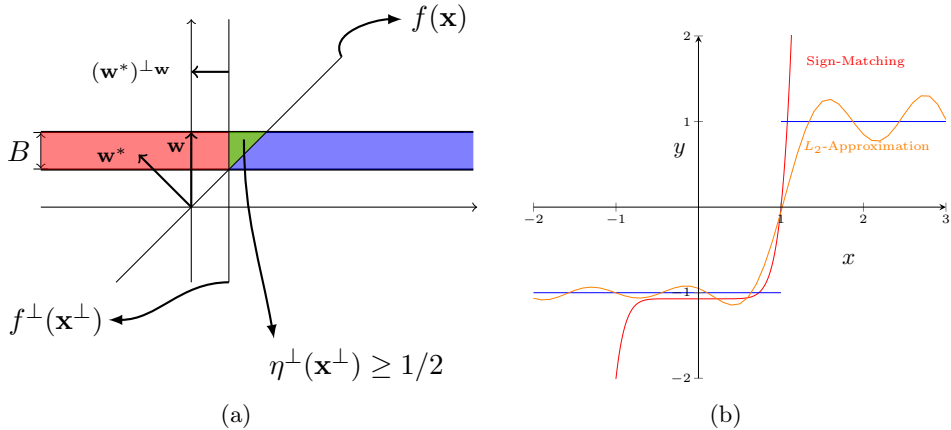

\paragraph{Using the Low-Order Chow Tensors: \Cref{sub:chow-tensors}}
To obtain a good update vector $\vec v$, a natural approach is
to use the degree-one Chow parameters.  However, since the optimal halfspace 
$f^\perp$ is biased, the degree-one Chow parameters 
are not guaranteed to correlate well with the direction of $f^\perp$.
We thus need to look at higher-order Chow parameter tensors of $\D^\perp$.
Recall that, to show \Cref{prop:tech-overview-warm-start}, we want to construct 
a subspace $V$ of $\R^d$ such that $\|\proj_V(\wperp)\|_2 \geq \poly(\eps)$.  
We now show how to find such a subspace using the low-order Chow tensors of $\D^\perp$.

Since we are working in Gaussian space, instead of considering the
moment-tensor $\vec x^{\otimes m}$, we use the degree-$m$ Hermite moment
tensor $\vec H^m(\x)$; this corresponds to replacing all the monomials 
of the tensor $\vec x^{\otimes m}$ by their corresponding Hermite monomials.
(For tensor notation, we refer to \Cref{sec:prelims}.)

\begin{definition}[Hermite Moment-Tensor]
	\label{def:hemite-moment-tensor}
	Let $\mathcal H$ be the linear operator that maps any $d$-variate monomial
	to the corresponding (normalized) $d$-variate polynomial in the Hermite basis, i.e.,
	$\mathcal H( \vec x^\alpha) = h_\alpha(\x)$.
	We define the degree-$m$ Hermite moment tensor as
	\(
	(\vec H^m)_{\alpha} = \mathcal H( (\vec x^{\otimes m} )_\alpha ) \, .
	\)
\end{definition}
Using the Hermite moment-tensors, we also define the order-$m$ Chow parameter
tensors of a distribution $\D$ on $\R^d \times \{\pm 1\}$.
\begin{definition}[Order-$m$ Chow Tensor of $\D$]
	\label{def:chow-parameters}
	Let $\D$ be a distribution on $\R^d \times \{ \pm 1 \}$ whose 
	$\x$-marginal is the standard normal distribution.  We define 
	the order-$m$ Chow tensor of $\D$ to be 
	\[
	\vec T^{m}(\D) = \E_{{(\x, y) \sim \D}}[ \vec H^m(\x) y ] \,.
	\]
	When it is clear from the context, we shall omit the distribution $\D$ 
	and simply write $\vec T^{m}$.
\end{definition}

Our high-level plan is to treat the above tensors as $d \times d^{m-1}$ matrices
and perform SVD to find the top few left singular vectors, i.e., the singular
vectors whose singular values are larger than some threshold.
To show that the subspace $V$ spanned by such eigenvectors 
contains a non-trivial part of $\wperp$, we need to construct a \emph{mean-zero polynomial} 
that only depends on the direction $\wperp$ and correlates well with the label $y$ (\Cref{lem:tensor-flattening}).  
This leads us to our main structural result.

\paragraph{The Sign-Matching Polynomial: \Cref{sub:sign-mathcing-polynomial}}
We show that there exists a mean-zero
polynomial $p$ that achieves non-trivial correlation with $\wperp$, i.e., 
$\E_{(\x^\perp, y) \sim \D^\perp}[p(\x) y] \geq \poly(\eps)$.  
Even though the noise of $\D^\perp$ is not exactly Massart -- 
recall that there exists a small region where $\eta^\perp(\x^\perp) > 1/2$ --
let us assume that $\eta^\perp(\x^\perp) \leq 1/2$ everywhere for simplicity.
In this case, we have 
\[
\E_{(\x^\perp, y) \sim \D^\perp}[ p(\x) y] 
=  
\E_{(\x^\perp, y) \sim \D_\x^\perp} \left[ p(\x) \sgn(\wperp \cdot \x^\perp + b) (1 - 2 \eta^{\perp}(\x^\perp)) \right] \,.
\]
\emph{
	Since $ 1 - 2 \eta^{\perp}(\x^\perp) \geq 0$, in order to achieve non-trivial correlation
	it suffices to find $p$ such that $p(\x)$ matches the sign of $f^\perp(\x^\perp)$.}  

	At this point, we made crucial use of the Massart noise condition; in particular, this is
	not possible in the agnostic model. In the agnostic model, to achieve
	non-trivial correlation, one needs to actually approximate the threshold
	function; see \Cref{fig:polynomial}.  More specifically, to guarantee positive correlation 
	in the agnostic model, we need a polynomial whose $L_1$ error with $f(\x)$ is $O(\eps)$. 
	Unfortunately, this cannot be done for any polynomial with degree $o(1/\eps^2)$; 
	see, e.g., Proposition 2.1 of \cite{DKPZ21}.  In the Massart noise setting, 
	we show that we can construct a \emph{zero-mean sign-matching} polynomial 
	of degree only $\log(1/\eps)$ that achieves $\poly(\eps)$ correlation with $y$. 
	We remark that the mean-zero condition, $\E_{\x \sim \D_\x}[p(\vec x)] = 0$ is crucial here.
	Non-zero mean polynomials, like the linear polynomial $\wperp \cdot \x^\perp + b$, 
	might give constant correlation, but do not reveal any information
	about the optimal direction. More concretely, we establish the following proposition; see
	\Cref{lem:correlation-polynomial} and \Cref{clm:zero-mean-polynomial} for
	the corresponding formal statements.

\begin{infproposition}[Sign-Matching Polynomial] \label{infpro:sign-matching-polynomial}
Let $b \in \R$.  There exists a zero mean, unit variance polynomial $p:\R \mapsto \R$ 
of degree $k = \Theta(b^2 + 1)$ such that the sign of $p$ matches 
the sign of the threshold function $\sign(z-b)$, i.e., $\sign(p(z)) = \sign(z - b)$, for all $z \in \R$.
\end{infproposition}

Notice that when we apply the above proposition, the threshold $b$ of the
corresponding halfspace $f^\perp$ will be at most $O(\sqrt{\log(1/\eps)})$
resulting in a polynomial of degree $O(\log(1/\eps))$; see also
\Cref{fig:orth_proj}. In the case of homogeneous halfspaces, this happens
because we use the orthogonal projection step; see \Cref{sub:projection}.  In
the case of general halfspaces, we may have such thresholds to start with.

\subsection{SQ Lower Bounds: \Cref{thm:intro-biased-sq-lb,thm:intro-homogeneous-sq-lb}}
\paragraph{Learning General Halfspaces with Constant-Bounded Massart Noise:
\Cref{thm:intro-biased-sq-lb}}
Our SQ lower bounds make essential use of the ``hidden-direction'' framework developed in
\cite{DKS17-sq}. Using this framework, we construct SQ lower bounds for
learning halfspaces in high dimensions using a carefully constructed one-dimensional Massart noise
instance. In particular, if we can construct a one-dimensional Massart noise
distribution $\D$ on $\R \times \{ \pm 1 \}$ such that 
$\E_{(z,y) \sim \D}[z^k y] = \E_{z \sim \D_z}[z^i] \E_{y \sim \D_y}[y]$, for all $i\in[k]$, 
then we can obtain a family of $2^{d^{\Omega(1)}}$ distributions on 
$\R^d \times \{ \pm 1\}$ whose pairwise correlation is $d^{-\Omega(k)}$. 
Using standard SQ lower bound arguments (see \Cref{lem:sq-from-pairwise}), 
having such a family of pairwise correlated distributions implies a $d^{\Omega(k)}$ 
SQ lower bound for learning the halfspace. (For a brief review on SQ lower bound machinery, 
we refer the reader to \Cref{ssec:SQ-prelims}.) 

Our main technical result is the following proposition (also see \Cref{pro:construction}).
\begin{proposition} \label{pro:tech-overview-sq-construction}
	Fix $\eta \in (0, 1/2)$ such that $\eta = 1/2 - \Omega(1)$ and 
	$\gamma \in (0,1/2)$.
	There exists a distribution $\D$
	on $(z,y) \in \R\times\{\pm 1\}$ whose $z$-marginal is the standard normal
	distribution with the following properties.
	\begin{itemize}
		\item  $\D$ satisfies the $\eta$-Massart noise condition
		with respect to a halfspace $f(z)$ with $\pr_{z\sim \D_z}[f(z)=+1]=\gamma$.
		
		\item  There exists an absolute constant $C$ such that for any integer 
		$k \leq C \log(1/\gamma)$, it holds 
		\(
		\E_{(z, y) \sim \D}[y z^k] =  \E_{y \sim \D_y}[y]  \E_{z \sim \D_z}[z^k]  \,.
		\)
	\end{itemize}
\end{proposition}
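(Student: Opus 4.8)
I would first reduce the statement to a one-dimensional linear feasibility problem. Pick the threshold $t := \Phi^{-1}(1-\gamma)$, where $\Phi$ is the standard normal CDF, and set $f(z) = \sgn(z-t)$, so that $\pr_{z \sim \normal}[f(z) = +1] = \gamma$ automatically. Write $\beta := 1 - 2\eta = \Omega(1)$. Any Massart instance with standard normal $z$-marginal and target $f$ is determined by its \emph{signal function} $g(z) := 1 - 2\eta(z)$, which ranges over $\{g \in L^\infty : \beta \le g(z) \le 1 \text{ a.e.}\}$; the associated joint law has $\E[y \mid z] = g(z) f(z)$, so $\E_{(z,y) \sim \D}[y z^k] = \E_{z\sim\normal}[g(z) f(z) z^k]$ and $\E_{y\sim\D_y}[y] = \E_{z\sim\normal}[g(z)f(z)]$. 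Hence, with $K := \lfloor C\log(1/\gamma)\rfloor$, the two bullets of the proposition hold for such a $\D$ if and only if there is an admissible $g$ with $\E_{z\sim\normal}[g(z)f(z)\,q(z)] = \E_{z\sim\normal}[g(z)f(z)]\,\E_{z\sim\normal}[q(z)]$ for every polynomial $q$ of degree at most $K$; equivalently, the function $g f - \E_\normal[gf]$ is orthogonal in $L^2(\normal)$ to the space $\mathcal P^0_K$ of mean-zero polynomials of degree at most $K$.

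Next I would establish this feasibility via LP duality. The image of the admissible signal functions under $g \mapsto gf$ is the convex, norm-bounded, weak-$\ast$ closed set $B = \{h \in L^\infty : \beta \le f(z) h(z) \le 1 \text{ a.e.}\}$ (given such an $h$, the function $fh$ is admissible and maps back to $h$), and the constraint cuts out the weak-$\ast$ closed subspace $W := (\mathcal P^0_K)^{\perp}$; we want $B \cap W \neq \emptyset$. If this intersection were empty, a separating-hyperplane argument (using that $B$ is weak-$\ast$ compact) would produce a separating functional represented by some $\psi \in L^1(\normal)$, and since the supremum of $\langle h, \psi\rangle$ over the subspace $W$ must be finite, $\psi$ must lie in $W^{\perp} = \mathcal P^0_K$. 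Thus infeasibility yields a \emph{nonzero, mean-zero polynomial $p$ of degree at most $K$} with $\inf_{h \in B} \E_\normal[p(z) h(z)] > 0$. Evaluating this infimum by choosing $g(z) \in \{\beta, 1\}$ pointwise gives $\inf_{h\in B}\E_\normal[ph] = \beta\,\E_\normal[(pf)_+] - \E_\normal[(pf)_-]$, where $(pf)_{\pm}$ are the positive and negative parts of $pf$. So infeasibility is equivalent to the existence of a mean-zero degree-$\le K$ polynomial whose sign is ``$\beta$-heavily aligned'' with $f$, i.e.\ $\beta\,\E_\normal[(pf)_+] > \E_\normal[(pf)_-]$; the cleanest special case of such a $p$ is one with $\sgn(p(z)) = f(z)$ for all $z$, i.e.\ $(pf)_- = 0$.

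The plan is then to rule out such a $p$ using the anti-concentration of low-degree Gaussian polynomials proved in \Cref{lem:infisibility-dual}: the sign of any nonzero mean-zero polynomial of degree $k$ cannot be too biased, so in particular no such $p$ can have $\sgn(p) = f$ (which would require $\pr_\normal[p > 0] = \pr_\normal[z > t] = \gamma$), and, quantitatively, $\E_\normal[(pf)_-]$ cannot be driven below a $\beta$-fraction of $\E_\normal[(pf)_+]$ once $\gamma$ is much smaller than the $e^{-\Theta(k)}$-type bound of \Cref{lem:infisibility-dual}. Choosing the absolute constant $C$ in the statement small enough (relative to the constant hidden in \Cref{lem:infisibility-dual}, and using $\beta = \Omega(1)$) makes this contradiction go through for every $k \le K = \lfloor C\log(1/\gamma)\rfloor$. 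Hence $B \cap W \neq \emptyset$: an admissible $g$ exists, giving $\eta(z) := (1 - g(z))/2 \in [0,\eta]$, and the induced distribution $\D$ has standard normal $z$-marginal, satisfies the $\eta$-Massart condition with respect to $f$, has $\pr_\normal[f(z) = +1] = \gamma$ by the choice of $t$, and matches the degree-$\le K$ product moments by the feasibility constraints.

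I expect the main obstacle to be \Cref{lem:infisibility-dual} itself --- the quantitative claim that the sign of a mean-zero degree-$k$ Gaussian polynomial cannot be $(1-\gamma)$-biased when $k = O(\log(1/\gamma))$ --- together with the bookkeeping needed to upgrade it from the transparent ``$\sgn(p) = f$'' obstruction to the weighted dual certificate $\beta\,\E_\normal[(pf)_+] > \E_\normal[(pf)_-]$ that LP duality actually outputs. By contrast, the reduction to a one-dimensional LP, the separation argument, and the verification that the resulting $\D$ has all the claimed properties are routine.
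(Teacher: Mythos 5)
Your proposal is correct and follows essentially the same route as the paper's proof of the formal version (\Cref{pro:construction}): fix the threshold so that $\pr_{z\sim\normal}[f(z)=+1]=\gamma$, reduce to feasibility of a linear program over the signal function $\beta(z)\in[\beta,1]$, dualize so that infeasibility produces a nonzero mean-zero polynomial $p$ of degree at most $k$ with $\beta\,\E_{z\sim\normal}[(f(z)p(z))^+]>\E_{z\sim\normal}[(f(z)p(z))^-]$, and rule this certificate out via \Cref{lem:infisibility-dual}. The only difference is in the duality machinery: you obtain the alternative by Hahn--Banach separation, using weak-$\ast$ compactness of the admissible set and the identification of the pre-annihilator with $\mathcal{P}_k^0$, whereas the paper's \Cref{lem:duality} reaches the same statement through Fan's infinite-dimensional theorem of the alternative together with a closedness argument for the associated convex cone.
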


To prove the existence of the (constant-bounded) Massart noise instance of
\Cref{pro:tech-overview-sq-construction}, we use (infinite-dimensional) LP duality,
where our variable is the signal function $\beta(z) = 1- 2 \eta(z)$.
We have the following pair of primal and dual linear programs.
We denote $\mathcal{P}_k^0$ the linear space of zero-mean polynomials
of degree at most $k$.

\vspace{4mm}

\begin{tabular}{l|l} 
	\qquad \quad \quad \text{{\bf Primal}} &\qquad \quad \quad  \text{{\bf Dual}}\\
	\parbox{0.3\linewidth}{\begin{align*} 
		&\text{Find }    ~~~ \qquad  \beta(z) \in L^\infty(\R) 
		\\ 
		&\text{such that\ } ~  \E_{z \sim \normal}[f(z)p(z)\beta(z)]  = 0   
		 ~ \forall p \in{\cal P}_{k}^0  \\
		 & \qquad\qquad~~ \pr_{z \sim \normal}[\beta \leq \beta(z) \leq 1] = 1  
	\end{align*} } \quad \quad ~&~ \quad \parbox{0.3\linewidth}{\begin{align*} 
		 &\text{Find }  \qquad ~~   p(z) \in{\cal P}_{k}^0    \\ 
	   & \text{such that} ~ ~\beta\E_{z\sim \normal}[(f(z)p(z))^+]  > \E_{z\sim \normal}[(f(z)p(z))^-] \\
	\end{align*}}
	\end{tabular}
\vspace{3mm}

Using an infinite-dimensional variant of the theorem of the alternative for 
linear programming, to show that the primal problem is
feasible, it suffices to show that the dual is infeasible. To show that the
dual is infeasible, we prove a stronger statement: mean-zero polynomials of
low-degree cannot match the sign of very biased Boolean functions.  We prove the
following (see \Cref{lem:infisibility-dual} for the formal statement and proof).

\begin{lemma}\label{lem:tech-overview-infisibility-dual}
	Let $f: \R \mapsto \{\pm 1\}$ be any one-dimensional Boolean function,
	$\beta\in(0,1)$ and $k\in \Z_+$. There exists a universal constant $C>0$ such that 
	if $\pr_{z\sim \normal}[f(z)=1]\leq 2^{-Ck}(1-\beta)$, then for any mean-zero polynomial of degree at 
	most $k$ it holds 
	$\beta\E_{z\sim \normal}[(f(z)p(z))^+]  < \E_{z\sim \normal}[(f(z)p(z))^-]$.
\end{lemma}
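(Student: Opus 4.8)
The plan is to recast the claim as the statement that a bounded-degree Gaussian polynomial cannot concentrate its $L_1$-mass on a set of tiny Gaussian measure, and then invoke Gaussian hypercontractivity. First I would reduce the inequality. Write $E=\{z:f(z)=1\}$ and $\gamma\eqdef\pr_{z\sim\normal}[f(z)=1]=\normal(E)\le 2^{-Ck}(1-\beta)$. Since $f$ is $\pm1$-valued, $fp=p$ on $E$ and $fp=-p$ on $\R\setminus E$, so (all expectations over $z\sim\normal$) $\E[(fp)^+]=\E[p^+\1_E]+\E[p^-\1_{\R\setminus E}]$ and $\E[(fp)^-]=\E[p^-\1_E]+\E[p^+\1_{\R\setminus E}]$. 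As $p$ is mean zero, $\E[p^+]=\E[p^-]=\tfrac12\E[|p|]\eqdef A$; substituting $\E[p^{\pm}\1_{\R\setminus E}]=A-\E[p^{\pm}\1_E]$, the target $\beta\,\E[(fp)^+]<\E[(fp)^-]$ becomes
\[
\E[p\,\1_E]<\frac{1-\beta}{1+\beta}\cdot\frac{\E[|p|]}{2}.
\]
Since $\E[p\,\1_E]\le\E[|p|\,\1_E]$ and $\tfrac1{1+\beta}\ge\tfrac12$, it suffices to prove $\E[|p|\,\1_E]<\tfrac{1-\beta}{4}\E[|p|]$. (Equivalently, using $\E[p\,\1_E]=\tfrac12\E[pf]$, the statement asserts $\E[pf]<\tfrac{1-\beta}{1+\beta}\E[|p|]$: a low-degree mean-zero $p$ cannot correlate too strongly with a heavily biased $f$.)

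Next I would pass to the worst case for the set $E$. The right-hand side depends on $E$ only through $\normal(E)=\gamma$, and among sets of a fixed Gaussian measure $\E[|p|\,\1_E]$ is largest on a superlevel set of $|p|$; so it remains to show that every degree-$\le k$ polynomial $p$ and every $t$ with $\normal(|p|>t)\le 2^{-Ck}(1-\beta)$ satisfy $\E[|p|\,\1\{|p|>t\}]<\tfrac{1-\beta}{4}\E[|p|]$. To establish this I would normalize $\E[|p|]=1$ and use two standard facts about degree-$\le k$ Gaussian polynomials: (i) hypercontractivity, $\|p\|_q\le (q-1)^{k/2}\|p\|_2$ for $q\ge2$; and (ii) low-degree norm equivalence $\|p\|_2\le c^k\|p\|_1$ for an absolute constant $c$, which follows from (i) together with Lyapunov interpolation of $\|\cdot\|_2$ between $\|\cdot\|_1$ and $\|\cdot\|_4$. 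For any $q\ge2$, Hölder gives
\[
\E\big[|p|\,\1\{|p|>t\}\big]\le \|p\|_q\,\normal(|p|>t)^{1-1/q}\le (q-1)^{k/2}c^k\,\gamma^{1-1/q}.
\]
Writing $\gamma^{1-1/q}=\gamma\cdot\gamma^{-1/q}$ and using $\gamma\le 2^{-Ck}(1-\beta)$, hence $\gamma^{-1/q}\le 2^{Ck/q}(1-\beta)^{-1/q}$, one gets $\E[|p|\,\1\{|p|>t\}]\le (q-1)^{k/2}c^k\,2^{-Ck(1-1/q)}(1-\beta)^{1-1/q}$. Taking $q$ a large absolute constant and then $C$ sufficiently large relative to $q$ drives the prefactor below $\tfrac14$ while keeping the exponent $1-1/q$ on $(1-\beta)$ close to $1$, which yields the desired bound and completes the proof.

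The hard part will be the dependence on $(1-\beta)$. A naive Cauchy--Schwarz estimate (the case $q=2$) only produces a factor $\sqrt{1-\beta}$, which is too weak when $\beta$ is near $1$; to recover essentially the full linear factor one is forced to take the moment order $q$ large and absorb the resulting $(q-1)^{k/2}$ hypercontractive loss into the constant $C$. Getting this trade-off right --- together with the (routine but essential) reduction to beating the worst-case superlevel set of $|p|$, which is exactly where the impossibility of low-degree sign-matching of a heavily biased Boolean function enters --- is the heart of the argument.
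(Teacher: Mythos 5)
Your reduction in the first paragraph is correct and is, in substance, the paper's own: expanding $(fp)^{\pm}$ over $E=\{f=1\}$ and using $\E[p^+]=\E[p^-]=\E[|p|]/2$, the paper arrives at the same sufficient condition $\E[p^+\1_E]\le\tfrac{1-\beta}{4}\E[|p|]$, and it closes with Cauchy--Schwarz together with exactly the $L_2$-versus-$L_1$ comparison $\|p\|_2\le 2^{O(k)}\|p\|_1$ (hypercontractivity plus interpolation) that you invoke. So your route differs only in replacing Cauchy--Schwarz (the case $q=2$) by H\"older with a large constant $q$; also note the rearrangement/superlevel-set step is unnecessary, since H\"older uses only $\normal(E)=\gamma$.

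The genuine gap is your last step. For fixed $q$ and $C$ your bound is $(q-1)^{k/2}c^k\,2^{-Ck(1-1/q)}(1-\beta)^{1-1/q}$, and you need it to be below $\tfrac{1-\beta}{4}$; dividing out $(1-\beta)$, this requires $(q-1)^{k/2}c^k\,2^{-Ck(1-1/q)}(1-\beta)^{-1/q}<\tfrac14$, and the leftover factor $(1-\beta)^{-1/q}$ blows up as $\beta\to1$, so no choice of absolute constants $q,C$ closes it; letting $q$ grow with $1/(1-\beta)$ does not help either, since then the hypercontractive loss $(q-1)^{k/2}$ is no longer absorbed by a universal $2^{Ck}$. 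In fact the full-strength claim with a single factor of $(1-\beta)$ cannot be salvaged, because it is false when $\beta$ is very close to $1$: take $k=1$, $p(z)=z$, $f=\sgn(z-T)$, and $1-\beta=2^{C}\pr_{z\sim\normal}[z>T]$; the hypothesis then holds with equality, while the conclusion reads $(1+\beta)\phi(T)<(1-\beta)\phi(0)$, i.e.\ roughly $2\phi(T)<2^{C}\phi(0)\pr[z>T]\approx 0.4\cdot 2^{C}\phi(T)/T$, which fails once $T\gtrsim 2^{C}$. What your argument actually proves (and, on inspection, what the paper's own computation delivers, its final line having the same slippage between $(1-\beta)$ and $(1-\beta)^2$) is the lemma under the hypothesis $\gamma\le 2^{-Ck}(1-\beta)^{2}$, equivalently the stated form whenever $\beta$ is bounded away from $1$ --- which is the only regime used downstream (constant-bounded Massart noise, $1-\beta=\Omega(1)$), and there either your H\"older step or plain Cauchy--Schwarz finishes immediately. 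So you should either weaken the hypothesis to $(1-\beta)^2$ (or assume $1-\beta=\Omega(1)$) rather than claim the linear-in-$(1-\beta)$ version via "taking $q$ large".
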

We remark that in the above lemma we do not require $f$ to be a halfspace. 
Consequently, it is easy to adapt our argument to work for other Boolean concept
classes that depend on a low-dimensional subspace $V$. For example, using the
above lemma, we can obtain an SQ lower bound for learning intersections of 
$2$ homogeneous halfspaces with constant-bounded Massart noise.

\paragraph{Learning Homogeneous Halfspaces with General Massart Noise: \Cref{thm:intro-homogeneous-sq-lb}}
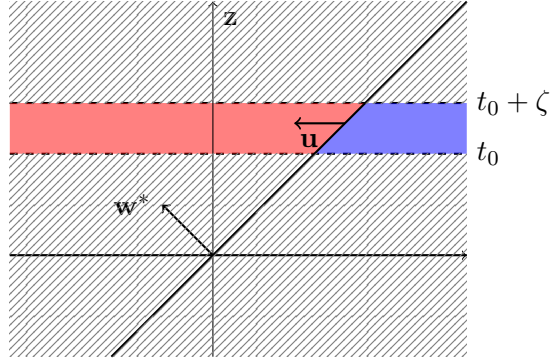
\begin{figure}[h]
	\captionsetup{width=.85\linewidth}
	\centering
	\begin{tikzpicture}[scale=1.35]
		\coordinate (start) at (0.5,0);
		\coordinate (center) at (0,0);
		\coordinate (end) at (0.5,0.5);
		\draw (-2,1) node[left] {};
		\draw (-2,0.5) node[left] {};
		\draw (2,2) node[above] {};
		\draw (3,1.2) node[above] {};
		\draw[black,dashed, thick](-2,1) -- (2.5,1);
		\draw[black,dashed, thick](-2,1.5) -- (2.5,1.5);
		\draw[fill=blue, opacity=0.5,draw=none] (1,1) -- (1.5 ,1.5)--(2.5,1.5)--(2.5,1);
		\draw[fill=red, opacity=0.5,draw=none] (1,1) -- (1.5 ,1.5)--(-2,1.5)--(-2,1);
		\draw[->] (-2,0) -- (2.5,0) node[anchor=north west,black,below left] {};
		\draw[->] (0,-1) -- (0,2.5) node[anchor=south east,below right] {$ \vec z$};
		\draw[thick,->] (0,0) -- (-0.5,0.5) node[anchor= south east,below,left=0.1mm] {$\wstar$};
		\draw[black,thick] (-1,-1) -- (2.5,2.5);
		\draw[black,thick] (-2,0) -- (2.5,0);
\draw[thick ,->] (1.3,1.3) -- (0.8,1.3) node[right=2mm,below] {$\vec u$};
		\draw (2.5,1.5) node[right] {$t_0 + \zeta$ };
		\draw (2.5,1) node[right] {$t_0$};
		\path[pattern=north east lines, pattern color=gray,opacity=1] (-2,1)--(2.5,1)  --
		(2.5,-1)--(-2,-1);
		\path[pattern=north east lines, pattern color=gray,opacity=1] (-2,1.5)--(2.5,1.5)  --
		(2.5,2.5)--(-2,2.5);
	\end{tikzpicture}
	\caption{
		The ``high-noise'' ($\eta = 1/2$) Massart distribution $\D$ on $((\x, z), y) \in \R^{d+1} \times \{\pm 1\}$ that we construct in our 
		``reduction'' from learning a biased halfspace with Massart noise.  
		The $\x$-marginal of $\D$ is the standard normal distribution.
		Conditional on $z$, $(x, y)$ has constant-bounded Massart noise when 
		$z \in [t_0, t_0 + \zeta]$; this corresponds to the blue/red area.
		When $z \notin [t_0, t_0 + \zeta]$ we set $y$ to be $\pm 1$ with probability
		$1/2$ independently of $\x$, i.e., $\eta((\x, z)) = 1/2$ in the gray area.
	}
	\label{fig:hardband}
\end{figure}
Our main idea to show an SQ lower bound for learning homogeneous halfspaces with general Massart noise
 is to use the fact that by setting $\eta(\x) = 1/2$, we
essentially remove all the useful signal from some parts of the space.  Our
``reduction'' to learning halfspaces with constant-bounded Massart noise works as
follows: We create a $(d+1)$-dimensional high-noise instance, i.e., with $\eta =1/2$, 
over $((\x, z), y ) \in \R^{d+1} \times \{ \pm 1\}$ from many $d$-dimensional constant-bounded 
Massart noise instances.  For every $(\x, z)$ outside of a thin slice (see \Cref{fig:hardband}), 
we set $\eta(\x) =1/2$. For $(\x, z)$ in the slice, we set the conditional distribution of $(\x, y)$ 
on $z$ to satisfy the constant-bounded Massart noise condition with respect to some
non-homogeneous optimal halfspace (denoted by $\vec u$ in \Cref{fig:hardband}). 
We show that any hypothesis that achieves error $\opt+ \eps$ in the
$(d+1)$-dimensional ``high-noise'' instance will perform well (on average) on the
$d$-dimensional constant-bounded Massart noise instances --- 
a problem that we have already showed to be hard in the SQ model. 
We refer to \Cref{sub:lower-bound-benign} for the detailed proof.
 
\section{Preliminaries}\label{sec:prelims}
\paragraph{Basic Notation}
For $n \in \Z_+$, let $[n] \eqdef \{1, \ldots, n\}$.
We use small
boldface characters for vectors and capital bold characters for matrices.  For $\bx \in \R^d$ and $i \in [d]$, $\bx_i$
denotes the $i$-th coordinate of $\bx$, and $\|\bx\|_2 \eqdef (\littlesum_{i=1}^d \bx_i^2)^{1/2}$ denotes the $\ell_2$-norm of $\bx$.
We will use $\bx \cdot \by $ for the inner product of $\bx, \by \in \R^d$ and $ \theta(\bx, \by)$ for the angle between $\bx, \by$.
We slightly abuse notation and denote $\vec e_i$ the $i$-th standard basis vector in $\R^d$.
For $\x\in \R^d$ and $V\subseteq \R^d$, $\x_{V}$ denotes the projection of $\x$ onto the subspace $V$. Note that
in the special case where $V$ is spanned from one unit vector $\vec v$, then we simply write $\x_{\vec v}$
to denote $\vec x \cdot \vec v$, i.e., the projection of $\vec x$ onto $\vec v$.
For a subspace $U\subset\R^d$, let $U^{\perp}$ be the orthogonal complement of
$U$. For a vector $\vec w\in\R^d$, we use $\vec w^\perp$ to denote the subspace spanned by vectors
orthogonal to $\vec w$, i.e., $\vec w^\perp=\{\vec u\in \R^d: \vec w \cdot \vec u=0\}$. For a matrix $\vec A\in \R^{d\times d}$,  $\tr(\vec A)$ denotes the trace of the matrix $\vec A$. For a square symmetric matrix $\vec M$, we say that $\vec M$ is positive semi-definite if only if all the eigenvalues of $\vec M$ are non-negative. For $m\in \Z_+$, we denote $\mathcal{S}^m$ the set of positive (symmetric) semi-definite matrices of dimension $m$.
We will use $\1_A$ to denote the characteristic function of the set $A$,
i.e., $\1_A(\x)= 1$ if $\x\in A$ and $\1_A(\x)= 0$ if $\x\notin A$. 

We write $E \gtrsim F$ for two expressions $E$ and $F$ to denote that $E \geq c \, F$, where $c>0$
is a sufficiently large universal constant (independent of the variables or parameters on which $E$ and $F$ depend).
Similarly, we write $E \lesssim F$ to denote that $E \leq c \, F$, where $c>0$
is a sufficiently small universal constant.

\paragraph{Probability Notation}
We use $\E_{x\sim \D}[x]$ for the expectation of the random variable $x$ according to the distribution $\D$ and
$\pr[\mathcal{E}]$ for the probability of event $\mathcal{E}$. For simplicity of notation, we may
omit the distribution when it is clear from the context. Let $\normal( \boldsymbol\mu, \vec \Sigma)$ denote the $d$-dimensional Gaussian distribution with mean $\boldsymbol\mu\in  \R^d$ and covariance $\vec \Sigma\in \R^{d\times d}$, we denote $\phi_d(\cdot)$ the pdf of the $d$-dimensional Gaussian and we use the $\phi(\cdot)$ for the pdf of the standard normal. In this work we usually consider the standard normal, i.e., $\mu = \vec 0$ and $\vec \Sigma = \vec I$, 
and therefore we denote it simply $\normal$; the dimension will be always clear from the context. Moreover, we denote as $\normal(\mu_1,\mu_2)$ the $2$-dimensional Gaussian distribution with mean $(\mu_1,\mu_2)$ and $\normal(\mu)$ the $1$-dimensional Gaussian distribution with mean $\mu$.  For $(\x,y)$ distributed according to $\D$, we denote $\D_\x$ to be the distribution of $\x$ and $\D_y$ to be the distribution of $y$. For unit vector $\vec v\in \R^d$, we denote $\D_{\vec v}$ the distribution of $\x$ on the direction $\vec v$, i.e., the distribution of $\x_{\vec v}$. For a set $B$ and a distribution $\D$, we denote $\D_B$ to be the distribution $D$ conditional on $B$.
We define the standard $L_p$ norms with respect to the Gaussian measure, i.e., $\|g(\x)\|_p = ( \E_{\x \sim \normal} [ |g(\x)|^p)^{1/p}$.  To distinguish them from vector norms we shall write $\|g(\x)\|_p$ instead
of $\|g\|_p$ when it is not clear from the context.
We also need the following fact providing upper and lower bounds on Gaussian tails
and the Gaussian anticoncentation property for intervals.
\begin{fact}[Gaussian Density Properties] 
\label{fct:gaussian-tails}
Let $\normal$ be the standard one-dimensional normal distribution. Then, the following properties hold:
\begin{enumerate}
    \item For any $t>0$, it holds $e^{-t^2}/4\leq \pr_{z\sim\normal}[z>t]\leq e^{-t^2/2}/2$.
    \item For any $a,b\in \R$ with $a\leq b$, it holds  $\pr_{z\sim\normal}[a\leq z\leq b]\leq (b-a)/\sqrt{2\pi}$.
\end{enumerate}
\end{fact}

\paragraph{Function Families (LTFs and Polynomials)}
We use ${\cal C}_{V}$ to denote the set of Linear Threshold Functions (LTFs)
with normal vector contained in $V\subseteq\R^d$,
i.e., ${\cal C}_{V}=\{\sign(\vec v \cdot \x +t): \vec v\in V, \snorm{2}{\vec v}=1, t \in \R\}$;
when $V=\R^d$, we simply write $\cal C$.
Moreover, we define ${\cal C}_0$ to be the set of homogeneous LTFs,
i.e., ${\cal C}_{0}=\{\sign(\vec v \cdot \x): \vec v\in \R^d, \snorm{2}{\vec v}=1\}$.
We denote by $\mathcal{P}_{k,d}$ the space of polynomials on $\R^d$ of degree at most $k$.
We will use the following fact shows that for the probability that two
halfspaces disagree can be upper bounded by the difference of their
thresholds plus the angle of their corresponding normal vectors.
\begin{fact}[see, e.g., Fact 3.5 of \cite{DKKTZ21}]\label{fct:gaussian-halfspaces}
Let $\normal$ be the standard normal distribution in $\R^d$. Let $\vec v,\vec u$ be unit vectors in
$\R^d$ and $t_1,t_2\in \R$. It holds $\pr_{\x\sim \normal}[\sign(\vec u \cdot \x + t_1)\neq \sign(\vec v
\cdot \x + t_2)]\leq O(\theta(\vec u,\vec v)) + O(|t_1-t_2|)$.
\end{fact}

\paragraph{Multilinear Algebra} 
Let $\alpha=(\alpha_1,\alpha_2,\ldots,\alpha_d)$ be a $d$-dimensional
multi-index vector, where for all $i\in[d]$, $\alpha_i$ is non-negative integer.
We denote $|\alpha|=\sum_{i=1}^d \alpha_i$ and for a $d$-dimensional vector
$\vec w=(\vec w_1,\vec w_2,\ldots,\vec w_d)$, we denote $\vec w^\alpha =
\prod_{i=1}^d \vec w_i^{\alpha_i}$.
An order-$k$ tensor
$\matr A$ is an element of the $k$-fold tensor product of subspaces $\matr A
\in \mathcal{V}_1 \otimes \ldots \otimes \mathcal{V}_k$.
We will be exclusively working with subspaces of $\R^d$ so a tensor $A$ can
be represented by a sequence of coordinates, that is $A_{i_1,\ldots,i_k}$.
The tensor product of order $k$ tensor $\matr A$ and an order $m$ tensor $\matr
B$ is an order $k + m$ tensor defined as $(\matr A \otimes \matr
B)_{i_1,\ldots, i_k,j_1,\ldots,j_m} = \matr A_{i_1,\ldots,i_k} \matr
B_{j_1,\ldots, j_m}$.  Moreover, we denote by $\vec A^{\otimes k}$
the $k$-fold tensor product of $\vec A$ with itself.
We define the dot product of two tensors (of the same order) to be
$\dotp{\matr A}{\matr B} = \sum_{\alpha} \matr A_\alpha \matr B_\alpha$.
  We also denote the Frobenius norm of a tensor by 
  $\snorm{F}{\matr A} = \sqrt{\dotp{\matr A}{\matr A}}$.  
In this work, we will be using the (appropriately) normalized Hermite tensors (\Cref{def:hemite-moment-tensor}).  For these tensors we have 
the following fact.
\begin{fact}
    \label{fct:hermite-tensor}
    Let $\x, \vec v \in \R^d$ and let $h_m:\R \mapsto \R$ be the univariate
    normalized (with respect to the standard normal) degree-$m$ Hermite
    polynomial and $\vec H^m$ be the Hermite moment tensor of
    \Cref{def:hemite-moment-tensor}.  It holds
    \(
    h_m(\x \cdot \vec v) = \vec H^m \cdot \vec v^{\otimes m} 
    \).
\end{fact}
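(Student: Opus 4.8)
The plan is to expand both sides of the claimed identity in the orthonormal Hermite basis $\{h_\alpha\}$ of $L^2(\normal)$ and check that the coefficients agree; the entire content is bookkeeping of normalization constants. Throughout I take $\|\vec v\|_2 = 1$, which is the regime in which \Cref{fct:hermite-tensor} is applied (for non-unit $\vec v$ the right-hand side is not even a homogeneous degree-$m$ polynomial in $\vec v$, so the statement is to be read with $\vec v$ a unit vector). First I would recall the standard generating-function identity for the monic probabilists' Hermite polynomials $\mathrm{He}_k$: one has $\sum_{k\ge 0}\frac{s^k}{k!}\mathrm{He}_k(u) = e^{su - s^2/2}$ and, for the multivariate monic Hermite polynomials $\mathrm{He}_\alpha(\x) := \prod_i \mathrm{He}_{\alpha_i}(x_i)$, $\sum_{\alpha}\frac{\vec t^{\alpha}}{\alpha!}\mathrm{He}_\alpha(\x) = e^{\vec t\cdot\x - \|\vec t\|_2^2/2}$. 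Substituting $\vec t = s\vec v$ and using $\|\vec v\|_2 = 1$, the two sides become $\sum_m \frac{s^m}{m!}\mathrm{He}_m(\vec v\cdot\x)$ and $\sum_\alpha \frac{s^{|\alpha|}\vec v^{\alpha}}{\alpha!}\mathrm{He}_\alpha(\x)$; comparing the coefficient of $s^m$ gives
\[
\mathrm{He}_m(\vec v\cdot\x) \;=\; \sum_{|\alpha| = m}\binom{m}{\alpha}\,\vec v^{\alpha}\,\mathrm{He}_\alpha(\x), \qquad \binom{m}{\alpha} := \frac{m!}{\alpha_1!\cdots\alpha_d!} .
\]
Passing to the normalized polynomials $h_k = \mathrm{He}_k/\sqrt{k!}$ (so $h_\alpha = \mathrm{He}_\alpha/\sqrt{\alpha!}$) turns this into the analogous identity for $h_m(\vec v\cdot\x)$ with explicit constants.

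Second, I would expand $\vec H^m\cdot\vec v^{\otimes m}$ directly from \Cref{def:hemite-moment-tensor}. By definition $(\vec H^m)_{i_1,\dots,i_m} = \mathcal H\big(x_{i_1}\cdots x_{i_m}\big) = h_{\alpha(i_1,\dots,i_m)}(\x)$, where $\alpha(i_1,\dots,i_m)$ is the multi-index counting how often each coordinate index occurs in the tuple $(i_1,\dots,i_m)$. Therefore
\[
\vec H^m\cdot\vec v^{\otimes m} \;=\; \sum_{(i_1,\dots,i_m)\in[d]^m} h_{\alpha(i_1,\dots,i_m)}(\x)\, v_{i_1}\cdots v_{i_m} \;=\; \sum_{|\alpha| = m} N_\alpha \, h_\alpha(\x)\,\vec v^{\alpha},
\]
where $N_\alpha = \#\{(i_1,\dots,i_m)\in[d]^m : \alpha(i_1,\dots,i_m) = \alpha\}$, which is exactly the multinomial coefficient $\binom{m}{\alpha}$. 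Matching this sum termwise with the Hermite expansion of $h_m(\vec v\cdot\x)$ from the first step yields $\vec H^m\cdot\vec v^{\otimes m} = h_m(\vec v\cdot\x)$, once the normalization built into $\mathcal H$ (via \Cref{def:hemite-moment-tensor}) and that of the polynomials $h_m$ are reconciled so that the per-term factors $\binom{m}{\alpha}$ and the $\sqrt{\alpha!},\sqrt{m!}$ factors cancel against each other.

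The only point requiring care — and hence the main, purely technical, obstacle — is precisely this tracking of normalization constants: the multinomial counts $\binom{m}{\alpha}$ coming from the number of index tuples of each type must be matched with the $1/\sqrt{k!}$-type factors hidden in the normalized Hermite polynomials and in the operator $\mathcal H$. There is no conceptual difficulty. A clean alternative that sidesteps generating functions is induction on $m$: the base cases are $\vec H^0 = 1$, $\vec H^1 = \x$ (so the claim reads $1 = h_0$ and $\vec v\cdot\x = h_1(\vec v\cdot\x)$), and the inductive step combines the normalized three-term recurrence $h_{m+1}(u) = \tfrac{1}{\sqrt{m+1}}\big(u\,h_m(u) - \sqrt{m}\,h_{m-1}(u)\big)$ with the corresponding recurrence for the Hermite moment tensors (coming from the Gaussian integration-by-parts identity), contracted against $\vec v^{\otimes(m+1)}$ with $\|\vec v\|_2 = 1$.
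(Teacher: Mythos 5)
The paper states \Cref{fct:hermite-tensor} as a background fact with no proof, so the only issue is whether your argument is sound. Your plan (generating-function expansion of $h_m(\vec v\cdot\x)$ in the Hermite basis, direct expansion of the contraction $\vec H^m\cdot\vec v^{\otimes m}$, termwise comparison) is the right one, and your monic identity $\mathrm{He}_m(\vec v\cdot\x)=\sum_{|\alpha|=m}\binom{m}{\alpha}\vec v^\alpha\,\mathrm{He}_\alpha(\x)$ for unit $\vec v$ is correct. The genuine gap is exactly the step you defer as "purely technical bookkeeping": the normalizations do \emph{not} cancel as you assert. Dividing your identity by $\sqrt{m!}$ and writing $\mathrm{He}_\alpha=\sqrt{\alpha!}\,h_\alpha$ gives
\begin{equation*}
h_m(\vec v\cdot\x)\;=\;\sum_{|\alpha|=m}\sqrt{\tbinom{m}{\alpha}}\;\vec v^\alpha\, h_\alpha(\x),
\end{equation*}
whereas the literal reading of \Cref{def:hemite-moment-tensor} that your second step uses, namely $(\vec H^m)_{i_1,\dots,i_m}=h_{\alpha(i_1,\dots,i_m)}(\x)$, gives $\vec H^m\cdot\vec v^{\otimes m}=\sum_{|\alpha|=m}\binom{m}{\alpha}\vec v^\alpha h_\alpha(\x)$. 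The per-term coefficients are $\sqrt{\binom{m}{\alpha}}$ versus $\binom{m}{\alpha}$, which disagree for every mixed multi-index: for $d=m=2$ and unit $\vec v=(a,b)$ one has $h_2(a x_1+b x_2)=a^2h_{(2,0)}+b^2h_{(0,2)}+\sqrt{2}\,ab\,h_{(1,1)}$, while your expansion of the contraction produces $2ab\,h_{(1,1)}$. So the claimed cancellation of the multinomial counts against the $\sqrt{\alpha!},\sqrt{m!}$ factors is false, and carried out honestly your computation would conclude that the two sides are unequal rather than equal.

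The resolution is to make precise what "appropriately normalized" means in \Cref{def:hemite-moment-tensor}: the fact holds when the tensor entries are $(\vec H^m)_{i_1,\dots,i_m}=\mathrm{He}_{\alpha(i_1,\dots,i_m)}(\x)/\sqrt{m!}=\sqrt{\alpha!/m!}\;h_\alpha(\x)=h_\alpha(\x)\big/\sqrt{\binom{m}{\alpha}}$, i.e., the operator $\mathcal H$ must carry the single normalization $1/\sqrt{m!}$ of the degree-$m$ product rather than the multi-index normalization of $h_\alpha$ entrywise. With that reading, the contraction contributes the multiplicity $\binom{m}{\alpha}$ while each entry contributes $1/\sqrt{\binom{m}{\alpha}}$, so the coefficient of $\vec v^\alpha h_\alpha(\x)$ is $\sqrt{\binom{m}{\alpha}}$ on both sides and your termwise comparison closes. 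Your restriction to unit $\vec v$ is appropriate (this is how the fact is invoked, e.g., in \Cref{lem:tensor-flattening}), and your proposed inductive alternative would run into the same normalization issue, so it is not a way around the missing step.
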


 \newcommand{\sloglog}{ \sqrt{\log(1/(\beta \gamma))}} 

\section{A Certificate-Based Algorithm For Learning General Halfspaces with Constant-Bounded Massart Noise}\label{sec:constant-bounded-arbitrary}
In this section we show that, when the noise upper bound $\eta$ is bounded away 
from $1/2$, we can learn at most $(1-\gamma)$-biased halfspaces with sample complexity 
and runtime $d^{O(\log(1/\gamma))} \poly(1/\eps)$. Our algorithmic result matches
our SQ lower bound, given in \Cref{thm:intro-biased-sq-lb}.
We first state the formal version of \Cref{thm:intro-bounded-massart-upper-bound}.
\begin{theorem}[Learning $(1-\gamma)$-biased Halfspaces with Constant-Bounded Massart Noise]
    \label{thm:massart-upper-bound}
    Let $\D$ be a distribution on $\R^d \times \{\pm 1\}$ whose 
    $\x$-marginal is the standard Gaussian.  
    Assume that $\D$ satisfies the $\eta$-Massart noise
    condition with respect to some $(1-\gamma)$-biased 
    optimal halfspace and $\beta = 1-2\eta$. 
    Let $\eps,\delta\in(0,1]$. There exists an algorithm that
    draws $N = d^{O(\log(1/(\gamma \beta)))}\poly(1/\eps)\log(1/\delta)$ samples from $\D$, 
    runs in time $\poly(N, d)$, and computes a halfspace $h \in {\cal C}$ such that with probability 
    at least $1-\delta$ we have that
    $
    \pr_{(\x, y) \sim \D}[h(\x) \neq y] \leq \opt  + \eps\,.
    $
\end{theorem}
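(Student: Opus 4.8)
The plan is to plug the existential certificate of \Cref{pro:technical-overview-certificate-Biased-Massart} into the reduction of \Cref{pro:certificate-reduction}, so that the only real work is to turn ``a good certificate exists'' into ``a good certificate is computable in time $d^{O(\log(1/(\gamma\beta)))}\poly(1/\eps)$''. First I would dispose of the trivial case $\gamma\le\eps$: letting $c$ be the majority value of $f$ under $\normal$ and outputting $h\equiv c$ yields $\pr_{(\x,y)\sim\D}[h(\x)\neq y]\le \gamma+\opt\le \opt+\eps$, using $\opt=\E_{\x\sim\normal}[\eta(\x)]$ (the target halfspace is Bayes-optimal under Massart noise) and $\pr_{\x\sim\normal}[f(\x)\neq c]=\gamma$. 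So from now on assume $\gamma\ge\eps$; in particular $\log(1/\gamma)\le\log(1/\eps)$, which is why the exponent can be written with $\min(1/\gamma,1/\eps)$ as in \Cref{thm:intro-bounded-massart-upper-bound}. By \Cref{pro:certificate-reduction}, it then suffices to construct a $\rho$-certificate oracle with $\rho=\eps^2\,\poly(\beta\gamma)$ and runtime $T_{\mathcal O}(\rho)=d^{O(\log(1/(\gamma\beta)))}\poly(1/\eps)$: indeed $M=\poly(d/(\eps\rho))$ and $N=\poly(d/(\eps\rho))\log(1/\delta)$ are then $d^{O(1)}\poly(1/(\eps\beta\gamma))\log(1/\delta)$, and substituting gives $N=d^{O(\log(1/(\gamma\beta)))}\poly(1/\eps)\log(1/\delta)$ and runtime $\poly(N,d)$, as claimed in \Cref{thm:massart-upper-bound}.

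\emph{Building the certificate oracle.} Given an affine $\ell(\x)$ with $\pr_{(\x,y)\sim\D}[\sgn(\ell(\x))\neq y]\ge\opt+\eps$, I would first rescale so that $\|\ell(\x)\|_2=1$ (this changes neither $\sgn(\ell)$ nor the certificate inequality, up to the same positive scaling); then $\ell(\x)$ is a one-dimensional Gaussian of mean at most $1$ in absolute value and variance at most $1$. By \Cref{pro:technical-overview-certificate-Biased-Massart} there are thresholds $r_1^\ast,r_2^\ast$ and a degree-$k$ polynomial $q^\ast$ with $k=\Theta(\log(1/(\gamma\beta)))$, $\|q^\ast\|_2=1$, satisfying $\E_{(\x,y)\sim\D}[\ell(\x)\,y\,\1\{r_1^\ast\le\ell(\x)\le r_2^\ast\}\,(q^\ast)^2(\x)]\le -\eps^2\poly(\beta\gamma)$. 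Since $\ell$ is known to the oracle, I would locate the band by brute force: only an interval of radius $O(\sqrt{\log(1/\rho)})$ in $\ell(\x)$ carries non-negligible mass, and using the standard pointwise bound on degree-$2k$ Gaussian polynomials of bounded $L_2$ norm together with Gaussian anti-concentration, rounding $(r_1^\ast,r_2^\ast)$ to a sufficiently fine grid (fine enough that the certificate value changes by at most $\rho/4$) leaves only $d^{O(\log(1/(\gamma\beta)))}\poly(1/\eps)$ candidate bands $S=\{r_1\le\ell(\x)\le r_2\}$. For each fixed $S$, expand a degree-$\le k$ polynomial in the orthonormal Hermite basis, $q(\x)=\sum_{|\alpha|\le k}c_\alpha h_\alpha(\x)$; then $\|q\|_2=1$ reads $\|\vec c\|_2=1$ and
\[
\E_{(\x,y)\sim\D}\!\left[\ell(\x)\,y\,\1_S(\x)\,q^2(\x)\right]=\vec c^\top\vec A_S\,\vec c,\qquad (\vec A_S)_{\alpha,\beta}=\E_{(\x,y)\sim\D}\!\left[\ell(\x)\,y\,\1_S(\x)\,h_\alpha(\x)\,h_\beta(\x)\right].
\]
Minimizing this quadratic form over the unit sphere of $\R^{d^{O(k)}}$ is exactly computing the smallest eigenvalue of $\vec A_S$; the oracle returns $T(\x)=\1_S(\x)\,q^2(\x)$ for the minimizing unit eigenvector, over the band attaining the smallest value.

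\emph{Estimation and accounting.} Each entry of $\vec A_S$ is $\E[\ell(\x)\,y\,\1_S(\x)\,h_\alpha(\x)\,h_\beta(\x)]$ with $|\alpha|+|\beta|\le 2k$; truncating to $|\ell(\x)|\le O(\sqrt{\log(1/(\rho\delta'))})$ (whose tail costs at most $\rho/8$) and invoking standard concentration, $d^{O(k)}\poly(1/\rho)\log(1/\delta')$ samples suffice to estimate \emph{all} matrices $\vec A_S$ over the grid to spectral-norm error $\rho/8$ with probability $1-\delta'$. On the grid band nearest $(r_1^\ast,r_2^\ast)$ the estimated matrix then has an eigenvalue $\le -\eps^2\poly(\beta\gamma)+\rho/4+\rho/4\le -\rho/2$, so the returned $T$ satisfies $\E_{(\x,y)\sim\D}[\ell(\x)\,y\,T(\x)]\le -\rho/2=-\Omega(\rho)\,\|\ell(\x)\|_2$, i.e.\ it is an $\Omega(\rho)$-certificate. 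Eigen-decomposing the $d^{O(k)}\times d^{O(k)}$ matrices costs $\poly(d^{O(k)})=d^{O(\log(1/(\gamma\beta)))}$, so $T_{\mathcal O}(\rho)=d^{O(\log(1/(\gamma\beta)))}\poly(1/\eps)$, and \Cref{pro:certificate-reduction} yields \Cref{thm:massart-upper-bound}.

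\emph{Main obstacle.} The genuinely hard ingredient is \Cref{pro:technical-overview-certificate-Biased-Massart} itself, i.e.\ the existence of a band-times-SOS certificate of degree $O(\log(1/(\gamma\beta)))$; this rests on the low-degree polynomial shift of \Cref{lem:sos-ratio} and on replacing the exponential-shift certificate by its Taylor polynomial (\Cref{lem:taylor-exponential-polynomial}), and it is what forces the $d^{O(\log(1/(\gamma\beta)))}$ complexity. Granting that proposition, the remaining difficulty is essentially bookkeeping: verifying that the brute-force search over band endpoints combined with the eigenvalue relaxation over Hermite coefficients loses only a constant factor in $\rho$, while keeping the grid size, the truncation level, and the moment-estimation accuracy all controlled (up to the unavoidable $d^{O(k)}$ factor) — in particular controlling the pointwise growth of the degree-$2k$ polynomials $\ell\cdot q^2$ that governs the discretization error.
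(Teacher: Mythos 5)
Your proposal follows essentially the same route as the paper's proof: existence of the band-times-square certificate (\Cref{pro:certificate-Biased-Massart}), a brute-force grid search over the band endpoints combined with a spectral optimization over degree-$O(\log(1/(\beta\gamma)))$ polynomial coefficients computed from estimated moments, and the online-convex-optimization reduction (\Cref{pro:certificate-reduction-formal}); your smallest-eigenvalue formulation in the Hermite basis is the same computation the paper carries out via its SDP (\Cref{prop:sdp-oracle}), which the paper itself describes as finding a negative eigenvalue of a quadratic form. The one detail you leave implicit is that the reduction's gradient estimator requires the returned certificate to have bounded $L_4$ norm, so $T(\x)=\1_S(\x)q^2(\x)$ must be rescaled (by at most $\poly(1/(\beta\gamma))$ via Gaussian hypercontractivity in your Hermite-basis setup, or $d^{O(k)}$ as the paper does), which only degrades $\rho$ by a factor already absorbed in your $\poly(\beta\gamma)$.
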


The main technical ingredient in our proof is \Cref{pro:certificate-Biased-Massart} 
where we show that, given a linear function $\ell(\x)=\vec w \cdot \x - t$ 
whose classification error is greater than 
$\opt +\eps$, there exists a low-degree polynomial certifying function.  We prove the existence
of such a certificate in \Cref{ssec:low-degree-sos-certificate}.
In \Cref{sub:certificate_optimization}, we show that we can efficiently compute
the low-degree polynomial certificate, and we bound the sample complexity and runtime
of the corresponding SDP of the optimization problem.
Finally, in \Cref{ssec:online}, we show that given an efficient certificate oracle,
we can use online gradient descent in order to learn the optimal halfspace, i.e.,
we provide the formal version and proof of \Cref{pro:certificate-reduction}.

\subsection{The Low-Degree Polynomial Certificate}
\label[sub]{ssec:low-degree-sos-certificate}
Here we show that given any $\eps$ suboptimal linear hypothesis
$\ell(\x) = \vec w \cdot \x - t$ there exists a low degree polynomial certifying function.
We establish the following. 

\begin{proposition}[Certificate for General Halfspaces with Massart Noise: $\eta < 1/2$.]
    \label{pro:certificate-Biased-Massart}
    Let $\D$ be a distribution on $\R^d \times \{\pm 1\}$ with standard normal $\x$-marginal. Assume that $\D$ satisfies the $\eta$-Massart noise
    condition with respect to some at most $(1-\gamma)$-biased optimal halfspace $f(\x)$,
    where $\gamma \in (0, 1/2]$.
    Let $\ell(\x)$ be  any linear function such that
    $\pr_{(\x, y) \sim \D}[\sgn(\ell(\x)) \neq y] \geq \opt + \eps$.
    There exist $r_1, r_2 \in \R$ and polynomial 
    $q(\x) = \sum_{|\alpha| \leq k} c_\alpha \x^\alpha$ of degree
    $k=\Theta(\log(\frac 1 {\gamma \beta } ))$ and $\|q(\x)\|_2 = 1$ such that
    \[
    \E_{(\x, y) \sim \D}[ \ell(\x) y \
    \1\{r_1 \leq \ell(\x) \leq r_2\} q^2(\x) ]
    \leq -  \eps^4  \poly(\beta \gamma)  \| \ell(\x) \|_2 
    \,.
    \]
    Moreover, the coefficients of $q$ are bounded, specifically 
    $\sum_{|\alpha| \leq k} |c_\alpha|  \leq d^{O(k)}$.
\end{proposition}

To prove the proposition, we distinguish different cases for the hypothesis $h(\x)$.
First we assume that $\vec w = 0$, i.e.,
the guess is not formally a halfspace but a constant function.  This
case may look easy to handle but captures many difficulties
of designing certificates for biased halfspaces under Massart noise:
when the constant guess is $-1$, we have to construct a polynomial that puts a large weight to the 
positive region of $f(\x)$, i.e., $\{\x : f(\x) = +1\}$.  Since the probability
of this region can be as small as $\gamma$, it may be far from the origin 
(i.e., $t^\ast = \Theta(\sqrt{\log(1/\gamma)})$); see \Cref{fig:constant-hypotheses}.
It can be seen that it is challenging to isolate such regions far from the origin
and the required polynomial degree is roughly $\log(\frac{1}{\beta \gamma})$.
The argument and the details for this case are given in \Cref{sub:constant-hypotheses}.
In \Cref{sub:large-threshold}, we show that, similarly to the case of constant hypotheses,
we can construct polynomial certificates against very biased hypotheses, i.e.,
when $\ell(\x) = \vec w \cdot \x - t$ and it holds $t/\|\vec w\|_2 
\geq C \sloglog $.
Finally, in \Cref{sub:small-threshold}, we handle the remaining hypotheses,
namely we provide certificates when $t/\|\vec w\|_2 \leq C \sloglog $.
Having \Cref{lem:zero-case}, \Cref{lem:large-threshold-hypotheses}
and \Cref{lem:small-threshold}, we immediately obtain \Cref{pro:certificate-Biased-Massart}.

\subsubsection{Certificate Against Constant Hypotheses}
\label[sub]{sub:constant-hypotheses}
Here we show that we can construct a polynomial certificate that 
certifies the non-optimality of any constant hypothesis, i.e., $h(\x)$ corresponds
to the constant function $+1$ (or $-1$) for every $\x \in \R^d$.  
In this case, we do not require a ``band''
$\1\{r_1 \leq \vec w \cdot \x \leq r_2\}$ as part of the certificate, and therefore we may set $r_1 = -\infty$, $r_2 = +\infty$ in \Cref{pro:certificate-Biased-Massart}.  We prove the following lemma:

\begin{lemma}[Certificate against Constant Hypotheses] 
    \label{lem:zero-case}
    Let $\D$ be a distribution on $\R^d \times \{\pm 1\}$ with standard normal $\x$-marginal.  Assume that $\D$ satisfies the $\eta$-Massart noise
    condition with respect to some (at least) $(1-\gamma)$-biased optimal halfspace.
For every constant hypothesis $s \in \{-1, +1\}$, there exists a polynomial 
    $q(\x) = \sum_{|\alpha| \leq k} c_\alpha \x^\alpha$ of degree
    $k = \Theta(\log(\frac{1}{\beta \gamma}))$ with $\|q\|_2 = 1$, 
    and sum of (absolute) coefficients $\sum_{|\alpha| \leq k} |c_\alpha| \leq d^{O(k)}$
    such that
    \[
    \E_{(\x, y) \sim \D}[ s  y \ q^2(\x) ] \leq  -  \poly(\beta \gamma) \,.
    \]
\end{lemma}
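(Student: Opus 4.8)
The plan is to reduce \Cref{lem:zero-case} directly to the Low-Degree Polynomial Shift lemma (\Cref{lem:sos-ratio}). First I would rewrite the target quantity using the Massart condition: since $\E[y\mid\x]=(1-2\eta(\x))f(\x)=\beta(\x)f(\x)$ with $\beta(\x)\in[\beta,1]$ (so we may assume $\beta>0$, else there is nothing to prove), for a constant hypothesis $s\in\{\pm1\}$ we have
\[
\E_{(\x,y)\sim\D}[s\,y\,q^2(\x)] = s\,\E_{\x\sim\normal}\big[\beta(\x)f(\x)q^2(\x)\big] = s\Big(\E_{\x\sim\normal}[\beta(\x)q^2(\x)\1\{f(\x)=+1\}]-\E_{\x\sim\normal}[\beta(\x)q^2(\x)\1\{f(\x)=-1\}]\Big).
\]
It therefore suffices to treat $s=-1$ and produce a $q$ whose squared mass concentrates on $\{f(\x)=+1\}$; the case $s=+1$ is identical after applying the argument below to the halfspace $-f$, which is also at most $(1-\gamma)$-biased.

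Next I would invoke \Cref{lem:sos-ratio}. Since $f$ is at most $(1-\gamma)$-biased, there is an absolute constant $C$ such that, for $k=\Theta(\log(1/(\beta\gamma)))$ chosen large enough that $e^{k/C-C\log(1/\gamma)}\ge 4/\beta$, there is a polynomial $p$ of degree at most $k$ with
\[
\frac{\E_{\x\sim\normal}[p^2(\x)\1\{f(\x)=+1\}]}{\E_{\x\sim\normal}[p^2(\x)\1\{f(\x)=-1\}]}\ge\frac{4}{\beta}.
\]
Normalizing $q=p/\|p\|_2$ leaves the ratio unchanged and gives $\|q\|_2=1$. Setting $b=\E_{\x\sim\normal}[q^2(\x)\1\{f(\x)=-1\}]$ and using $\E_{\x\sim\normal}[q^2]=1$, the ratio bound forces $b\le\beta/(4+\beta)<\beta/4$, hence $\E_{\x\sim\normal}[q^2(\x)\1\{f(\x)=+1\}]=1-b\ge 3/4$. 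Plugging this into the displayed identity with $\beta(\x)\in[\beta,1]$,
\[
\E_{\x\sim\normal}[\beta(\x)f(\x)q^2(\x)]\ge\beta(1-b)-b\ge\tfrac34\beta-\tfrac14\beta=\tfrac{\beta}{2},
\]
so $\E_{(\x,y)\sim\D}[s\,y\,q^2(\x)]=-\E_{\x\sim\normal}[\beta(\x)f(\x)q^2(\x)]\le-\beta/2\le-\poly(\beta\gamma)$.

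Finally, the coefficient bound $\sum_{|\alpha|\le k}|c_\alpha|\le d^{O(k)}$ is a routine consequence of $\|q\|_2=1$ and $\deg(q)\le k$: writing $q$ in the normalized Hermite basis, Parseval shows each of the at most $(d+k)^k$ Hermite coefficients has magnitude at most $1$, and expressing the degree-$\le k$ Hermite polynomials in the monomial basis inflates the coefficients by a further $d^{O(k)}$ factor; alternatively this is immediate from the explicit form of the $q$ furnished by \Cref{lem:sos-ratio} (a normalized Taylor truncation of the exponential shift). The genuinely hard content is all inside \Cref{lem:sos-ratio}; the only point requiring care at this level is that the argument must hold for both $s=+1$ and $s=-1$, which works precisely because being at most $(1-\gamma)$-biased simultaneously lower bounds the Gaussian mass of both $\{f=+1\}$ and $\{f=-1\}$.
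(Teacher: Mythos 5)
Your reduction from \Cref{lem:sos-ratio} is internally correct, and in one respect it is cleaner than the paper's argument: you never need the projection step (\Cref{clm:projections-preserve-massart-noise}), since you bound the worst-case noise pointwise via $\E_{\x\sim\normal}[\beta(\x)f(\x)q^2(\x)]\geq \beta\,\E[q^2(\x)\1\{f(\x)=+1\}]-\E[q^2(\x)\1\{f(\x)=-1\}]$, and the arithmetic ($b\leq\beta/(4+\beta)$, hence $\beta(1-b)-b\geq\beta/2$) is right. The symmetry argument for $s=+1$ via the halfspace $-f$ is also fine; note the ``(at least)'' in the lemma statement is evidently a typo for ``at most'' $(1-\gamma)$-biased, i.e., both sides of $f$ carry Gaussian mass at least $\gamma$, which is what both you and the paper's proof actually use.

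The genuine gap is that \Cref{lem:sos-ratio} is not an independent ingredient you may lean on here: it appears only in the technical overview, and its lower bound is proved in the paper exactly by \Cref{lem:taylor-exponential-polynomial}, which is stated and proved \emph{inside} the proof of \Cref{lem:zero-case}. By black-boxing \Cref{lem:sos-ratio} you have therefore assumed precisely the content this lemma is meant to supply: the existence of a degree-$\Theta(\log(1/(\beta\gamma)))$ polynomial whose square concentrates, with ratio at least $4/\beta$, on a Gaussian region of mass $\gamma$ sitting at distance $\Theta(\sqrt{\log(1/\gamma)})$ from the origin. The paper does this work explicitly: it reduces to the one-dimensional ratio inequality \Cref{eq:beta-polynomial-ratio} and takes $q(\x)=S_k(c\,\vec w^\ast\cdot\x)$, the Taylor truncation of the exponential shift $e^{cx}$ with $c=\Theta(\sqrt{\log(1/(\beta\gamma))})$ and $k=\Theta(c^2)$, using \Cref{clm:ell-one-approximation-taylor} to show $S_k^2(cx)$ tracks $e^{2cx}$ in $L_1$. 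Without that construction your proposal does not prove the statement; with it, you are re-deriving the paper's proof. A smaller issue: your ``routine'' coefficient bound is not quite routine -- Parseval in the Hermite basis bounds each Hermite coefficient by $1$, but converting degree-$\leq k$ Hermite polynomials to the monomial basis costs $k^{\Theta(k)}$ (from the factorial normalization), so this route gives $(dk)^{O(k)}$, which matches $d^{O(k)}$ only when $k=O(d)$. The paper instead exploits the explicit univariate polynomial (whose absolute coefficient sum is $O(1)$) together with \Cref{fct:polynomial_norms}, so the explicit form of $q$ is needed for this part of the statement as well.
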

\begin{proof}
    Our plan is to pick the polynomial $q(\x)$ so that it takes larger values
    to the side of the optimal halfspace $f(\x) = \sgn(\vec w^\ast \cdot \x - t^\ast)$
    that has the opposite sign of $s$.  In the example of \Cref{fig:constant-hypotheses},
    in order to have a certificate for the constant hypothesis $s=-1$,
    we want to have a polynomial that takes much larger values in the blue area than the red area.
    We shall set $q(\x) = p(\vec w^\ast\cdot \x)$
    for some one-dimensional polynomial $p: \R \to \R$.
    Since the certificate that we construct only depends on the direction
    of the optimal halfspace, it follows that we can project the points on the subspace
    spanned by $\vec w^\ast$. The following claim shows that the projection of
    a distribution with $\eta$-Massart noise onto a lower dimensional subspace
    that contains the direction of the optimal halfspace also satisfies the $\eta$-Massart noise condition with respect 
    to the same optimal halfspace.  
    \begin{claim}[Projections preserve Massart Noise] 
        \label{clm:projections-preserve-massart-noise}
        Let $\D$ be a distribution on $\R^d \times \{\pm 1\}$  satisfying the $\eta$-Massart 
        noise condition with respect to some optimal halfspace $f(\x) : \R^d \mapsto \{\pm 1\}$.
        Let $V$ be any subspace of $\R^d$ that contains the normal vector of the optimal halfspace
        $f(\x)$.
        Then, for every function $g: \R^d \mapsto \R$ it holds 
        \[
        \E_{(\x, y) \sim \D}[g(\proj_V(\x)) y] = 
        \E_{\vec v \sim (\D_\x)_V}[g(\vec v) \beta_V(\vec v) f(\vec v)]\;,
        \]
        where $\beta_V(\x): \R^d \mapsto \R$ satisfies $\beta_V(\x) \in [1-2 \eta, 1]$.
    \end{claim}
    \begin{proof}
        Assume that the optimal halfspace is $f(\x) = \sgn(\vec w^\ast \cdot \vec x - t^\ast)$.
        From the fact that the distribution $\D$ satisfies the $\eta$-Massart noise condition, 
        it holds that $\E_{(\x,y)\sim \D}[y|\vec x] = f(\x) (1-\eta(\x)) - f(\x) \eta(\x) = 
        f(\x) (1- 2 \eta(\x)) = f(\x) \beta(\x)$.
        Therefore, we have
        \begin{align*}
            \E_{(\x, y)\sim \D}[ g(\proj_V(\x)) y]
            &= 
            \E_{\x \sim \D_\x}[ g(\proj_V(\x )) \beta(\x) f(\x)]
            \\
            &=
            \E_{\vec v \sim (\D_\x)_{V} }
            \Big[
            \E_{\vec u \sim (\D_\x)_{V^\perp}}
            [
            g(\vec v)
            \beta(\vec u + \vec v)
            \sgn(\vec w^\ast \cdot (\vec u + \vec v) - t^\ast)\
            \Big]
            \\
            &=
            \E_{\vec v \sim (\D_\x)_{V}}
            \Big[
            \sgn(\vec w^\ast \cdot \vec v - t^\ast)\
            g(\vec v)
            \
            \E_{\vec v \sim (\D_\x)_{V^\perp}}[\beta(\vec u + \vec v)]
            \Big] \,,
        \end{align*}
        where we used the fact that the subspace $V$ contains
        the normal vector of the optimal halfspace $\vec w^\ast$.
        Observe that the ``projected'' noise function
        $\beta_V(\vec v) :=
        \E_{\vec u \sim (\D_\x)_{V^\perp}}[\beta(\vec u + \vec v)]$
        again satisfies the $\eta$-Massart noise condition, i.e.,
        $\beta_V(\vec v)\in [1- 2 \eta, 1] = [\beta, 1]$. This concludes the proof of \Cref{clm:projections-preserve-massart-noise}.
    \end{proof}
    
    Since $q(\x) = p(\vec w^\ast \cdot \x)$, we can use \Cref{clm:projections-preserve-massart-noise} to project
    onto the subspace spanned by $\vec w^\ast$ and obtain that
    \begin{align}
        \label{eq:certificate-case-1-projected}
        \E_{(\x, y) \sim \D}[ s y q^2(\x)]
        =
        \E_{\vec v \sim (\D_\x)_{\vec w^\ast}}
        \Big[s~ \sgn(\vec w^\ast \cdot \vec v  - t^\ast)\
        p^2(\vec w^\ast \cdot \vec v)
        \
        \beta_{\vec w^\ast}(\vec v)
        \Big] \,,
    \end{align}
    where the ``projected'' noise function
    $ \beta_{\vec w^\ast}(\vec v) \in [\beta, 1]$.
    For simplicity, in what follows, we will continue denoting $\beta$ the projected noise function $\beta_{\vec w^\ast}$.
    Without loss of generality, we may assume that $\vec w^\ast = \vec e_1$ and in that
    case, using the fact that the projection of $\D_\x$ onto $\vec w^\ast$ is 
    a one-dimensional standard normal distribution, 
    we have that the expression of \Cref{eq:certificate-case-1-projected} can be simplified as 
    \begin{equation}
        \label{eq:certificate-case-1-one-d}
        \E_{(\x, y) \sim \D}[ s y q^2(\x)]
        =
        \E_{\x_1 \sim \normal}[ s \beta(\vec x_1) \sgn(\x_1 - t^\ast)\ p^2(\x_1)] \,.
    \end{equation}
    Moreover, to simplify the notation, assume that the constant guess is $s = -1$
    (the case of $s = +1$ is similar)
    and that the halfspace puts $\gamma$ mass on the positive side, i.e.,
    $\pr_{\x \sim \D_\x}[f(\x) = +1] = \gamma$; see also \Cref{fig:constant-hypotheses}. 
    In that case, we want to construct a univariate polynomial $p(\x_1)$ so that
    $\E_{\x_1 \sim \normal}[ \beta(\x_1) \sgn(\x_1 - t^\ast)\ p^2(\x_1)]
    $ is sufficiently positive.  Observe that the worst case noise function $\beta(\x_1)$
    is to set $\beta(\x_1) = \beta$ for all points $\x_1 \geq t^\ast$
    and $\beta(\x_1) = 1$ for all $\x_1 < t^\ast$.  In that case, we want to find
    a univariate polynomial $p$ such that 
    \begin{equation}
        \label{eq:beta-polynomial-ratio}
        \beta  \E_{\x_1 \sim \normal}[ \1 \{\x_1 \geq t^\ast\} p^2(\x_1) ] 
        \geq  \E_{\x_1 \sim \normal}[ \1\{\x_1 \leq t^\ast \} p^2(\x_1) ]  \,.
    \end{equation}
    To do that, we show our main structural result, i.e., that
    as long as the degree $k$ is larger than $(t^\ast)^2$,
    there exists a polynomial that can make the above inequality true. In fact,
    we can make the ratio of 
    $ \E_{\x_1 \sim \normal}[ \1 \{\x_1 \geq t^\ast\} p^2(\x_1) ] 
    /
    \E_{\x_1 \sim \normal}[ \1\{\x_1 \leq t^\ast \} p^2(\x_1) ] 
    $
    grow exponentially fast with respect to the degree-$k$ of the polynomial $p$.
    We prove the following lemma.
    \begin{lemma}[Polynomial Shift]
        \label{lem:taylor-exponential-polynomial}
        Let $t \in \R$ and $b\geq 1$. 
        There exists a univariate polynomial $p(x) = \sum_{i = 0}^k a_i x^i$ of
        degree $k=\Theta(t^2 + b^2)$, $L_2$ norm $\|p(x)\|_2 = 1$, and 
        sum of (absolute) coefficients $\sum_{i=0}^k |a_i| = O(1)$, such that
        \begin{equation*}
            \frac{
                \E_{x \sim \normal}[p^2(x) \1\{x \geq t\}]
            }
            {
                \E_{x \sim \normal}[p^2(x) \1\{x \leq t\}] 
            } 
            \geq e^{b^2} 
            ~~
            \text{ and }
            ~~
            \E_{x \sim \normal}[p^2(x) \1\{x \geq t\}]
            \geq e^{-O(k)}\, .
        \end{equation*}       
\end{lemma}
    \begin{proof}
        We first show that there exists a positive function that makes the ratio
        of \Cref{lem:taylor-exponential-polynomial} large. We shall consider the exponential 
        function $x \mapsto e^{c x}$
        for some constant $c \geq |t|$ to be determined later.  
        We have that
        \begin{equation}
            \label[ineq]{eq:expoenential-bound}
            \frac{
                \E_{x \sim \normal}[e^{c x} \1\{x \geq t\}]
            }
            {
                \E_{x \sim \normal}[e^{c x} \1\{x \leq t\}] 
            }
            =
            \frac{
                \pr_{x \sim \normal}[x \geq t - c]
            }
            {
                \pr_{x \sim \normal}[x \leq t - c]
            }
            \geq e^{(c-t)^2/2} \,,
        \end{equation}
        where the last inequality follows from the fact that $c\geq t$ and 
        standard lower bounds on the tail probability of the standard normal distribution.
        Therefore, for $c = \Theta(|t| + b)$ we immediately obtain the claimed bound.
        We now have to replace the function $e^{c x}$ by the square of a polynomial $p$.
        To do so we use the Taylor expansion of the exponential function.
        Denote by $S_k(x)$ the degree-$k$ Taylor expansion of $e^{x}$, i.e.,
        $S_k(x) = \sum_{i=0}^k x^i/i!$.  
        Using Taylor's remainder theorem and the fact that $e^{x} = \sum_{i=0}^\infty x^i/i!$ 
        for all $x\in \R$, we obtain the following fact.
        \begin{fact}[Taylor Expansion of $e^x$]
            \label{fct:taylor-expansion}
            Fix $R > 0$ and let $S_k(x)$ be the degree-$k$ Taylor expansion of $e^{x}$.  
            \begin{enumerate}
                \item
                For all $x \in [-R, R]$ we have that 
                $|S_k(x) - e^{x}| \leq e^R \frac{ R^{k+1} } { (k+1)! }
                $.
                \item   
                For all $x \in \R$ it holds that
                \label{eq:taylor-series-upper-bound}
                $|S_k(x)| \leq e^{|x|} $.
            \end{enumerate}
        \end{fact}
        Since we have to construct the square of a polynomial $p$ we cannot simply use $S_k(x)$.
        However, we can consider the function $e^{2 c x}$ and use $S_k^2(c x)$ 
        as our approximation.  We first show that 
        as long as $k$ is at least $\Omega(c^2)$, we have that 
        $S_k^2(c x)$ is a good approximation of $e^{2 c x}$ with respect to the $L_1$ norm.
        \begin{claim}[$L_1$ error of $S_k(x)$]
            \label{clm:ell-one-approximation-taylor}
            Assume that $k \geq 32 c^2$.  It holds 
            \begin{equation*}
                \| S_k^2(c x) - e^{2 c x} \|_1 \lesssim e^{-k/32} \,.
            \end{equation*}
        \end{claim}
        \begin{proof}
            For every $x \in [-R, R]$ it holds that
            \begin{equation}
                \label{eq:taylor-squared-error}
                |S_k^2(x) - (e^{x})^2| = |S_k(x) - e^{x}|  |S_k(x) + e^{x}|  
                \leq  e^R \frac{ R^{k+1} } { (k+1)! }~~2 e^R
                = 2 e^{2 R} \frac{ R^{k+1} } { (k+1)! } 
                \,,
            \end{equation}
            where we used the second item of \Cref{fct:taylor-expansion}, i.e, that $|S_k(x)| \leq e^{R}$.
            The above pointwise approximation guarantee together with the strong concentration
            properties of the Gaussian distribution (see, e.g., \Cref{fct:gaussian-tails}) allow us to get a bound for the $L_1$-approximation 
            error of $S_k^2(x)$.
            Using the error bound of \Cref{eq:taylor-squared-error},
            we bound the $L_1$-approximation error as follows:
            \begin{align}
                \label{eq:ell-one-approximation-taylor}
                \E_{x \sim \normal}[|S_k^2(c x) - e^{2 c x}|] 
                &\leq 
                \max_{|x| \leq R/c}|S_k^2(c x) - e^{2 c x}|
                + 
                \E_{x \sim \normal}[|S_k^2(c x) + e^{2 c x}| ~ \1\{|x| \geq R/c\}] 
                \nonumber
                \\
                &\leq
                2 e^{2 R} \frac{ R^{k+1} } { (k+1)! } 
                + 
                ( \| S_k^2(c x) \|_2 + \| e^{2 c x} \|_2) 
                \sqrt{\pr_{x \sim \normal}[|x| \geq R/c] }
                \nonumber
                \\
                &\leq
                2 e^{2 R} \frac{ R^{k+1}} { (k+1)! } 
                + 
                2 \sqrt{2} e^{4 c^2 - (R/c)^2/2}
                \,,
            \end{align}
            where for the last inequality we used the second item of \Cref{fct:taylor-expansion}
            to obtain that
            $  \| S_k^2(x) \|_2 \leq \| e^{2 c |x|} \|_2$, the fact
            that  $\| e^{2 c |x|} \|_2 \leq \sqrt{2} e^{4 c^2}$,
            and the tail probability upper bound 
            for the normal distribution, i.e.,
            $ \pr_{x \sim \normal}[|x| \geq (R/c)] \leq e^{-(R/c)^2/2}$,
            see \Cref{fct:gaussian-tails}.
            By choosing $k = 32 m c^2$, for any $m \geq 1$, 
            and  $R= k/8$  the estimate of \Cref{eq:ell-one-approximation-taylor} becomes
            \begin{equation*}
\| S_k^2(c x) - e^{2 c x} \|_1 \lesssim e^{- m c^2} \,. \qedhere
            \end{equation*}
        \end{proof}
        We are now ready to finish the proof of \Cref{lem:taylor-exponential-polynomial}.
        Recall that to make \Cref{eq:expoenential-bound} true,
        we chose $c = \Theta(b+|t|)$.  Let $k = 32 m c^2$, for 
        some sufficiently large absolute constant $m>0$.
        Given \Cref{eq:expoenential-bound}, we show that by replacing $e^{2cx}$ with $S_k^2(c x)$, i.e., the Taylor expansion of $e^{cx}$ squared, we get a similar formula for the lower-bound of the ratio. It suffices to show that
        \begin{align}
            \label{eq:exponential-polynomial-ratios}
            \frac{
                \E_{x \sim \normal}[e^{2 c x} \1\{x \leq t\}]
            }
            {
                \E_{x \sim \normal}[S_k^2(c x) \1\{x \leq t\}] 
            }
            \geq \frac{1}{2}
            ~~~
            \text{ and }
            ~~~
            \frac{
                \E_{x \sim \normal}[S_k^2(c x) \1\{x \geq t\}]
            }
            {
                \E_{x \sim \normal}[e^{ 2 c x} \1\{x \geq t\}] 
            }
            \geq \frac{1}{2}
            \,.
        \end{align}
        We start from the first inequality of \Cref{eq:exponential-polynomial-ratios}.  
        To prove it, it suffices to show that 
        \begin{equation}
            \label[ineq]{ineq:taylor-approximation-first-inequality}
            \E_{x \sim \normal}[|S_k^2(x) - e^{2 c x}| ~ \1\{ x \leq t\}]  \leq  
            \E_{x \sim \normal}[e^{2 c x} ~ \1\{ x \leq t\} ]  
            \,.
        \end{equation}
        From the $L_1$-bound of \Cref{clm:ell-one-approximation-taylor}, we have that 
        \[\E_{x \sim \normal}[|S_k^2(x) - e^{2 c x}| ~ \1\{ x \leq t\}] \leq \E_{x \sim \normal}[|S_k^2(x) - e^{2 c x}|]\lesssim e^{-mc^2}\;. \]
        Moreover, to bound the $\E_{x \sim \normal}[e^{2 c x} ~ \1\{ x \leq t\} ]$ from below, we use the lower bound for the tails of a Gaussian distribution (\Cref{fct:gaussian-tails}). First, note that $\E_{x \sim \normal}[e^{2 c x} ~ \1\{ x \leq t\} ] =
        e^{2c^2}\pr_{x \sim \normal}[  x \leq t-2c ]$. Therefore, we have that 
        \[
        \E_{x \sim \normal}[e^{2 c x} ~ \1\{ x \leq t\} ]  \geq \frac{1}{4}e^{2c^2} e^{-(t-2c)^2}\geq \frac{1}{4}e^{-7c^2}\;,
        \]
        where we used that $c\geq |t|$. Therefore, 
        \Cref{ineq:taylor-approximation-first-inequality} is true
        for $m$ being a large enough absolute constant.
        
        For the second inequality of \Cref{eq:exponential-polynomial-ratios}, 
        using the $L_1$-bound of \Cref{clm:ell-one-approximation-taylor}
        we obtain that $\E_{x\sim \normal}[S_k^2(c x) \1\{x \geq t\}] \geq 
        \E_{x\sim \normal}[e^{2 c x}  \1\{x \geq t\}] - \| S_k^2(x) - e^{2 c x}\|_1$,
        and therefore, taking $m$ to be larger than an absolute constant we have that, 
        there exists an absolute constant $C>0$, such that
        \[
        \frac{ \E_{x \sim \normal}[S_k^2(c x) \1\{x \geq t\}] }
        { \E_{x \sim \normal}[e^{ 2 c x} \1\{x \geq t\}]  }
        \geq 
        1 - \frac{Ce^{-m c^2}}{ \E_{x \sim \normal}[e^{ 2 c x} \1\{x \geq t\}]   }
        \geq \frac{1}{2}\,,
        \]
        where we used that $ \E_{x \sim \normal}[e^{ 2 c x} \1\{x \geq t\}]  
        = e^{2c^2}\pr_{x \sim \normal}[x \geq t - 2c] \geq e^{2c^2}/2$ since $c \geq t$.
        
        We next show that we can normalize the polynomial $p(x) = S_k(c x)$ without making
        the expectation of $p^2$ over $x \geq t$, i.e., $\E_{x \sim \normal}[p^2(x) \1\{x \geq t\}]$, too small.
        From the second item of \Cref{fct:taylor-expansion}
        we obtain that $\|S^2_k(c x)\|_2  \leq \|e^{2 c |x|} \|_2 = e^{O(c^2)}$.
        The result follows from the second inequality of \Cref{eq:exponential-polynomial-ratios}
        and the fact that  $\E_{x \sim \normal}[e^{2 c x} \1\{ x \geq t\}] \geq e^{2 c^2}/2$.

        Finally, we bound the coefficients of the polynomial $S_k^2(c x)$.  
        We have \( S_k^2(c x) = \left(\sum_{i = 0}^k \frac{c^i}{i!} x^i \right)^2 \).
        It holds that $\left(\sum_{i = 0}^k \frac{c^i}{i!} \right)^2 = e^{2 c}$.
        From the $L_1$ approximation guarantee of \Cref{clm:ell-one-approximation-taylor} 
        we obtain that $\|S_k^2(c x)\|_2 \geq \|S_k^2(c x )\|_1 \geq \| e^{2 c x}\|_1 - 1 \geq e^{2 c^2}/2$,
        since $c\geq 1$.
        We conclude that the sum of the absolute coefficients of $S_k^2(c x)/\|S_k^2(c x\|_2$ is at 
        most $2$, which implies that the sum of the absolute coefficients of $S_k(c x)$ is 
        at most $\sqrt{2}$.

    \end{proof}

    To conclude the proof of \Cref{lem:zero-case}, notice that using
    \Cref{lem:taylor-exponential-polynomial}, we obtain that there exists a
    polynomial $p$ of degree $(t^\ast)^2 + \log(1/\beta)$ such that
    \Cref{eq:beta-polynomial-ratio} is true.  Moreover, from the same lemma, we
    obtain that $\E_{\x_1 \sim \normal}[p^2(\x_1)] = e^{- O((t^\ast)^2 +
    \log(1/\beta))}$.  Since the halfspace is at least $(1-\gamma)$-biased, from
    standard upper bounds on the Gaussian tails, we obtain that $|t^\ast|
    \lesssim \sqrt{\log(1/\gamma)}$ obtaining the claimed bound for the degree
    of the certifying polynomial $p$.  
    It remains to bound the coefficients of the multivariate
    polynomial $q(\x) = p(\vec w^\ast \cdot x)$.
    We are going to use the following fact.
    \begin{fact}[Lemma 3.6 of \cite{DKTZ20b}]
    \label{fct:polynomial_norms}
    Let $p(t) = \sum_{i=0}^k c_i t^i$ be a  degree-$k$ univariate polynomial.
    Given $\vec w \in \R^d$ with $\snorm{2}{\vec w} \leq 1$, define the multivariate polynomial
    $q(\vec x) = p(\dotp{\vec w}{\vec x}) = \sum_{S: |S| \leq k} C_S \bx^S$. Then we have that
    $
    \sum_{S:|S| \leq k} C_S^2 \leq d^{2k} \sum_{i=0}^k c_i^2 \,.
    $
\end{fact}
From \Cref{lem:taylor-exponential-polynomial} we know that the 
sum of the absolute coefficients of $p$ is $O(1)$.  Thus using \Cref{fct:polynomial_norms}
we obtain that the sum of absolute coefficients of $p(\vec w \cdot \x)$ is at most $d^{O(k)}$.

\end{proof}

\subsubsection{Certificate Against ``Large Threshold''  Halfspaces}
\label[sub]{sub:large-threshold}
\begin{figure}
    \centering
    \subfloat[]{
        \begin{tikzpicture}[scale=1]
            \label{fig:large-hypotheses-1}
            \coordinate (start) at (0.5,0);
            \coordinate (center) at (0,0);
            \coordinate (end) at (0.5,0.5);
            \draw[->] (-3,0) -- (3.5,0) node[anchor=north west,black,below left] {$\x_1$};
            \draw[->] (0,-1) -- (0,3) node[anchor=south east,below right] {$\x_2$};
            \draw[fill=red,opacity=0.2,draw=none] (2.5,-1)--(2.5,3)--(3.5,3)--(3.5,-1);
            \draw[fill=blue,opacity=0.2,draw=none] (1,-1)--(1,3)--(2.5,3)--(2.5,-1);
            \draw[fill=red,opacity=0.2,draw=none] (1,-1)--(1,3)--(-3,3)--(-3,-1);
            \draw[black,thick] (2.5,-1) -- (2.5,3);
            \draw[thick ,->] (2.5,1.3) -- (3,1.3) node[right=2mm,below] {$\vec w$};
            \draw[black,thick] (1,-1) -- (1,3);
            \draw[thick ,->] (1.0,1.3) -- (1.5,1.3) node[right=2mm,below] {$\vec w^\ast$};
            \draw[fill=black] (1,0) circle (0.03) node[below right] {$t^\ast$};
            \draw[fill=black] (2.5,0) circle (0.03) node[below right] {$t_1$};
            \draw[fill=black] (1.8,0) circle (0.03) node[below right] {$2c$};
            \node at (-1,2) {$A_1^+$};
            \node at (3,2) {$A_2^+$}; 
             \node at (2.1,2) {$A^-$}; 
             \draw[fill=black] (0,0) circle (0.03) node[below left] {$(0,0)$};
    \end{tikzpicture}}
    \subfloat[]{
        \begin{tikzpicture}[scale=1]
            \label{fig:large-hypotheses-2}
                         \draw[fill=black] (0,0) circle (0.03) node[below left] {$(0,0)$};
            \coordinate (start) at (0.5,0);
            \coordinate (center) at (0,0);
            \coordinate (end) at (0.5,0.5);
            \draw[->] (-3,0) -- (3.5,0) node[anchor=north west,black,below left] {$\x_1$};
            \draw[->] (0,-1) -- (0,3) node[anchor=south east,below right] {$\x_2$};
            \draw[fill=red,opacity=0.2,draw=none] (1,2)--(1,3)--(3.5,3)--(3.5,2);
            \draw[fill=blue,opacity=0.2,draw=none] (1,-1)--(1,2)--(3.5,2)--(3.5,-1);
            \draw[fill=red,opacity=0.2,draw=none] (1,-1)--(1,2)--(-3,2)--(-3,-1);
            \draw[fill=blue,opacity=0.2,draw=none] (1,2)--(1,3)--(-3,3)--(-3,2);
            \draw[black,thick] (-3,2) -- (3.5,2);
            \draw[thick ,->] (2,2) -- (2,2.5) node[right=2mm,below] {$\vec w$};
            \draw[black,thick] (1,-1) -- (1,3);
            \draw[thick ,->] (1,1.3) -- (1.5,1.3) node[right=2mm,below] {$\vec w^\ast$};
            \draw[fill=black] (1,0) circle (0.03) node[below right] {$t^\ast$};
            \draw[fill=black] (0,2) circle (0.03) node[below right] {$t_2$};
                        \draw[fill=black] (1.8,0) circle (0.03) node[below right] {$2c$};
    \end{tikzpicture}}
    \caption{
        \textbf{(a)}  The guess $\vec w= \vec e_1 - t_1$ and $t_1$ is larger than $C \sloglog$ for
        some sufficiently large constant $C$.  We can pick $c$ such that $2 c - t^\ast$ 
        and $t_1 - 2 c$ are both large constants making the exponential shift
        certificate $e^{2 c \x_1}$ work, similarly to the constant case of \Cref{fig:constant-hypotheses}.
        \\
        \textbf{(b)}  The guess $\vec w = \vec e_2 - t_2$ and $t_1$ is larger than $C \sloglog$ for
        some sufficiently large constant $C$.  The region $\x_2 \geq t_2$ has probability
        at most $e^{-O(t_2^2)}$ and for $\x_2 \leq t_2$ we can treat $\x_2 - t_2$ as a constant
        negative guess.
    }
    \label{fig:large-hypotheses}
\end{figure}

We now deal with the case where the halfspace has a large threshold but is not
the constant hypothesis.  In particular, we assume that $h(\x) = \sgn(\vec w
\cdot \x - t)$ with $t/\|\vec w\| \geq C  \sloglog $ for some sufficiently large
absolute constant $C$. This case is, in fact, a generalization of the constant
hypothesis case that corresponds to $\vec w = \vec 0$.  In this case we will
show that roughly the same polynomial that we used for constant guesses in
\Cref{sub:constant-hypotheses} can also be made to work for very biased guesses.
We prove the next lemma.

\begin{lemma}[Certificate against ``Large Threshold'' Hypotheses] \label{lem:large-threshold-hypotheses}
    Let $\D$ be a distribution on $\R^d \times \{\pm 1\}$ with standard normal $\x$-marginal.  Assume that $\D$ satisfies the $\eta$-Massart noise
    condition with respect to some (at most) $(1-\gamma)$-biased optimal halfspace.
    Define the linear function $\ell(\x) = \sgn(\vec w \cdot \x - t)$
    and assume that $t/\snorm{2}{\vec w} \geq C \sloglog$
    for some sufficiently large absolute constant $C$.
    Then, there exists polynomial $q(\x) = \sum_{|\alpha| \leq k} c_\alpha \x^\alpha$ of 
    degree $\Theta(\log(\frac 1 {\beta \gamma}))$, $L_2$ norm $\|q(\x)\|_2 = 1$, and
    sum of absolute coefficients $\sum_{|\alpha| \leq k } |c_\alpha| = d^{O(k)}$, such that
    \[
    \E_{(\x, y) \sim \D}[ \ell(\x) y q^2(\x) ]
    \leq - \|\ell(\x)\|_2 ~ \poly(\beta \gamma)
    \,.
    \]
\end{lemma}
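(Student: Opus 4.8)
The plan is to certify the non-optimality of $\ell(\x)=\vec w\cdot\x-t$ using essentially the same polynomial that \Cref{lem:zero-case} uses against the constant hypothesis, exploiting the fact that $t/\snorm{2}{\vec w}\geq C\sloglog$ forces $\sgn(\ell)$ to be ``almost constant.'' First I would rescale so that $\snorm{2}{\vec w}=1$ (the target inequality is $1$-homogeneous in $\ell$) and assume $t>0$ (the case $t<0$ is symmetric via the reflection $\vec w^\ast\mapsto-\vec w^\ast$ in the construction below), so $t\geq C\sloglog$ and $\snorm{2}{\ell(\x)}=\sqrt{1+t^2}\in[t,2t]$. Writing $\vec w=\cos\theta\,\vec w^\ast+\sin\theta\,\vec u$ with $\vec u\perp\vec w^\ast$ and invoking \Cref{clm:projections-preserve-massart-noise} with $V=\mathrm{span}(\vec w^\ast,\vec u)$, I may work with two independent standard Gaussians $a=\vec w^\ast\cdot\x$, $b=\vec u\cdot\x$, where $f=\sgn(a-t^\ast)$, $\ell=\cos\theta\,a+\sin\theta\,b-t$, $\E[y\mid\x]=f\,\beta(a,b)$ with $\beta(a,b)\in[\beta,1]$, and $|t^\ast|\lesssim\sqrt{\log(1/\gamma)}$ by the bias assumption. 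I would take $q(\x)=p(\vec w^\ast\cdot\x)=p(a)$ with $p$ from \Cref{lem:taylor-exponential-polynomial} applied with threshold $t^\ast$ and bias parameter $b_0=\Theta(\sloglog)$ large enough: then $\deg p=\Theta((t^\ast)^2+b_0^2)=\Theta(\log(1/(\beta\gamma)))$, $\snorm{2}{p}=1$, sum of absolute coefficients $O(1)$, and, setting $E_+:=\E[p^2(a)\1\{a\geq t^\ast\}]$ and $E_-:=\E[p^2(a)\1\{a\leq t^\ast\}]$, we get $E_+/E_-\geq(\beta\gamma)^{-C'}$ and $E_+\geq\poly(\beta\gamma)$. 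The features of $p$ I would use are the pointwise bound $p^2(a)\lesssim e^{2c|a|-2c^2}$ with $c=\Theta(\sloglog)$ (from the bound $|S_k(x)|\leq e^{|x|}$ of \Cref{fct:taylor-expansion}), which makes the density $p^2(a)\phi(a)$ essentially a Gaussian centered at $2c$, together with the moment bound $\E[a^2p^2(a)]=O(c^2)$; choosing $C$ large forces $2c\leq t/8$.

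Next I would split $\E_{(\x,y)\sim\D}[\ell(\x)y\,q^2(\x)]=\E[\ell f\beta q^2]$ according to the sign pattern of $(\sgn\ell,f)$ and, on each piece, take the adversarial value of $\beta$ ($\beta$ on negatively-contributing pieces, $1$ on positively-contributing ones), obtaining
\[
\E[\ell y q^2]\;\leq\;-\beta\,\E\!\big[|\ell|q^2\1\{\sgn\ell=-1,\,f=+1\}\big]+\E\!\big[|\ell|q^2\1\{\sgn\ell=-1,\,f=-1\}\big]+\E\!\big[\ell\,q^2\1\{\sgn\ell=+1\}\big].
\]
I would lower bound the main (first) term by restricting to $\{a\geq t^\ast\}\cap\{\vec w\cdot\x\leq t/2\}$, where $|\ell|\geq t/2$: using the pointwise bound on $p^2$ and a Gaussian tail bound for $\sin\theta\,b$, the $q^2$-mass removed by $\vec w\cdot\x\leq t/2$ is $\leq e^{-\Omega(t^2)}=(\beta\gamma)^{\Omega(C^2)}\leq E_+/2$, so the main term is $\gtrsim\beta tE_+\gtrsim t\poly(\beta\gamma)\gtrsim\snorm{2}{\ell}\poly(\beta\gamma)$. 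The second term is $\leq tE_-+\E[(|a|+|b|)q^2\1\{a<t^\ast\}]$; since $E_-\leq(\beta\gamma)^{C'}E_+$ and (by Cauchy--Schwarz and $\E[a^2p^2(a)]=O(c^2)$) $\E[(|a|+|b|)q^2\1\{a<t^\ast\}]\lesssim O(c)\sqrt{E_-}+E_-$, taking $C'$ large makes it $\leq\tfrac1{10}t\poly(\beta\gamma)$. The third term is $\leq\E[(\vec w\cdot\x)q^2\1\{\vec w\cdot\x>t\}]\leq\sqrt{\E[(\vec w\cdot\x)^2q^2]}\sqrt{\E[q^2\1\{\vec w\cdot\x>t\}]}=O(c)\,e^{-\Omega(t^2)}$, negligible. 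Summing, $\E[\ell yq^2]\leq-\tfrac12 t\poly(\beta\gamma)\leq-\snorm{2}{\ell(\x)}\poly(\beta\gamma)$. Finally $\deg q=\deg p=\Theta(\log(1/(\beta\gamma)))$ and, since $p$ has $O(1)$ sum of absolute coefficients, \Cref{fct:polynomial_norms} gives that $q(\x)=p(\vec w^\ast\cdot\x)$ has $d^{O(k)}$ sum of absolute coefficients.

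The main obstacle is controlling the ``leakage'' in the three error terms: neither $\sgn(\ell)$ is exactly the constant $-1$ nor is $|\ell(\x)|$ exactly $\snorm{2}{\ell}$, so one must integrate $q^2$—which can be as large as $e^{O(c^2)}$ pointwise—against either the rare events $\{\vec w\cdot\x\gtrsim t\}$ or against the ``wrong side'' $\{f=-1\}$, and all of this must be dominated by the main term $t\poly(\beta\gamma)$, which is itself only logarithmically large times a polynomially small quantity. What makes it work is the combination of (i) the sharp pointwise control $|S_k(x)|\leq e^{|x|}$ (so $q^2$-tails decay like Gaussian tails shifted by $2c$), (ii) the moment bound $\E[a^2p^2(a)]=O(c^2)$, and (iii) the freedom, granted precisely by the ``large threshold'' assumption $t\gtrsim\sloglog$, to make $E_+/E_-$ an arbitrarily large polynomial in $1/(\beta\gamma)$ while keeping $2c\leq t/8$ and $\deg q=\Theta(\log(1/(\beta\gamma)))$.
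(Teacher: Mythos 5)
Your overall architecture (project to the $(\vec w^\ast,\vec u)$-plane, reuse the exponential-shift polynomial $q(\x)=p(\vec w^\ast\cdot\x)$, and treat $\sgn(\ell)$ as ``almost the constant $-1$'') is reasonable and differs from the paper's proof, which instead splits $\ell(\x)=\|\vec w\|_2\cos\theta\,(\x_1-t_1)+\|\vec w\|_2\sin\theta\,(\x_2-t_2)$ with $t_1,t_2\geq (C/2)\sloglog$ and certifies each coordinate separately. However, your bound on the agreement-region term $\E\big[|\ell|q^2\1\{\ell<0,\,f=-1\}\big]$ has a genuine gap: the Cauchy--Schwarz estimate $\E[(|a|+|b|)q^2\1\{a<t^\ast\}]\lesssim c\sqrt{E_-}+E_-$ is too lossy to be dominated by your main term $\Omega(\beta t E_+)$, and no choice of the constants $C,C'$ repairs it. Concretely, for the normalized Taylor-square polynomial one has $E_+\asymp e^{-2c^2}$ and $E_-\asymp E_+e^{-(2c-t^\ast)^2/2}$, so $c\sqrt{E_-}\asymp c\,e^{-c^2-(2c-t^\ast)^2/4}$, and the ratio of error to main term is $\frac{c}{\beta t}\,e^{\,c t^\ast-(t^\ast)^2/4}$. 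When $t^\ast>0$ (the relevant case of a $\gamma$-minority positive region, $t^\ast\asymp\sqrt{\log(1/\gamma)}$) this blows up like a positive power of $1/\gamma$, and even when $t^\ast\leq 0$ it is $\asymp 1/(C\beta)$, which diverges as $\beta\to 0$ since $t$ may be as small as $C\sloglog$. Increasing $C'$ does not help: it shrinks $E_+$ at the same exponential rate as it shrinks $\sqrt{E_-}$, because the square root halves the exponent of $E_-$ while the main term carries the full exponent of $E_+$. So the step ``taking $C'$ large makes it $\leq\frac{1}{10}t\,\poly(\beta\gamma)$'' (meaning: dominated by the main term) is false as argued. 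A related over-optimism is the claim that $p^2(a)\phi(a)$ is ``essentially a Gaussian centered at $2c$'': the only guaranteed pointwise bound, $|S_k(x)|\leq e^{|x|}$, is two-sided and allows a symmetric bump of equal height near $a=-2c$, which is exactly the region your Cauchy--Schwarz step fails to exploit as small.

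The missing ingredient is a \emph{weighted} approximation of the exponential by the Taylor square, which is how the paper gets around this. In the paper's \Cref{clm:parallel-bound}, the linear weight appears on both sides of a ratio inequality at the un-normalized scale $e^{4c^2}$ (compare \cref{eq:denominator-upper-bound} with \cref{eq:denominator-lower-bound}, where only a factor $1/\beta$ has to be beaten), and the replacement of $e^{2c\x_1}$ by $S_k^2(c\x_1)$ is done in weighted form using the $L_2$ bound $\|e^{2c\x_1}-S_k^2(c\x_1)\|_2\leq e^{-\Omega(k)}$ of \cref{eq:ell-two-holder}, so that $\E[|a|\,S_k^2(ca)\1\{a\leq t^\ast\}]$ can be compared directly to $\E[|a|\,e^{2ca}\1\{a\leq t^\ast\}]$ (a shifted Gaussian integrated far in its left tail), rather than to $\sqrt{E_-}$. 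If you replace your Cauchy--Schwarz step by this weighted $L_2$ (or $L_1$ plus truncation) comparison, your sign-pattern decomposition can likely be pushed through; as written, the second error term is not controlled.
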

\begin{proof}
    Using \Cref{clm:projections-preserve-massart-noise}, we can
    project the distribution $\D$ on the $2$-dimensional subspace $V$ spanned by $\vec w$ and $\vec w^\ast$. We have
    \begin{equation}
        \label{eq:projection-large-threshold}
        \E_{(\x, y) \sim \D}[\ell(\x) y p^2(\vec w^\ast \cdot \x)]
        =
        \E_{\vec v \sim \D_V}[\ell(\vec v) \beta_V(\vec v) \sgn(\vec w^\ast \cdot \vec v - t^\ast) p^2(\vec w^\ast \cdot \vec v)]\,.
    \end{equation}
    In what follows, we again abuse notation and denote $\beta_V(\vec v)$ simply as $\beta(\vec v)$.
    By the spherical symmetry of the Gaussian distribution, we can, without loss of generality, 
    assume that $\vec w^\ast = \vec e_1$ and $\vec w = \| \vec w \|_2 (\cos \theta \vec e_1 + \sin \theta \vec e_2)$.  Moreover, 
    we may assume that $t>0$ and $\theta \in [0, \pi/2]$ (see, e.g., \Cref{fig:large-hypotheses})
    since the other cases are similar.
    Our certificate will be the same polynomial as in the previous case of \Cref{lem:zero-case}.
    In particular, we will again use the square of the Taylor 
    approximation $S_k^2(c \x_1)$ of the exponential $e^{2 c \x_1}$.  
    
    We decompose the problem of proving that this polynomial is a certificate for general
    halfspaces to the problem of showing that it is a certificate for 
    halfspaces with large thresholds simultaneously in both orthogonal directions. 
    In particular, using \Cref{eq:projection-large-threshold}, we have
    \begin{align}
        \label{eq:orthogonal-decomposition}
        \E_{(\x, y) \sim \D}[&\ell(\x) y p^2(\vec w^\ast \cdot \x)]\nonumber
        \\
        &= 
        \|\vec w\|_2  \cos \theta 
        \E_{(\vec x_1, \vec \x_2) \sim \normal_2}\left [ 
        \left( \vec x_1 - 
        \frac{ t}  { 2 \|\vec w\|_2 \cos \theta}
        \right)
        \beta(\vec x_1,  \vec x_2)   p^2(\vec x_1) 
        \sign(\vec x_1 - t^\ast) \right] 
        \nonumber
        \\
        &+ \|\vec w\|_2 \sin \theta  
        \E_{(\vec x_1, \vec \x_2) \sim \normal_2} \left[ 
        \left(
        \vec x_2 - \frac t { 2 \| \vec w\|_2 \sin \theta} 
        \right)
        \beta(\vec x_1,  \vec x_2)   p^2(\vec x_1) \sign(\vec x_1 - t^\ast) \right] \,.
    \end{align}
    To simplify the notation, set $t_1 =  t/  ({ 2 \|\vec w\|_2 \cos \theta})$ 
    and $t_2 = t /(2 \| \vec w\|_2 \sin \theta)$ and notice
    that by the assumptions of \Cref{lem:large-threshold-hypotheses}, it holds that
    both  $t_1$ and $t_2$ are larger than $C/2 \sloglog$, see \Cref{fig:large-hypotheses}.
    We will show that we can pick $c =\Theta(\log(1/(\beta \gamma))$ 
    and $k = \Theta(c^2)$ so that the polynomial $S_k^2(c \x_1)$
    simultaneously satisfies for $i=1,2$, the following inequality
    \[
    \E_{(\vec x_1, \vec \x_2) \sim \normal_2}[ 
    ( \vec x_i - t_i )
    \beta(\vec x_1,  \vec x_2)  \sign(\vec x_1 - t^\ast)  p^2(\vec x_1) ]  \leq 
    - t_i ~ \poly(\beta \gamma) \,.
    \]
    Since $t_i \geq 1$, it follows that we can replace the $t_i ~ \poly(\beta \gamma) $ above 
    by $(2 t_i + 1) \poly(\beta \gamma)$.
    Having these bounds and using \Cref{eq:orthogonal-decomposition}, we obtain 
    \begin{align*}
        \E_{(\x, y) \sim \D}[\ell(\x) y p^2(\vec w^\ast \cdot \x)]
        &= -(\|\vec w\|_2(\cos\theta+\sin\theta) +t) ~ \poly(\gamma\beta) \\
        &=-(\|\vec w\|_2+t) ~ \poly(\gamma\beta) 
        = - \|\ell(\x)\|_2~ \poly(\beta \gamma)
        \,,
    \end{align*}
    where for the last inequality we used 
    the fact that $\cos \theta + \sin \theta \geq 1$, for $\theta \in [0, \pi/2]$
    and that $\sqrt{\E_{\x \sim \normal}[\ell(\x)^2]} = \sqrt{\|\vec w\|_2^2 + t^2}$.
    We bound the first term in claim \Cref{clm:parallel-bound} and the second in \Cref{clm:orthogonal-bound}.
    
    We first bound the contribution of the direction that is orthogonal to the optimal, i.e., $\vec e_2$.
    At a high-level, we have that, since $t_2$ is a large multiple of $\sloglog$, 
    ``for most" values of $\vec x_2$, the quantity $\x_2 - t_2$ will be negative.
    Therefore, $\vec x_2 - t_2$ roughly corresponds to the constant guess $-1$, that we covered in \Cref{lem:zero-case}.
    \begin{claim}
        \label{clm:orthogonal-bound}
        It holds
        \[
        \E_{(\vec x_1, \vec \x_2) \sim \normal_2}[ 
        ( \vec x_2 - t_2 )
        \beta(\vec x_1,  \vec x_2)  \sign(\vec x_1 - t^\ast)  p^2(\vec x_1) ]  
        \leq - t_2~ \poly(\beta \gamma)\;.
        \]
    \end{claim}
    \begin{proof}
        When $\x_2 - t_2 < 0$, we can treat $\x_2 -t_2$ as a constant negative guess, and 
        use directly the bound obtained in the proof of \Cref{lem:zero-case}, for $s = -1$,
        to make the term 
        $\E_{\x_1 \sim \normal}[(-1) \beta(\vec x_1)  \sign(\vec x_1 - t^\ast)  p^2(\vec x_1) ]
        \leq - C'(\beta\gamma)^{\rho_1}$, for some absolute constants $\rho_1,C'>0$.  
        In particular, since in this case it holds $\x_2 -t_2 \leq 0$, 
        the ``worst-case'' noise function $\beta(\x_1, \x_2)$
        is $\beta(\x_1, \x_2) = \beta$ for all $(\x_1, \x_2)$ such that 
        $\x_1 \geq t^\ast$ and $\beta(\x_1, \x_2) = 1$ for $\x_1 \leq t^\ast$,
        see \Cref{fig:large-hypotheses-2}.
        Notice that this worst case $\beta(\x_1, \x_2)$ does not depend
        on $\x_2$.  Therefore, we obtain 
        \begin{align}
            \label{eq:large-threshold-orthogonal-bound-1}
            \E_{(\vec x_1, \vec \x_2) \sim \normal_2}
            [ &\1\{\x_2 \leq t_2\} ( \vec x_2 - t_2 ) \beta(\vec x_1,  \vec x_2)  \sign(\vec x_1 - t^\ast)  p^2(\vec x_1) ]   
            \nonumber
            \\
            &\leq \E_{\vec \x_2 \sim \normal}[ \1\{\x_2 \leq t_2\} ( \vec x_2 - t_2 )] 
            \E_{\vec x_1 \sim \normal}
            [(-1) \beta(\vec x_1) \sign(\vec x_1 - t^\ast)  p^2(\vec x_1) ]   
            \nonumber
            \\
            &\leq  - \E_{\vec \x_2 \sim \normal}[ \1\{\x_2 \leq t_2\} | \vec x_2 - t_2 | ] 
            ~~  C'(\beta\gamma)^{\rho_1}
            \nonumber
            \\
            &\leq - C'/2 ~ t_2 ~~ (\beta\gamma)^{\rho_1}\,,
        \end{align}
        where the last inequality follows by our assumption that $t_2\geq 1$.
        
        On the other hand, the probability of the region $\{\vec x_2\in \R: \vec x_2 - t_2 \geq 0\}$ is 
        exponentially small which allows us to control the contribution 
        of the region where $\x_2 \geq t_2$. 
        To bound the expectation of $p^4(\x_1)$, we use the following lemma
        known as Bonami-Beckner inequality or simply Gaussian hypercontractivity.
        \begin{lemma}[Gaussian Hypercontractivity]
            \label{lem:hypercontractivity}
            Let $p: \R \to \R$ be any polynomial of degree at most $\ell$.
            Then, for every $q \geq 2$, it holds
            \(
            \E_{x \sim \normal}[|p(x)|^q]
            \leq (q-1)^{q \ell/2} \Big(\E_{x \sim \normal}[p^2(x)]\Big)^{q/2} \,.
            \)
        \end{lemma}
        
        Using Cauchy-Schwarz and the fact that $\beta(\x)\leq 1$, we have 
        \begin{align}
            \label{eq:large-threshold-cauchy-schwarz}
            \E_{(\vec x_1, \vec \x_2) \sim \normal_2}[ 
            &\1\{\x_2 \geq t_2\} ( \vec x_2 - t_2 ) \beta(\vec x_1,  \vec x_2)  \sign(\vec x_1 - t^\ast)  p^2(\vec x_1) ]   
            \nonumber
            \\
            &\leq  \sqrt{\pr_{\x_2 \sim \normal}[\x_2 \geq t_2]}  
            \left(\E_{\vec x_1 \sim \normal }[(\vec x_2 - t_2)^4] \right)^{1/4}
            \left(\E_{\vec x_1 \sim \normal }[p^8(\x_1)]\right)^{1/4}
            \,.
        \end{align}
        Using Gaussian hypercontractivity, \Cref{lem:hypercontractivity},
        we obtain that $(\E_{\vec x_1 \sim \normal}[ p^8(\x_1)] )^{1/4} = e^{O(k)}$.
        Moreover, from standard bounds of Gaussian tails, \Cref{fct:gaussian-tails},
        we have that  $\pr_{\x_2 \sim \normal}[\vec \x_2 \geq t_2] \leq e^{-t_2^2/2}$.  
        Finally, we have that  
        $(\E_{\vec x_1 \sim \normal}[(\x_2 - t_2)^4])^{1/4} = (3 + 6 t_2^2 + t_2^4)^{1/4} \leq 2 t_2$, for all $t_2 \geq 1$.
        Combining the above bounds, we obtain that 
        \begin{align}
            \label{eq:large-threshold-tiny-probability-bound}
            \E_{(\vec x_1, \vec \x_2) \sim \normal_2}[ 
            \1\{\x_2 \geq t_2\} ( \vec x_2 - t_2 ) \beta(\vec x_1,  \vec x_2)  \sign(\vec x_1 - t^\ast)  p^2(\vec x_1) ]   
            \leq t_2 e^{-t_2^2/4} e^{O(k)} \,.
        \end{align}
        Recall that $k = O(\log(1/(\beta \gamma)))$, therefore there exists
        an absolute constant $\rho_2>0$ such that $e^{O(k)} \leq
        (1/(\beta\gamma))^{\rho_2}$.  Therefore, combining
        \cref{eq:large-threshold-orthogonal-bound-1} and
        \cref{eq:large-threshold-tiny-probability-bound}, we get that 
        \[
        \E_{(\vec x_1, \vec \x_2) \sim \normal_2}[ 
        ( \vec x_2 - t_2 )
        \beta(\vec x_1,  \vec x_2)  \sign(\vec x_1 - t^\ast)  p^2(\vec x_1) ]  
        \leq -Ct_2(\beta\gamma)^{\rho_1} + t_2 e^{-t_2^2/4} (\beta\gamma)^{\rho_2}\;.
        \]
        Hence, because $t_2\geq C\sloglog/2$, for $C$ sufficiently large positive absolute constant, 
        we have that
        $\E_{(\vec x_1, \vec \x_2) \sim \normal_2}[( \vec x_2 - t_2 ) \beta(\vec
        x_1,  \vec x_2)  \sign(\vec x_1 - t^\ast)  p^2(\vec x_1) ]  = -t_2~
        \poly(\beta \gamma) $.
    \end{proof}
    We now bound the contribution of the direction parallel to the optimal vector, i.e., $\vec e_1$.
    \begin{claim}
        \label{clm:parallel-bound}
        It holds
        \[
        \E_{(\x_1, \x_2) \sim \normal_2}
        [
        ( \vec x_1 - t_1 )
        \beta(\vec x_1,  \vec x_2)  \sign(\vec x_1 - t^\ast)  p^2(\vec x_1) ]  
        \leq - t_1 ~ \poly(\beta \gamma)\;.
        \]
    \end{claim}
    \begin{proof}
        Notice that we can marginalize out the direction $\vec e_2$ 
        in this case.  The analysis is similar to that of \Cref{lem:zero-case}.  
        However, we 
        now have to distinguish three different intervals for $\vec x_1$
        inside which  $(\vec x_1 - t_1) \sgn(\vec x_1 - t^\ast)$ has the same sign.
        We will crucially use the fact that $|t^\ast| = O(\sqrt{\log(1/\gamma)})$,
        since the Gaussian puts at least $\gamma$ mass on the positive side of the
        halfspace,  and that the constant $C$ is sufficiently large so that $t_1$ is at least a large 
        constant multiple of $t^\ast$.
        We have the region $A^+ = A_1^+ \cup A_2^+$, where 
        $A_1^{+} = \{\vec x_1 : \vec x_1 \leq t^\ast\}$
        and $A_2^{+} = \{\vec x_1 :  \vec x_1 \geq t_1\}$.
        In $A^+$ it holds $(\vec x_1 - t_1) \sgn(\vec x_1 - t^\ast) \geq 0$.
        We also define the disagreement region $A^- = \{\vec x_1 : t^\ast \leq \vec x_1 \leq t_1 \}$,
        where $(\vec x_1 - t_1) \sgn(\vec x_1 - t^\ast) \leq 0$.
        The worst case noise function puts $\beta(\x_1, \x_2) = \beta $ when $\x_1 \in A^-$
        and $\beta(\x_1, \x_2) = 1$ otherwise.  Thus, in order to show that $p$ is a certifying
        polynomial, we have to show that
        \begin{equation}
            \label[ineq]{eq:large-threshold-weight-ratio-bound}
            \frac{ \E_{\x_1 \sim \normal}[|\x_1-t_1| p^2(\x_1) \1\{\x_1 \in A^-\}] } 
            { \E_{\x_1 \sim \normal}[|\x_1 - t_1| p^2(\x_1) \1\{\x_1 \in A^+\}]  }
            \geq \frac{1}{\beta}
            \,.
        \end{equation}
        Similarly to the proof of \Cref{lem:taylor-exponential-polynomial}
        we will first use an exponential function to shift the mean of the
        Gaussian, i.e., use $e^{2 c \x_1}$ instead of $p^2(\x_1)$ in the
        ratio of \Cref{eq:large-threshold-weight-ratio-bound}.
        Our goal is to show that there exists some $c\geq t^\ast$
        such that the probability of the region $A^{-}$ is much
        larger than the probability of $A^{+}$.  Since 
        $|t_1|$ is larger than some sufficiently large constant multiple of $\sloglog$
        the interval $[t^\ast, t_1]$ will contain most of the mass of the shifted normal
        $\normal(c, 1)$ for $c$ some constant multiple of $\sloglog$, see 
        \Cref{fig:gaussian-shift}.

        We start by bounding from above the denominator of \Cref{eq:large-threshold-weight-ratio-bound}.
        \begin{align}
            \label{eq:denominator-upper-bound}
            \E_{\x_1 \sim \normal}[|\x_1-t_1| e^{2 c \x_1} \1\{\x_1 \in A^+\}]
            &\leq \| \x_1 -t_1\|_2  \sqrt{ \E_{\x_1 \sim \normal}[e^{4 c \x_1} \1\{\x_1 \in A^+\}] }
            \nonumber
            \\
            &\leq 2 t_1 
            e^{4 c^2}
            \left(
            \sqrt{\pr_{\x_1 \sim \normal}[ \x_1 \leq t^\ast - 4c] }
            +
            \sqrt{\pr_{\x_1 \sim \normal}[ \x_1 \geq t_1 - 4c] }
            \right)
            \nonumber
            \\
            &\leq 2 t_1 e^{4 c^2} (e^{-(t^\ast - 4c)^2/4} + e^{- (t_1 - 4 c)^2)/4} )
            \nonumber
            \\
            &\leq t_1 e^{4 c^2} \beta/16
            \,,
        \end{align}
        where we used that $\| \x_1 - t_1\|_2 = 
        (\E_{\vec x_1 \sim \normal}[(\x_1 - t_1)^2]^{1/2} = 
        (1 + t_1^2)^{1/2}\leq 2 t_1$ for all $t_1 \geq 1$
        and, for the last inequality we used the Gaussian tail upper bound, 
        see \Cref{fct:gaussian-tails}, and the fact that $c = \Theta(\sloglog)$, 
        i.e.,
        we get the constant $1/16$ by appropriately choosing some $c = \Theta(\sloglog)$.
        We next bound the numerator from below. We have 
        \begin{align}
            \label{eq:denominator-lower-bound}
            \E_{\x_1 \sim \normal}[|\x_1-t_1| e^{2 c \x_1} \1\{\x_1 \in A^-\}]
            &\geq
            \frac{t_1}{2} 
            \E_{\x_1 \sim \normal}[e^{2 c \x_1} \1\{t^\ast \leq \x_1 \leq t_1/2\}]
            \nonumber
\nonumber
            \\
            &=   \frac{t_1}{2} e^{4 c^2}
            \pr_{\x_1 \sim \normal}[ t^\ast \leq \x_1 + 2 c \leq t_1/2]
            \geq
            \frac{t_1 e^{4 c^2}}{4} 
            \,,
        \end{align}
        where for the last inequality we used the fact that 
        $
        \pr_{\x_1 \sim \normal}[ t^\ast \leq \x_1 + 2c \leq t_1/2] 
        $
        is at least $1/2$ 
        since $t_1/2 - t^\ast$ is at least a large absolute constant
        and $c = \Theta(\sloglog)$. 
        In particular, both $c-t^\ast$ and $t_1/2 - c$ are at least large absolute constants.
        See \Cref{fig:large-hypotheses-1}.
        Thus, it holds that
        \[
        \frac{ \E_{\x_1 \sim \normal}[|\x_1-t_1|  \1\{\x_1 \in A^-\} e^{2 c \x_1} ] } 
        { \E_{\x_1 \sim \normal}[|\x_1 - t_1|  \1\{\x_1 \in A^+\} e^{2 c \x_1} ]  }
        \geq \frac{4}{\beta} \,.
        \]
        Similarly to the proof of \Cref{lem:taylor-exponential-polynomial},
        we now replace the exponential function $e^{2 c x}$ by the
        square of the Taylor expansion of $e^{c x}$, i.e., $S_k^2(c x)$.  It suffices
        to bound the following ratios
        \begin{align}
            \label{eq:exponential-weighted-polynomial-ratios}
            \frac{
                \E_{\x_1 \sim \normal}[e^{2 c \x_1} |\x_1 - t_1| \1\{\x_1 \in A^+\}]
            }
            {
                \E_{\x_1 \sim \normal}[(S_k(c \x_1))^2 |\x_1 - t_1| \1\{ \x_1 \in A^+\}] 
            }
            \geq \frac{1}{2}
            ~
            \text{ and }
            ~
            \frac{
                \E_{\x_1 \sim \normal}[ (S_k(c \x_1))^2  |\x_1 - t_1| \1\{\x_1 \in A^-\}]
            }
            {
                \E_{\x_1 \sim \normal}[ e^{2 c \x_1}  |\x_1 - t_1| \1\{ \x_1 \in A^-\}] 
            }
            \geq \frac{1}{2}
            \,.
        \end{align} 
       We start by showing the first inequality of \Cref{eq:exponential-weighted-polynomial-ratios}.
        It suffices to show that 
        
        \[
        \E_{\x_1 \sim \normal}[|e^{2 c \x_1} - S_k^2(c \x_1)| |\x_1 - t_1| \1\{\x_1 \in A^+\}]
        \leq  \E_{\x_1 \sim \normal}[e^{2 c \x_1} |\x_1 - t_1| \1\{ \x_1 \in A^+\}] 
        \,.
        \]
        Using the $L_1$ approximation error bound for the Taylor polynomial
        of \Cref{clm:ell-one-approximation-taylor} and H{\"o}lder's inequality 
        we can get an $L_2$ approximation guarantee.
        Assuming that $k$ is some sufficiently large absolute constant multiple of $c^2$, i.e., $k = C' c^2$, it holds
        \begin{align}
            \label{eq:ell-two-holder}
            \| e^{2 c \x_1} - S_k^2(c \x_1)\|_2
            &\leq 
            \| e^{2 c \x_1} - S_k^2(c \x_1)\|_1^{1/3}    
            \| e^{2 c \x_1} - S_k^2(c \x_1)\|_4^{2/3}
            \nonumber
            \\
            &\leq  e^{-C' k} e^{O(c^2)} 
            \leq e^{-\Omega(k)}\;,
        \end{align}
        where the last inequality follows from the fact
        that $S_k^2(c \x_1) \leq e^{2 c |\x_1|}$, $\| e^{2 c |\x_1| }\|_4 \leq e^{O(c^2)}$ and $C'$ is sufficiently large absolute constant.
        We can now use Cauchy-Schwarz and the $L_2$ approximation guarantee of $S_k^2(c \x_1)$
        to obtain
        \begin{align}
            \label[ineq]{eq:ell-two-approximation}
            \E_{\x_1 \sim \normal}[&|e^{2 c \x_1} - S_k^2(c \x_1)| |\x_1 - t_1| \1\{\x_1 \in A^+\}]
            \nonumber
            \\
            &\leq  
            \| \x_1 - t_1\|_2 \| e^{2 c \x_1} - S_k^2(c \x_1)\|_2
            \nonumber
            \\
            &\leq 2 t_1 e^{-\Omega(k)} 
            \,,
        \end{align}
        where we used that 
        $\|\x_1 - t_1\|_2 \leq 2 t_1$ for $t_1\geq 1$.
        We have that 
        \begin{align}
            \label[ineq]{eq:A-plus-lower-bound}
            \E_{\x_1 \sim \normal}[e^{2 c \x_1} |\x_1 - t_1| \1\{\x_1 \in A^+\}]
            &\geq 
            \E_{\x_1 \sim \normal}[e^{2 c \x_1} |\x_1 - t_1| \1\{\x_1 \leq t^\ast\}]
            \nonumber
            \\
            &\geq (t_1 - t^\ast) e^{-(2 c - t^\ast)^2}/4
            \geq t_1 e^{-4 c^2}/8
            \,,
            & &
            \text{$\rhd~ t^\ast \leq c$ and $t^\ast \leq t_1/2$}
        \end{align}
        where for the second inequality we used the Gaussian tail lower bounds of \Cref{fct:gaussian-tails}.
        Combining \Cref{eq:ell-two-approximation} and
        \Cref{eq:A-plus-lower-bound}, we obtain that for $k$ larger than some
        constant multiple of $c^2$ the first inequality of
        \Cref{eq:exponential-weighted-polynomial-ratios}
        holds.
        The proof of the second inequality of \Cref{eq:exponential-weighted-polynomial-ratios} is similar.
        We obtain that \Cref{eq:large-threshold-weight-ratio-bound} holds for the polynomial 
        $S_k^2(c \x_1)$ for $k = \Theta(1/\log(1/(\beta \gamma) ) )$ 
        and $c = \Theta(1/\sqrt{\log(1/(\beta \gamma) ) })$.
        
        Finally, using \Cref{fct:taylor-expansion} 
        we have that $\|S_k^2(c \x_1)\|_2 \leq 
        \|e^{2 c |\x_1|}\|_2 = e^{O(c^2)}$.
        Using the estimate of \Cref{eq:denominator-lower-bound} 
        and the $L_2$ approximation guarantee of 
        \Cref{eq:ell-two-holder} we obtain that 
        \[
        \frac{\E_{\x_1 \sim \normal}[S_k^2(c \x_1) |\x_1 - t_1| \1\{\x_1 \in A^-\}]}
        {\|S_k^2(c \x_1)\|_2}
        = \frac{e^{-\Omega(c^2) } - e^{-\Omega(k)}}{e^{O(c^2)}}
        = e^{-O(c^2)} = \poly(\gamma \beta) \,,
        \]
        where we used the fact that $k$ is a sufficiently large
        constant multiple of $c^2$ and $c^2 = \Theta(\log(1/(\beta \gamma)) )$.
        We conclude that 
        \begin{align*}
            \E_{(\x_1, \x_2) \sim \normal_2}
            \Big[
            &( \vec x_1 - t_1 )
            \beta(\vec x_1,  \vec x_2)  \sign(\vec x_1 - t^\ast)  
            \frac{S_k^2(c \vec x_1)}{\|S_k^2(c \vec x_1)\|_2 } \Big]  
            \\
            &\leq 
            \beta \frac{\E_{\x_1 \sim \normal}[S_k^2(c \x_1) |\x_1 - t_1| \1\{\x_1 \in A^-\}]}
            {\|S_k^2(c \x_1)\|_2}
            - 
            \frac{\E_{\x_1 \sim \normal}[S_k^2(c \x_1) |\x_1 - t_1| \1\{\x_1 \in A^+\}]}
            {\|S_k^2(c \x_1)\|_2}
            \\
            &\leq - t_1 \poly(\beta \gamma) \,.
        \end{align*}
        The proof of the upper bound on the coefficients of the polynomial $S_k^2(c \x_1) /\| S_k^2(c \x_1)\|_2$
        is similar to that of the constant hypothesis case, see \Cref{lem:zero-case}.
        
    \end{proof}
    This completes the proof of \Cref{lem:large-threshold-hypotheses}.
\end{proof}

\subsubsection{Certificate Against ``Small Threshold'' Halfspaces}
\label[sub]{sub:small-threshold}
\usetikzlibrary{calc,patterns,angles,quotes}

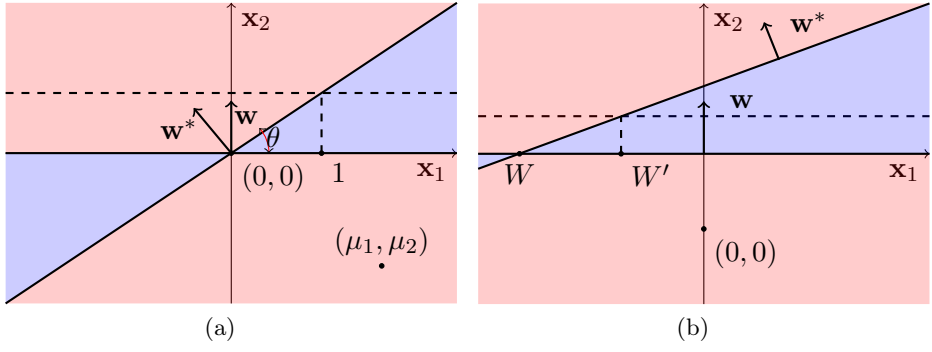
\begin{figure}[ht]
    \centering
    \subfloat[
    ]{
        \label{fig:small-threshold-1}
        \begin{tikzpicture}[scale=1]
            \coordinate (start) at (1,0);
            \coordinate (center) at (0,0);
            \coordinate (end) at (0.9,0.8);
  \draw[->] (-3,0) -- (3,0) node[anchor=north west,black,below left] {$\x_1$};
            \draw[->] (0,-2) -- (0,2) node[anchor=south east,below right] {$\x_2$};
\draw[fill=red,opacity=0.2,draw=none] (-3,-2)--(0,0)--(3,0)--(3,-2);
             \draw[fill=red,opacity=0.2,draw=none] (3,2)--(0,0)--(-3,0)--(-3,2);
                      \draw[fill=blue,opacity=0.2,draw=none] (0,0)--(3,0)--(3,2);
                      \draw[fill=blue,opacity=0.2,draw=none] (0,0)--(-3,0)--(-3,-2);
            \draw[black,thick] (-3,-2) -- (3,2);
            \draw[thick ,->] (0,0) -- (0,0.7) node[right=2mm,below] {$\vec w$};
            \draw[black,thick] (-3,0) -- (3,0);
            \draw[thick ,->] (0,0) -- (-0.5,0.6) node[left=2mm,below] {$\vec w^\ast$};
              \draw[fill=black] (2,-1.5) circle (0.03) node[above] {$(\mu_1,\mu_2)$};
              		\draw[black,dashed, thick](-3,0.8) -- (3,0.8);
              \draw[fill=black] (1.2,0) circle (0.03) node[below right] {$1$};
               \draw[black,dashed, thick](1.2,0) -- (1.2,0.8);
                      \pic [draw=red, <->, "$\theta$", angle eccentricity=1.2] {angle = start--center--end};
                                    \draw[fill=black] (0,0) circle (0.03) node[below right] {$(0,0)$};
    \end{tikzpicture}}
    \centering
    \subfloat[
    ]{ \label{fig:small-threshold-2}
        \centering
        \begin{tikzpicture}[scale=1]
              \draw[black,dashed, thick](-3,0.5) -- (3,0.5);
            \coordinate (start) at (0.5,0);
            \coordinate (center) at (0,0);
            \coordinate (end) at (0.5,0.5);
  \draw[->] (-3,0) -- (3,0) node[anchor=north west,black,below left] {$\x_1$};
            \draw[->] (0,-2) -- (0,2) node[anchor=south east,below right] {$\x_2$};
\draw[fill=red,opacity=0.2,draw=none] (-3,0)--(-2.45,0)--(3,2)--(-3,2);
             \draw[fill=red,opacity=0.2,draw=none] (-3,-0.2)--(-2.45,0)--(3,0)--(3,-2)--(-3,-2);
             \draw[fill=blue,opacity=0.2,draw=none] (-3,0)--(-2.45,0)--(-3,-0.2);
             \draw[fill=blue,opacity=0.2,draw=none] (-2.45,0)--(3,2)--(3,0);
\draw[black,thick] (-3,-0.2) -- (3,2);
            \draw[thick ,->] (0,0) -- (0,0.7) node[right=2mm] {$\vec w$};
            \draw[black,thick] (-3,0) -- (3,0);
            \draw[thick ,->] (1,1.25) -- (0.8,1.75) node[right=2mm] {$\vec w^\ast$};
            \draw[fill=black] (-2.45,0) circle (0.03) node[below] {$W$};
              \draw[fill=black] (-1.1,0) circle (0.03) node[below right] {$W'$};
               \draw[black,dashed, thick](-1.1,0) -- (-1.1,0.5);
              \draw[fill=black] (0,-1) circle (0.03) node[below right] {$(0,0)$};
    \end{tikzpicture}}
    \caption{
        \textbf{(a)}
        Our certificate when the angle $\theta$ between the two halfspaces is large.
        Notice that we have changed coordinates so that the point $(0,0)$ 
        is their crossing point.  The mean of the Gaussian is now moved to the point
        $(\mu_1, \mu_2) = (-t/\tan(\theta) + t^\ast/\sin \theta, -t)$.  Notice that 
        since in this case we have assumed that $|t| \leq C\sloglog$, 
        the $\vec e_2$-coordinate of the mean of the Gaussian
        cannot be very far from the origin.
        In this case, we overestimate the agreement region between the two halfspaces (red area) 
        considering to be $\{\x_1 \leq 1\}$.  
        In order to put more weight to the disagreement area, we again use the 
        polynomial shift (see \Cref{lem:taylor-exponential-polynomial}) 
        in the direction $\vec e_1 = \wperp$. Observe that this differs from the cases 
        of \Cref{sub:constant-hypotheses} and \Cref{sub:large-threshold},
        where the polynomial shift was along the direction of $\vec w^\ast$.
        \\
        \textbf{(b)} 
        The case where the angle $\theta$ between the two halfspaces is small, 
        $\theta= O(\eps^2 \gamma \beta)$. In that case, the crossing point $W$
        of the two halfspaces is very far from the origin, 
        $|W| \geq \Omega(1/(\eps \gamma \beta))$.  In this case, simply taking 
        a band around $\ell(\x)$ works as a certificate.
    }
\end{figure}
\begin{lemma}[Certificate against ``Small Threshold'' Hypotheses] 
    \label{lem:small-threshold}
    Let $\D$ be a distribution on $\R^d \times \{\pm 1\}$ with standard normal $\x$-marginal.  Assume that $\D$ satisfies the $\eta$-Massart noise
    condition with respect to some at most $(1-\gamma)$-biased optimal halfspace.
    Define the linear function $\ell(\x) = \sgn(\vec w \cdot \x - t)$
    and assume that $t/\snorm{2}{\vec w} \leq C \sloglog $ for some absolute constant $C>0$.
    Moreover, assume that $\pr_{(\x, y) \sim \D}[ \sign(\ell(\x)) \neq y ] \geq \opt + \eps$.
    Then, there exists polynomial $q(\x) = \sum_{|\alpha| \leq k} c_\alpha \x^\alpha$ 
    of degree $\Theta(\log(\frac 1 {\beta}))$, norm $\|q(\x)\|_2 = 1$, and 
    sum of (absolute) coefficients $\sum_{|\alpha| \leq k} |c_\alpha| \leq d^{O(k)}$,
    and $r_1, r_2 \in \R$ with $|r_1 - r_2| \geq \poly(\eps \beta \gamma)$
    such that
    \[
   \E_{{(\x, y) \sim \D} }[ \ell(\x) y \1\{r_1 \leq \ell(\x) \leq r_2 \} q^2(\x) ]
    \leq - \eps^4 \poly(\beta \gamma) ~ \|\ell(\x)\|_2
    \,.
    \]
\end{lemma}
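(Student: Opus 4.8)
The plan is to reduce the problem to two dimensions and then split on the angle $\theta = \theta(\vec w, \vec w^\ast)$ between the guess and the optimal direction. Since the certificate will depend on $\x$ only through $\proj_V(\x)$ for $V = \mathrm{span}(\vec w, \vec w^\ast)$, \Cref{clm:projections-preserve-massart-noise} lets us pass to the projected distribution, which still satisfies the $\eta$-Massart condition with respect to the (projected) optimal halfspace. Rescaling $\ell$ so that $\|\vec w\|_2 = 1$ and rotating, we may assume $d \le 2$, $\ell(\x) = \x_2 - t$ with $|t| \le C\sloglog$, and $f(\x) = \sgn(\sin\theta\,\x_1 + \cos\theta\,\x_2 - t^\ast)$ where $\vec e_1$ is the $\wperp$ direction; since $f$ is at most $(1-\gamma)$-biased, $|t^\ast| = O(\sqrt{\log(1/\gamma)})$. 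Exactly as in \Cref{lem:zero-case} and \Cref{lem:large-threshold-hypotheses}, the hypothesis $\pr_{(\x,y)\sim\D}[\sgn(\ell(\x)) \neq y] \ge \opt + \eps$ is equivalent to $\E_\x[\beta(\x)\1\{\sgn(\ell(\x)) \neq f(\x)\}] \ge \eps$, and since $\beta(\x) \in [\beta,1]$ it suffices to certify against the worst-case noise function that equals $\beta$ on the disagreement region of $\ell$ and $f$ and $1$ elsewhere.

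When $\theta$ is tiny, i.e.\ $\theta \le c_0\poly(\eps\beta\gamma)$, the two hyperplanes are nearly parallel and the disagreement region is, up to $O(\theta)$ Gaussian mass, a slab $\{r_1 \le \ell(\x) \le r_2\}$ on one side of $\{\ell = 0\}$ (or its complement, when $\vec w\cdot\vec w^\ast < 0$ and $\theta$ is near $\pi$, where $\sgn(\ell)$ is essentially $-f$). Here \Cref{fct:gaussian-halfspaces} forces the relevant threshold gap to be $\gtrsim \eps$ (otherwise the disagreement mass would be $< \eps$), so a sub-slab of width $\ge \poly(\eps\beta\gamma)$ still carries $\Omega(\eps)$ of the disagreement mass, and taking $q \equiv 1$ together with this slab as $T$ gives $\E[\ell(\x)\,y\,T(\x)] \le -\eps^4\poly(\beta\gamma)\|\ell(\x)\|_2$ by a direct estimate (the slab sits at $|\ell| = O(\sloglog)$, so the $\ell$-weight on it is under control).

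For the remaining angles, recenter so that the two hyperplanes meet at the origin; the Gaussian then has mean $-W$ with $W = ((t^\ast - t\cos\theta)/\sin\theta,\ t)$, and inside a slab $\{r_1 \le \ell(\x) \le r_2\}$ of appropriately chosen bounded offset and width $\ge \poly(\eps\beta\gamma)$, the disagreement region of the two (now homogeneous) halfspaces is essentially a threshold in the $\wperp = \vec e_1$ direction. Whenever that threshold, measured relative to the recentered Gaussian's $\vec e_1$-mean, is $O(\sqrt{\log(1/\beta)})$, I can apply \Cref{lem:taylor-exponential-polynomial} (equivalently \Cref{lem:sos-ratio}) with the polynomial shift along $\vec e_1$: the polynomial $q(\x) = S_k^2(\pm c\,\x_1)$ with $c = \Theta(\sqrt{\log(1/\beta)})$ and $k = \Theta(c^2) = \Theta(\log(1/\beta))$ makes $q^2$ concentrate on the disagreement side by a factor $\ge 1/\beta$ inside the slab. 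Plugging $T(\x) = \1\{r_1 \le \ell(\x) \le r_2\}\,q^2(\x)$ into the target expectation and estimating as in \Cref{clm:parallel-bound} and \Cref{clm:orthogonal-bound} — the $\ell$-weight is bounded on the slab, the worst-case $\beta$ costs only a factor $\beta$ on the disagreement piece, the $\Omega(\eps)$ of disagreement mass supplies the main negative term, and \Cref{lem:hypercontractivity} and \Cref{fct:gaussian-tails} bound the leftover contribution outside the slab and the effective support — yields $\E[\ell(\x)\,y\,T(\x)] \le -\eps^4\poly(\beta\gamma)\|\ell(\x)\|_2$. The bounds $\|q\|_2 = 1$ after normalization and $\sum_{|\alpha|\le k}|c_\alpha| \le d^{O(k)}$ follow from \Cref{fct:taylor-expansion} and \Cref{fct:polynomial_norms} as in \Cref{lem:zero-case}.

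The main obstacle is organizing this case analysis so that the polynomial-shift step is invoked \emph{only} when the effective $\wperp$-threshold is $O(\sqrt{\log(1/\beta)})$ — keeping the degree at $\Theta(\log(1/\beta))$ and not $\Theta(\log(1/(\beta\gamma)))$ — while showing that every other small-threshold hypothesis (in particular those with $\theta$ near $\pi/2$, or with a far-away crossing point $W$, for which the recentered Gaussian can be as far as $\Theta(\sqrt{\log(1/\gamma)})$ from the relevant threshold) is already certified by a slab-only certificate with $q \equiv 1$. Making the choices of slab offset/width and shift magnitude consistent with the stated guarantees — degree $\Theta(\log(1/\beta))$, slab width $\ge \poly(\eps\beta\gamma)$, inner product $\le -\eps^4\poly(\beta\gamma)\|\ell(\x)\|_2$ — across all these sub-cases is where essentially all of the technical work of the lemma lies.
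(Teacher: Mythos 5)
Your skeleton matches the paper's: project to the two-dimensional span of $\vec w, \vec w^\ast$ via \Cref{clm:projections-preserve-massart-noise}, normalize $\|\vec w\|_2=1$, and split on the angle $\theta=\theta(\vec w,\vec w^\ast)$, with a band-only certificate ($q\equiv 1$) when $\theta$ is tiny (there $|t-t^\ast|=\Omega(\eps)$ by \Cref{fct:gaussian-halfspaces}, the crossing point is at distance $\Omega(1/(\eps\beta\gamma))$, and the agreement mass inside the band $\{\eps^2/2\le\ell\le\eps^2\}$ is exponentially small), and a polynomial shift along $\wperp=\vec e_1$ otherwise. However, there is a genuine gap in the main case: you only invoke the shift ``whenever the effective $\wperp$-threshold is $O(\sqrt{\log(1/\beta)})$'' and you yourself flag that organizing the remaining sub-cases (crossing point or recentered mean as far as $\Theta(\sqrt{\log(1/\gamma)})$ from the threshold, $\theta$ near $\pi/2$, etc.) ``is where essentially all of the technical work lies.'' That is precisely the content of the lemma, and your proposal does not supply it. The paper's resolution is to make the effective one-dimensional threshold a \emph{constant}, not a case-dependent quantity: after recentering at the crossing point (mean moves to $(\mu_1,\mu_2)$ with $|\mu_2|=|t|\le C\sqrt{\log(1/(\beta\gamma))}$), choose the band $\{0\le\ell(\x)\le\tan\theta\}$ when $\mu_1\ge 0$ (the mirrored band when $\mu_1<0$); inside this band of width $\tan\theta$, the agreement region is \emph{overestimated} by $\{\x_1\le 1\}$, so the one-dimensional problem has threshold $1$; the worst-case noise is independent of $\x_2$, the expectation factorizes, the $\x_2$-factor contributes $\theta^2\poly(\beta\gamma)\ge\eps^4\poly(\beta\gamma)$ because this case has $\theta\gtrsim\eps^2\beta\gamma$, and a monotonicity observation shows the worst case over $\mu_1\ge 0$ is $\mu_1=0$, after which the argument of \Cref{lem:zero-case} with threshold $1$ gives degree $\Theta(\log(1/\beta))$ directly. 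None of these three moves (band side chosen by $\sgn(\mu_1)$, overestimation by $\{\x_1\le 1\}$, worst-case $\mu_1=0$) appear in your proposal, and without them the degree bound $\Theta(\log(1/\beta))$ is not obtained for general positions of the crossing point.

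Your proposed fallback — certify the problematic sub-cases with a band alone — is also not justified and is not how the paper handles them. Band-only certificates are used in the paper only when $\theta\lesssim\eps^2\beta\gamma$, where the agreement region inside the band provably has mass $e^{-\Omega(1/(\eps\beta\gamma)^2)}$; for a moderate or near-orthogonal angle against a heavily biased target (say $f(\x)=\sgn(\x_1-t^\ast)$ with $t^\ast=\Theta(\sqrt{\log(1/\gamma)})$ and guess $\ell(\x)=\x_2$), the agreement region inside any band around $\{\ell=0\}$ has constant mass, and showing that the $\ell$-weighted, worst-case-$\beta$ expectation is still $\le-\eps^4\poly(\beta\gamma)\|\ell\|_2$ with $q\equiv1$ would require a separate quantitative argument you have not given. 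So the proposal correctly identifies the architecture and the direction of the shift, but the key mechanism that closes the case analysis — and hence the lemma — is missing.
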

\begin{proof}
    Denote $\ell^\ast(\x) = \vec w^\ast \cdot \x - t^\ast$ the optimal halfspace
    and denote $\ell(\x) = \vec w \cdot \x - t$ for some vectors $\vec w \in \R^d$ 
    and some threshold $t \in \R$.
    Moreover, denote $\theta = \theta(\vec w^\ast, \vec w)$ the angle between
    $\vec w^\ast$ and $\vec w$.
    We first observe that since $\opt = \pr_{(\x, y) \sim \D}[\sgn(\ell^*(\x) \neq y ]$
    it holds that 
    \begin{align*}
        \pr_{(\x,y) \sim \D}[ \sgn(\ell(\x)) \neq y ]
        - \opt 
        = \E_{\x \sim \D_\x}[ \1\{ \sgn(\ell(\x)) \neq \sgn(\ell^\ast(\x)) \} \beta(\x) ]\;.
    \end{align*}
    Thus, the disagreement probability between $\ell(\x)$ and $\ell(\x^\ast)$ 
    $ \E_{\x\sim \D_{\x}}[ \1\{ \sgn(\ell(\x)) \neq \sgn(\ell^\ast(\x)) \} ] \geq \eps$
    from our assumption that 
    $ \pr_{(\x, y) \sim \D}[ \sign(\ell(\x)) \neq y ] \geq \opt + \eps $.
    From \Cref{fct:gaussian-halfspaces} we know that 
    \[
         \pr_{\x \sim \D_x}[ \sign(\ell(\x)) \neq 
         \sign(\ell^\ast(\x)) ] 
         \leq O(\theta) + O(|t^\ast - t|) \,.
         \]
         Thus, we either have that the angle 
         of the two halfspaces is large $\theta = \Omega(\eps)$ or 
         the difference of their thresholds $|t^\ast - t| = \Omega(\eps)$.
    
    Since we can always write
    \begin{align*}
        \E_{(\x, y) \sim \D}[ &(\vec w \cdot \x - t) y \ \1\{r_1 \leq \vec \ell(\x) \leq r_2\} q^2(\x) ]
        \\
        &= \|\vec w\|_2
        \E_{(\x, y) \sim \D} \Big[
        \left(\frac{\vec w}{\|\vec w\|_2} \cdot \x - \frac{t}{\|\vec w\|_2}\right) y \ \1\{r_1 \leq \ell(\x) \leq r_2\} q^2(\x)
        \Big] \,,
    \end{align*}
    we will assume for simplicity that $\|\vec w\|_2 = 1$ and $|t| 
    \leq C \sloglog $.  
    We are going to construct a polynomial $q(\x)$ that only depends on 
    the subspace $V$ spanned by $\vec w, \vec w^\ast$.
    Therefore, as in the case of \Cref{lem:zero-case}, we can project $\x$ to the subspace
    $V$ spanned by $\vec w, \vec w^\ast$ and preserve the $\eta$-Massart noise assumption,
    see \Cref{clm:projections-preserve-massart-noise}.
    Let $\theta = \theta(\vec w, \vec w^\ast)$ be the angle between $\vec w, \vec w^\ast$.
    Without loss of generality, we may assume that $\vec w = \vec e_2$
    and $\vec w^\ast = -\sin \theta \vec e_1 + \cos \theta \vec e_2 $, see
    \Cref{fig:small-threshold-1}.
    Therefore, in this case we have that $\ell(\x) = \x_2 - t$ and 
    $\ell^\ast(\x) = - \sin \theta \vec e_1 + \cos \theta \vec e_2 - t^\ast$.
    
    We first provide a certificate that works when the angle $\theta$ is non-trivial, that is when $\theta\geq C'\eps^2\gamma\beta$, for some small enough constant $C'$ chosen appropriately.
We change the coordinates so that the crossing point of the two linear functions 
    $\ell, \ell^\ast$ is the origin $(0,0)$.  
    This will move the mean of the Gaussian to the point
    $(\mu_1, \mu_2) = (-t/\tan \theta + t^\ast/\sin \theta, -t)$, see \Cref{fig:small-threshold-1}.
    
    Assume first that $C\eps^2\gamma\beta=\theta \leq \pi/2$ and $\mu_1 \geq 0$, we show later that the other cases are symmetric.
    Set $r_1 = 0$, $r_2 = \tan(\theta) = \Theta(\theta)$, and let $q(\x)$ depend only on $\x_1$, we have that
    \begin{align}
        \label{eq:small-threshold-1}
        \E_{(\x, y) \sim \D}[ &\ell(\x) y \
        \1\{r_1 \leq \ell(\x) \leq r_2\} q^2(\x) ]
        \nonumber
        \\
        &=
        \E_{(\x_1, \x_2) \sim \normal(\mu_1, \mu_2)}[\vec x_2 \sign(-\sin \theta \vec x_1 +
        \cos \theta \vec x_2) \ \beta(\x_1, \x_2)
        \1\{0 \leq \vec \x_2 \leq r_2\} q^2(\x_1) ]
        \nonumber
        \\
        &\leq\E_{(\x_1,\x_2) \sim \normal(\mu_1,\mu_2)}[ \x_2 \sign( 1 - \vec x_1  ) \ \beta(\x_1, \x_2) \
        \1\{0 \leq \vec \x_2 \leq r_2\} 
        q^2(\x_1)    ] \,,
    \end{align}
    where for the inequality we used the fact that $ 0 \leq \vec x_2 \leq r_2 = \tan(\theta)$,
    and therefore it holds that if $-\sin \theta \x_1 + \cos \theta \x_2 \geq 0$ 
    then it holds that $1-\x_1 \geq 0$, for all $(\x_1, \x_2)$ such that $\x_2 \in [0,r_2]$.
    Observe now that in order to maximize the quantity of \Cref{eq:small-threshold-1}
    the ``worst-case'' noise function $\beta(\x_1, \x_2)$ is equal to $\beta$ for 
    all points where the integral is negative, and $+1$ when the integral is positive,
    that is $\beta(\x_1, \x_2) = \beta \1\{ \vec x_1 \geq 1\} + \1\{\vec x_1 < 1\}$.
    Since this ``worst-case'' noise $\beta(\x_1, \x_2)$ is independent of 
    $\x_2$, it follows that we can decompose the expectation of \Cref{eq:small-threshold-1}, i.e.,
\[
    \E_{\x_1 \sim \normal(\mu_1)}[ q^2(\x_1) \sign(1 - \vec x_1 ) \beta(\x_1)] \
    \E_{\x_2 \sim \normal(\mu_2)}[\vec x_2 \1\{0 \leq \vec \x_2 \leq r_2\} ] \,.
    \]
    Since $|\mu_2| = |t| \leq C \sloglog$
    from standards bounds of the Gaussian tail probability (\Cref{fct:gaussian-tails}), we obtain 
    \[
    \E_{\x_2 \sim \normal(\mu_2)}[\vec x_2 \1\{0 \leq \vec \x_2 \leq r_2\} ]
    \geq \theta^2 \poly(\beta \gamma) \,.
    \]
    It remains to bound the term
    $ \E_{\x_1 \sim \normal(\mu_1)}[ q^2(\x_1) \sign(1-\vec x_1) \beta(\x_1) ] $.
    We observe that the ``worst-case'' value of $\mu_1 \geq 0$ in order to
    maximize this expectation is $\mu_1 = 0$.
    Now, we can use the same argument as in the proof of \Cref{lem:zero-case};
    notice that in this case the threshold is $1$ instead of $\sqrt{\log(1/\gamma)}$ 
    and therefore, by picking $k = \Theta(\log(1/(\beta)))$
    we have that  there exists polynomial $q$ 
    such that $\E_{\x_1 \sim \normal(\mu_1)}[ q^2(\x_1) \sign(1-\vec x_1) \beta(\x_1)] 
    \leq - \poly(\beta \gamma) $
    and thus, we obtain the bound:
    \[
    \E_{(\x, y) \sim \D}[ \ell(\x) y \
    \1\{r_1 \leq \vec w \cdot \x \leq r_2\} q^2(\x) ]
    \leq - \theta^2 \poly(\beta\gamma)
    \|\ell(\x)\|_2
    \,.\]

    In the case where $\theta \in [0, \pi/2]$ and
    $\mu_1 < 0$, we may pick $r_1 = - \tan(\theta)$ and $r_2 = 0$
    resulting in a completely symmetric case to the previous one.
    Finally, the case where $\theta \in [\pi/2, \pi]$ is easier than
    the previous two cases since the disagreement region between the two halfspaces is
    now a superset of the corresponding region in the previous cases.
    
    We now handle the case where the angle between the two halfspaces is
    small, i.e., $\theta \leq C'\eps^2 \beta \gamma$.  
    We know that the disagreement between two halfspaces is upper bounded by their angle, i.e.,
    $\E_{\x \sim \D_\x}[h(\x) \neq f(\x)] \leq O(\theta) + O(|t - t^*|)$.
    Since $\E_{\x \sim \D_\x}[h(\x) \neq f(\x)] \geq \eps$, we obtain that
    the two thresholds $t, t^\ast$ cannot be very close, i.e.,
    $|t - t^\ast| = \Omega(\eps)$.
    As in the previous case, we may assume that 
    $\vec w = \vec e_2$ and $\vec w^\ast = -\sin \theta \vec
    e_1 + \cos \theta \vec e_2$.  Recall that the intersection point of the
    two halfspaces has coordinates $(t/\tan \theta - t^\ast/\sin \theta, t)$.
    This means that when $\theta = O(\eps^2 \gamma \beta)$ the intersection point 
    of the two halfspaces is very far from the origin: 
    its first coordinate is $|t/\tan \theta - t^\ast/\sin \theta| = \Omega(1/(\eps \gamma \beta))$,
    since by the triangle inequality, it follows 
    \begin{align*}
    |t \cos \theta - t^\ast|
    &\geq |t- t^\ast| \cos \theta - |t^\ast| |\cos \theta - 1|
    \\&\geq |t- t^\ast| (1-\eps^4 \gamma^2 \beta^2) - O(\sqrt{\log(1/(\beta\gamma))} ) (\eps^4 \gamma^2 \beta^2 -2\eps^8\gamma^4\beta^4)
    = \Omega(\eps)
    \,,
    \end{align*}
    where we used the inequality $1-x^2\leq \cos x \leq 1- x^2/2 +x^4$, for all $x \in [0, \pi/2]$ and the fact that $\eps\leq 1/2$. Combining, the above with $\sin(\theta)= O(\eps^2\beta\gamma)$, we get that $|t/\tan \theta - t^\ast/\sin \theta| = \Omega(1/(\eps \gamma^2 \beta^2))$.
In this case, we do not require a polynomial for the certificate, therefore the certificate is going to be simply a band,
    see \Cref{fig:small-threshold-2}.
    Let $W$ be the $\vec e_1$-coordinate of the crossing point of the optimal
    halfspace with $\ell(\x)$, see \Cref{fig:small-threshold-2}.  We have that
    $|W|= \Omega(1/(\eps \beta \gamma))$.
    Without loss of generality, we may assume that $W < 0$ along with the band $
    \eps^2/2 \leq \ell(\x)\leq \eps^2$ since the proof of the other case is
    similar:
    we just need to consider the band $-\eps^2 \leq \ell(\x) \leq -\eps^2/2$
    instead, see \Cref{fig:small-threshold-2}.  
    Moreover, denote $W'$ the $\vec e_1$-coordinate of the point
    of $\ell^*(x) = \ell(\x) + \eps^2$, see \Cref{fig:small-threshold-2}.
    It holds $W'=W+\eps^2/\tan\theta$, and note that
    $|W'|=\Omega(1/(\eps\beta\gamma))$ since  $ |W+\eps^2/\tan\theta|\geq  (|t
    \cos \theta - t^\ast|-|\eps^2|)/\sin\theta \geq  \Omega(1/(\eps \beta
    \gamma))$.
    We have 
    \begin{align}
        \label{eq:small-threshold-2}
        \E_{(\x_1, \x_2) \sim \normal_2}[ &\ell(\x) \sgn(-\x_1 \sin \theta + \x_2 \cos \theta - t^\ast) \1\{\eps^2/2 \leq \ell(\x) \leq \eps^2\} \beta(\x_1, \x_2)]
        \nonumber
        \\
        &\leq 
        \E_{(\x_1, \x_2) \sim \normal_2}[ \ell(\x) \sign(-\x_1 + W +\eps^2/\tan\theta) \1\{\eps^2/2 \leq \ell(\x) \leq \eps^2\} \beta(\x_1, \x_2)] \;,
    \end{align}
    where we overestimated the contribution of the agreement area
    (red region in \Cref{fig:small-threshold-2}) by the region $\x_1 \leq W'$.
    Since $W'$ is still very far from the origin the contribution of the 
    region $\x_1 \leq W'$ is going to be very small.
To bound the quantity of \Cref{eq:small-threshold-2}, we first bound from
     above the region where $\sign(-\x_1+W')$ is positive:
    \begin{align}\label{eq:small-threshold-positive}
         \E_{(\x_1,\x_2) \sim \normal_2}[ \ell(\x)  \1\{\eps^2/2 \leq \ell(\x)
         \leq \eps^2\}  \beta(\x)  \1\{\x_1 \leq W'\}]
         &\leq   \pr_{\x_1 \sim \normal}[\x_1 \leq W'] \leq e^{-1/(\eps\beta\gamma)^2}  \;,
    \end{align}
    where we used the fact that $|W'|=\Omega(1/(\eps\gamma\beta))$. Next, we bound from below the region where $\sign(-\x_1+W')$ is negative:
\begin{align}\label{eq:small-threshold-negative}
         \E_{(\x_1,\x_2) \sim \normal_2}[ \ell(\x) & \1\{\eps^2/2 \leq \ell(\x) \leq \eps^2\}  \beta(\x)  \1\{\x_1 \geq W'\}]
        \nonumber\\ &\geq  \beta   \E_{\x_2 \sim \normal}[ \ell(\x_2)  \1\{\eps^2/2 \leq \ell(\x_2) \leq \eps^2\}] \pr_{\x_1 \sim \normal}[\x_1 \geq W']\nonumber \\&\geq  \frac{\eps^2}{4}\beta \pr_{\x_2 \sim \normal}[ \eps^2/2 \leq \ell(\x_2) \leq \eps^2]\geq \eps^4\beta \poly(\beta\gamma)\;,
    \end{align}
    where we used that $ \pr_{\x_1 \sim \normal}[\x_1 \geq W']\geq 1/2$ and that $\beta(\x)\geq \beta$.
    Using \Cref{eq:small-threshold-positive} and \Cref{eq:small-threshold-negative} to \Cref{eq:small-threshold-2}, we get
\[
   \E_{(\x_1, \x_2) \sim \normal_2}[ \ell(\x) \sgn(-\x_1 \sin \theta + \x_2 \cos \theta - t^\ast) \1\{\eps^2/2 \leq \ell(\x) \leq \eps^2\} \beta(\x_1, \x_2)] \leq -\eps^4 \poly(\beta \gamma)\;.
    \]
    We combine the above cases to obtain the claimed bound.
\end{proof}

\subsection{Efficiently Computing the Certificate via SDP}
\label[sub]{sub:certificate_optimization}
In this section, we show that we can efficiently compute our polynomial
certificate given labeled examples from the target distribution.
The following is the main proposition of this subsection,
where we bound the number of samples and the runtime needed
to compute the certificate given samples from the distribution $\D$.
The proof is similar to that given in \cite{DKTZ20b}; we adapt it to work
for our certifying function and for general (as opposed to homogeneous) halfspaces.
\begin{proposition}[Certificate Oracle]
    \label{prop:sdp-oracle}
    Let $\D$ be a distribution on $\R^d \times \{\pm 1\}$ with standard normal $\x$-marginal.  Assume that $\D$ satisfies the $\eta$-Massart noise
    condition with respect to some (at least) $(1-\gamma)$-biased optimal halfspace $f(\x)$.
    Let $\ell(\x)$ be  any linear function such that
    $\pr_{\x \sim \D_\x}[\sgn(\ell(\x)) \neq f(\x)] \geq \eps$, for $\eps\in(0,1)$.
    There exists an algorithm that draws $N =d^{O(\log({1}/{(\beta \gamma)}))} \log(1/\delta)/\eps^2$ samples from $\D$, runs in time $\poly(N,d)$, and
    with probability $1-\delta$
    returns a positive function $T(\x)$ with $\|T(\x)\|_{4}\leq 1$ such that 
    $$  \E_{(\bx, y) \sim \D}\left[ T(\x) \ell(\x) y  \right] \leq-  \eps^4  
    d^{-O(\log(1/(\beta \gamma))) }
    \| \ell(\x) \|_2 
    \,.$$
\end{proposition}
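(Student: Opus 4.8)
The plan is to obtain the certificate by turning the existence statement of \Cref{pro:certificate-Biased-Massart} into a semidefinite program whose coefficients can be estimated from samples. By rescaling we may assume $\snorm{2}{\ell(\x)} = 1$. Since the noise satisfies $\beta(\x) \geq \beta$ everywhere, the hypothesis $\pr_{\x \sim \D_\x}[\sgn(\ell(\x)) \neq f(\x)] \geq \eps$ implies $\E_{\x \sim \D_\x}[\1\{\sgn(\ell(\x)) \neq f(\x)\}\beta(\x)] \geq \beta\eps$, which is precisely the hypothesis of \Cref{pro:certificate-Biased-Massart} with $\eps$ replaced by $\beta\eps$; the bias assumption matches as well. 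Hence there exist $r_1^\ast, r_2^\ast \in \R$ with $|r_1^\ast - r_2^\ast| \geq \poly(\eps\beta\gamma)$ and a degree-$k$ polynomial $q^\ast$, $k = \Theta(\log(1/(\beta\gamma)))$, with $\snorm{2}{q^\ast} = 1$, such that $\E_{(\x,y)\sim\D}[\ell(\x) y\,\1\{r_1^\ast \leq \ell(\x) \leq r_2^\ast\}(q^\ast)^2(\x)] \leq -\eps^4\poly(\beta\gamma)$ (absorbing the extra $\beta$-powers into $\poly(\beta\gamma)$, which is dominated by a $d^{-O(\log(1/(\beta\gamma)))}$ factor).

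Next, for a fixed band $S = \{r_1 \leq \ell(\x) \leq r_2\}$, I would parametrize the polynomial in the normalized Hermite basis, $q(\x) = \sum_{|\alpha| \leq k} c_\alpha h_\alpha(\x)$, so that $\snorm{2}{q}^2 = \sum_\alpha c_\alpha^2$ and $q^2(\x) = \sum_{\alpha,\beta} c_\alpha c_\beta h_\alpha(\x) h_\beta(\x)$. Then $\E_{(\x,y)\sim\D}[\ell(\x) y\,\1_S(\x) q^2(\x)] = \dotp{\vec c\vec c^\top}{\vec M_S}$, where $(\vec M_S)_{\alpha\beta} = \E_{(\x,y)\sim\D}[\ell(\x) y\,\1_S(\x) h_\alpha(\x) h_\beta(\x)]$ is a $d^{O(k)} \times d^{O(k)}$ symmetric matrix. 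More generally, any PSD $\vec X$ with $\tr(\vec X) = 1$ gives a nonnegative function $T_{\vec X}(\x) = \1_S(\x)\sum_{\alpha,\beta} X_{\alpha\beta} h_\alpha(\x) h_\beta(\x)$ (a sum of squares times the band indicator) with $\E[\ell y\, T_{\vec X}] = \dotp{\vec X}{\vec M_S}$. Thus, for each fixed $S$, minimizing $\dotp{\vec X}{\vec M_S}$ over $\{\vec X \succeq 0,\ \tr(\vec X) = 1\}$ is an SDP, solvable in $\poly(d^k)$ time, whose optimum is at most $\dotp{\vec c^\ast(\vec c^\ast)^\top}{\vec M_S}$, the Hermite coefficient vector $\vec c^\ast$ of $q^\ast$ being feasible since $\snorm{2}{\vec c^\ast} = 1$. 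The band itself is handled by brute force over a grid of step $\poly(\eps\beta\gamma)$ inside $[-O(\sqrt{\log(1/(\eps\beta\gamma))}),\, O(\sqrt{\log(1/(\eps\beta\gamma))})]$, outside of which the Gaussian mass is negligible: a Cauchy--Schwarz and Gaussian anticoncentration estimate, combined with the hypercontractive bound $\snorm{8}{q^\ast}^2 \leq 7^k$ (\Cref{lem:hypercontractivity}), shows that perturbing the band endpoints by $\poly(\eps\beta\gamma)$ changes $\dotp{\vec c^\ast(\vec c^\ast)^\top}{\vec M_S}$ by at most $\eps^4\poly(\beta\gamma)/2$, so some grid pair is essentially as good as $(r_1^\ast, r_2^\ast)$.

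Finally, I would replace $\vec M_S$ by the empirical average $\widehat{\vec M}_S = \frac1N \sum_i \ell(\x_i) y_i \1_S(\x_i)\, \vec h(\x_i)\vec h(\x_i)^\top$, where $\vec h(\x) = (h_\alpha(\x))_{|\alpha| \leq k}$. Each entry is an average of $\ell(\x) y\, \1_S(\x) h_\alpha(\x) h_\beta(\x)$, whose variance is at most $\snorm{6}{\ell}^2 \snorm{6}{h_\alpha}^2 \snorm{6}{h_\beta}^2 \leq 5^{2k+1}$ by H{\"o}lder and \Cref{lem:hypercontractivity}; a union bound over the $d^{O(k)}$ entries shows that $N = d^{O(k)}\poly(1/\eps)\log(1/\delta)$ samples suffice to get $\snorm{F}{\widehat{\vec M}_S - \vec M_S} \leq \tau$ with $\tau = \eps^4 \poly(\beta\gamma)\, d^{-O(k)}$ and probability $1 - \delta$. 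Since every feasible $\vec X$ has $\snorm{F}{\vec X} \leq \tr(\vec X) = 1$, we get $|\dotp{\vec X}{\widehat{\vec M}_S - \vec M_S}| \leq \tau$ uniformly, so solving the SDP with $\widehat{\vec M}_S$ and selecting the grid band minimizing the estimated objective yields $\widehat{\vec X}$ with $\dotp{\widehat{\vec X}}{\vec M_S} \leq -\eps^4\poly(\beta\gamma)/4$. Setting $T(\x) = T_{\widehat{\vec X}}(\x)/\snorm{4}{T_{\widehat{\vec X}}}$ and using $\snorm{4}{T_{\widehat{\vec X}}} \leq \snorm{1}{\widehat{\vec X}}\, 7^k \leq d^{O(k)}$ (by \Cref{lem:hypercontractivity} and $\snorm{F}{\widehat{\vec X}} \leq \tr(\widehat{\vec X}) = 1$), the normalization costs only a $d^{-O(k)} = d^{-O(\log(1/(\beta\gamma)))}$ factor, and undoing the rescaling of $\ell$ gives $\E_{(\x,y)\sim\D}[T(\x)\ell(\x) y] \leq -\eps^4\, d^{-O(\log(1/(\beta\gamma)))}\, \snorm{2}{\ell(\x)}$, as claimed.

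The conceptual content is entirely in \Cref{pro:certificate-Biased-Massart}; what remains is a robustness/estimation argument of the type carried out in \cite{DKTZ20b}, and the only delicate points are (i) threading the quantitative $\snorm{2}{\cdot}$-normalization and coefficient bounds of the existential certificate through both the band discretization and the empirical estimation, using Gaussian hypercontractivity and anticoncentration, and (ii) bookkeeping the $\poly(\eps\beta\gamma)$ and $d^{O(k)}$ losses so that the final certificate value and the $\snorm{4}{\cdot}$-normalization are both of order $d^{-O(\log(1/(\beta\gamma)))}$. I do not expect either step to require a new idea; the main obstacle is purely one of careful accounting.
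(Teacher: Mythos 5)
Your proposal is correct and follows essentially the same route as the paper's proof: invoke \Cref{pro:certificate-Biased-Massart} for existence, express the search for a squared polynomial times a band indicator as an SDP over a $d^{O(k)}$-dimensional PSD variable, brute-force the band endpoints over a grid, estimate the moment matrix from $d^{O(\log(1/(\beta\gamma)))}$ samples, and normalize the returned function by its $L_4$ norm via Gaussian hypercontractivity. The only differences are cosmetic — you use the Hermite basis with a trace normalization where the paper uses the monomial basis with a Frobenius-norm constraint, and you make explicit the factor-$\beta$ loss in passing from the disagreement-probability hypothesis to the excess-error hypothesis of \Cref{pro:certificate-Biased-Massart}, a point the paper glosses over.
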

\begin{proof}
    From \Cref{pro:certificate-Biased-Massart}, we know that we are looking
    for a certificate function $T(\x)$ of the form $T(\x)=\1_B(\x) q^2(\x)$, where $B$ is a band
    with respect $\ell(\x)$, i.e., $B=\{r_1\leq \ell(\x)\leq r_2\}$ for some
    $r_1,r_2\in \R\cup\{\infty\}$ and $q(\x)$ is a 
    $k=\Theta(\log({1}/{\gamma \beta}))$ degree polynomial.
    We illustrate how we can formulate the search of such function as an SDP.
    For the rest of this section, let $\1_B(\vec x)$ be the indicator function
    of the region $B=\{\x:r_1\leq \ell(\x)\leq r_2\}$, for some appropriate
    choices $r_1,r_2\in \R\cup\{\infty\}$ and $\lambda=\eps^4 d^{- O(k) } \| \ell(\x) \|_2$.
    Denote by $\vec m(\bx)$ the vector containing all monomials up to degree
    $k$, such that $\vec m_S(\bx)\eqdef\x^S$, indexed by the multi-index $S$
    satisfying $|S|\leq k$.  Recall that if $S=(s_1,\ldots,s_d)$, then
    $\x^S=\prod_{i=1}^d\x_i^{s_i}$.
    The dimension of $\vec m(\bx) \in \R^m$ is $m = \binom{d+k}{k}$. 
    Let $ \vec M =  \E_{(\bx, y) \sim \D}\left[ \vec m(\bx) \vec m(\bx)^\top 
    \1_B(\bx) \ell(\x) y  \right]$, for any real matrix $\vec A\in\R^{m \times m}$,
    we define the following function
    \begin{align}
        \label{eq:certificate_objective}
        \mathcal{L}_{\ell} (\vec{A}) =
        \E_{(\bx, y) \sim \D}\left[ \vec m(\bx)^\top \vec A ~
        \vec m(\bx) \1_B(\bx) \ell(\x) y  \right]
        = \tr\left(\vec A \vec M \right)\;.
    \end{align}
    Notice that $\calL_{\ell}$ is linear in its variable $\vec A$. 
    From \Cref{pro:certificate-Biased-Massart}, we know that
    if $\pr_{\x \sim \D_\x}[\sgn(\ell(\x)) \neq f(\x)]\geq \eps$,
    then there exists a (normalized) polynomial $q(\vec x) = \vec m(\vec x)\cdot \vec a$, with $\|\vec a\|_1 = 1$
    such that $\E_{(\x, y)}[ \ell(\x) y \1_B(\x) q(\vec x)] \leq - \lambda $.
    Therefore, for $\vec A=\vec a \vec a^\top$, we have $q^2(\x)=\vec m(\x)^\top \vec A ~\vec m(\x) $ and hence, $\calL_{\ell}(\vec A) \leq -\lambda$.
    It follows that there exists a positive semi-definite rank-$1$ matrix $\vec
    A$ such that $\mathcal{L}_{\ell} (\vec A) \leq-\lambda$. 
    Moreover, $\snorm{F}{\vec A}^2 = \snorm{F}{\vec a \vec a^T} = \|\vec a\|_2^4 \leq 1$.
    Recall that, $\mathcal S^m$ is the set of (symmetric) positive semi-definite matrices of
    $m$-dimension.  We formulate the following semi-definite program
\begin{alignat}{2}
        \text{Find }    \qquad  & \vec A \in \mathcal{S}^m                       && \notag\\ 
        \text{s.\ t.\ } \qquad &  \tr(\vec A \vec M) \leq- \lambda  &&   \label[sdp]{eq:exact_sdp}\\
        &    \snorm{F}{\vec A}^2 \leq 1          \notag
    \end{alignat}
    Moreover, from \Cref{pro:certificate-Biased-Massart}, the
    \Cref{eq:exact_sdp} is feasible if $\pr_{\x \sim \D_\x}[\sgn(\ell(\x)) \neq
    f(\x)]\geq \eps$.
    We define $\widetilde{\vec M}
    = \frac{1}{N} \sum_{i=1}^N
    \vec m(\sample{\bx}{i}) \vec m(\sample{\bx}{i})^\top \1_B(\sample{\bx}{i})\sample{y}{i}
    \ell(\sample{\bx}{i})$, the empirical estimate of $\vec M$ using $N$ samples from $\D$.
    Using the following fact, we bound the sample size required so that 
    $\wt{\vec M}$ is sufficiently close to $\vec M$ which is similar to Lemma 3.8 of \cite{DKTZ20b}. 
    (See \Cref{app:upper-bound-polynomial} for the proof).
    
    \begin{fact}[Estimation of $\vec M$]
        \label{lem:empirical_objective_error}
        Let $\Omega = \{\vec A \in \mathcal{S}^m:\ \snorm{F}{\vec A}
        \leq 1\}$ and $\eps,\delta\in(0,1)$. Let $\ell(\x)=\vec w \cdot \x+t$ with $|\ell(\x)|^2\leq C$ and $\widetilde{\vec M}
        = \frac{1}{N} \sum_{i=1}^N
        \vec m(\sample{\bx}{i}) \vec m(\sample{\bx}{i})^\top \1_B(\sample{\bx}{i})\sample{y}{i}
        \ell(\sample{\bx}{i})$. There exists an algorithm that draws
        $N =
        \frac{ d^{O(\log\frac{1}{\gamma \beta})}}{C\eps^2}
        \log(1/\delta)$
        samples from $\D$, runs in $\poly(N,d)$ time and
        with probability at least $1-\delta$ outputs a matrix $\wt{\vec M}$ such
        that
        $$
        \pr
        \left[
        \sup_{\vec A \in \Omega}
        \left| \tr(\vec A \wt{\vec M}) - \tr(\vec A \vec M) \right|
        \geq \eps
        \right]
        \leq 1-\delta\, .
        $$
    \end{fact}
    
    Using \Cref{lem:empirical_objective_error}, we replace the matrix $\vec M$ in
    \Cref{eq:certificate_objective} with the estimate $\widetilde{\vec M}$ and define the following ``empirical'' SDP
    \begin{alignat}{2}
        \text{Find }    \qquad  & \vec A \in \mathcal{S}^m                       && \notag\\ 
        \text{such that} \qquad &  \tr(\vec A \widetilde{\vec M}) \leq -\lambda/2  &&   \label[sdp]{eq:sample_sdp}\\
        &    \snorm{F}{\vec A}^2 \leq 1         \notag
    \end{alignat}  
    From \Cref{lem:empirical_objective_error}, we obtain that with $N$
    samples we can get a matrix $\wt{\vec M}$ such that $|\tr(\vec A \wt{\vec
        M} - \tr(\vec A \vec M)| \leq -\lambda/2$ with probability at least
    $1-\delta'$.
    From \Cref{pro:certificate-Biased-Massart}, we know
    that with the given bound for $k$ and $\snorm{F}{\vec A}$,
    there exists $\vec A^*$ such that
    $$
    \tr(\vec A^* \vec M) \leq -\lambda.
    $$
    Therefore, the \Cref{eq:exact_sdp} is feasible.  Moreover, from
    \Cref{lem:empirical_objective_error} we get that
    $$
    \tr(\vec A^* \wt{\vec M}) \leq -\lambda/2\;.
    $$
    Thus, the \Cref{eq:sample_sdp} is feasible.
    Since the dimension of the matrix $\vec A$ is smaller than the number of
    samples, we have that the runtime of the SDP is polynomial in the number of
    samples. Solving the SDP with tolerance $\lambda/4$, we obtain
    an almost feasible $\wt{\vec A}$, in the sense that
    $\tr(\wt{\vec A} \wt{\vec M}) \leq -\lambda/4$.
    Using again the guarantee of \Cref{lem:empirical_objective_error}, we get that solving the
    \Cref{eq:sample_sdp}, we obtain a positive-semi definite
    matrix $\wt{\vec A}$ such that
    $
    \tr(\wt{\vec A} \vec M) \leq -\lambda/4
    $.
    Moreover, we have that for the matrix $\vec A$ returned by our SDP it holds that 
    \begin{equation}
        \label[ineq]{eq:4-norm-bound}
        \E_{\x \sim \normal}[(\vec m(\x)^T\vec A \vec m(\x))^4]
        \leq \|\vec A\|_F^4 \E_{\x \sim \normal}[(\vec m(\x)^T\vec m(\x))^4]
        = d^{\log(1/(\beta \gamma))}
    \end{equation}
    
    To complete the proof, we need to show how to guess the band $B$.  For some
    large enough constant $C>0$, let $\mathcal T=\{\pm \lambda^2,\pm
    2\lambda^2,\ldots, C\sqrt{\log(1/\lambda)}\}$. Assume that for some
    $B=\{\x:r_1\leq\ell(\x)\leq r_2\}$, with $r_1,r_2\in \R$ and some polynomial
    $q(\x)$ with $\|q^2(\x)\|_2\leq 1$, such that $\E_{(\bx, y) \sim
    \D}\left[\1_{B}(\x)q^2(\x) \ell(\x) y  \right] \leq -\lambda/4$.
    Then, there exists $\tilde r_1,\tilde r_2\in \mathcal T$, with $|r_1-\tilde
    r_1|\leq \lambda^2$ and $|r_2-\tilde r_2|\leq \lambda^2$, such that 
    \begin{align*}
        \E_{(\bx, y) \sim \D}\left[ \1\{\tilde r_1\leq\ell(\x)\leq \tilde
        r_2\}q^2(\x) \ell(\x) y  \right] \leq \E_{(\bx, y) \sim
        \D}\left[\1_{B}(\x)q^2(\x) \ell(\x) y  \right] +2\lambda^2\leq
        -\lambda/4\;,
    \end{align*}
    where we used Cauchy–Schwarz inequality and the Gaussian concentration.
    Thus, by setting $\delta'=\Theta(\delta \lambda^4/\log(1/\lambda))$ solving
    the \Cref{eq:sample_sdp} for all the different choices of 
    $r_1,r_2\in\cal T$, with $r_1\leq r_2$, we guarantee that the algorithm will return a
    polynomial $q(\x)$ and some thresholds $\tilde r_1,\tilde r_2\in\cal T$, such
    that $T(\x) = \1\{\tilde r_1\leq\ell(\x)\leq \tilde r_2\}q^2(\x)$
    is a certifying function, i.e., it holds 
    $$\E_{(\bx, y) \sim \D}\left[ \1\{\tilde r_1\leq\ell(\x)\leq \tilde
    r_2\}q^2(\x) \ell(\x) y  \right] \leq -\lambda/4\;,$$
    with probability $1-\delta$.
    Finally, using \Cref{eq:4-norm-bound} we obtain that the 
    $L_4$ norm of $T(\x)$ is bounded from above as $\| T(\x) \|_4 \leq 
    d^{O(\log(1/(\beta \gamma )))}$.
    Thus, the function $T(\x)/\|T(\x)\|_4$ is an
    $\eps^4 d^{- O(\log(1/(\beta \gamma) ) ) }$-certificate.
    
\end{proof}

\subsection{Learning a Near-Optimal Halfspace via Online Convex Optimization}
\label[sub]{ssec:online}

In this subsection, we present a black-box approach to learn halfspaces with
$\eta$-Massart Noise given a $\rho$-certifying oracle.  A similar reduction for
homogeneous halfspaces based on online-convex optimization was given in
\cite{DKKTZ20}.  Here we adapt it so that it handles non-homogeneous halfspaces.
Another difference is that in \cite{DKKTZ20} the certificate function was bounded
in the $L_\infty$ sense.  Here we have certificates bounded in the $L_4$ norm.
The arguments however are similar, and we provide them here for completeness.
Formally, we prove:
\begin{proposition}
    \label{pro:certificate-reduction-formal}
    Let $\D$ be a distribution on $\R^d\times\{\pm 1\}$ with standard normal $\x$-marginal. Assume that
    $\D$ satisfies the $\eta$-Massart noise condition with respect to some halfspace. Fix $\eps,\delta\in(0,1)$.
    Given a $\rho$-certificate
    oracle $\cal O$ that returns certifying functions with bounded $\ell_4$-norm. There exists an algorithm that makes 
    $T = O(\frac{d\log(1/\eps)}{\rho^2\eps^4}) $ calls to $\mathcal O$, 
    draws $N=\tilde{O}(\frac{d}{\rho^2\eps^4}\log(1/\delta))$ samples from $\D$, 
    runs in $\poly(d,N,T)$ time and computes a hypothesis $h$, 
    such that $\pr_{(\x,y)\sim\D}[h(\x)\neq y]\leq \opt+\eps$, with probability $1-\delta$.
\end{proposition}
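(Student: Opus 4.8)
The reduction is the online-convex-optimization argument of \cite{DKKTZ20,DKTZ20b}, adapted to (i) non-homogeneous halfspaces and (ii) certificates bounded in $L_4$ rather than $L_\infty$. Write the target halfspace as $f(\x)=\sgn(\ell^\ast(\x))$, $\ell^\ast(\x)=\vec w^\ast\cdot\x-t^\ast$, and recall that the Massart condition gives $\E[y\mid\x]=\beta(\x)f(\x)$ with $\beta(\x)\in[\beta,1]$, $\beta=1-2\eta>0$. The two facts that drive the reduction are: (a) for \emph{every} non-negative $T$, $\E_{(\x,y)\sim\D}[\ell^\ast(\x)\,y\,T(\x)]=\E_{\x}[|\ell^\ast(\x)|\,\beta(\x)\,T(\x)]\ge 0$; and (b) the $\rho$-certificate oracle, fed any affine $\ell$ with $\pr[\sgn(\ell(\x))\ne y]\ge\opt+\eps$, returns a non-negative $T$ with $\|T\|_4\le 1$ and $\E[\ell(\x)\,y\,T(\x)]\le-\rho\|\ell(\x)\|_2$.

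The plan is to run online gradient descent over the sphere of unit-$L_2$ affine functions, i.e.\ over $\theta=(\vec w,t)$ with $\|\vec w\|_2^2+t^2=1$ (so $\|\ell_\theta\|_2=1$, using that the $\x$-marginal is $\normal$), identifying $\ell_\theta(\x)=\vec w\cdot\x-t$. We first dispose of the ``extremely biased'' case: if $\min(\pr[f(\x)=1],\pr[f(\x)=-1])\le\eps$, one of the constants $\pm1$ already has error $\le\opt+\eps$, so we always keep $\pm1$ among the output candidates; henceforth we may assume $|t^\ast|/\|\vec w^\ast\|_2\lesssim\sqrt{\log(1/\eps)}$, which lets us take the normalized optimum $\theta^\ast=(\vec w^\ast,t^\ast)/\sqrt{\|\vec w^\ast\|_2^2+(t^\ast)^2}$ with $\|\vec w^\ast\|_2$ bounded below. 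At round $s$: (1) record the current hypothesis $\hat h^{(s)}=\sgn(\ell_{\theta^{(s)}})$; (2) call the oracle on $\ell_{\theta^{(s)}}$ to get $T^{(s)}$ with $\|T^{(s)}\|_4\le 1$ — if its precondition fails, $\hat h^{(s)}$ is already $\eps$-good; (3) form the linear loss $L_s(\theta)=\E_{(\x,y)\sim\D}[\ell_\theta(\x)\,y\,T^{(s)}(\x)]=\theta\cdot g^{(s)}$ with $g^{(s)}=(\E[\x\,y\,T^{(s)}(\x)],-\E[y\,T^{(s)}(\x)])$, estimate $g^{(s)}$ by $\hat g^{(s)}$ from a fresh batch, and update $\theta^{(s+1)}=\Pi(\theta^{(s)}-\xi\hat g^{(s)})$ where $\Pi$ is projection onto the unit sphere (a gradient step followed by renormalization) and $\xi=\Theta(1/(G\sqrt T))$. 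After $T$ rounds, estimate the error of each recorded hypothesis together with $\pm1$ on a fresh validation sample and output the best.

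The analysis has three ingredients. \emph{Regret.} The gradient magnitude obeys $G:=\sup_s\|g^{(s)}\|_2\le\sup_s\E_{\x}\big[\sqrt{1+\|\x\|_2^2}\,|T^{(s)}(\x)|\big]\le\bigl\|\sqrt{1+\|\x\|_2^2}\bigr\|_{4/3}\,\|T^{(s)}\|_4=O(\sqrt d)$ by H\"older and $\|T^{(s)}\|_4\le1$ — this is where the $L_4$ bound replaces the $L_\infty$ bound of \cite{DKKTZ20}. The standard OGD guarantee then gives, for any prefix of $m\le T$ rounds, $\sum_{s\le m}L_s(\theta^{(s)})\le\sum_{s\le m}L_s(\theta^\ast)+O(G\sqrt m)$. \emph{Optimality.} By fact (a), $L_s(\theta^\ast)=\E[\ell^\ast(\x)\,y\,T^{(s)}(\x)]\ge0$ for every $s$, so $\sum_{s\le m}L_s(\theta^{(s)})\le O(G\sqrt m)$. \emph{Contradiction.} If \emph{every} iterate (and $\pm1$) were $\eps$-suboptimal, then by fact (b), $L_s(\theta^{(s)})=\E[\ell_{\theta^{(s)}}(\x)\,y\,T^{(s)}(\x)]\le-\rho\|\ell_{\theta^{(s)}}\|_2=-\rho$ (the iterates are unit-norm), so $-\rho m\le O(G\sqrt m)$, i.e.\ $m\le O(G^2/\rho^2)=O(d/\rho^2)$. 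Taking $T$ a suitable $\poly(1/\eps)$ multiple of $d/\rho^2$ forces some iterate to have error $\le\opt+\eps$, and the final validation step of $\tilde O\!\big(\tfrac{d}{\rho^2\eps^4}\log(1/\delta)\big)$ samples selects it up to an extra $O(\eps)$. Each per-round estimate (of $g^{(s)}$, and where needed of hypothesis errors to precision $\Theta(\eps)$) needs only $\poly(d/(\rho\eps))$ fresh samples, so a union bound over the $T=\poly(d/(\rho\eps))$ rounds plus a Hoeffding bound over the $O(T)$ candidates gives $N=\tilde O(d/(\rho^2\eps^4))\log(1/\delta)$ and $\poly(d,N,T)$ runtime; rescaling the error threshold absorbs the $O(\eps)$ slacks.

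The main obstacle is making the empirical version airtight rather than the idealized one: one must (i) estimate all $g^{(s)}$ uniformly well across rounds despite $T^{(s)}$ being data-dependent — handled by an independent batch per round, with the relevant second and fourth moments controlled by $\|T^{(s)}\|_4\le1$ together with Gaussian hypercontractivity (\Cref{lem:hypercontractivity}) — and (ii) avoid degeneracy of the update near the ``pole'', which is why we project onto the sphere, keeping $\|\ell_{\theta^{(s)}}\|_2=1$ so that fact (b) is applied with a constant, not vanishing, right-hand side. Both points are carried out as in \cite{DKKTZ20,DKTZ20b}; the only genuinely new bookkeeping is the $L_4$-based gradient and moment bounds and the explicit handling of the affine threshold coordinate $t$.
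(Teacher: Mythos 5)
Your plan follows the same route as the paper (an online-gradient-descent reduction in the spirit of \cite{DKKTZ20,DKTZ20b}, with the $L_4$ bound on the certificate giving the gradient bound $G=O(\sqrt d)$ via H\"older, exactly as in \Cref{lem:algorithm_function_ell}), but there is a genuine gap in how you handle the low-norm-iterate problem. You constrain the iterates to the unit sphere $\{\|\vec w\|_2^2+t^2=1\}$ precisely so that the certificate inequality gives $L_s(\theta^{(s)})\le-\rho$ with a constant right-hand side. However, the sphere is not convex, and the regret guarantee you invoke (the standard projected-OGD bound, \Cref{lem:online_optimization}) requires projection onto a \emph{convex} set: its proof uses non-expansiveness of the projection towards the comparator, which fails for projection onto a sphere, so ``gradient step followed by renormalization'' has no sublinear-regret guarantee. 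The paper avoids this by running OGD over the convex set $\mathcal V=\{\|\vec w\|_2\le 1,\,|t|\le 4\sqrt{\log(1/\eps)}\}$ and instead augmenting the loss to $r_i(\vec w,t)=-\E[(T^{(i)}(\x)+\rho)\,y\,(\x,1)]\cdot(\vec w,t)$; the extra $\rho$-term is what makes the comparator strictly negative ($r_i(\wstar,t^\ast)\le-\rho\eps^2/2$, \Cref{lem:expectation_error}) while keeping every iterate's loss non-negative \emph{even when} $\|\ell^{(i)}\|_2$ is tiny or zero. If you move to a convex domain to restore the regret bound, your per-round gap collapses for small-norm iterates, and you are forced back to something like the paper's $+\rho$ augmentation; as written, your argument has no valid regret inequality to contradict.

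There is also a sign inconsistency in the contradiction step. Your update $\theta^{(s+1)}=\Pi(\theta^{(s)}-\xi\hat g^{(s)})$ with $g^{(s)}=\nabla L_s$ and $L_s(\theta)=\E[\ell_\theta(\x)\,y\,T^{(s)}(\x)]$ is gradient \emph{descent} on $L_s$, whose regret bound reads $\sum_s L_s(\theta^{(s)})\le\sum_s L_s(\theta^\ast)+O(GK\sqrt m)$; combined with $L_s(\theta^\ast)\ge 0$ this does \emph{not} yield your claimed bound $\sum_s L_s(\theta^{(s)})\le O(G\sqrt m)$, and hence no contradiction with $L_s(\theta^{(s)})\le-\rho$ (regret can be arbitrarily negative). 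The chain of inequalities you want is the one for gradient \emph{ascent} on $L_s$ (equivalently, descent on $-L_s$, which is the sign convention the paper uses in $r_i$): then $0\le\sum_s L_s(\theta^\ast)\le\sum_s L_s(\theta^{(s)})+O(GK\sqrt m)\le-\rho m+O(GK\sqrt m)$ gives the contradiction. This second issue is a fixable slip, but the non-convex projection is the substantive missing piece.
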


We will require the following standard regret bound
from online convex optimization.

\begin{lemma}[see, e.g., Theorem 3.1 of \cite{hazan2016introduction}]\label{lem:online_optimization}
    Let ${\cal V}\subseteq \R^n$ be a non-empty closed convex set with diameter $K$.
    Let $r_1,\ldots, r_T$ be a sequence of T convex functions $r_i: {\cal V}\mapsto \R$ 
    differentiable in open sets containing $\cal V$, and let $G=\max_{i\in[T]}\snorm{2}{\nabla_{\bw} r_i}$.
    Pick any $\vec w^{(1)}\in \cal V$ and set $\eta_i=\frac{K}{G\sqrt{t}}$ for $i\in[T]$. Then, for all 
    $\vec u\in \cal V$, we have that
    $\sum_{i=1}^{T}( r_i(\vec w^{(t)}) -r_i(\vec u))\leq \frac 32 GK\sqrt{T}$.
\end{lemma}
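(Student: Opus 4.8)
The statement is the classical regret bound for online projected gradient descent with a decreasing step-size schedule, and the plan is to run the standard potential-function argument. First I would fix notation: write $\nabla_i \eqdef \nabla_{\bw} r_i(\vec w^{(i)})$ for the (sub)gradient at the current iterate, and take the update to be the projected gradient step $\vec w^{(i+1)} = \proj_{\cal V}\big(\vec w^{(i)} - \eta_i \nabla_i\big)$, where $\proj_{\cal V}$ denotes Euclidean projection onto the closed convex set $\cal V$. The first ingredient is convexity of each $r_i$, which gives the linearization $r_i(\vec w^{(i)}) - r_i(\vec u) \leq \nabla_i \cdot (\vec w^{(i)} - \vec u)$ for every comparator $\vec u \in \cal V$; this reduces the problem to bounding $\sum_{i=1}^T \nabla_i \cdot (\vec w^{(i)} - \vec u)$.

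The second ingredient is the non-expansiveness of projection onto a convex set: since $\vec u \in \cal V$, $\snorm{2}{\vec w^{(i+1)} - \vec u}^2 \leq \snorm{2}{\vec w^{(i)} - \eta_i \nabla_i - \vec u}^2 = \snorm{2}{\vec w^{(i)} - \vec u}^2 - 2\eta_i\, \nabla_i \cdot (\vec w^{(i)} - \vec u) + \eta_i^2 \snorm{2}{\nabla_i}^2$. Rearranging and using $\snorm{2}{\nabla_i} \leq G$ yields the per-step bound $\nabla_i \cdot (\vec w^{(i)} - \vec u) \leq \frac{1}{2\eta_i}\big(D_i - D_{i+1}\big) + \frac{\eta_i G^2}{2}$, where $D_i \eqdef \snorm{2}{\vec w^{(i)} - \vec u}^2$.

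Then I would sum over $i = 1, \ldots, T$ and control the two resulting sums separately. For $\sum_{i=1}^T \frac{1}{2\eta_i}(D_i - D_{i+1})$, reindex it as $\frac{D_1}{2\eta_1} - \frac{D_{T+1}}{2\eta_T} + \sum_{i=2}^T D_i\big(\frac{1}{2\eta_i} - \frac{1}{2\eta_{i-1}}\big)$; since $\eta_i = K/(G\sqrt{i})$ is decreasing, the coefficients $\frac{1}{2\eta_i} - \frac{1}{2\eta_{i-1}}$ are nonnegative, so using $0 \leq D_i \leq K^2$ (the diameter bound) and dropping the negative term, the sum collapses to at most $\frac{K^2}{2\eta_T} = \frac{KG\sqrt{T}}{2}$. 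For the second sum, $\sum_{i=1}^T \frac{\eta_i G^2}{2} = \frac{GK}{2}\sum_{i=1}^T i^{-1/2} \leq \frac{GK}{2}\cdot 2\sqrt{T} = GK\sqrt{T}$, using the elementary estimate $\sum_{i=1}^T i^{-1/2} \leq 1 + \int_1^T x^{-1/2}\,\d x \leq 2\sqrt{T}$. Combining with the convexity step, $\sum_{i=1}^T \big(r_i(\vec w^{(i)}) - r_i(\vec u)\big) \leq \frac{KG\sqrt T}{2} + GK\sqrt T = \tfrac32 GK\sqrt{T}$, which is the claim.

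Since this is a textbook fact, no step is genuinely hard; the one place requiring care is handling the time-varying step sizes in the telescoping sum, which is exactly where both the monotonicity of $\eta_i$ and the finite diameter $K$ are needed (a naive telescoping would only work for a constant step size). The remaining work is the two routine numerical estimates, $\frac{K^2}{2\eta_T} = \frac{KG\sqrt T}{2}$ and $\sum_{i\le T} i^{-1/2} \leq 2\sqrt T$, whose sum produces the constant $3/2$.
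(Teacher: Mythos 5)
Your proof is correct and is exactly the standard argument behind the cited Theorem 3.1 of Hazan's monograph (the paper itself does not reprove this lemma, it only cites it): convexity linearization, projection non-expansiveness, Abel summation over the decreasing step sizes using the diameter bound, and $\sum_{i\leq T} i^{-1/2}\leq 2\sqrt{T}$, yielding the constant $3/2$. Note only that the paper's statement contains the typos $\eta_i = K/(G\sqrt{t})$ and $r_i(\vec w^{(t)})$, which you correctly read as $\eta_i = K/(G\sqrt{i})$ and $r_i(\vec w^{(i)})$.
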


\begin{algorithm}[h]
    \caption{ Learning Halfspaces with $\eta$-Massart noise. } 
    \label{alg:online-gradient}
    
    \centering
    \fbox{\parbox{6in}{
            {\bf Input:} 
            \begin{enumerate}
                \item $\eps, \delta > 0$.
                \item  A distribution $\D$ that satisfies the $\eta$-Massart Noise condition.
                \item Access to a $\rho$-certificate oracle $\cal O$
            \end{enumerate}
            
            {\bf Output:} A vector $\vec w$ and threshold $t\in \R$ such that $\pr_{(\x,y)\sim \D}[\sign(\vec w\cdot \x+t)\neq y]\leq \opt+\eps$ \\
            
            {\bf Define:}  $ T=C~d\log(1/\eps)/(\rho\eps)^2$, $N=C~d/\eps^2\log(1/\eps)\log(T/\delta)$, for some large enough constant $C>0$,  ${\cal V}=\{(\vec w,t) \in \R^{d+1} : \snorm{2}{\vec w}\leq 1\},|t|\leq 4\sqrt{\log(1/\eps)}\}$\\
            \begin{enumerate}
                \item ${\vec w}^{(0)} \gets \vec e_1$, $t^{(0)}\gets 0$
                \item For $i\in [T]$ do
                \begin{enumerate}
                    \item $\eta_i \gets 1/(\sqrt{i} +\rho)$
                    \item If $(\vec {\vec w}\ith,t\ith)=\vec 0$  then $T\ith(\x)=0$.
                    \item Else let $\mathrm{ANS}\gets {\cal O}((\vec {\vec w}\ith,t\ith))$.
                    \item If $\mathrm{ANS}=\mathrm{FAIL}$ then {\bf return} $(\vec w\ith,t\ith)$ else $T\ith\gets \mathrm{ANS}$
                    \item Let $\nabla \hat r_i(\vec w,t)$ be an estimator of $-\E_{(\vec x,y) \sim {\D}}\left[\left(T\ith(\vec x) +\rho \right) y (\x,1) \right]$ (\Cref{lem:algorithm_function_ell}) \item  $({\vec w}^{(i+1)},t^{(i+1)}) \gets \proj_{\cal V}\left(({\vec w}\ith,t\ith) - \eta_i \nabla \hat r_i( {\vec w}\ith,t\ith)\right)$ \label{alg:OPGDstep} 
                \end{enumerate}
                \item {\bf return} $(\vec w^{(T+1)},t^{(T+1)})$
            \end{enumerate}
    }}
\end{algorithm}
We show below that the optimal vector $\wstar$ and threshold $t^\ast$ and our current candidate vector $\vec w\ith$ and threshold $t\ith$  have a separation in the value of $r_i$.
\begin{lemma}[Error of $r_i$]\label{lem:expectation_error} 
    Let $\D$ be a distribution on $\R^d \times \{\pm 1\}$ with standard normal $\x$-marginal. Assume that $\D$ satisfies the $\eta$-Massart noise
    condition with respect to the optimal halfspace $\sign(\wstar \cdot \x +t^\ast)$.
    Let $\sample{\vec w}{i}$ with $\|\vec w\ith\|\leq 1$ and $t\ith\in \R$. 
    Fix $\rho\in(0,1)$ and let
    $r_i(\vec w,t)= -\E_{(\bx,y) \sim \D} [(T\ith(\vec x)+ \rho)y(\x,1)]\cdot(\vec w,t)$, 
    where $T\ith(\vec x)$ is a non-negative function returned from a $(2\rho)$-certificate oracle, we have that
$$ r_i(\vec w\ith,t\ith)-r_i\left( \wstar,t^\ast \right) \geq \rho\eps^2/2\;.$$
\end{lemma}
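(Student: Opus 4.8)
Write $\ell^{(i)}(\x) = \vec w^{(i)}\cdot\x + t^{(i)}$ and $\ell^\ast(\x) = \wstar\cdot\x + t^\ast$, so that $f(\x) = \sgn(\ell^\ast(\x))$, and recall that by the Massart condition $\E[y\mid\x] = (1-2\eta(\x))f(\x) =: \beta(\x)f(\x)$ with $\beta(\x)\in[\beta,1]$. Since $r_i(\vec w,t) = -\E_{(\x,y)\sim\D}[(T^{(i)}(\x)+\rho)\,y\,(\vec w\cdot\x + t)]$ is linear in $(\vec w,t)$, the plan is to start from the exact identity
\[
r_i(\vec w^{(i)},t^{(i)}) - r_i(\wstar,t^\ast) = \E_{\x\sim\D_\x}\Big[\big(T^{(i)}(\x)+\rho\big)\,\beta(\x)\,f(\x)\,\big(\ell^\ast(\x) - \ell^{(i)}(\x)\big)\Big],
\]
obtained by expanding the inner product and replacing $y$ by $\beta(\x)f(\x)$ inside the expectation, and then to bound separately the $\ell^\ast$-contribution and the $\ell^{(i)}$-contribution.

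For the $\ell^\ast$-contribution, note $f(\x)\ell^\ast(\x) = |\ell^\ast(\x)|\ge 0$, and since $T^{(i)}\ge 0$ and $\beta\ge 0$ the term $\E[(T^{(i)}(\x)+\rho)\beta(\x)|\ell^\ast(\x)|]$ is non-negative; keeping only the $\rho$-part it is at least $\rho\,\E[\beta(\x)|\ell^\ast(\x)|]$. For the $\ell^{(i)}$-contribution, split $(T^{(i)}+\rho)$ once more: the $T^{(i)}$-piece equals $-\E[T^{(i)}(\x)\,y\,\ell^{(i)}(\x)]$, which is $\ge 2\rho\,\|\ell^{(i)}(\x)\|_2$ by the $(2\rho)$-certificate guarantee of $\cal O$ (applied to the affine function $\ell^{(i)}$, which is $\eps$-suboptimal by hypothesis); the $\rho$-piece equals $-\rho\,\E[y\,\ell^{(i)}(\x)]\ge -\rho\,\E|\ell^{(i)}(\x)|\ge -\rho\,\|\ell^{(i)}(\x)\|_2$ using $|y|\le 1$ and Cauchy--Schwarz. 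Adding the two contributions gives
\[
r_i(\vec w^{(i)},t^{(i)}) - r_i(\wstar,t^\ast) \ \ge\ \rho\,\|\ell^{(i)}(\x)\|_2 + \rho\,\E_{\x}[\beta(\x)|\ell^\ast(\x)|]\ \ge\ \rho\,\E_{\x}[\beta(\x)|\ell^\ast(\x)|].
\]

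It then remains to show $\E_{\x}[\beta(\x)|\ell^\ast(\x)|]\ge \eps^2/2$. Here I would use that $\sgn(\ell^{(i)})$ being $\eps$-suboptimal forces, again via the Massart identity, $\pr_\x[\sgn(\ell^{(i)}(\x))\ne f(\x)]\ge\eps$; since $\ell^\ast(\x)\sim\normal(t^\ast,1)$ under the Gaussian marginal, the anti-concentration bound of \Cref{fct:gaussian-tails} implies the portion of this disagreement region on which $|\ell^\ast(\x)|\gtrsim\eps$ still has probability $\gtrsim\eps$, so $\E[\beta(\x)|\ell^\ast(\x)|]\ \gtrsim\ \beta\,\eps^2$; combined with $\beta = 1-2\eta = \Omega(1)$ in the constant-bounded regime and $\eps\le 1$, this yields the claimed bound. (Alternatively one can just use $\E[\beta(\x)|\ell^\ast(\x)|]\ge\beta\,\E_\x|\ell^\ast(\x)|\ge\beta\sqrt{2/\pi}$, which already suffices when $\beta$ is bounded below by a large enough constant; and when $\vec w^{(i)}=t^{(i)}=0$ the oracle step sets $T^{(i)}=0$ and the same $\ell^\ast$-term argument still applies.) The one delicate point is precisely this last step --- extracting a clean $\Omega(\eps^2)$ floor on the optimal-direction term $\E[\beta(\x)|\ell^\ast(\x)|]$ and tracking the absolute constants so the final bound is exactly $\rho\eps^2/2$; everything else is routine manipulation of the certificate inequality and Cauchy--Schwarz.
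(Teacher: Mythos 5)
Your decomposition is essentially the paper's: you split $r_i(\vec w\ith,t\ith)-r_i(\wstar,t^\ast)$ into an optimal-direction term, bounded below by $\rho\,\E_{\x\sim\D_\x}[\beta(\x)\,|\wstar\cdot\x+t^\ast|]$ since $T\ith\ge 0$, and a current-hypothesis term, bounded below by $\rho\,\|\ell\ith(\x)\|_2\ge 0$ via the $(2\rho)$-certificate guarantee plus Cauchy--Schwarz; those steps are correct and match the paper. The genuine gap is exactly the step you flag as "delicate": lower-bounding $\E_{\x}[\beta(\x)\,|\wstar\cdot\x+t^\ast|]$ by $\eps^2/2$. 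Your route restricts to the disagreement region of $\sgn(\ell\ith)$ and $f$ (for which you only use the unweighted bound $\pr_\x[\sgn(\ell\ith(\x))\neq f(\x)]\ge\eps$) and then pays the pointwise lower bound $\beta(\x)\ge\beta$, which yields only $\Omega(\beta\,\eps^2)$ (or, in your alternative, $\beta\sqrt{2/\pi}$). That is not the statement being proved: the lemma claims the $\beta$-free bound $\rho\eps^2/2$ and is stated for general $\eta$-Massart noise (it feeds the black-box reduction of \Cref{pro:certificate-reduction-formal}, also stated for general $\eta$); and even granting constant-bounded noise, your constants only close if $\beta$ is near $1$, which is precisely the issue you acknowledge but do not resolve.

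The missing idea is to avoid the pointwise $\beta$ altogether by controlling the \emph{aggregate} noise. Since the oracle actually returned a certificate, $\sgn(\ell\ith)$ is $\eps$-suboptimal, which forces $\opt\le 1/2-\eps$ and hence $\E_{\x}[\beta(\x)]=1-2\,\opt\ge 2\eps$. One then restricts to the complement of the thin band $I=\{\x:\ |\wstar\cdot\x+t^\ast|<\eps/2\}$ around the \emph{optimal} hyperplane (not the disagreement region): outside $I$ one has $|\wstar\cdot\x+t^\ast|\ge\eps/2$, and Gaussian anti-concentration (\Cref{fct:gaussian-tails}) gives $\pr_\x[\x\in I]\le\eps$, so $\E[\beta(\x)\,|\wstar\cdot\x+t^\ast|]\ge(\eps/2)\left(\E[\beta(\x)]-\pr[\x\in I]\right)\ge\eps^2/2$, with no $\beta$ appearing. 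This device — trading the pointwise bound $\beta(\x)\ge\beta$ for the identity $\E[\beta(\x)]=1-2\,\opt$ combined with a band around the target halfspace — is where the substance of the lemma lies, and it is the piece your proposal leaves open.
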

\begin{proof}
    Let $\ell\ith(\x)=\vec w\ith \cdot \x +t\ith$. Using the fact that $T\ith(\x)\geq 0$, we have 
    $$ \E_{(\vec x,y)\sim \D}[T\ith(\x)(\dotp{\wstar}{\vec x}+t^\ast)y]=\E_{\vec x\sim \D_{\bx}}[T\ith(\x)|\dotp{\wstar}{\vec x}+t^\ast|\beta(\x)]>0\;.$$
    Therefore, we have that for every $i\in[T]$, it holds 
    $r_i(\wstar,t^\star) \leq -\rho\E_{\vec x\sim \D_{\bx}}[|\dotp{\wstar}{\vec x}+t^\ast|\beta(\x)]$. 
    Let $I=\{\x: \wstar\cdot \x \in (-t^\ast-\eps/2,-t^\ast+\eps/2)\}$ and note that it should hold that $ \Pr_{(\x,y)\sim \D}[f(\x)\neq y]=\opt\leq 1/2-\eps$, otherwise the $(2\rho)$-certifying oracle would not be able to return a function, and therefore $\E_{\x\sim \D_\x}[\beta(\x)]\geq 2\eps$. We have that
    \begin{align}
        \E_{\vec x\sim \D_{\bx}}[|\dotp{\wstar}{\vec x}+t^\ast|\beta(\x)]& \geq  \E_{\vec x\sim \D_{\bx}}[|\dotp{\wstar}{\vec x}+t^\ast|\{\x \not\in I\}\beta(\x)]\geq \frac{\eps}{2}\E_{\vec x\sim \D_{\bx}}[\{\x \not\in I\}\beta(\x)]\nonumber
        \\&=\frac{\eps}{2}\E_{\vec x\sim \D_{\bx}}[\beta(\x)]-\frac{\eps}{2}\E_{\vec x\sim \D_{\bx}}[\{\x \in I\}\beta(\x)]
        \geq \eps^2/2\;. \nonumber
    \end{align}
    where in the last inequality we used that $\E_{\vec x\sim \D_{\bx}}[\beta(\x)]\geq 2\eps$ and $\E_{\vec x\sim \D_{\bx}}[\{\x \in I\}]\leq \eps$ from \Cref{fct:gaussian-tails}.

    It remains to bound from below $r_i(\vec w\ith,t\ith)$. Using the fact that $\E_{(\x,y)\sim \D}[T\ith(\x)(\vec w\ith \cdot\x +t\ith)y]\leq -2\rho \|\ell\ith(\x)\|_2$, we have
    \begin{align*}
        r_i(\vec w\ith,t\ith)
        &=- \E_{(\bx,y) \sim \D} \left[(T\ith\left( \bx \right)+\rho) (\vec w\ith\cdot \x + t\ith)y\right]\\  &\ge 2\rho\|\ell\ith(\x)\|_2
        -  \rho \E_{\bx \sim \D_{\x}} \left[(\vec w\ith\cdot \x +t\ith)y\right] 
        \geq \rho  \|\ell\ith(\x)\|_2 \geq 0 \;,
    \end{align*}
    where we used the Cauchy-Schwarz inequality and the fact that $\x$ is standard normal.
\end{proof}

Since we do not have access to $r_i$ precisely, 
we need a function $\hat{r}_i$, which is close to $r_i$ with high probability. 
The following simple lemma gives us an efficient way to compute an approximation $\hat{r}_i$ of $r_i$.
\begin{lemma}[Estimating the function $r_i$]\label{lem:algorithm_function_ell}
    Let $\D$ be a distribution on $\R^d\times\{\pm 1\}$ with standard normal $\x$-marginal and let $T\ith(\x)$ be a non-negative function returned by a $(2\rho)$-certificate oracle. Moreover, assume that $T\ith(\x)$ has bounded $\ell_4$ norm, i.e., $\|T\ith(\x)\|_4\leq 1$.
    Then after drawing $O(d\log(1/\eps)/\eps^2\log(d/\delta))$ samples from $\D$, 
    with probability at least $1-\delta$, we can compute an estimator $\hat{r}_i$ that
    satisfies the following conditions:
    \begin{itemize}
        \item $\snorm{2}{\nabla \hat r_i(\vec w,t) - \E_{(\bx,y) \sim \D} [(T\ith(\vec x) +\rho) y(\x,1)]} \leq \eps/\sqrt{\log(1/\eps)}$

        \item $\snorm{2}{\nabla \hat r_i(\vec w,t)} \leq 2\sqrt{d}$.\end{itemize}
\end{lemma}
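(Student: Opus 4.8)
The key observation is that $r_i(\vec w,t)= -\E_{(\bx,y) \sim \D} [(T^{(i)}(\vec x)+ \rho)\,y\,(\x,1)]\cdot(\vec w,t)$ is a \emph{linear} function of $(\vec w,t)$, so its gradient is the constant vector $-\vec g_i$, where $\vec g_i \eqdef \E_{(\bx,y)\sim\D}[(T^{(i)}(\x)+\rho)\,y\,(\x,1)] \in \R^{d+1}$ does not depend on the evaluation point. Hence the task reduces to estimating the mean of the random vector $\vec Z \eqdef (T^{(i)}(\x)+\rho)\,y\,(\x,1) \in \R^{d+1}$ from i.i.d.\ draws $(\x^{(j)},y^{(j)})\sim\D$, and outputting (with the sign fixed by \Cref{lem:expectation_error}) $\nabla\hat r_i \eqdef \widehat{\vec g}_i$, the estimate of $\vec g_i$.

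First I would bound the second moments of $\vec Z$. For $k\in[d]$, $\vec Z_k = (T^{(i)}(\x)+\rho)\,y\,\x_k$, and Cauchy--Schwarz together with $\|T^{(i)}(\x)\|_4\le1$, $\rho\le1$, and $\E_{\x\sim\normal}[\x_k^4]=3$ give $\E[\vec Z_k^2]\le \|(T^{(i)}(\x)+\rho)^2\|_2\,\|\x_k^2\|_2 = \|T^{(i)}(\x)+\rho\|_4^2\,\sqrt3 = O(1)$; similarly $\E[\vec Z_{d+1}^2]=\|T^{(i)}(\x)+\rho\|_2^2=O(1)$. The same inequality without the square gives $|\E[\vec Z_j]|=O(1)$ for every coordinate $j$, hence $\|\vec g_i\|_2 = O(\sqrt d)$, which yields the second bullet once the estimation error is known to be $O(1)$.

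Next I would convert the per-coordinate variance bounds into a high-probability $\ell_2$ guarantee. Since the hypothesis only controls $T^{(i)}$ in $L_4$, I would not try to make $\vec Z_k$ sub-exponential via hypercontractivity; instead a robust (variance-only) mean estimator — coordinatewise median-of-means with $\Theta(\log((d+1)/\delta))$ buckets — produces from $N$ samples an estimate $\widehat{\vec g}_i$ with $|(\widehat{\vec g}_i)_k - (\vec g_i)_k|\le\tau$ with probability $1-\delta/(d+1)$, provided $N = \Omega(\log((d+1)/\delta)/\tau^2)$ (each bucket mean has variance $O(1/\text{bucketsize})$, Chebyshev gives accuracy $\tau$ per bucket with constant probability, and the median boosts the confidence). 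Choosing $\tau = \Theta(\eps/\sqrt{d\log(1/\eps)})$ and union bounding over the $d+1$ coordinates gives $\|\nabla\hat r_i - \vec g_i\|_2 \le \sqrt{d+1}\,\tau \le \eps/\sqrt{\log(1/\eps)}$ with probability $1-\delta$, using $N = O\big(d\log(1/\eps)\log(d/\delta)/\eps^2\big)$ samples, as claimed. Combining with $\|\vec g_i\|_2=O(\sqrt d)$ and $\eps/\sqrt{\log(1/\eps)}\le1$ gives $\|\nabla\hat r_i\|_2\le 2\sqrt d$ after tracking the $O(1)$ constants (for $d$ above a fixed constant; the small-$d$ case is trivial).

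The computation is routine; the only point I would be careful about is that the $\log(1/\delta)$ dependence cannot come from the naive empirical average plus Chebyshev (which costs $\poly(1/\delta)$), so one must either use a robust mean estimator as above, or — if one additionally exploits that the certificate returned by \Cref{prop:sdp-oracle} has the form $\1_B(\x)q^2(\x)$ with $\deg q = \Theta(\log(1/(\beta\gamma)))$ — apply Gaussian hypercontractivity (\Cref{lem:hypercontractivity}) to $q^2(\x)\,\x_k$ to obtain sub-Weibull tails for $\vec Z_k$ and concentrate the empirical mean directly. Both routes give the stated sample complexity; the robust-estimator route is cleaner since it uses only the assumption $\|T^{(i)}(\x)\|_4\le1$.
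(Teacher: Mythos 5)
Your proposal is correct and follows essentially the same route as the paper: reduce to estimating the mean of $(T^{(i)}(\x)+\rho)\,y\,(\x,1)$, bound its second moments via Cauchy--Schwarz using only $\|T^{(i)}\|_4\le 1$, get a constant-confidence estimate from Chebyshev/Markov, and boost to confidence $1-\delta$ with a median trick, which is exactly how the stated $\log(d/\delta)$ factor arises in the paper (the paper applies Markov to $\|\widehat{\vec R}-\vec R\|_2^2$ globally, treats the threshold coordinate separately, and amplifies by repeating the estimate and taking coordinate-wise medians, whereas you run median-of-means per coordinate with a union bound — a cosmetic difference, and if anything your version handles the amplification of the vector-valued estimate more cleanly). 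The constant in the bound $\|\nabla\hat r_i\|_2\le 2\sqrt d$ is as loose in your argument as in the paper's own proof (which actually derives $8\sqrt d$), and only the $O(\sqrt d)$ bound is used downstream, so this is not a gap.
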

The proof of the lemma above can be found on \Cref{app:upper-bound-polynomial}. We now proceed with the proof of \Cref{pro:certificate-reduction-formal}.

\begin{proof}[Proof of \Cref{pro:certificate-reduction-formal}]
    Note for the proof for simplicity, we assume that we have access to $(2\rho)$-certifying oracle, but the same argument works for $\rho'=\rho/2$ and access to $\rho'$-certifying oracle.
    Let $\ell\ith(\x)=\vec w\ith\cdot \x +t\ith$. We define ${\cal V}=\{(\vec w,t)\in \R^{d+1}: \|\vec w\|_2\leq 1, |t|\leq 4\sqrt{\log(1/\eps)}\}$. Let $T$ be the number of optimization steps that our algorithm runs.
    Assume, in order to reach a contradiction, that for all steps $i\in[ T]$ it holds that 
    $\pr_{(\x,y)\sim \D}[\sign(\ell\ith(\x))\neq y]\geq \opt+ \eps$. Let $f(\x)=\sign(\wstar \cdot \x +t^\ast)$. For each step $i$, let define $T\ith(\bx)$ to
    be the non-negative function outputted by the $(2\rho)$-certifying oracle $\cal O$. Thus, we have 
    $$\E_{(\bx,y) \sim \D}[T\ith(\bx)y(\vec w\ith\cdot \x +t\ith)]\leq-\rho \|\ell\ith(\x)\|_2 \;.$$
    Let $ \nabla \hat r_i(\vec w,t)$ be an estimator of
    $\nabla r_i\left( \vec w,t \right) =-\E_{(\bx,y) \sim \D} [\left(T\ith(\bx) +\rho \right) y (\x,1)]$ such that with probability at least $1-\frac{\delta}{T}$ it holds $ \snorm{2}{\nabla \hat r_i(\vec w,t) - \nabla r_i(\vec w,t)}\leq \frac 12 \rho\eps^2$ for all $(\vec w,t)\in \cal V$. Moreover, from \Cref{lem:algorithm_function_ell},
    for $N=\tilde{O}(d/\log(T/\delta)/(\eps^4\rho^2))$ samples, we can achieve that.

    Using the separation between the optimal hypothesis and the current one, i.e., \Cref{lem:expectation_error}, for every  step $i\in[T]$, we have that $ r_i(\vec w\ith,t\ith)-r_i(\vec w^{\ast},t^\ast)\geq \frac 12 \rho\eps^2$.
    Therefore, it holds  
    \begin{equation}\label{eq:online-bound-1}
        \hat r_i(\vec w\ith,t\ith)-\hat r_i(\vec w^{\ast},t^\ast)\geq \frac {1}{4} \rho\eps^2\;,
    \end{equation}
    with probability at least $1-\frac{\delta}{T}$.
    In order to apply the online gradient descent algorithm, i.e., \Cref{lem:online_optimization}, we need to define the parameters of the algorithm. First, the convex set we optimize over is $\cal V$, hence, the diameter of $\cal V$ is $K= O(\sqrt{\log(1/\eps)})$.
    Furthermore, from \Cref{lem:algorithm_function_ell}, we get that $\|\nabla \hat r_i(\vec w,t)\|_2 =O(\sqrt{d})$, and therefore, from  \Cref{lem:online_optimization}, it holds
    \[ \frac 1T \sum_{i=1}^{T}\left(\hat{r}_i( {\vec w^{(i)}},t\ith )   - \hat{r}_i\left( {\vec w^\ast},t^\ast \right)\right) 
    \lesssim \frac{\sqrt d+\sqrt{\log(1/\eps)} }{\sqrt{T}}\;.\]
    By the union bound, it follows that with probability at least $1-\delta$, we have that
    \[ \frac {1}{4} \rho\eps^2 \leq 	\frac 1T \sum_{i=1}^{T}\left(\hat{r}_i( {\vec w^{(i)}},t\ith )   - \hat{r}_i( {\vec w^{\ast}},t^\ast )\right) 
    \lesssim \frac{\sqrt d+\sqrt{\log(1/\eps)}}{\sqrt{T}} \;,\]
    which leads to a contradiction for $T=\Theta(\frac{d\log(1/\eps)}{\rho^2\eps^4})$. 
    
    Thus, either there exists $i\in [T]$ such that
    $\pr_{(\x,y)\sim \D}[\sign(\ell\ith(\x))\neq y]\leq \opt+\eps$, which the algorithm returns, 
    or the $(2\rho)$-certifying oracle $\cal O$  did not provide a correct certificate, which happens with probability at most $\delta$. 
    Moreover, the algorithm calls the certificate oracle $T$ times and the number of samples needed are $N$.
    The runtime is the number of steps and multiplies by the number of samples $T~N$.
    This completes the proof.
\end{proof}

Given  \Cref{pro:certificate-reduction-formal}, the proof of \Cref{thm:massart-upper-bound} follows by showing that the function returned by \Cref{prop:sdp-oracle} is a $\rho$-certifying oracle, for an appropriate choice of $\rho$.
\begin{proof}[Proof of \Cref{thm:massart-upper-bound}]
    In order to prove  \Cref{thm:massart-upper-bound}, we need to construct a $\rho$-certifying oracle. Using $N=O(d^{\log(1/(\gamma\beta))})\log(1/\delta)/\eps^2$ samples, \Cref{prop:sdp-oracle} provides with probability $1-\delta$ a $\rho=\eps^2 \poly(\beta\gamma)$ certifying oracle in runtime $\poly(N,d)$. Therefore, from \Cref{pro:certificate-reduction-formal} the proof follows. The overall samples complexity is $O(N\log(T))$ and the runtime is $\poly(N,d)$.
\end{proof}
 \section{Learning Halfspaces with General Massart Noise}\label{sec:benign}

In this section, we present our algorithm for learning halfspaces with general Massart noise.
The main result of this section is the following theorem.

\begin{theorem}[Learning Halfspaces with General Massart Noise]
    \label{thm:benign}
    Let $\D$ be a distribution on $\R^{d} \times \{\pm 1\}$, with standard normal
    $\x$-marginal, that satisfies the Massart noise condition for $\eta=1/2$ with respect to
    a target (possibly biased) halfspace $f \in {\cal C}$.
    Let $\eps, \delta \in (0,1]$.
    There exists an algorithm that draws $N = d^{O(\log(1/\eps))} \log(1/\delta)$
    samples from $\D$, runs in time $\poly(N, d) 2^{\poly(1/\eps)}$,
    and computes a halfspace $h \in {\cal C}$ such that with probability at least $1-\delta$ it holds
    $
    \pr_{(\x, y) \sim \D}[h(\x) \neq y] \leq \opt  + \eps
    \,.
    $
\end{theorem}

We remark that the algorithm of \Cref{thm:benign} works for any halfspace, regardless of
whether it is biased or not. In the presentation that follows, we will focus on learning
homogeneous halfspaces for the sake of simplicity.  It is not hard 
to generalize the algorithm to work for arbitrary halfspaces. (We present the
general algorithm for arbitrary halfspaces in \Cref{app:general_benign}.)
The proof of \Cref{thm:benign} consists of three main parts;
see \Cref{ssec:high-noise} for a roadmap of the proof.
First, in \Cref{sub:projection} we show that by
restricting on a thin slice and then projecting the distribution
on the subspace $\vec w^\perp$, we obtain an instance where 
the optimal halfspace has again $1/2$-Massart noise everywhere 
apart from a small region close to the halfspace, see \Cref{fig:orth_proj}.
In \Cref{sub:sign-mathcing-polynomial}, we present our main technical
contribution, i.e., that there exists a low-degree, mean-zero, 
polynomial that achieves non-trivial correlation when $\eta(\x) \leq 1/2$
``almost everywhere''.  Finally, in \Cref{sub:chow-tensors} we show
that the existence of such polynomials implies that by finding
the left singular vectors of (a flattened version) of the Chow tensors
of the distribution, we can construct a 
$\poly(1/\eps)$-dimensional subspace
inside which the direction $\wperp$ has $\poly(\eps)$ projection.
Recall that $\wperp$ is the normalized projection of $\vec w^\ast$
onto the orthogonal complement of $\vec w^\perp$:
\[
    \wperp = \frac{\proj_{\vec w^\perp} (\vec w^\ast)}{\|\proj_{\vec w^\perp}(\vec w^\ast)\|_2}
    \,.
    \]
In \Cref{sub:chow-tensors}, we combine everything together to prove \Cref{thm:benign}.

We now state the main technical result of this section. (This is the formal
version of \Cref{prop:tech-overview-warm-start}.)

\begin{proposition} \label{prop:warm-start}
    Let $\D$ be a distribution on $\R^{d} \times \{\pm 1\}$, with standard
    normal $\x$-marginal, that satisfies the Massart noise condition for $\eta =1/2$,
     with respect to some target halfspace $f(\x) = \sgn(\vec w^\ast \cdot
     \x)$.  Let $\vec w \in \R^d$ be a unit vector such that $\pr_{(\x, y) \sim
     \D}[\sign(\vec w \cdot \x) \neq y] \geq \opt + \eps$ and 
     $\theta(\vec w,\wstar)\leq \pi-\eps$ for some $\eps \in (0,1/2]$.
     There exists an algorithm that draws $N=d^{O(\log(1/\eps))}\log(1/\delta)$ samples from $\D$, 
     runs in time $\poly(N,d)$, and with probability at least $1-\delta$ returns a basis of a subspace 
     $V\subseteq \vec w^\perp$ such that $\| \proj_{V}(\wperp)\|_2\geq \poly(\eps)$.
\end{proposition}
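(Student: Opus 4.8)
The plan is to implement the three-step strategy outlined in the technical overview: (i) condition on a thin band and project onto $\vec w^\perp$ to obtain a distribution $\D^\perp$ that is almost-Massart with respect to a biased halfspace $f^\perp$ of bounded threshold; (ii) invoke the sign-matching polynomial (\Cref{infpro:sign-matching-polynomial}) to build a mean-zero, unit-variance polynomial $p$ depending only on the direction $\wperp$ with $\E_{(\x^\perp,y)\sim\D^\perp}[p(\x^\perp)y]\geq \poly(\eps)$; (iii) empirically estimate the low-order Chow tensors $\vec T^m(\D^\perp)$ for $m = O(\log(1/\eps))$, flatten each $\vec T^m$ into a $d\times d^{m-1}$ matrix, run SVD, and let $V$ be the span of all left singular vectors with singular value above an appropriate $\poly(\eps)$ threshold. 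Since $p$ is a degree-$O(\log(1/\eps))$ polynomial in $\wperp\cdot\x^\perp$, expanding it in the Hermite basis writes $\E[p(\x^\perp)y]$ as a sum over $m\le k$ of inner products $\langle \vec T^m(\D^\perp), c_m\, (\wperp)^{\otimes m}\rangle$ (using \Cref{fct:hermite-tensor}); a non-trivial value of this sum forces some $\vec T^m$ to have a $\poly(\eps)$-sized component in the direction of the flattening of $(\wperp)^{\otimes m}$, which by a standard argument (cf.\ \Cref{lem:tensor-flattening}) means $\wperp$ has a $\poly(\eps)$-sized projection onto the top-singular-vector subspace of that flattened matrix. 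One must also argue the dimension of $V$ is $\poly(1/\eps)$, which follows because each $\vec T^m$ has Frobenius norm at most $1$ (it is an expectation of Hermite tensors against a $\pm1$ label) so the number of singular values exceeding a $\poly(\eps)$ threshold is $\poly(1/\eps)$, and we take only $O(\log(1/\eps))$ of these tensors.

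The first substantive step is \Cref{lem:band-projection}-style: given that $\sgn(\vec w\cdot\x)$ is $\eps$-suboptimal and $\theta(\vec w,\wstar)\le \pi-\eps$, I would show there exists a band $B = \{a \le \vec w\cdot\x \le a+\zeta\}$, with $|a| = O(\sqrt{\log(1/\eps)})$ and $\zeta$ small, such that conditioned on $B$ and projected onto $\vec w^\perp$: (a) the current hypothesis is still $\Omega(\eps/\sqrt{\log(1/\eps)})$-suboptimal relative to $f^\perp$, and (b) the projected noise function $\eta^\perp$ exceeds $1/2$ only on a region of $\D^\perp$-mass at most $\poly(\eps)$, which is small enough that the "bad" region contributes only a lower-order term to $\E_{\D^\perp}[p(\x^\perp)y]$. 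The threshold $b$ of $f^\perp$ is controlled by the offset $a$ of the band, hence $b = O(\sqrt{\log(1/\eps)})$, which is exactly what makes the sign-matching polynomial have degree $k = \Theta(b^2+1) = O(\log(1/\eps))$ and therefore keeps the sample/time cost at $d^{O(\log(1/\eps))}$.

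I anticipate the main obstacle to be step (i): ensuring simultaneously that a suitable band exists with bounded offset and that the post-projection instance retains enough structure — in particular, that the mass where $\eta^\perp > 1/2$ is genuinely negligible compared to the $\poly(\eps)$ correlation we are trying to extract. Since in the high-noise regime one cannot force the band near the origin (the $\eta = 1/2$ region may cover $\{\vec w\cdot\x = 0\}$), one needs an averaging/pigeonhole argument over a family of candidate bands to locate one where the residual suboptimality concentrates, and then a careful Gaussian tail estimate to bound both $|a|$ and the bad-region mass. The remaining ingredients — writing the correlation as a Chow-tensor inner product, the SVD/flattening extraction, the Frobenius-norm bound on $\dim V$, and the standard Hoeffding/matrix-concentration step to argue the empirical tensors are accurate to within $\poly(\eps)$ using $d^{O(\log(1/\eps))}$ samples — are comparatively routine and follow the pattern of \Cref{lem:tensor-flattening} and \Cref{sub:chow-tensors}.
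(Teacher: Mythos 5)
Your proposal is correct and follows essentially the same route as the paper: band conditioning plus orthogonal projection (the paper's \Cref{lem:band-projection}, with the pigeonhole argument of \Cref{clm:random_band}), the sign-matching polynomial of degree $O(\log(1/\eps))$ for the resulting biased threshold (\Cref{lem:correlation-polynomial}), and extraction of $V$ from flattened Chow tensors via SVD with the Frobenius-norm dimension bound (\Cref{lem:tensor-flattening}, \Cref{prop:nontriavial_angle}). The only detail you leave implicit—that the good band cannot be identified from samples, so the algorithm runs the tensor step on every band in the grid and outputs the union of the resulting subspaces (still of dimension $\poly(1/\eps)$)—is exactly how the paper completes the argument.
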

Observe that given a subspace $V$ with the above guarantees, we can sample a uniformly
random direction on the unit sphere of $V$ and obtain a unit update vector $v$ such that
$\vec v \in \vec w^\perp$ and $\vec v \cdot \wperp \geq \poly(\eps)$, which given 
\Cref{lem:corr-improv} improves the current guess $\vec w$; see \Cref{sub:chow-tensors}
for the details.

\subsection{The Sign-Matching Polynomial}
\label[sub]{sub:sign-mathcing-polynomial}
Here we give an explicit construction of a mean-zero low-degree
polynomial that achieves non-trivial correlation with the labels $y$.  The
assumptions on the distribution $\D$ over $(\x, y) \in \R^d \times \{\pm 1\}$ are
that it has Gaussian $\x$-marginal and ``almost-Massart'' noise, i.e., that is
$\eta(\x)$ is greater than $1/2$ only on a very small region close to the
optimal halfspace.  We prove the following:  
\begin{proposition}[Correlation via the Sign-Matching Polynomial]
    \label{lem:correlation-polynomial}
    Let $\D$ be a distribution on $\R^{d} \times \{\pm 1\}$
    whose $\x$-marginal is the standard normal distribution.
    Let $f(\bx) = \sgn(\dotp{\vec v^\ast}{\bx} - b)$ be such that
    $\E_{\x\sim \D_{\x}}[\beta(\x)\1\{\sign(b)f(\x)>0\}] \geq \zeta$ 
    for some $\zeta \in (0,1/2]$, and
    $\eta(\x) > 1/2$ only when $0\leq \sign(b)(\dotp{\vec v^\ast}{\bx} -b)\leq\xi$,
    where $\xi$ is a sufficiently small constant multiple of $\zeta^3$. 
    There exists a univariate polynomial $p(z)$ of degree $\Theta(b^2 + 1)$ 
    such that 
    $\E_{\vec x\sim \D_\x}[p(\vec v^\ast \cdot \x)  ]=0$,
    $\E_{\x \sim \D_\x}[ p^2(\vec v^\ast \cdot \x)] = 1$,
    and  $\E_{(\vec x,y)\sim \D}[yp(\vec v^\ast \cdot \vec x)  ]=\poly(\zeta)$.
\end{proposition}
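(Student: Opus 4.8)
The plan is to take $p$ to be the \emph{sign-matching polynomial} produced by the construction behind \Cref{infpro:sign-matching-polynomial} (formally, \Cref{clm:zero-mean-polynomial}): a univariate polynomial of degree $\Theta(b^2+1)$ with $\E_{z\sim\normal}[p(z)]=0$, $\E_{z\sim\normal}[p^2(z)]=1$, and $\sgn(p(z))=\sgn(z-b)$ for every $z\in\R$. For such a $p$ the first two conclusions of the proposition are immediate, so the entire content is the correlation estimate. Put $z=\vec v^\ast\cdot\x\sim\normal$ and $\beta(\x)=1-2\eta(\x)$, so that $\E[y\mid\x]=\beta(\x)f(\x)$; since $\zeta>0$ the hypothesis forces $b\neq0$, and we may assume $b>0$ (if $b<0$, replace $(\vec v^\ast,b,p(\cdot))$ by $(-\vec v^\ast,-b,-p(-\cdot))$, which preserves every hypothesis). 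Using $\sgn(p(z))=\sgn(z-b)=f(\x)$, so that $f(\x)\,p(z)=|p(z)|$, we get
\begin{equation*}
\E_{(\x,y)\sim\D}\big[y\,p(\vec v^\ast\cdot\x)\big]=\E_{\x\sim\D_\x}\big[\beta(\x)\,|p(z)|\big].
\end{equation*}
The structural facts about $p$ that I would use are two-sided pointwise bounds near and beyond $b$: writing the unnormalized construction as $(z-b)\,q(z)$ with $q$ a strictly positive polynomial that, up to lower-order corrections, is a Taylor truncation of $e^{bz}$ and hence is bounded below by a constant multiple of $e^{b^2}$ on $[b,\infty)$, while $\|(z-b)q(z)\|_2=\Theta(e^{b^2}\sqrt{1+b^2})$, one obtains $|p(z)|\gtrsim (z-b)/\sqrt{1+b^2}$ for every $z>b$, and $|p(z)|\le (z-b)\,\poly(1/\zeta)$ for $z$ within a unit interval of $b$.

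Next I would use the ``almost-Massart'' hypothesis: $\beta(\x)<0$ only on the thin strip $R=\{\,b\le z\le b+\xi\,\}$. Splitting the last expectation over $\{z<b\}$, $R$, and $\{z>b+\xi\}$: on $\{z<b\}$ both $\beta$ and $|p|$ are non-negative, so this part is $\ge0$ and may be dropped; on $R$ the integrand is $\ge-E_2$ where $E_2:=\E_{z\sim\normal}[|p(z)|\,\1\{z\in R\}]$; on $\{z>b+\xi\}$ both $\beta$ and $|p|$ are non-negative and $|p(z)|\gtrsim (z-b)/\sqrt{1+b^2}\ge \xi/\sqrt{1+b^2}$. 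The hypothesis $\E_\x[\beta(\x)\1\{\sgn(b)f(\x)>0\}]\ge\zeta$ reads $\E_\x[\beta(\x)\1\{z>b\}]\ge\zeta$, which (as $\beta\le1$) forces $\pr_z[z>b]\ge\zeta$, hence $b=O(\sqrt{\log(1/\zeta)})$ and $\sqrt{1+b^2}=O(\sqrt{\log(1/\zeta)})$; since moreover $\pr_z[b<z\le b+\xi]\le\xi\le\zeta/2$, it also gives $\E_\x[\beta(\x)\1\{z>b+\xi\}]\ge\zeta/2$. Combining,
\begin{equation*}
\E_{(\x,y)\sim\D}\big[y\,p(\vec v^\ast\cdot\x)\big]\;\gtrsim\;\frac{\xi}{\sqrt{1+b^2}}\,\E_\x\big[\beta(\x)\1\{z>b+\xi\}\big]-E_2\;\gtrsim\;\frac{\xi\,\zeta}{\sqrt{\log(1/\zeta)}}-E_2.
\end{equation*}
Finally, $|p(z)|\le(z-b)\poly(1/\zeta)$ on $R$ together with $\pr_z[z\in R]\le\xi$ gives $E_2\le\xi^2\,\poly(1/\zeta)$; by the hypothesis that $\xi$ is a sufficiently small constant multiple of $\zeta^3$ this error is negligible next to $\xi\zeta/\sqrt{\log(1/\zeta)}$, so the right-hand side is $\Omega(\zeta^4/\sqrt{\log(1/\zeta)})=\poly(\zeta)$, as required.

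The main obstacle is the quantitative sign-matching polynomial itself, specifically the pointwise lower bound $|p(z)|\gtrsim (z-b)/\sqrt{1+b^2}$ holding on the \emph{entire} ray $\{z>b\}$: this requires keeping the positive factor $q$ of the construction bounded away from $0$ uniformly on $[b,\infty)$ (not merely near $b$) and pinning down the $L^2$-normalization $\|(z-b)q(z)\|_2$, i.e., genuinely exploiting the explicit Taylor-of-exponential structure rather than treating $p$ as a black box. Generic polynomial anti-concentration is useless here, since a Carbery--Wright-type bound $\pr_z[|p(z)|<\rho]\lesssim\rho^{1/\deg p}$ is vacuous when $\deg p=\Theta(\log(1/\zeta))$. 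A secondary, routine technicality is propagating the bound $b=O(\sqrt{\log(1/\zeta)})$ (and the $\poly(1/\zeta)$ factors it induces in the estimate of $|p|$ near $b$) so that $E_2$ is indeed dominated once $\xi=\Theta(\zeta^3)$.
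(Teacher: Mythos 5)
Your overall skeleton is the same as the paper's: reduce to $\E_{\x}[\beta(\x)\,|p(\vec v^\ast\cdot\x)|]$ using sign matching, assume $b>0$, split the line at $b$ and just beyond it, bound the contribution of the bad strip by its probability (using $\xi=\Theta(\zeta^3)$), and use $|b|=O(\sqrt{\log(1/\zeta)})$ to control all polynomial-in-degree losses. The genuine gap is the structural input you feed into this split: the pointwise lower bound $|p(z)|\gtrsim (z-b)/\sqrt{1+b^2}$ on the \emph{entire} ray $\{z>b\}$. This is not among the guarantees of \Cref{clm:zero-mean-polynomial} (which gives only zero mean, unit variance, sign matching, monotonicity for $|z|\ge|b|$, and the value bound $|p(b+\rho)|\ge c^{k}\poly(\rho)$), and it actually fails for the paper's explicit polynomial $p(z)\propto z^{3k}-\tfrac{q(b)}{r(b)}\bigl(z^{2k}-(2k-1)!!\bigr)$: after $L^2$-normalization its slope at $b$ is $2^{-\Theta(k)}=\poly(\zeta)$, far below $\Omega(1/\sqrt{1+b^2})$. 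So your argument implicitly requires a \emph{different} construction --- the $(z-b)\times(\text{Taylor truncation of }e^{bz})$ polynomial you sketch --- and there the hard work is still owed: $(z-b)S_k(bz)$ is not exactly mean-zero (only $(z-b)e^{bz}$ is), and the ``lower-order corrections'' must be shown to preserve strict positivity of the second factor on all of $\R$ (hence sign matching), the lower bound $\gtrsim e^{b^2}$ on $[b,\infty)$, and the normalization $\Theta(e^{b^2}\sqrt{1+b^2})$. You flag this yourself as the main obstacle, so as written the proof is incomplete precisely at its key quantitative step, and citing \Cref{clm:zero-mean-polynomial} does not supply it.

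The good news is that the gap is avoidable without any new construction: the properties that \Cref{clm:zero-mean-polynomial} \emph{does} provide already make your three-region decomposition work. On $\{z\ge b+\xi\}$, monotonicity gives $|p(z)|\ge |p(b+\xi)|\ge c^{k}\poly(\xi)=\poly(\zeta)$ (since $k=\Theta(b^2+1)=O(\log(1/\zeta))$ and $\xi=\Theta(\zeta^3)$), so the main term is at least $|p(b+\xi)|\cdot\bigl(\zeta-O(\xi)\bigr)$; on the strip $[b,b+\xi]$, the \emph{same} monotonicity gives the matching upper bound $|p|\le|p(b+\xi)|$, so the error term is at most $|p(b+\xi)|\,\xi\ll|p(b+\xi)|\,\zeta$. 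This is essentially the paper's proof (it splits at $b+\zeta^2$ instead of $b+\xi$, which covers the strip since $\xi\le\zeta^2$), and it yields the claimed $\poly(\zeta)$ correlation; your stronger ray bound would only sharpen the exponent, which \Cref{lem:correlation-polynomial} does not require.
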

\begin{proof}
    The main ingredient of the proof is the following lemma
    that shows the existence of a mean zero polynomial
    that matches the sign of the linear function $z-b$; see \Cref{fig:polynomial}.

    \begin{lemma}[Sign-Matching Polynomial] \label{clm:zero-mean-polynomial}
        Let $b \in \R$.  There exists a zero mean and unit variance polynomial $p:\R \mapsto \R$
        of degree $ k = \Theta(b^2 + 1)$ such that
        \begin{itemize}
            \item The sign of $p$ matches the sign of the threshold function $\sign(z-b)$, i.e., $\sign(p(z)) = \sign(z - b)$ all $z \in \R$,
            \item for any $\rho\in(0,1)$, it holds $|p(b+\sgn(b)\rho)| \geq 
            c^k\poly(\rho)$, where $c>0$ is a universal constant,
            \item  $p(z)$ is increasing for all $|z| \geq |b|$.
        \end{itemize}  
    \end{lemma}
    \begin{proof}
We first pick some odd integer $k$ large enough such that $2|b|\leq \sqrt{k} \leq 4\max(|b|,1)$. 
        Let $q(z) = z^{3k}, r(z) = z^{2k} - (2k - 1)!!$. We consider the polynomial 
        \[
        p(z) = q(z) - \frac{q(b)}{r(b)} r(z) \,. \] It holds $p(b) = 0$ and $\E_{z \sim
            \normal}[p(z)] = 0$, since $\E_{z \sim \normal}[z^{3k}]=0$, for odd $k$ and $\E_{z \sim
            \normal}[z^{2k}]=(2k-1)!!$.
        
        For the rest of the proof, we need the following simple estimate for double factorials.
        Sharper bounds can be obtained via Stirling's approximation; 
        for our purposes the following rough bounds suffice.
        \begin{fact}\label{fct:double-fact} Let $m$ be a positive integer. Then
            $(m/2)^m\leq (2m-1)!!\leq (2m)^m$.
        \end{fact}

        We next show that $p(z)$ is increasing for all $|z| \geq |b|$.  
        We compute the derivative of $p(z)$, i.e.,  
        \begin{equation}\label{eq:deriv}
            p'(z)=3k z^{3k-1} -2k\frac{q(b)}{r(b)}z^{2k-1}=kz^{2k-1} \left(3z^k - 2\frac{q(b)}{r(b)}\right)\;,  \end{equation} 
        we observe that if $|z^k|\geq \frac{2}{3} |q(b)/r(b)|$, then $p(z)$ is increasing. 
        We show that this is the case for all $|z| \geq |b|$. Since $|z|^k$ is increasing in $|z|$ it suffices to 
        verify the inequality for $|z| = |b|$.  In that case we obtain the inequality $(2k-1)!!/b^{2k}\geq 5/3$, which 
        can be verified by using \Cref{fct:double-fact} and our assumption that $|b|\leq \sqrt{k}/2$.  
        
        We now prove that $\sign(p(z)) = \sgn(z - b)$. Note that $\sign(r(z))<0$ for any $z\in[-|b|,|b|]$ because $|b|\leq \sqrt{k}/2$ and thus, from \Cref{fct:double-fact}, it holds $b^{2k}-(2k-1)!!\leq (k/4)^k -(k/2)^k\leq 0$.
        Without loss of generality, assume that $b>0$.   Assume that there exists  $s \in \R$ with $s\neq b$ such that $p(s)=0$. First, assume that $s\geq 0$,  by \cref{eq:deriv}, since $r(b) < 0$ we get that $p(z)$ is increasing for $z \geq 0$, therefore the polynomial $p(z)$ has only one root at $s=b$, therefore we have a contradiction.
        For the case that $s<0$, first observe that for any $s=0$, it holds that  $q(s)r(b)= q(b)r(s)$ and in particular $\sign(q(s)r(b))=\sign(q(b)r(s))$. Thus, if $-b>s$, then it holds $\sgn(q(b)r(s))=-\sgn(b)$ and  $\sgn(q(s)r(b))=-\sgn(s)$, therefore for any such solution it holds $\sgn(s)=\sgn(b)$ but $s<0$ and $b>0$, so we have a contradiction.
        Finally, for the case that $s\leq -b$, recall that for $|z|\geq |b|$, $p(z)$ is increasing, so for $z<-b$, we have that $p(z)$ is increasing. But $p(-b)<0$ because there is no other root in the interval $[-b,b)$, thus $p(s)<0$ which leads to a contradiction. Therefore, the only root of the polynomial is at $s=b$.
        
        We next prove that $\E_{z \sim \normal}[ p^2(z)] = (O(k))^{3k}$. By applying twice the inequality
        $(a+b)^2\leq 2a^2+2b^2$, we get 
        $$
        \E_{z \sim \normal}[ p^2(z)] 
        \lesssim \E_{z \sim \normal}[ z^{6k}] + \frac{q^2(b)}{r^2(b)} \left(\E_{z \sim \normal}[ z^{4k}] +((2k-1)!!)^2\right)
        \lesssim (6k-1)!!+ \frac{q^2(b)}{r^2(b)}(4k-1)!!\;.
        $$
        Using that $|b|\leq \sqrt{k}/2$ and \Cref{fct:double-fact}, we have that $|b^{2k}-(2k-1)!!|\geq (k/2)^k/2$. Hence, we get that $ (q(b)/r(b))^2\lesssim (2|b|^3/k)^{2k} \lesssim k^k$. Therefore, using \Cref{fct:double-fact}, we get that $\E_{z \sim \normal}[ p^2(z)] =O((6k)^{3k})$.
        
        Without loss of generality, assume that $b>0$. Observe that, for $\rho\in(0,1)$, we have
        \begin{align*}
            p(b+\rho)=(b+\rho)^{3k} -\frac{b^{3k}((b+\rho)^{2k} -(2k-1)!!)}{b^{2k} -(2k-1)!!}=(b+\rho)^{3k}-b^{3k} - \frac{b^{3k}\sum_{m=0}^{2k-1} \binom{2k}{m}b^m\rho^{2k-m}}{b^{2k} -(2k-1)!!}\;.
        \end{align*}
        Recall that using the fact that $|b|\leq \sqrt{k}/2$, it holds that $b^{2k} -(2k-1)!!<0$, thus
        \[
        p(b+\rho)\geq (b+\rho)^{3k}-b^{3k}=\sum_{m=0}^{3k-1} \binom{3k}{m}b^m\rho^{3k-m}\;.
        \]
        From the equation above, we get that $p(b+\rho)\gtrsim b^{3k-1}\rho+\rho^{3k}$.
        Let $\tilde{p}(z)$ be the polynomial $p(z)$ normalized, i.e., $\tilde{p}(z)=p(z)/(\E_{z\sim \normal}[p^2(z)])^{1/2}$. 
        Notice that normalizing $p$ does not affect its sign or monotonicity.
        It remains to prove that $\tilde{p}(b+\rho)= c^k\poly(\rho)$, where $c>0$ is a small enough constant.
        If $b\geq 1$, recall that in that case it holds that $\sqrt{k}\leq 4b$, therefore it holds $\tilde{p}(b+\rho)\gtrsim b^{3k-1}\rho/(6k)^{3k/2}\gtrsim \rho/(24)^{2k}$.
        For the case where $0<b<1$, we have that $k=O(1)$ and similarly, we have $\tilde{p}(b+\rho)\gtrsim \rho^{2k}/(6k)^{3k/2}=\poly(\rho)$, hence $\tilde{p}(b+\rho)= c^k\poly(\rho)$, for some universal constant $c>0$.
    \end{proof}
    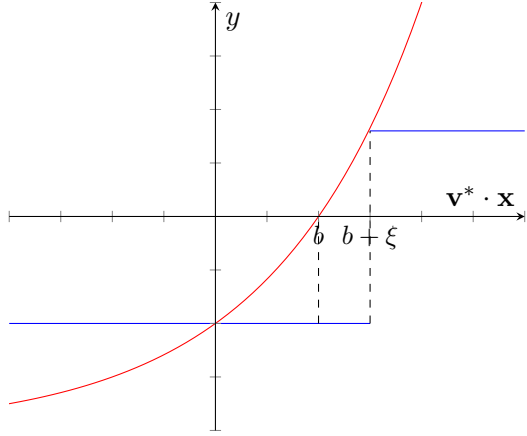
\begin{figure}
    \centering
    	\begin{tikzpicture}[scale=1]
			\begin{axis}[ restrict y to domain=-4:4,xlabel=$\vec v^\ast \cdot \x$,ylabel=$y$, legend pos=north west,axis y line =middle,
				axis x line =middle,
				axis on top=true,
				xmin=-2,
				xmax=3,
				ymin=-2,
				ymax=2,
                xtick distance=0.5,
                ytick distance=0.5,
				yticklabel style = {font=\small, xshift=0.5ex},
				xticklabel style = {font=\small, yshift=0.5ex},
                xticklabels={,,,,,,,$b$,$b+\xi$},
				yticklabels={},
				]
				\addplot[color=red, samples=100, domain=-4:4]
				{2^x-2};
				\addplot[color=blue, samples=10, domain=1.5:3]{0.8};
				\addplot[color=blue, samples=10, domain=-2:1.5]
				{-1};
                  \draw [dashed] (axis cs: 1.5,-1) -- (axis cs: 1.5,0.8);
                  \draw [dashed] (axis cs: 1,-1) -- (axis cs: 1,0);
\end{axis}
	\end{tikzpicture}
    \caption{The sign-matching polynomial $p$ corresponds to the red curve and the threshold function 
    $f(\x)$ to the blue.  Observe that, in the region $I^\xi = \{ b \leq \vec v^\ast \cdot \x \leq b+ \xi \}$,
    the sign of the polynomial $p$ does not agree with $f(\x)$.  This is the region
    where the Massart noise condition is (potentially) violated, as a result of the orthogonal projection
    step, \Cref{fig:orth_proj}.
    Even though the sign of $p$ does not agree with $f(\x)$ everywhere,
    we can take $\xi$ to be small, i.e., $\xi$ is a sufficiently small constant multiple of $\zeta^3$,
    making the negative contribution of the region $I_\xi$ small.  We also use crucially the 
    fact that $p$ is monotone for $\vec v^\ast \cdot \x \geq b$ and not very flat in that region, 
    see Item 2 of \Cref{clm:zero-mean-polynomial}.  
    }
    \label{fig:asign-matching-pol-5}
        \end{figure}
    It remains to show that the polynomial $p$ of the
    \Cref{clm:zero-mean-polynomial}, achieves non-trivial correlation with $y$,
    i.e., $\E_{(\x,y)\sim \D}[yp(\vec v^\ast \cdot \x)]=\poly(\zeta)$.  
    To do this we are going to use the monotonicity properties of $p$,
    the fact that it matches the sign of $f(\x)$, and the fact
    that it is not very flat close to its unique root at $b$, see Item 2
    of \Cref{clm:zero-mean-polynomial}.
    We assume that $b\geq 0$ as the other case follows similarly. Recall that,
    from the assumptions of \Cref{lem:correlation-polynomial}, there is an
    explicitly defined region where the noise is above $1/2$, i.e.,
    $\eta(\x)>1/2$.
    We denote this region $I^\xi$, see also \Cref{fig:asign-matching-pol-5}, that
    is $$I^\xi=\{\x\in \R^d: \eta(\x)>1/2\} \subseteq\{\x\in \R^d: b\leq \vec
    v^\ast \cdot \x\leq b+\xi\}\;.$$ Since $p(\vec v^\ast \cdot \x)$ ($p$ from
    \Cref{clm:zero-mean-polynomial}) matches the sign of $\vec v^\ast \cdot \x -
    b$ we have
    \begin{align}
        \E_{(\x,y) \sim \D}[p(\vec v^\ast \cdot \x) y]
        & = \E_{\x \sim \D_\x}[p(\vec v^\ast \cdot \x) \sgn(\vec v^\ast \cdot \x - b) \beta(\x) (\1\{\x \not \in I^\xi\}-\1\{\x \in I^\xi\})]\nonumber
        \\&\geq \E_{\x \sim \D_\x}[|p(\vec v^\ast \cdot \x)| \beta(\x)\1\{\x \not \in I^\xi\}]- \E_{\x \sim \D_\x}[|p(\vec v^\ast \cdot \x)|\1\{\x \in I^\xi\}] \label{ineq:proof-lem25-1}\,.
    \end{align}
    We first bound from below the correlation of the sign-matching polynomial outside the
    region $I^\xi$, i.e., the region where the Massart noise condition is true.
    From \Cref{clm:zero-mean-polynomial}, we have that $p(z)$ is increasing for
    $z>b$, and therefore it holds 
    \begin{align}
        \E_{\x \sim \D_\x}[&|p(\vec v^\ast \cdot \x)| \beta(\x)]\1\{\x \not \in I^\xi\}] \nonumber
        \\
        & \geq  \E_{\x \sim \D_\x}[|p(\vec v^\ast \cdot \x)| \beta(\x)\1\{\vec v^\ast \cdot \x>b+\zeta^2\}]\nonumber
        \\ &\geq |p(b+\zeta^2)|\E_{\x \sim \D_\x}[ \beta(\x)\1\{\vec v^\ast \cdot \x>b+\zeta^2\}]\nonumber\\
        &= |p(b+\zeta^2)|\left(\E_{\x \sim \D_\x}[ \beta(\x)\1\{\vec v^\ast \cdot \x>b\}]-\E_{\x \sim \D_\x}[\1\{b\leq \vec v^\ast \cdot \x\leq b+\zeta^2\}]\right)\;, \label{ineq:proof-lem25-2}
\end{align}
    where for the first inequality we used the fact that $\1\{\vec v^\ast \cdot
    \x>b+\xi\}\geq \1\{\vec v^\ast \cdot \x>b+\zeta^2\}$, because $\xi=
    \Theta(\zeta^3)\leq \zeta^2$ from the assumptions of
    \Cref{lem:correlation-polynomial}, and for the second, we used the
    monotonicity of $p$.
    We next bound from above the contribution of the sign-matching polynomial
    on the region where the Massart noise condition is violated, i.e., $\x \in I^\xi$.
    Moreover, using again the fact that $p(z)$ is increasing for $z>b$, we get
    \begin{align}
        \E_{\x \sim \D_\x}[|p(\vec v^\ast \cdot \x)|]\1\{\x  \in I^\xi\}] 
        & \leq  \E_{\x \sim \D_\x}[|p(\vec v^\ast \cdot \x)|\1\{b\leq \vec v^\ast \cdot \x\leq b+\zeta^2\}]\nonumber
        \\ &\leq |p(b+\zeta^2)|\E_{\x \sim \D_\x}[\1\{b\leq \vec v^\ast \cdot \x \leq b+\zeta^2\}] \;.\label{ineq:proof-lem25-3}
    \end{align}
    Substituting the bounds of \cref{ineq:proof-lem25-2} and \cref{ineq:proof-lem25-3} to \cref{ineq:proof-lem25-1}, we get that
    \begin{align*}
        \E_{(\x,y) \sim \D}[p(\vec v^\ast \cdot \x) y] &\geq  |p(b+\zeta^2)|\left(\E_{\x \sim \D_\x}[\beta(\x)\1\{\vec v^\ast \cdot \x\geq b\}] -2\E_{\x \sim \D_\x}[\1\{b\leq \vec v^\ast \cdot \x \leq b+\zeta^2\}]  \right)\\
        &\gtrsim |p(b+\zeta^2)|(\zeta - \zeta^2) \gtrsim |p(b+\zeta^2)|\zeta\;,
    \end{align*}
    where we used that from the assumptions of \Cref{lem:correlation-polynomial} we have $\E_{\x\sim \D_{\x}}[\beta(\x)\1\{\sign(b)f(\x)>0\}] \gtrsim \zeta$, and the 
    anti-concentration property of the Gaussian distribution from \Cref{fct:gaussian-tails}, i.e., 
    $\E_{\x \sim \D_\x}[\1\{b<\vec v^\ast \cdot \x<b+\zeta^2\}] \leq \zeta^2/(\sqrt{2\pi})$.
    
    It remains to prove that $|p(b+\zeta^2)|=\poly(\zeta)$. From \Cref{clm:zero-mean-polynomial}, we have that $|p(b+\zeta^2)=c^k\poly(\zeta)$, for some universal constant $c>0$. Note that from the assumption of \Cref{lem:correlation-polynomial} we have that $\E_{\x\sim \D_{\x}}[\beta(\x)\1\{\sign(b)\sgn(\vec v^\ast\cdot \x-b)>0\}] \gtrsim \zeta$, hence, from \Cref{fct:gaussian-tails} we get $|b|\lesssim \sqrt{\log(1/\zeta)}$, therefore $c^k=\poly(\zeta)$, since $k=\Theta(b^2+1)$. Hence, we have $|p(b+\zeta^2)|= \poly(\zeta)$. This completes the proof of \Cref{lem:correlation-polynomial}.
    
\end{proof}

\subsection{Projecting onto $\vec w^\perp$}
\label[sub]{sub:projection}

In this subsection, we show that we can project $\D$ onto the subspace $\wperp$ without violating
the $1/2$-Massart noise condition by a lot.  At the same time we make sure that the ``optimal halfspace'' 
of the projected instance is not very biased.  We show the following lemma.
\begin{lemma}[Orthogonal Projection Inside a Band]
    \label{lem:band-projection}
    Let $\D$ be a distribution on $\R^{d} \times \{\pm 1\}$, with standard
    normal $\x$-marginal, that satisfies the Massart noise condition for
    $\eta=1/2$ with respect to $f(\x)=\sign(\vec w^\ast \cdot \x)$. Let $\vec w$
    be a unit vector such that $\pr_{(\x, y) \sim \D}[\sign(\vec w \cdot \x)
    \neq y] \geq \opt + \eps$, and $\theta(\vec w, \vec w^\ast) \leq \pi - \eps$, 
    for some $\eps\in(0,1]$. Fix some $\rho>0$.
    For $t_1, t_2 \in \R$, consider the band $B=\{t_1\leq\vec w \cdot \x \leq
    t_2\}$.  Denote $\D^\perp=\D_B^{\perp_{\bw}}$, i.e., $\D$ is the orthogonal
    projection onto $\vec w^\perp$ of the conditional distribution on $B$, and
    consider the halfspace $f^\perp: \vec w^\perp \mapsto \{\pm 1\}$, with
    $f^\perp(\x) = \sgn(\dotp{\x}{(\vec w^\ast)^{\perp_{\vec w} }} - b)$, for
    some threshold $b \in \R$.  Moreover, define the noise function
    $ \eta^\perp(\x) = \pr_{(\vec z, y) \sim \D^\perp}[ y \neq f^\perp(\vec z) | \vec z = \x] $.
    There exist $t_1, t_2 \in \R$ multiples of $\rho$, with $|t_1-t_2|=\rho$, such that:
    \begin{enumerate}
\item  $\E_{\x \sim \D_\x^\perp}[(1-2\eta^\perp(\x))\1\{f^{\perp}(\x)\sgn(b) >0\}]\gtrsim \eps/\sqrt{\log(1/\eps)}-\rho/\eps$,
\item if $\eta^\perp(\x)>1/2$ then $0<\sgn(b)(\dotp{\x}{(\vec w^\ast)^{\perp_{\vec w} }}-b)\leq \rho/\eps$.
    \end{enumerate}
\end{lemma}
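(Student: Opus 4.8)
The plan is to work in coordinates adapted to $\vec w$. Write $u := \vec w\cdot\x$, so $u\sim\normal$ and $\x^{\perp_{\vec w}}$ is an independent standard Gaussian on $\vec w^\perp$; in particular, conditioning on a band $B=\{t_1\le u\le t_2\}$ does not change the $\vec w^\perp$-marginal, so the $\x$-marginal of $\D^\perp$ is standard normal. Decompose $\vec w^\ast = \alpha\,\vec w + \nu\,\vec v$ with $\vec v := (\vec w^\ast)^{\perp_{\vec w}}$, $\alpha = \vec w\cdot\vec w^\ast$ and $\nu = \sin\theta(\vec w,\vec w^\ast) = \ltwo{\proj_{\vec w^\perp}(\vec w^\ast)}$, and set $s := \vec v\cdot\x$, so that $f(\x)=\sgn(\alpha u + \nu s)$. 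A preliminary observation that is used throughout: from $\opt+\eps\le\pr_{(\x,y)\sim\D}[\sgn(\vec w\cdot\x)\ne y]\le\opt+\theta(\vec w,\vec w^\ast)/\pi$ we get $\theta(\vec w,\vec w^\ast)\ge\pi\eps$, which combined with the hypothesis $\theta(\vec w,\vec w^\ast)\le\pi-\eps$ gives $\nu=\sin\theta(\vec w,\vec w^\ast)\gtrsim\eps$. This is the only place the hypothesis $\theta(\vec w,\vec w^\ast)\le\pi-\eps$ enters, and it is what will convert a band of width $\rho$ in $u$ into an error strip of width $O(\rho/\eps)$ in the $s$-coordinate.

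Next I would pin down the geometry of the projected label. For a band $B=[t_1,t_2]$ with $t_2=t_1+\rho$, write $\beta(\cdot):=1-2\eta(\cdot)\ge 0$ for the signal function of $\D$ and $g(\x):=\E_{(\vec z,y)\sim\D^\perp}[y\mid\vec z=\x]=\E_u[\beta(u\vec w+\x)\,\sgn(\alpha u+\nu s)\mid u\in B]$, so that $1-2\eta^\perp(\x)=f^\perp(\x)\,g(\x)$ once $f^\perp$ is fixed. Assume for concreteness $\alpha>0$ and $t_1\ge 0$ (the case $t_2\le 0$ and the sign $\alpha<0$ are mirror images; the two bands adjacent to the origin produce $b=0$, i.e.\ a \emph{homogeneous} $f^\perp$, which is the easier sub-case and I would set it aside). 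For fixed $s$ the sign $\sgn(\alpha u+\nu s)$ is constant in $u$ over $B$ unless $s$ lies in the transition interval $[L,R]:=[-\alpha t_2/\nu,-\alpha t_1/\nu]$, which has width $\alpha\rho/\nu\le\rho/\nu$: for $s>R$ we have $f\equiv +1$ on $B$, hence $g(\x)=\E_u[\beta(\cdot)\mid B]\ge 0$, and for $s<L$ we have $f\equiv -1$ on $B$, hence $g(\x)=-\E_u[\beta(\cdot)\mid B]\le 0$, regardless of the unknown noise function. I would then take $b:=R$, the endpoint of $[L,R]$ closer to the origin (here $L<R\le 0$). With this choice $f^\perp(\x)=\sgn(s-b)$ agrees with the Bayes sign $\sgn(g(\x))$ for every $\x$ with $s\notin[L,R]$; therefore $\eta^\perp(\x)>1/2$ (equivalently $f^\perp(\x)g(\x)<0$) forces $s\in[L,R]$, and since $s<b=R$ on that set and $R-L\le\rho/\nu\lesssim\rho/\eps$, we get $0<\sgn(b)(s-b)=R-s\le R-L\lesssim\rho/\eps$. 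This is Item~2.

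For Item~1 I would use a pigeonhole over $\rho$-grid bands. By the Massart identity, $\E_{\x\sim\D_\x}[\beta(\x)\1\{h(\x)\ne f(\x)\}]=\pr_{(\x,y)\sim\D}[h(\x)\ne y]-\opt\ge\eps$, where $h:=\sgn(\vec w\cdot\x)$. Pick $R_0=\Theta(\sqrt{\log(1/\eps)})$ so that $\pr_{u\sim\normal}[|u|>R_0]\le\eps/2$; then the contribution of $\{|u|\le R_0\}$ is still $\ge\eps/2$. Covering $[-R_0,R_0]$ by $O(R_0/\rho)$ grid bands of width $\rho$ and averaging, there is a band $B_{j^\ast}=[j^\ast\rho,(j^\ast+1)\rho]$ with $\E_{\x\sim\D_\x}[\beta(\x)\1\{h\ne f\}\1\{u\in B_{j^\ast}\}]\gtrsim\eps\rho/R_0$; dividing by $\pr_{u\sim\normal}[u\in B_{j^\ast}]\le\rho/\sqrt{2\pi}<\rho$ gives $\zeta:=\E_{\x\sim\D_\x}[\beta(\x)\1\{h\ne f\}\mid u\in B_{j^\ast}]\gtrsim\eps/\sqrt{\log(1/\eps)}$. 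Now I transfer $\zeta$ to the left-hand side of Item~1. On $B_{j^\ast}$ we have $h\equiv+1$; for $s<L$ we showed $f\equiv-1$ on $B_{j^\ast}$, so $\1\{h\ne f\}\equiv 1$ and $(1-2\eta^\perp)(\x)=f^\perp(\x)g(\x)=(-1)(-\E_u[\beta(\cdot)\mid B_{j^\ast}])=\E_u[\beta(\cdot)\1\{h\ne f\}\mid B_{j^\ast}]$, and moreover $\{s<L\}\subseteq\{s<R\}=\{f^\perp\sgn(b)>0\}$. Splitting over $\{s<L\}$, $\{L\le s\le R\}$, $\{s>R\}$ (the last contributes $0$ since $\1\{h\ne f\}=0$ there) and bounding the transition-strip contributions by $\pr_{s\sim\normal}[L\le s\le R]\le(R-L)/\sqrt{2\pi}\le\rho/\nu\lesssim\rho/\eps$ (Gaussian anti-concentration and $\nu\gtrsim\eps$), I obtain
\[
\E_{\x\sim\D^\perp_\x}\!\big[(1-2\eta^\perp(\x))\,\1\{f^\perp(\x)\sgn(b)>0\}\big]\;\ge\;\E_{\x\sim\D_\x}\!\big[\beta(\x)\1\{h(\x)\ne f(\x)\}\ \big|\ u\in B_{j^\ast}\big]-O(\rho/\eps)\;\gtrsim\;\frac{\eps}{\sqrt{\log(1/\eps)}}-\frac{\rho}{\eps},
\]
which is Item~1, with $t_1=j^\ast\rho$, $t_2=(j^\ast+1)\rho$ multiples of $\rho$ at distance $\rho$ as required.

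The step I expect to be the main obstacle is the choice of the threshold $b$ together with the sign bookkeeping around it. Taking $b$ to be precisely the endpoint of the transition interval nearer the origin is what simultaneously (a) confines the violation set $\{\x:\eta^\perp(\x)>1/2\}$ to one side of $b$ within distance $O(\rho/\nu)$, as Item~2 demands, and (b) makes $f^\perp$ agree with the Bayes sign outside that $O(\rho/\nu)$-strip \emph{robustly}, i.e.\ with no control over the noise function inside the band; choosing $b$ at the other endpoint, or at the midpoint, breaks one of these. Getting the signs right across the cases ($\alpha>0$ with $t_1\ge0$ or $t_2\le0$, $\alpha<0$, and the degenerate $b=0$ band at the origin) is routine but must be carried out carefully. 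The other mildly delicate point is the two-sided accounting in the pigeonhole, where the truncation radius $R_0=\Theta(\sqrt{\log(1/\eps)})$ enters both through the $\sim R_0/\rho$ loss from averaging (which is the source of the $\eps/\sqrt{\log(1/\eps)}$ factor) and through the bound $\pr[u\in B_{j^\ast}]\le\rho$ that turns an absolute quantity into a conditional one, together with the use of $\nu\gtrsim\eps$ to rewrite $\rho/\nu$ as the stated $\rho/\eps$.
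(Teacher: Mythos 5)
Your proposal is correct and follows essentially the same route as the paper's own proof: the same decomposition $\vec w^\ast=\cos\theta\,\vec w+\sin\theta\,\wperp$, the same threshold $b=-t_1\cos\theta/\sin\theta$, the same transition strip of width $O(\rho/\sin\theta)$ (with $\sin\theta\gtrsim\eps$) giving Item~2, and the same $\rho$-grid pigeonhole over $|\vec w\cdot\x|\lesssim\sqrt{\log(1/\eps)}$ combined with Gaussian anti-concentration giving Item~1. The differences are only presentational: you make the Bayes-sign bookkeeping and the three-region split explicit where the paper compresses this into a terse triangle-inequality step, and you derive $\theta\gtrsim\eps$ from the exact disagreement formula rather than citing the halfspace-disagreement fact.
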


\begin{remark}
   {\em Observe that for \Cref{lem:band-projection} to give non-trivial
   guarantees we have to pick the size of the band, $\rho$, to be smaller than a
   sufficiently small constant multiple of $\eps/\sqrt{\log(1/\eps) }$.  In
   fact, in what follows, we will be using bands of size $\rho = \poly(\eps)$.
   Moreover, the condition that $\theta(\vec w, \vec w^\ast) \geq \pi - \eps$ is
   mostly technical.  Having a vector $\vec w$ with $\theta(\vec w, \vec w^\ast)
   \geq \pi - \eps$ we can always output $-\vec w$ and achieve classification
   error $O(\eps)$.  See the proof of \Cref{thm:benign}.
    }
\end{remark}
\begin{proof}

    First, let us denote $\beta(\x)=1-2\eta(\x)$. 
    Note that the assumption $\pr_{(\x, y) \sim \D}[\sign(\vec w \cdot \x) \neq y] \geq \opt + \eps$ is equivalent  to $\E_{\x\sim\D_\x}[\1\{f(\x)\neq \sign(\vec w \cdot \x)\}\beta(\x)]\geq \eps$.  We consider the region $B=\{t_1\leq \vec w \cdot \x \leq t_2\}$, for any $t_1,t_2\in \R$ with $|t_1-t_2|=\rho$. We denote by $\x^\perp$ the projection of $\x$ onto the subspace $\vec w^\perp$.
    Define the distribution $\D^\perp = \D_{B}^{\proj_{\vec w^\perp}}$, that is the distribution $\D$ conditioned on the set $B$ and projected onto $\vec w^\perp$,
    the hypothesis $f^\perp(\x^\perp) = \sgn( \dotp{\x^\perp}{\wperp}+b)$ where $b\in \R$ is chosen appropriately below, and the noise function
    $\eta^\perp(\x^\perp) = \pr_{(\vec z, y) \sim \D^\perp}[ y \neq f^\perp(\vec z) | \vec z = \x^\perp] $, see \Cref{fig:orth_proj}.
    
    Note that the distribution $\D^\perp$ does not satisfy the $1/2$-Massart noise condition
    anymore. We first illustrate how the noise function changes. The orthogonal projection on $\vec
    w^\perp$ creates a region where the $1/2$-Massart noise condition is violated, i.e., a region
    where $\eta^\perp(\x^\perp)$ may be more than $1/2$, but we can control the probability that we
    get points inside this region, i.e., the green area of \Cref{fig:orth_proj}. More formally, we
    show that
    \begin{equation} \label[ineq]{eq:agnostic-tiny}
        \E_{(\x_{\vec w},\x^\perp)\sim (\D_{B})_\x}\left[\beta(\x)\1\{f(\x_{\vec w},\x^\perp) \neq f^\perp(\x^\perp)\}\right] \leq \pr_{\x^\perp \sim \D^\perp_\x}[\eta^\perp(\x^\perp) > 1/2] \lesssim \rho/\eps  \;.
    \end{equation}
To show \Cref{eq:agnostic-tiny} notice that $\theta(\vec w, \wstar) \gtrsim \eps$, otherwise we
    would have $\pr_{\x\sim \D_\x}[f(\x)\neq \sign(\vec w\cdot \x)]\leq \eps$ from
    \Cref{fct:gaussian-halfspaces}.  We can assume that 
    $\vec w^\ast= \lambda_1 \vec w+ \lambda_2 (\vec w^\ast)^{\perp_{\vec w} }$,
    where $\lambda_1 = \cos\theta$ and $\lambda_2 = \sin \theta$. Note that $\lambda_2\gtrsim \eps$, since $\theta\in [\Omega(\eps),\pi-\eps]$.
Next we set $\vec x=(\x_{\vec w},\vec x^\perp)$, where $\x_{\vec w}=\dotp{\vec w}{\vec x}$. We
    show that the hypothesis $f^\perp(\x)=\sign((\vec w^\ast)^{\perp_{\vec w} }\cdot \vec\x^\perp +
    t_1\lambda_1/\lambda_2)$ is almost as good as the $f(\x)$ for the distribution $\D^\perp$. Note
    that in the statement of \Cref{lem:band-projection}, we write $f^\perp(\x)=\sign((\vec
    w^\ast)^{\perp_{\vec w} }\cdot \vec\x^\perp -b)$, where $b=-t_1\lambda_1/\lambda_2$.
    
    Conditioned on $\x \in B$, i.e., $\x_{\vec w} = \x \cdot \vec w \in [t_1, t_1 + \rho]$, it holds that
    $$
    \dotp{\vec w^\ast}{\vec x}
    =  \lambda_1 \x_{\vec w} + \lambda_2 \dotp{(\vec w^{\ast})^{\perp_{\vec w}}}{\vec x^\perp}
    = \lambda_1 t_1+ \lambda_2 \dotp{(\vec w^{\ast})^{\perp_{\vec w}}}{\vec x^\perp}+ s \rho
    \,,
    $$
    for some $s \in [-1,1]$ (recall that $|\lambda_1|\leq 1$).  Notice that when $0<\sgn(-\lambda_1
    t_1)(\lambda_1 t_1 + \lambda_2 \dotp{(\vec w^{\ast})^{\perp_{\vec w}}}{\vec x^\perp}) < \rho$,
    $f^\perp(\x^\perp)$ is not necessary equal to the sign of $(\dotp{\wstar}{\x})$ (recall that
    $\lambda_2 > 0$), and therefore we are inside the region that the $1/2$-Massart noise condition
    is violated, otherwise the  $f^\perp(\x^\perp)$ always matches the sign of $\wstar \cdot \x$.
    This proves the second statement of \Cref{lem:band-projection}.
    
    To prove \Cref{eq:agnostic-tiny}, we need to bound the probability of the event that
    $f^\perp(\x^\perp)$ is not necessary equal to the sign of $(\dotp{\wstar}{\x})$, or equivalently
    if $\sgn(-\lambda_1t_1)>0$ then $\dotp{(\vec w^{\ast})^{\perp_{\vec w}}}{\vec x^\perp} \in
    [(-\lambda_1 t_1 - \rho)/\lambda_2,- \lambda_1 t_1/\lambda_2] =: I^{\rho}$, and otherwise 
    $\dotp{(\vec w^{\ast})^{\perp_{\vec w}}}{\vec x^\perp} \in [-\lambda_1 t_1/\lambda_2,-
    (\lambda_1 t_1+\rho)/\lambda_2] =: I^{\rho}$.  We have that
    \begin{align*}
        \pr_{\x^\perp \sim \D^\perp_\x}[\eta^\perp(\x^\perp) > 1/2]\leq \pr_{\x^\perp \sim \D^\perp_\x}\left[
        \dotp{(\vec w^{\ast})^{\perp_{\vec w}}}{\vec x^\perp} \in I^{\rho}  \right]
        =
        \frac{ \pr_{\x \sim \D_\x}\left[ \dotp{(\vec w^{\ast})^{\perp_{\vec w}}}{\vec x} \in I^{\rho} \right]}
        {\pr_{\x \sim \D_\x}[\x \in B]}
        \lesssim \rho/\lambda_2\lesssim \rho/\eps
        \,,
    \end{align*}
    where we used the anti-concentration property of the Gaussian distribution and the last inequality holds because we have that $\lambda_2\gtrsim \eps$.
    This proves~\Cref{eq:agnostic-tiny}.
    
    It remains to prove that there exist thresholds $t_1,t_2$ and a band $B=\{t_1\leq \vec w \cdot \x \leq t_2\}$ with respect the $t_1,t_2$ such that $\E_{\x\sim \D^\perp_\x}[(1-2\eta^\perp(\x))\1\{f^\perp(\x) \neq \sgn(-b)\}]\gtrsim \frac{\eps}{\sqrt{\log(1/\eps)}}$, where $\D^\perp$ is the distribution $\D$ conditioned on the set $B$ and projected onto $\vec w^\perp$. We first show the following claim
    \begin{claim}\label{clm:random_band}
        Let $\vec v$ be a unit vector and let $S$ be a subset of $\R^d$ such
        that $\E_{\x\sim \D_\x}[\1\{\x\in S\}\beta(\x)]\geq \eps$, for some
        $\eps>0$. Fix $\rho>0$. There exist $t_1,t_2\in \R$ such that
        $t_1, t_2$ are multiples of $\rho$, $|t_1-t_2|=\rho$, and
        \[
        \E_{\x\sim\D_\x}[\1\{\x\in S\}\1\{t_1\leq\vec v \cdot \x \leq t_2\}\beta(\x)]\gtrsim \frac{\eps\rho}{\sqrt{\log(1/\eps)}}\;.
        \]
    \end{claim}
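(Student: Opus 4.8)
The plan is to reinterpret the hypothesis $\E_{\x\sim\D_\x}[\1\{\x\in S\}\beta(\x)]\geq\eps$ as saying that a certain one–dimensional measure (obtained by projecting the mass $\1\{\x\in S\}\beta(\x)\,\d\D_\x$ onto the direction $\vec v$) has total mass at least $\eps$, observe that this measure is pointwise dominated by the standard Gaussian density, and then combine a Gaussian tail bound with a pigeonhole argument over $\rho$-width intervals.

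Concretely, I would define the finite measure $\mu$ on $\R$ by $\mu(A)=\E_{\x\sim\D_\x}[\1\{\x\in S\}\1\{\vec v\cdot\x\in A\}\beta(\x)]$ for Borel $A\subseteq\R$. Since $\D_\x=\normal$ and $\|\vec v\|_2=1$, the variable $\vec v\cdot\x$ is standard normal, so disintegrating over $z=\vec v\cdot\x$ shows that $\mu$ has Lebesgue density $g(z)=\phi(z)\,\E_{\x\sim\D_\x}[\1\{\x\in S\}\beta(\x)\mid\vec v\cdot\x=z]$. Because $0\leq\beta(\x)=1-2\eta(\x)\leq1$ and $\1\{\x\in S\}\leq1$, we get $0\leq g(z)\leq\phi(z)$ for all $z$, while by hypothesis $\mu(\R)=\int_\R g(z)\,\d z\geq\eps$.

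Next I would localize the mass of $\mu$ to a bounded window. Set $T=\sqrt{2\log(2/\eps)}$. By \Cref{fct:gaussian-tails}, $\mu(\R\setminus[-T,T])\leq\int_{|z|>T}\phi(z)\,\d z=2\pr_{z\sim\normal}[z>T]\leq e^{-T^2/2}=\eps/2$, hence $\mu([-T,T])\geq\eps/2$. Now consider the pairwise disjoint intervals $I_j=[j\rho,(j+1)\rho)$, $j\in\Z$, which cover $\R$ and in particular $[-T,T]$; the number of indices $j$ with $I_j\cap[-T,T]\neq\emptyset$ is at most $\lceil2T/\rho\rceil+1$. In the regime of interest $\rho\lesssim\sqrt{\log(1/\eps)}$ (which covers all our applications, where in fact $\rho=\poly(\eps)$) this count is $O(T/\rho)$, so from $\sum_{j:\,I_j\cap[-T,T]\neq\emptyset}\mu(I_j)\geq\mu([-T,T])\geq\eps/2$ the pigeonhole principle yields an index $j^\ast$ with $\mu(I_{j^\ast})\gtrsim\eps\rho/T\gtrsim\eps\rho/\sqrt{\log(1/\eps)}$. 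Taking $t_1=j^\ast\rho$ and $t_2=(j^\ast+1)\rho$, these are multiples of $\rho$ with $|t_1-t_2|=\rho$ and $\E_{\x\sim\D_\x}[\1\{\x\in S\}\1\{t_1\leq\vec v\cdot\x\leq t_2\}\beta(\x)]=\mu([t_1,t_2])\geq\mu(I_{j^\ast})\gtrsim\eps\rho/\sqrt{\log(1/\eps)}$, as desired.

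There is essentially no deep obstacle here; the one point that genuinely needs care is the domination $g\leq\phi$, which crucially uses $\beta\leq1$ (equivalently $\eta\geq0$): without it the projected measure need not have Gaussian tails and one could not restrict to a bounded window $[-T,T]$ before applying pigeonhole, which would break the argument. Everything else is a routine Gaussian-tail-plus-averaging computation. The degenerate case $\rho\gtrsim\sqrt{\log(1/\eps)}$ is irrelevant for our applications, and is in any case easily handled since then $[-T,T]$ meets only $O(1)$ of the intervals $I_j$.
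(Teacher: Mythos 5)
Your proof is correct and follows essentially the same route as the paper's: bound the mass outside a window of radius $\Theta(\sqrt{\log(1/\eps)})$ using the Gaussian tail bound (possible because $\1\{\x\in S\}\beta(\x)\leq 1$), then pigeonhole over the $O(\sqrt{\log(1/\eps)}/\rho)$ width-$\rho$ slices to extract one carrying $\Omega(\eps\rho/\sqrt{\log(1/\eps)})$ mass. The packaging via the projected measure $\mu$ and the tail budget $\eps/2$ instead of the paper's $\eps^2$ are purely cosmetic differences.
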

    \begin{proof}
        Note that since $\beta(\x)\1\{\x\in S\}\in[0,1]$, we have
        $\E_{\x\sim\D_\x}[\1\{\x\in S\}\1\{C\sqrt{\log(1/\eps)}\leq|\vec v \cdot
        \x |\}\beta(\x)]\leq \E_{\x\sim\D_\x}[\1\{C\sqrt{\log(1/\eps)}\leq|\vec
        v \cdot \x |\}] \leq \eps^{2}$, where $C$ is a large enough
        universal constant. 
        To get this we used the upper bound on the Gaussian tails, \Cref{fct:gaussian-tails}.
        Therefore, we can ignore the region far away from the origin
        and split the interval $|\vec v \cdot \x| \leq C \sqrt{\log(1/\eps)}$,
        into slices of length $\rho$.  There must exist one
        slice inside which the expectation of $\beta(\x) \1\{ \x \in S\}$ is
        roughly $\eps\rho/\sqrt{\log(1/\eps)}$.
        Let $t_i=i\rho$ and $t_{-i}=-i\rho$, for $0\leq
        i\leq C\sqrt{\log(1/\eps)}/\rho$, we define $B_i=\{t_i\leq\vec v \cdot
        \x \leq t_{i+1}\}$ and $B_{-i}=\{-t_{i+1}\leq\vec v \cdot \x \leq
        -t_{i}\}$. We have that
        \begin{equation}\label{eq:clm-proof}
            \sum_{|i|\leq C\sqrt{\log(1/\eps)}/\rho}\E_{\x\sim\D_\x}[\1\{\x\in
            S\}\1\{\x\in B_i\}\beta(\x)]\geq \eps - \eps^2 \geq \eps/2\;, 
        \end{equation}
        where we used that 
        $\E_{\x\sim\D_\x}[\1\{\x\in S\}\1\{C\sqrt{\log(1/\eps)}\leq|\vec v \cdot
        \x |\}\beta(\x)] \leq \eps^{2}$. Using the fact that all the
        quantities we add in \Cref{eq:clm-proof} are positive, there exists
        $B_i'$, for some $i'$, such that
        \[
        \E_{\x\sim\D_\x}[\1\{\x\in S\}\1\{\x\in B_{i'}\}\beta(\x)]\gtrsim\frac{\eps\rho}{\sqrt{\log(1/\eps)}}\;,
        \]
        which completes the proof.
    \end{proof}
    An application of \Cref{clm:random_band} to the set $\{\x\in \R^d:f(\x)\neq \sign(\vec w \cdot \x)\}$, gives us that there exists a band $B=\{t_1\leq \vec w \cdot \x \leq t_2\}$ with $|t_1-t_2|=\rho$ such that
    \[  \E_{\x\sim\D_\x}[\1\{f(\x)\neq \sign(\vec w \cdot \x)\}\1\{\x\in B\}\beta(\x)]\gtrsim\frac{\eps\rho}{\sqrt{\log(1/\eps)}}\;.\]
    Moreover, note that for the distribution $\D_B$, that is the distribution $\D$ conditioned on $B$, it holds
    $\E_{\x\sim (\D_{B})_\x}[\beta(\x)\1\{f(\x)\neq \sign(\vec w \cdot \x)\}]
    \gtrsim \frac{\eps}{\sqrt{\log(1/\eps)}}$, where we used the Gaussian anti-concentration 
    property, i.e., that $\pr_{x \sim \normal}[t_1 \leq x \leq t_2] \lesssim |t_2 - t_1|$.
    
    We have that $f^\perp(\x)$ agrees almost everywhere with $f(\x)$ with respect to the distribution $\D_{B}$, i.e., we have that $\E_{(\x_{\vec w},\x^\perp)\sim (\D_{B})_\x}[\1\{f((\x_{\vec w},\x^\perp))\neq f^\perp(\x^\perp)\}]\leq \pr_{\x^\perp \sim \D_\x^\perp}[\eta^\perp(\x^\perp)>1/2] \lesssim \rho/\eps$ (\Cref{eq:agnostic-tiny}).
    Thus, using the triangle inequality, we have
    \[ \E_{(\x_{\vec w},\x^\perp)\sim (\D_{B})_\x}\left[(1-2\eta^\perp(\x^\perp))\1\{\sign(\x_{\vec w} )\neq f^\perp(\x^\perp)\}\right] \gtrsim \frac{\eps}{\sqrt{\log(1/\eps)}}-\frac{\rho}{\eps}
    \;.\]
    The proof concludes by noting that it holds $\sign(\x_{\vec w})=\sign(-b)$, by the definition of $f^{\perp}(\x)$.
    This completes the proof of  \Cref{lem:band-projection}.
\end{proof}

\subsection{Using the Low-Order Chow Tensors}
\label[sub]{sub:chow-tensors}

In this subsection, we use our structural result of \Cref{sub:sign-mathcing-polynomial} to 
construct the sampling oracle of \Cref{prop:warm-start} that provides us with good update vectors $\vec v$.
To do so, we shall find a subspace $V$ of $\R^d$ that contains non-trivial part of $\vec v^\ast$.
In order to get a good update with non-trivial probability, we will sample a unit vector uniformly
at random from $V$. The description of the corresponding algorithm is given
in \Cref{alg:chow}.

\begin{proposition}\label{prop:nontriavial_angle}
    Let $\D$ be a distribution on $\R^{d} \times \{\pm 1\}$ with standard normal $\x$-marginal.
    Let $f(\bx) = \sgn(\dotp{\vec v^\ast}{\bx} - b)$ be such that
    $\pr_{(\x,y)\sim \D}[y \neq f(\vec x)| \vec x] = \eta(\vec x)$,
    $\E_{\x\sim \D_{\x}}[\beta(\x)\1\{\sign(b)f(\x)>0\}] \geq \zeta$ and
    $\eta(\x) > 1/2$ only when $\sgn(b)(\dotp{\vec v^\ast}{\bx} - b)\leq \xi$,
    where $\xi=\Theta(\zeta^3)$.
There exists an algorithm with sample complexity and runtime
    $d^{O(\log(1/\zeta))} \log(1/\delta)$ that with probability at least $1-\delta$
    returns a basis of a subspace $V$ of dimension $\poly(1/\zeta)$ such that 
    $\| \proj_V(\vec v^\ast) \|_2 = \poly(\zeta)$.
\end{proposition}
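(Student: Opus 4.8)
The plan is to reduce, via the sign-matching polynomial of \Cref{lem:correlation-polynomial}, to a spectral analysis of the low-order Chow tensors $\vec T^m(\D)=\E_{(\x,y)\sim\D}[\vec H^m(\x)y]$. First note that, since $\E_{\x\sim\D_\x}[\beta(\x)\1\{\sgn(b)f(\x)>0\}]\geq\zeta$ and $\beta(\x)\leq 1$, the Gaussian tail bound (\Cref{fct:gaussian-tails}) forces $|b|\lesssim\sqrt{\log(1/\zeta)}$. Hence the hypotheses of \Cref{lem:correlation-polynomial} hold, and it produces a univariate polynomial $p$ of degree $K=\Theta(b^2+1)=O(\log(1/\zeta))$ with $\E_{\x\sim\D_\x}[p(\vec v^\ast\cdot\x)]=0$, $\E_{\x\sim\D_\x}[p^2(\vec v^\ast\cdot\x)]=1$, and $\E_{(\x,y)\sim\D}[y\,p(\vec v^\ast\cdot\x)]\geq\rho_0$ for some $\rho_0=\poly(\zeta)$.

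Next I would turn this one-dimensional correlation into a lower bound on the Chow tensors. Expand $p$ in the normalized univariate Hermite basis, $p(z)=\sum_{m=1}^{K}c_m h_m(z)$: the constant term vanishes since $\E[p]=0$, and $\sum_{m=1}^{K}c_m^2=1$ by orthonormality together with $\E[p^2]=1$. By \Cref{fct:hermite-tensor}, $h_m(\vec v^\ast\cdot\x)=\langle\vec H^m(\x),(\vec v^\ast)^{\otimes m}\rangle$, so $\E_{(\x,y)\sim\D}[y\,h_m(\vec v^\ast\cdot\x)]=\langle\vec T^m,(\vec v^\ast)^{\otimes m}\rangle$. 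Summing and applying Cauchy--Schwarz with $\sum_m c_m^2=1$,
\[
\rho_0\ \leq\ \sum_{m=1}^{K}c_m\,\langle\vec T^m,(\vec v^\ast)^{\otimes m}\rangle\ \leq\ \Big(\sum_{m=1}^{K}\langle\vec T^m,(\vec v^\ast)^{\otimes m}\rangle^2\Big)^{1/2}\,,
\]
so some $m^\ast\in[K]$ satisfies $\langle\vec T^{m^\ast},(\vec v^\ast)^{\otimes m^\ast}\rangle^2\geq\rho_0^2/K=\poly(\zeta)$.

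The main step is then tensor flattening (cf.\ \Cref{lem:tensor-flattening}). For each $m\leq K$ reshape $\vec T^m$ into a $d\times d^{m-1}$ matrix $M_m$, so that $\langle\vec T^m,(\vec v^\ast)^{\otimes m}\rangle=(\vec v^\ast)^\top M_m(\vec v^\ast)^{\otimes(m-1)}$; write $\vec u_m=M_m(\vec v^\ast)^{\otimes(m-1)}\in\R^d$, so $\|\vec u_m\|_2\leq\|M_m\|_F=\|\vec T^m\|_F$. Crucially, $\|\vec T^m\|_F$ is bounded by a function of $m$ alone, independent of $d$ (by orthonormality of the degree-$m$ Hermite polynomials and $|y|\leq 1$ one gets, e.g., $\|\vec T^m\|_F^2\leq m!$), and since $m\leq K=O(\log(1/\zeta))$ this is $\poly(1/\zeta)$. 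Splitting $\vec u_m$ along the SVD of $M_m$ at a threshold $\tau$ (a small $\poly(\zeta)$ multiple of $\rho_0/\sqrt K$), the component in directions with singular value $<\tau$ has norm at most $\tau$ and thus contributes at most $\tau$ to $\vec v^\ast\cdot\vec u_m$; letting $V_m$ be the span of the left singular vectors of $M_m$ with singular value $\geq\tau$, we get $\|\proj_{V_m}(\vec v^\ast)\|_2\geq(\vec v^\ast\cdot\vec u_m-\tau)/\|\vec u_m\|_2$ and $\dim V_m\leq\|\vec T^m\|_F^2/\tau^2$. Taking $V=\sum_{m=1}^{K}V_m$ yields $\dim V=\poly(1/\zeta)$ and $\|\proj_V(\vec v^\ast)\|_2\geq\|\proj_{V_{m^\ast}}(\vec v^\ast)\|_2\geq\poly(\zeta)$.

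For the algorithmic part, fix $K=C\log(1/\zeta)$ with $C$ the constant from \Cref{lem:correlation-polynomial}; draw $N=d^{O(K)}\log(1/\delta)=d^{O(\log(1/\zeta))}\log(1/\delta)$ samples and, for each $m\leq K$, form $\widehat{\vec T}^m=\frac1N\sum_i\vec H^m(\x^{(i)})y^{(i)}$. Since each coordinate of $\vec H^m(\x)y$ is a bounded-variance degree-$m$ Hermite polynomial, standard concentration plus a union bound over the $d^m$ coordinates gives $\|\widehat{\vec T}^m-\vec T^m\|_F\leq\tau/10$ with probability $1-\delta/K$. Computing the SVD of each reshaped $\widehat M_m$, thresholding at $\tau/2$, and letting $V$ be the span of all resulting left singular subspaces, a routine perturbation argument (the empirical and true thresholded subspaces capture essentially the same mass of $\vec v^\ast$, up to $O(\tau)$) shows $V$ still meets both conditions; output a basis of $V$. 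The runtime is dominated by forming and diagonalizing the $\widehat M_m$, i.e.\ $\poly(N,d^K)=d^{O(\log(1/\zeta))}\log(1/\delta)$. The main obstacle is exactly the tensor-flattening step: keeping $\dim V$ polynomial in $1/\zeta$ while retaining a $\poly(\zeta)$ projection of $\vec v^\ast$, which is what the two quantitative inputs — the $O(\log(1/\zeta))$ degree of the sign-matching polynomial (only logarithmically many tensors enter) and the dimension-free Frobenius bound on $\vec T^m$ (so the spectral threshold is a $\poly(\zeta)$ quantity) — are designed to make possible.
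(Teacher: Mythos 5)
Your overall route is exactly the paper's: bound $|b|\lesssim\sqrt{\log(1/\zeta)}$ via Gaussian tails, invoke \Cref{lem:correlation-polynomial} to get a zero-mean, unit-variance, degree-$O(\log(1/\zeta))$ polynomial with $\poly(\zeta)$ correlation, expand it in the Hermite basis to locate one order $m^\ast$ with $\langle \vec T^{m^\ast},(\vec v^\ast)^{\otimes m^\ast}\rangle \geq \poly(\zeta)$, and then flatten, threshold the SVD, and argue dimension and projection bounds plus empirical estimation of the Chow tensors with $d^{O(\log(1/\zeta))}$ samples -- this is a re-derivation of \Cref{lem:tensor-flattening} combined with \Cref{fct:estimate-chow}, just as in the paper.

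There is, however, one flawed quantitative step: your Frobenius bound $\|\vec T^m\|_F^2\leq m!$ together with the claim that this is $\poly(1/\zeta)$ for $m\leq K=O(\log(1/\zeta))$. In fact $K!=(\log(1/\zeta))^{\Theta(\log(1/\zeta))}=(1/\zeta)^{\Theta(\log\log(1/\zeta))}$, which is super-polynomial in $1/\zeta$; with that bound your argument only yields $\dim V \leq K\cdot K!/\tau^2$ and $\|\proj_V(\vec v^\ast)\|_2\gtrsim \poly(\zeta)/\sqrt{K!}$, i.e., quasi-polynomial dimension and a quasi-polynomially small projection, which does not establish the proposition as stated. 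Moreover, the $m!$ bound is the one you would get for the tensor whose entries are the raw normalized Hermite polynomials $h_\alpha$ (counting repeated-index multiplicities), and for that tensor the contraction identity of \Cref{fct:hermite-tensor} that you also use does not hold; under the normalization for which \Cref{fct:hermite-tensor} is valid, one gets the much stronger, dimension-free bound $\|\vec T^m\|_F\leq 1$ (this is how the paper argues it: $\langle \vec T^m,\vec A\rangle=\E[y\,(\vec H^m\cdot\vec A)]$ and $\vec H^m\cdot\vec A$ is a unit-$L_2$-norm Hermite polynomial when $\|\vec A\|_F=1$, so the supremum is at most $1$). Replacing your $m!$ bound by this constant bound repairs the argument completely: $\dim V\leq O(K/\tau^2)=\poly(1/\zeta)$ and $\|\proj_V(\vec v^\ast)\|_2\geq \tau/O(\sqrt{K})=\poly(\zeta)$, and the rest of your proof (Cauchy--Schwarz over the Hermite levels, the SVD thresholding, and the estimation/perturbation step) is sound and matches the paper.
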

\begin{proof}
    
    The following lemma shows that the existence of a mean-zero polynomial $p$
    that achieves non-trivial correlation with $y$ implies that the 
    subspace spanned by the top singular vectors of the Chow tensors of $\D$
    will contain non-trivial part of $\vec v^\ast$.
    We remark that we do not rely on tensor SVD to obtain the singular vectors:
    in what follows we flatten the order-$m$ Chow tensors and treat them as 
    $d \times d^{m-1}$ matrices.  We show the following lemma.
    
    \begin{algorithm}[h]
        \caption{ Creating a Random Oracle with Good Correlation, see \Cref{prop:warm-start}. } 
        \label{alg:chow}
        
        \centering
        \fbox{\parbox{6in}{
{\bf Input:} 
                \begin{enumerate}
                    \item $\eps, \delta > 0$.
                    \item  An empirical distribution $\widehat\D$ of the distribution $\D$ that satisfies the general Massart noise condition with respect to $f(\x)=\sign(\vec w^\ast\cdot \x)$. 
                    \item A unit vector $\vec v \in \R^d$  such that 
                    $\pr_{(\x, y) \sim \D}[\sgn(\vec w \cdot \x) \neq y] \geq \opt + \eps$.
                \end{enumerate}

                {\bf Output:} A subspace $V$ with $\dim(V) = \poly(1/\eps)$, 
                and $\|\proj_V(\wperp) \|_2 \geq \poly(\eps)$. \\
                
                {\bf Define:} $k=C\log(1/\eps)$, $\sigma =\poly(\eps)/C$, $\rho = \eps^3/C$, $l=C\log(1/\eps)/\rho$, for $C>0$ be a sufficiently large constant.\\
                \begin{enumerate}
                    \item For $i\in [l]$, set $B_i=\{i\rho \leq \vec w \cdot \x \leq (i+1)\rho\}$ and $B_{-i}=\{-(i+1)\rho \leq \vec w \cdot \x \leq -i\rho\}$.
\item For $i\in \{-l,\ldots, l\}$, repeat
                    \begin{enumerate}
                        \item Let $\wh{\D}_i^\perp$ be the distribution $\widehat\D$ conditioned on $B_i$ and projected onto $\vec w^\perp$.
                        \item For any $m\in [k]$, calculate the $m$-order Chow tensor $\vec T^m(\widehat{\D}_i^\perp)$, see 
                        \Cref{def:chow-parameters}.
                        \item Flatten each $\vec T^m(\widehat{\D}_i^\perp)$ for $m\in[k]$, to a $d\times d^{m-1}$ matrix $\vec M^m = (\vec T^m(\widehat{\D}_i^\perp) )^\flat$, see \Cref{sec:prelims}.
                        \item Let $V_i^m$ be the set of the left singular vectors 
                        of $\vec M^{m}$, for all $m\in[k]$, whose singular values have absolute value at least $\sigma$.
                    \end{enumerate}
                    \item {\bf Return:} A basis of  $V = \bigcup_{i=1}^\ell \bigcup_{m=1}^k V_i^m$.
                \end{enumerate}
        }}
    \end{algorithm}

    \begin{lemma}
        \label{lem:tensor-flattening}
        Let $\D$ be a distribution over $\R^d \times \{ \pm 1\}$ with standard normal $\x$-marginal. 
        Let $p: \R \mapsto \R$ be a univariate, mean zero, unit variance polynomial of degree $k$  
        such that for some unit vector $\vec v^\ast \in \R^d$ it holds
        $\E_{(\x, y) \sim \D}[y p(\vec v^\ast \cdot \x)] \geq \tau $ for some $\tau \in (0,1]$.  
        Let ${\vec T'}^m$ be an approximation of the order-$m$ Chow-parameter tensor $\vec T^m$ of $\D$ such that
        $\| {\vec T'}^m - \vec T^m \|_F \leq \tau/(4 \sqrt{k}) $.
        Denote by $V_m$ the subspace spanned by the left singular vectors of $({\vec T'}^m)^{\flat}$ whose singular values
        are greater than $\tau/(4 \sqrt{k})$.  Moreover, denote by $V$ the union
        of $V_1,\ldots,V_k$.  Then we have that 
        \begin{enumerate}
            \item $\dim(V) \leq 4 k/\tau^2$, and
            \item  $\|\proj_{V}(\vec v^\ast) \|_2 \geq \tau/(4\sqrt{k}) $.
        \end{enumerate}
    \end{lemma}
    \begin{proof}

        We note that $\|\vec T^m \|_F = \sup_{p\in {\cal H}_k}
        \E[ y p(\x) / \sqrt{\E[p^2(\x)}] ] \leq  \sup_{p\in {\cal H}_k} \E[p^2(\x)]^{1/2}/\E[p^2(\x)]^{1/2} = 1$, where ${\cal H}_k$ is the set of polynomials formed from a linear combination of $k$-degree Hermite polynomials.
        Therefore, by the assumptions of \Cref{lem:tensor-flattening} 
        and the triangle inequality we obtain that $\|\vec T'^m\|_F \leq \|\vec T^m\|_F +1\leq 2$.
        
        Recall that by $(\vec T'^m)^\flat$ we denote the $d\times d^{m-1}$ flattening 
        of $\vec T'^m$, see \Cref{sec:prelims}.
        To simplify notation write $\vec M^m = (\vec T'^m)^\flat \in R^{d \times d^{m-1}}$.
        We have that $V_m$ is the span of the
        left singular vectors of $\vec M^m$ with singular value at least $\tau$ and $V$ be the union of all the $V_m$.
        We first show that the dimension of $V$ is not very large.
        \begin{claim}
            It holds that $\dim(V) \leq 4 k/\tau^2$.
        \end{claim}
        \begin{proof}
We have that $\| \vec M^m \|_F = \|\vec T'^{m}\|_F \leq 2$.
            Therefore, since $\| \vec M^m \|_F$ is equal to the sum of the squares of the 
            singular values 
            $\sigma_1,\ldots, \sigma_d$ of $\vec M^m$ we obtain that
            the number of singular values $\sigma_i$ with $|\sigma_i| \geq \tau$ is 
            \[
            \frac{1}{\tau^2} \sum_{i=1}^d \1\{\sigma_i^2 \geq \tau^2\} \tau^2
            \leq
            \frac{1}{\tau^2} \sum_{i=1}^d \sigma_i^2 =
            \frac{\|\vec M^m\|_F^2} {\tau^2} 
            \leq \frac{4}{\tau^2}
            \,.
            \]
            We can do the same calculation for all $\vec T^m$ to conclude that the dimension
            of the union of all subspaces is at most $4 k/\tau^2$.
        \end{proof}
        
        Next, we show that $V$ contains a non-trivial part of the optimal direction $\vec v^\ast$,
        i.e., it holds that $\|\proj_{V}(\vec v^\ast)\|_2 \geq \tau/(4 \sqrt{k})$.
        We first note that since the univariate polynomial $p$ is mean-zero, 
        it can be written as a linear combination of the univariate Hermite 
        polynomials $h_i(z)$ of degree greater than or equal to $1$.
        Write $p(z)=\sum_{i=1}^k c_i h_i(z)$.  
        We note that since $\E_{z \sim \normal}[p(z)^2] = 1$, we have that 
        $\sum_{i=1}^k c_i^2 = 1$.  
        Moreover, since the order-$i$ Chow tensor is defined as $\vec T^{i} = \E_{(\x, y) \sim \D}[\vec H^{i}(\x) y]$, it holds
        \[
        \E_{(\x,y)\sim \D}[yp(\vec v^\ast \cdot \x)] = \sum_{i=1}^k c_i \vec T^i \cdot (\vec v^\ast)^{\otimes i}  \geq \tau \,.
        \]
        Therefore, by Cauchy-Schwarz, there exists $j \in  \{1,\ldots, k\}$ such that
        $\vec T^j \cdot (\vec v^\ast)^{\otimes j} \geq \tau/\sqrt{k}$.
        Write $\vec v^\ast$ as $\vec v + \vec u$ where
        $\vec v\in V$ and $\vec u\in V^\perp$.  
        Moreover, denote $\vec r \in \R^{d^{j-1}} $ to be the flattening of 
        the tensor $\vec v^{\otimes (j-1)}$.
        We note that
        \begin{align*}
            \vec T^j \cdot (\vec v^\ast)^{\otimes j} & \leq \|\vec T^j-\vec
            T'^j\|_F + |\vec T'^j\cdot (\vec v\otimes
            (\vec v^\ast)^{\otimes j-1})| + |\vec T'^j\cdot (\vec u\otimes(\vec
            v^\ast)^{\otimes j-1})|
            \\
            & \leq 
            \|\vec T^j-\vec T'^j\|_F  +  |\vec v^\top \vec M^j \vec r|  +  |\vec u^\top \vec M^j \vec r|
            \\
            &\leq  2 \frac{\tau}{4 \sqrt{k}} + \|\vec v\|_2 \|\vec M^j \|_F 
            \leq \frac{\tau}{2 \sqrt{k}}  + 2\|\vec v\|_2\;,
        \end{align*}
        where we used that $\vec u$ belongs on the subspace spanned from the left singular vectors of $\vec M^{j}$
        whose singular values are less than $\tau/(4 \sqrt{k})$, and therefore 
        it holds $|\vec u^\top \vec M^j  \vec v| \leq \tau/(4 \sqrt{k})$. Thus, we have $\snorm{2}{\vec v}\geq \frac{\tau}{4\sqrt{k}}$ and therefore, $\|\proj_{V}(\vec v^\ast)\|_2 \geq \tau/(4 \sqrt{k})$.
        
\end{proof}
    The proof of \Cref{prop:nontriavial_angle} follows from an application of \Cref{lem:correlation-polynomial} and \Cref{lem:tensor-flattening} with the appropriate parameters, which we state below.
    To conclude the proof, we need the following lemma which bounds the number of samples needed to estimate the order-$m$ Chow parameters. Its proof can be found in \Cref{app:sec3}.
    \begin{lemma}\label{fct:estimate-chow} 
       Fix $m\in \Z_+$ and $\eps,\delta\in(0,1)$. Let $\D$ be a distribution in $\R^d\times\{\pm 1\}$ with standard normal $\x$-marginals. There is an algorithm that with $N=d^{O(m)} \log(1/\delta)/\eps^2$ samples and $\poly(d,N)$ runtime, outputs an approximation $\vec T'^m$ of the order-$m$ Chow-parameter tensor $\vec T^m$ of $\D$ such that with probability $1-\delta$, it holds
       $
       \|\vec T'^m-\vec T^m\|_F\leq \eps\;.
       $
    \end{lemma}
 Recall that 
    from our assumptions $|b|\leq \sqrt{\log(1/\zeta)}$. Moreover, from \Cref{lem:correlation-polynomial}, we have that for
     $k=O(\log(1/\zeta))$, there exists a degree-$k$ polynomial $p$, such that $\E_{(\x,y)\sim \D}[yp(\vec v^\ast \cdot \x)]\geq \poly(\zeta)$. 
     Thus, in order to apply \Cref{lem:tensor-flattening}, we need to estimate the first order-$k$ Chow tensors. Hence, the sample complexity is $N=d^{O(\log(1/\zeta))}\poly(1/\zeta)\log(1/\delta)$ and the runtime is $\poly(N,d)$.  This completes the proof of \Cref{prop:nontriavial_angle}.
    
\end{proof}

Combining \Cref{lem:band-projection} and \Cref{prop:nontriavial_angle}, we can prove \Cref{prop:warm-start}.
\begin{proof}[Proof of \Cref{prop:warm-start}]
    From the assumption that $ \pr_{(\x, y) \sim \D}[\sign(\vec w \cdot \x) \neq y] \geq \opt + \eps$,
    we have $\E_{\x\sim \D_\x}[\beta(\x) \1\{\sign(\vec w \cdot \x)\neq f(\x) \}]\geq \eps$.
    To prove this, note that  $ \pr_{(\x, y) \sim \D}[\sign(\vec w \cdot \x) \neq y]=  \E_{\x\sim \D_\x}[\beta(\x)\1\{\sign(\vec w \cdot \x)\neq f(\x) \}] +\E_{\x\sim \D_\x}[\eta(\x)]$ and $\opt=\E_{\x\sim \D_\x}[\eta(\x)]$, by combining with  $\pr_{(\x, y) \sim \D}[\sign(\vec w \cdot \x) \neq y] \geq \opt + \eps$, it follows that $$\E_{\x\sim \D_\x}[\beta(\x) \1\{\sign(\vec w \cdot \x)\neq f(\x) \}]\geq \eps\;.$$
    Let $\widehat \D_N$ be the empirical distribution of $\D$ using $N=d^{\Theta(\log(1/\eps))}\log(1/\delta)$ samples.
    Let $\rho=C\eps^3$, for some small enough constant $C>0$, and from \Cref{lem:band-projection}, we get that there exists a set
    $B=\{t_1\leq \vec v\cdot \x \leq t_2\}$, with $|t_1-t_2|=\rho$, for which we can apply the algorithm of \Cref{prop:nontriavial_angle}  to the
    distribution $\D_B^{\perp_{\vec w}}$ ($\D_B^{\perp_{\vec w}}$ is the distribution $\widehat \D_N$ conditioned on $B$ and projected into $\vec w^\perp$), and get a vector space $V$ with the following properties: (i) $ V\subseteq \vec w^\perp$, (ii) the dimension of $V$ is $\poly(1/\eps)$, and (iii)
    $\|\proj_{V}(\vec w^\ast)\|_2 = \poly(\eps\rho)=\poly(\eps)$ with probability $1-\delta$, if $N\geq d^{O(\log(1/\eps))}\log(1/\delta)$. Moreover, the algorithm runs in $\poly(N,d)$ runtime. 
    
    The problem, we need to overcome, is that we do not know the set $B$, so we are going to apply the algorithm of \Cref{prop:nontriavial_angle} to any possible set and take the union of the outputs. Let $l=\Theta(\log(1/\eps)/\rho)$ and let $t_i=i\rho$ and $t_{-i}=-i\rho$, for $0\leq i\leq l$. Furthermore, we define $B_i=\{t_i\leq\vec v \cdot \x \leq t_{i+1}\}$ and $B_{-i}=\{-t_{i+1}\leq\vec v \cdot \x \leq -t_{i}\}$ (similarly to \Cref{clm:random_band}). We then apply the algorithm of \Cref{prop:nontriavial_angle} to each of the
    distributions $\D_{B_i}^{\perp_{\vec w}}$ and get vector spaces $V_i$. Moreover, the algorithm of \Cref{prop:nontriavial_angle} returns vector spaces $V_i$ such that $V_i\subseteq \vec w^\perp$ and $\dim(V_i)=\poly(1/\eps)$. Finally, from \Cref{lem:band-projection}, we know that there exists an index $j\in\{-l,\ldots,l\}$ such that applying the algorithm \Cref{prop:nontriavial_angle} on the distribution $\D_{B_j}^{\perp_{\vec w}}$ gives a vector space $V_j$, with the additional property that $\|\proj_{V_j}(\wperp)\|_2 =\poly(\eps)$, with probability $1-\delta$.
Thus, we let $V$ to be the union of all subspaces $V_i$, i.e., $V=\bigcup_{i={-l}}^l V_i$. It holds that $\dim(V)\leq \sum_{i=1}^l\dim(V_i)=\poly(1/\eps)$. Moreover, for any $\vec v\in V$, it holds $\vec v\cdot \vec w=0$, this is because $V_i\subseteq \vec w^\perp$ and hence $V\subseteq \vec w^\perp$.
    Finally, we have that $\|\proj_{V_j}(\vec w^\ast)\|_2 =\poly(\eps)$ and because $V_j\subseteq V$ it holds that $\|\proj_{V}(\wperp)\|_2 \geq \poly(\eps)$ with probability $1-\delta$. 
\end{proof}
We now show that by finding a vector $\vec u$ that correlates well with $(\vec v^\ast)^{\perp \vec v}$, we can update our current guess vector $\vec v$ and get one with increased correlation with $\vec v^\ast$. Its proof can be found on \Cref{app:sec3}.
\begin{lemma}[Correlation Improvement]\label{lem:corr-improv}
    Fix unit vectors $\vec v^{\ast}, \vec v \in \R^d$.
    Let $\vec {u }\in \R^d$ such that
    $\dotp{\vec {u }}{\vec v^{\ast}} \geq c$, $\dotp{\vec {u }}{\vec v} = 0$,
    and  $\snorm{2}{\vec {u }}\leq 1$, with $c>0$.
    Then, for $\vec v'=\frac{{\vec v}+\lambda{\vec{ u}}}{\snorm{2}{{\vec v}+\lambda{\vec{u}} }}$,
    with $\lambda=c/2$, we have that
    $\dotp{\vec v'} {\vec v^{\ast}}\geq  \dotp{\vec v} {\vec v^{\ast}}+\lambda^2/2 $.
\end{lemma}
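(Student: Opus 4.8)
The plan is to reduce the claim to an elementary scalar inequality. Write $a := \dotp{\vec v}{\vec v^{\ast}}$, $\lambda := c/2$, and $t := \lambda^2$. First I would record two facts about the update. Since $\dotp{\vec u}{\vec v} = 0$ and $\snorm{2}{\vec v} = 1$, expanding the square gives $\snorm{2}{\vec v + \lambda \vec u}^2 = 1 + \lambda^2 \snorm{2}{\vec u}^2 \in [1, 1+t]$, using $\snorm{2}{\vec u} \le 1$; denote $D := \snorm{2}{\vec v + \lambda \vec u}$, so $1 \le D \le \sqrt{1+t}$. For the numerator, $\dotp{(\vec v + \lambda \vec u)}{\vec v^{\ast}} = a + \lambda \dotp{\vec u}{\vec v^{\ast}} \ge a + \lambda c = a + 2t$, where the last equality is just $\lambda c = c^2/2 = 2\lambda^2$. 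Note also that Cauchy--Schwarz forces $c \le \dotp{\vec u}{\vec v^{\ast}} \le \snorm{2}{\vec u} \le 1$, hence $\lambda \le 1/2$ and $t \le 1/4$; this bound is needed only at the very last step. It then remains to show $\dotp{\vec v'}{\vec v^{\ast}} = \dotp{(\vec v + \lambda\vec u)}{\vec v^{\ast}}/D \ge a + t/2$.

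I would split on the sign of $a + 2t$. If $a + 2t \le 0$: writing $N := \dotp{(\vec v + \lambda\vec u)}{\vec v^{\ast}} \ge a + 2t$, one has $N/D \ge a + 2t$ in either case (if $N \le 0$ use $D \ge 1$; if $N > 0$ the ratio is positive while $a + 2t \le 0$), and $a + 2t \ge a + t/2$ since $t > 0$; this is exactly the desired bound $\dotp{\vec v'}{\vec v^{\ast}} \ge a + \lambda^2/2$. If $a + 2t > 0$: then $N \ge a + 2t > 0$ and $D \le \sqrt{1+t}$ give $\dotp{\vec v'}{\vec v^{\ast}} \ge (a + 2t)/\sqrt{1+t}$, so it suffices to verify $(a+2t)/\sqrt{1+t} \ge a + t/2$, equivalently $a + 2t \ge (a + t/2)\sqrt{1+t}$.

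For that last inequality I would use $\sqrt{1+t} \le 1 + t/2$ (square both sides). If $a + t/2 < 0$ the right-hand side is negative and the left-hand side positive, so it holds trivially. If $a + t/2 \ge 0$, then $(a + t/2)\sqrt{1+t} \le (a+t/2)(1+t/2) = a + (a/2 + 1/2 + t/4)\,t$, and since $a \le 1$ and $t \le 1/4$ the coefficient satisfies $a/2 + 1/2 + t/4 \le 1/2 + 1/2 + 1/16 < 2$, so the expression is at most $a + 2t$. This finishes both cases and yields $\dotp{\vec v'}{\vec v^{\ast}} \ge a + \lambda^2/2 = \dotp{\vec v}{\vec v^{\ast}} + \lambda^2/2$.

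I do not expect a genuine obstacle here: the argument is a couple of lines of algebra plus a case split. The only thing requiring care is bookkeeping of constants — one must keep the full numerator bound $a + 2t$ rather than something weaker, must not discard $\snorm{2}{\vec u} \le 1$ when estimating $D$, and must use $t \le 1/4$ (automatic from Cauchy--Schwarz) to close the final numeric comparison. If one prefers to avoid the case split, the substitution $s := a + t/2$ turns the target into $(s + 3t/2)/\sqrt{1+t} \ge s$, which one can check directly for all real $s$ and $t \in (0,1/4]$, but the two-case version above is the most transparent.
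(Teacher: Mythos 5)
Your proof is correct, and at its core it is the same elementary computation as the paper's: both expand $\snorm{2}{\vec v+\lambda\vec u}^2=1+\lambda^2\snorm{2}{\vec u}^2$ using $\dotp{\vec u}{\vec v}=0$, lower bound the numerator by $\dotp{\vec v}{\vec v^\ast}+\lambda c$, and close with a numeric comparison exploiting $\lambda=c/2$ and $c\leq 1$ (forced by Cauchy--Schwarz). The presentational difference is that the paper decomposes $\vec v^\ast$ along $\vec v$ and $(\vec v^\ast)^{\perp_{\vec v}}$ and works with $\sin\theta,\cos\theta$, whereas you manipulate the inner products directly and split on the sign of $a+2t$. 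Your case split is in fact a small improvement in rigor: the paper's displayed step replacing the denominator $\snorm{2}{\vec v+\lambda\vec u}$ by its upper bound $1+\lambda^2\snorm{2}{\vec u}^2$ under the term $\cos\theta/\snorm{2}{\vec v+\lambda\vec u}$ is only valid when $\cos\theta=\dotp{\vec v}{\vec v^\ast}\geq 0$, a case distinction the paper does not make explicit; your argument handles $\dotp{\vec v}{\vec v^\ast}<0$ cleanly (there the bound $N/D\geq a+2t$ via $D\geq 1$, or positivity of the ratio, does the job). So the two proofs buy essentially the same thing, with yours being marginally more careful about signs at no extra cost.
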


\begin{algorithm}[h]
    \caption{The Biased Random Walk for Learning Halfspaces with General Massart Noise, see \Cref{thm:benign}. }
    \label{alg:benign-noise}
    
    \centering
    \fbox{\parbox{6in}{
{\bf Input:} 
            \begin{enumerate}
                \item  $\eps,\delta>0$.
                \item  Sample access to $\D$ with standard normal $\x$-marginal which 
                satisfies the general Massart noise condition with respect to the hypothesis $f(\x)$.
            \end{enumerate}
            {\bf Output:}  A hypothesis  $h(\x)$ such that 
            $\pr_{(\x,y)\sim\D}[h(\x)\neq y]\leq \opt +\eps$, with probability $1-\delta$.
            \\
            
            {\bf Define:} $L=\{\}$, $N=d^{C\log(1/\eps)}\log(1/\delta)$, $\lambda=\poly(\eps)/C$, $T=2^{C\poly(1/\eps)}$, for $C>0$ a sufficiently large constant. 
            \begin{enumerate}
                \item Initialize $\vec w = \vec e_1$.
                \item  Draw $N$  samples and compute to be the empirical distribution  $\widehat{D}$.
                \item Repeat $T$ times
                \begin{enumerate}
                    \item Using $\vec w$ on \Cref{alg:chow}, generate vector space $V$.
                    \item Pick a random unit vector $\vec v\in V$.
                    \item  Update current hypothesis $\vec w$ by
                    $ \vec w \gets \frac{\vec w + \lambda \vec v}{\|\vec w + \lambda \vec v\|_2}\;.$
                    \item Update list of vectors $L\gets L\cup \{\vec w\}$.
                \end{enumerate}
                \item {\bf Return:} $h(\x)=\sign(\hat{\vec w} \cdot \x)$, where $\hat{\vec w} \in S$ and minimizes the error with respect $y$, i.e.,
                $$ \hat{\vec w}\gets \argmin_{\vec w\in L} \pr_{(\x,y)\sim \widehat{\D}}[\sign(\vec w \cdot \x)\neq y]\;.$$
            \end{enumerate}
            
    }}
    
\end{algorithm}

We need the following standard fact that bounds from below the correlation of any vector with a random one.
\begin{fact}[see, e.g., Remark 3.2.5 of \cite{Ver18}]\label{fct:random_initialization}
    Let $\vec v$ be a unit vector in $\R^d$. For a random unit vector $\vec u\in \R^d$, with at least $1/3$ probability,
    it holds $\dotp{\vec v}{\vec u} \gtrsim 1/\sqrt{d}$.
\end{fact}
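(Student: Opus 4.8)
The plan is to reduce the statement to a one‑sided anticoncentration bound for a single coordinate of a uniformly random point on the sphere, and then establish that bound via the standard Gaussian representation of the uniform measure. First, since the uniform distribution on $\mathbb{S}^{d-1}$ is rotationally invariant, I would assume without loss of generality that $\vec v = \vec e_1$, so that $\dotp{\vec v}{\vec u} = \vec u_1$; it then suffices to produce a universal constant $c>0$ with $\pr_{\vec u}[\vec u_1 \geq c/\sqrt d] \geq 1/3$ for every $d \geq 1$. Next, I would write $\vec u = \vec g/\snorm{2}{\vec g}$ with $\vec g \sim \normal(\vec 0, \vec I_d)$, which has exactly the uniform law on the sphere, and fix a large absolute constant $C$ (e.g.\ $C = 16$). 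On the event $\{\vec g_1 \geq c\sqrt{C}\} \cap \{\snorm{2}{\vec g}^2 \leq C d\}$ we get $\vec u_1 = \vec g_1/\snorm{2}{\vec g} \geq c\sqrt{C}/\sqrt{Cd} = c/\sqrt d$, so
\[
\pr_{\vec u}\!\left[\vec u_1 \geq \tfrac{c}{\sqrt d}\right] \;\geq\; \pr\!\left[\vec g_1 \geq c\sqrt{C}\right] - \pr\!\left[\snorm{2}{\vec g}^2 > C d\right] .
\]

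For the two terms on the right I would argue as follows. Since $\vec g_1$ is a standard Gaussian, $\pr[\vec g_1 \geq c\sqrt{C}] \to 1/2$ as $c \to 0$, so choosing $c$ a sufficiently small absolute constant (depending only on $C$) forces $\pr[\vec g_1 \geq c\sqrt C] \geq 0.45$. For the norm term, $\snorm{2}{\vec g}^2$ is a $\chi^2_d$ variable with mean $d$, and the standard Chernoff bound $\pr[\snorm{2}{\vec g}^2 \geq t d] \leq e^{-\frac d2(t - 1 - \ln t)}$ gives $\pr[\snorm{2}{\vec g}^2 > C d] \leq e^{-\Omega(d)} \leq 0.1$ uniformly over all $d \geq 1$ once $C$ is a large enough constant. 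Combining the two estimates yields $\pr_{\vec u}[\vec u_1 \geq c/\sqrt d] \geq 0.45 - 0.1 > 1/3$, which is exactly the desired conclusion with implied constant $c$.

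The only real subtlety — and the place where one must be a little careful rather than invoke a black box — is that this is a \emph{one-sided} bound: by symmetry $\pr[\vec u_1 \geq c/\sqrt d] = \tfrac12 \pr[\vec u_1^2 \geq c^2/d]$, so the cheap second-moment tools (Chebyshev or Paley--Zygmund applied to $\vec u_1^2$, using $\E[\vec u_1^2] = 1/d$ and $\E[\vec u_1^4] = 3/(d(d+2))$) only deliver a constant of order $1/6$, strictly below the required $1/3$. What pushes the probability past $1/3$ is precisely the \emph{exponential} (rather than polynomial) concentration of $\snorm{2}{\vec g}$, available for every dimension and not merely asymptotically; one then only needs to check that all estimates are uniform in $d$, which is routine (and the boundary case $d=1$ is immediate since there $\vec u_1 = \pm 1$ with probability $1/2$ each). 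An alternative route avoids the Gaussian representation entirely, using the explicit density of $\vec u_1$, proportional to $(1-s^2)^{(d-3)/2}$ on $[-1,1]$, together with a Gamma‑ratio bound showing this density is $O(\sqrt d)$ at the origin, so that $\pr[0 \leq \vec u_1 \leq c/\sqrt d] \leq O(c)$ and hence $\pr[\vec u_1 \geq c/\sqrt d] \geq 1/2 - O(c) > 1/3$; either way the mechanism is the same.
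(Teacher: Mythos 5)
Your proposal is correct: the paper does not prove this fact at all (it simply cites Remark 3.2.5 of \cite{Ver18}), and your argument is precisely the standard one underlying that citation — represent the uniform point as $\vec g/\snorm{2}{\vec g}$ with $\vec g \sim \normal(\vec 0,\vec I)$, use the one-dimensional Gaussian lower bound $\pr[\vec g_1 \geq c\sqrt{C}] \geq 0.45$ for a small absolute $c$, and the $\chi^2_d$ Chernoff tail $\pr[\snorm{2}{\vec g}^2 > Cd] \leq e^{-\frac d2 (C-1-\ln C)} \leq 0.1$ uniformly in $d$ for, say, $C=16$, giving $0.45-0.1 > 1/3$. The constants check out (including the $d=1$ boundary case), and your side remark that a pure second-moment/Paley--Zygmund argument only yields about $1/6$ is accurate, so the appeal to exponential concentration of $\snorm{2}{\vec g}$ is indeed the necessary ingredient.
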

We are now ready to prove the \Cref{thm:benign}.
\begin{proof}[Proof of \Cref{thm:benign}]
    First, let $\widehat\D_N$ be the empirical distribution of $\D$ using $N\in \Z_+$ samples. Let
    $\vec w^{(i)}$ be the current guess. If $\pr_{(\x,y)\sim \D}[\sign(\vec
    w^{(i)}\cdot \x)\neq y]\geq \opt +\eps$ and $\theta(\vec w\ith,\wstar \leq \pi-\eps)$, then from \Cref{prop:warm-start}  with $N_1=d^{O(\log(1/\eps))}\log(1/\delta_1)$
    samples from $\widehat \D$ and $\poly(N_1,d)$ time, we compute a subspace $V$ such that $\|\proj_V(\vec w^\ast)\|_2\geq \poly(\eps)$ and from \Cref{fct:random_initialization}, we get a random a unit vector $\vec v\in V$
    such that  $\vec v \cdot \wstar =\poly(\eps)$ and $\vec v \cdot \vec w^{(i)}=0$,
     with probability $(1-\delta_1)/3$. We call this
    event $\mathcal{E}_i$. By conditioning on the event $\mathcal{E}_i$; from \Cref{lem:corr-improv},
    after we update our current hypothesis vector $\vec w^{(i)}$ with $\vec v$, we get a unit vector $\vec
    w^{(i+1)}$ such that $\vec w^{(i+1)} \cdot \wstar \geq \vec w^{(i+1)} \cdot
    \wstar + \poly(\eps)$.
    
    After running the update step $k$ times and conditioning on the events
    $\mathcal{E}_1,\ldots, \mathcal{E}_k$, then $\vec w^{(k)} \cdot \wstar \geq
    \vec w^{(1)} \cdot \wstar + k\,\poly(\eps)$; therefore, for $k=\poly(1/\eps)$, we get
    that the vector $\vec w^{(k)}$ that is competitive with the optimal hypothesis, i.e.,
    $\pr_{(\x,y)\sim \D}[\sign(\vec w^{(k)}\cdot \x)\neq y]\leq
    \opt +\eps$ or $\theta(\vec w\ith,\wstar \in (\pi-\eps,\pi)$ which means that $- \vec w^{(k)}$ is competitive with the optimal hypothesis, see \Cref{fct:gaussian-halfspaces}. The probability that all the events $\mathcal{E}_1,\ldots,
    \mathcal{E}_k$ hold simultaneously is at least $(1-k\delta_1)+(1/3)^k$, and thus by choosing $\delta_1\leq 1/(3k)$, the
    probability of success is at least $\delta_2=(1/3)^k$. By running the algorithm above $M=\log(1/
    \delta) /\delta_2$ times, we get a list of $2M$ vectors, that list contains all the $\vec w\ith$ that generated in every step of the algorithm and the $-\vec w\ith$. By applying Hoeffding's inequality, we get that the
    list $L$ of $2M$ vectors contains a unit vector $\vec w$ such that
    $\pr_{(\x,y)\sim \D}[\sign(\vec w^{(k)}\cdot \x)\neq y]\leq \opt +\eps$, with
    probability $1-\delta/2$.  Finally, to evaluate all the vectors from the list,
    we need a few samples, from the distribution $\D$ to obtain the best
    among them, i.e., the one that minimizes the zero-one loss.
    
    The size of the list of candidates is at most $M\leq 2^{\poly(1/\eps)}\log(1/\delta)$.  Therefore,
    from Hoeffding's inequality, it follows that $ O(\poly(1/\eps)\log(1/\delta))$
    samples are sufficient to guarantee that the excess error of the chosen
    hypothesis is at most $\eps$ with probability at least $1-\delta/2$. Thus, with
    $N=d^{\log(1/\eps)}\log(1/\delta)$ samples and $\poly(N,d,
    2^{\poly(1/\eps)})$ runtime, we get a hypothesis $\hat{\vec w}$ such that
    $\pr_{(\x,y)\sim \D}[\sign(\hat{\vec w}\cdot \x)\neq y]\leq \opt +\eps$ with
    probability $1-\delta$. This
    completes the proof.
\end{proof}
 \newpage
 \section{Statistical Query Lower Bounds for Learning Massart Halfspaces}
\label{sec:massart-biased-lower-bound}
\subsection{Background on SQ Lower Bounds}
\label{ssec:SQ-prelims}

Our lower bound applies for the class of Statistical Query (SQ) algorithms.
Statistical Query (SQ) algorithms are a class of algorithms that are allowed
to query expectations of bounded functions of the underlying distribution
rather than directly access samples. Formally, an SQ algorithm has access to the following oracle.

\begin{definition} \label{def:stat-oracle}\label{def:stat}
Let $\cal{D}$ be a distribution on labeled examples supported on $X \times \{-1, 1\}$, for some domain $X$.
A statistical query is a function $q: X \times \{-1, 1\} \to [-1, 1]$.  We define
\textsc{STAT}$(\tau)$ to be the oracle that given any such query $q(\cdot, \cdot)$
outputs a value $v$ such that $|v - \E_{(\vec x, y) \sim \cal{D}}\left[q(\vec x, y)\right]| \leq \tau$,
where $\tau>0$ is the tolerance parameter of the query.
\end{definition}

The SQ model was introduced by Kearns~\cite{Kearns:98} in the context of supervised learning as a
natural restriction of the PAC model~\cite{Valiant:84}.  Subsequently, the SQ model has been
extensively studied in a plethora of contexts (see, e.g.,~\cite{Feldman16b} and references therein).
The class of SQ algorithms is rather broad and captures a range of known supervised learning
algorithms.  More broadly, several known algorithmic techniques in machine learning are known to be
implementable using SQs. These include spectral techniques, moment and tensor methods, local search
(e.g., Expectation Maximization), and many others (see, e.g.,~\cite{FeldmanGRVX17, FeldmanGV17}).
Recent work~\cite{BBHLS21} has shown a near-equivalence between the SQ model and low-degree
polynomial tests

\paragraph{Statistical Query Dimension}
To bound the complexity of SQ learning a concept class $\cal C$,
we use the SQ framework for problems over distributions~\cite{FeldmanGRVX17}.
\begin{definition}[Decision Problem over Distributions] \label{def:decision}
    Let $\D$ be a fixed distribution and $\mathfrak D$ be a family of distributions.
    We denote by $\mathcal{B}(\mathfrak D, \D)$ the decision (or hypothesis testing) problem
    in which the input distribution $\D'$ is promised to satisfy either
    (a) $\D' = \D$ or (b) $\D' \in \mathfrak D$, and the goal
    is to distinguish between the two cases.
\end{definition}

\begin{definition}[Pairwise Correlation] \label{def:pc}
    The pairwise correlation of two distributions with probability density functions
    $\D_1, \D_2 : \R^n \to \R_+$ with respect to a distribution with
    density $\D: \R^n \to \R_+$, where the support of $\D$ contains
    the supports of $\D_1$ and $\D_2$, is defined as
    $\chi_{\D}(\D_1, \D_2) \eqdef \int_{\R^n} \D_1(\bx) \D_2(\x)/D(\bx)\, \d\bx - 1$.
\end{definition}

\begin{definition} \label{def:uncor}
    We say that a set of $s$ distributions $\mathfrak{D} = \{\D_1, \ldots , \D_s \}$
    over $\R^n$ is $(\gamma, \beta)$-correlated relative to a distribution $\D$
    if $|\chi_\D(\D_i, \D_j)| \leq \gamma$ for all $i \neq j$,
    and $|\chi_\D(\D_i, \D_i)| \leq \beta$ for all $i$.
\end{definition}

\begin{definition}[Statistical Query Dimension] \label{def:sq-dim}
    For $\beta, \gamma > 0$ and a decision problem $\mathcal{B}(\mathfrak D, \D)$,
    where $\D$ is a fixed distribution and $\mathfrak D$ is a family of distributions,
    let $s$ be the maximum integer such that there exists a finite set of distributions
    $\mathfrak{D}_\D \subseteq \mathfrak D$ such that
    $\mathfrak{D}_\D$ is $(\gamma, \beta)$-correlated relative to $\D$
    and $|\mathfrak{D}_\D| \geq s.$ The {\em Statistical Query dimension}
    with pairwise correlations $(\gamma, \beta)$ of $\mathcal{B}$ is defined to be $s$,
    and denoted by $\mathrm{SD}(\mathcal{B},\gamma,\beta)$.
\end{definition}

\begin{lemma}[Corollary 3.12 of \cite{FeldmanGRVX17}] \label{lem:sq-from-pairwise}
    Let $\mathcal{B}(\mathfrak D, \D)$ be a decision problem, where $\D$ is the reference distribution
    and $\mathfrak{D}$ is a class of distributions. For $\gamma, \beta >0$,
    let $s= \mathrm{SD}(\mathcal{B}, \gamma, \beta)$.
    For any $\gamma' > 0$, any SQ algorithm for $\mathcal{B}$ requires queries of tolerance at most $\sqrt{\gamma + \gamma'}$ or makes at least
    $s  \gamma' /(\beta - \gamma)$ queries.
\end{lemma}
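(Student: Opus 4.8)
The statement is a known result (Corollary 3.12 of \cite{FeldmanGRVX17}), so in the paper it suffices to cite it; were one to prove it, the plan is the standard near-orthogonality argument underlying the statistical-query-dimension method. First I would set up the $L^2(\D)$ geometry: take a witnessing family $\mathfrak D_\D=\{\D_1,\dots,\D_s\}$ that is $(\gamma,\beta)$-correlated relative to $\D$, write each density ratio as $\D_i/\D=1+g_i$ so that $\E_\D[g_i]=0$, $\langle g_i,g_i\rangle_\D=\chi_\D(\D_i,\D_i)\le\beta$, and $\langle g_i,g_j\rangle_\D=\chi_\D(\D_i,\D_j)$ with $|\langle g_i,g_j\rangle_\D|\le\gamma$ for $i\ne j$, where $\langle f,g\rangle_\D=\E_{\x\sim\D}[f(\x)g(\x)]$. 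For a query $q:X\times\{\pm1\}\to[-1,1]$ I would pass to its $\D$-centering $h=q-\E_\D[q]$, so that $\|h\|_\D^2=\var_\D[q]\le1$ and, crucially, $\E_{\D_i}[q]-\E_\D[q]=\langle h,g_i\rangle_\D$.

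Next I would argue that a correct SQ algorithm must, along its ``reference path,'' separate every $\D_i$. Fixing the algorithm's internal randomness (the randomized case reduces to this by averaging), run it on the reference distribution $\D$ and answer each query $q$ by the exact value $v=\E_\D[q]$; this is a legal $\mathrm{STAT}(\tau)$ response, so the algorithm produces a fixed transcript with centered queries $h_1,\dots,h_T$ and, being correct, outputs ``$\D'=\D$.'' If some $\D_i$ satisfied $|\langle h_t,g_i\rangle_\D|\le\tau$ for all $t$, then those same answers would also be legal $\mathrm{STAT}(\tau)$ responses when the input is $\D_i$, forcing the identical transcript and output --- contradicting correctness on $\D_i\in\mathfrak D$. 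Hence for each $i\in[s]$ there is some $t(i)$ with $|\langle h_{t(i)},g_i\rangle_\D|>\tau$.

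The crux is then a counting bound: for a single centered query $h$ with $\|h\|_\D\le1$, few of the $\D_i$ can have $|\langle h,g_i\rangle_\D|>\tau$. Let $H$ be that set, $m=|H|$, $\sigma_i=\sgn\langle h,g_i\rangle_\D$, and $z=\sum_{i\in H}\sigma_i g_i$; then $\langle h,z\rangle_\D=\sum_{i\in H}|\langle h,g_i\rangle_\D|>m\tau$, while expanding the square gives $\|z\|_\D^2\le m\beta+m(m-1)\gamma$, so by Cauchy--Schwarz $m^2\tau^2<\|h\|_\D^2\|z\|_\D^2\le m\beta+m(m-1)\gamma$, i.e.\ $m(\tau^2-\gamma)<\beta-\gamma$. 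Taking $\tau^2>\gamma+\gamma'$ yields $m<(\beta-\gamma)/\gamma'$. Combining with the previous paragraph, the $T$ queries on the reference path must jointly separate all $s$ distributions while each separates fewer than $(\beta-\gamma)/\gamma'$ of them, whence $T>s\gamma'/(\beta-\gamma)$; so the algorithm either uses a query of tolerance at most $\sqrt{\gamma+\gamma'}$ or makes at least $s\gamma'/(\beta-\gamma)$ queries.

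The one point requiring care --- and the only real obstacle --- is making the indistinguishability step rigorous for adaptive, randomized algorithms: defining the reference-path transcript precisely and handling the averaging over the algorithm's internal coins (as well as over the adversarial freedom of the $\mathrm{STAT}$ oracle). This is done exactly as in \cite{FeldmanGRVX17}; everything else is the two-line Cauchy--Schwarz estimate above.
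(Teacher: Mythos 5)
The paper does not prove this lemma; it is invoked verbatim as Corollary 3.12 of \cite{FeldmanGRVX17}, so there is no in-paper argument to diverge from. Your reconstruction is the standard (and correct) proof from that reference: the centering $h=q-\E_\D[q]$ so that query deviations become inner products with the $g_i$, the transcript/indistinguishability step forcing every $\D_i$ to be separated by some query, and the Cauchy--Schwarz counting bound $m(\tau^2-\gamma)<\beta-\gamma$ giving the stated trade-off; the one caveat you flag (handling adaptivity and randomness rigorously) is indeed the only delicate point and is handled in \cite{FeldmanGRVX17} exactly as you indicate.
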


We next introduce some definitions related to the hidden-direction proof machinery that we use.
We start with the following definition:

\begin{definition} [High-Dimensional Hidden Direction Distribution] \label{def:pv-hidden}
    {For a distribution $A$ on the real line with probability density function $A(z)$ and}
    a unit vector $\vec v \in \R^d$, consider the distribution over $\R^d$ with probability density function
    \[ 
    \p^A_{\vec v}(\x) = A(\vec v \cdot \x) \exp\left(-\|\x - (\vec v \cdot \x) \vec v\|_2^2/2\right)/(2\pi)^{(d-1)/2}\,.
    \]
    That is, $\p_{\vec v}$ is the product distribution whose orthogonal projection onto the direction of $\vec v$ is $A$,
    and onto the subspace perpendicular to $\vec v$ is the standard $(d-1)$-dimensional normal distribution.
\end{definition}

Since we will be using mixtures of two hidden direction distributions we introduce the following notation.
\begin{definition}[Mixture of Hidden Direction Distributions]
    \label{def:hid-distr}
    Let $A$ and $B$ be distributions on $\R$. For $d \in \Z_+$ and a unit vector $\vec v\in \R^d$,
    define the distribution $\p^{A,B,p}_{\vec v}$ on $\R^d\times \{\pm 1\}$ that returns a sample
    from $(\p^A_{\vec v},1)$ with probability $p$ and a sample from $(\p^B_{\vec v},-1)$ with probability $1-p$.
\end{definition}

We will also use the following fact showing that there exists an exponentially large set of $d$-dimensional 
unit vectors all of which have small correlation.  Moreover, one can show that a set of random vectors 
on the unit sphere will satisfy this property with nontrivial probability.
\begin{fact}[Lemma~3.7 of \cite{DKS17-sq}]\label{fct:near-orth-vec}
    For any constant $0<c<1/2$ there exists a set $S$ of $2^{\Omega(d^c)}$ unit vectors in $\R^d$ such that any pair $\vec u, \vec v \in S$, with $\vec u \neq \vec v$, satisfies $|\vec u\cdot \vec v|\lesssim d^{c-1/2}$.
\end{fact}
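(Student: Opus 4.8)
The plan is to prove the statement by the probabilistic method: we sample the vectors of $S$ independently and uniformly from the unit sphere $\mathbb{S}^{d-1}$ and show that, with positive probability, every pair is nearly orthogonal. Concretely, fix a small constant $\delta = \delta(c) > 0$ to be chosen at the end, set $N = \lceil 2^{\delta d^c}\rceil$, and draw $\vec u^{(1)}, \dots, \vec u^{(N)}$ i.i.d.\ uniformly from $\mathbb{S}^{d-1}$. The goal is to show that, for $d$ large enough, with nonzero probability all $\binom{N}{2}$ pairwise inner products are bounded in absolute value by $C\,d^{c-1/2}$ for a suitable absolute constant $C$; an arbitrary realization of this event then gives the desired $S$ (after discarding repeated vectors, which occur with probability $0$).

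The key ingredient I would invoke is the standard concentration of the uniform measure on the sphere. By rotational invariance, for any fixed $i \neq j$ the random variable $\vec u^{(i)} \cdot \vec u^{(j)}$ is distributed as the first coordinate of a uniform random point on $\mathbb{S}^{d-1}$, which satisfies a sub-Gaussian tail bound of the form $\pr[\,|\vec u^{(i)} \cdot \vec u^{(j)}| \geq t\,] \leq 2 e^{-(d-1) t^2/2}$ for all $t \in (0,1)$ (this follows, e.g., by representing the uniform point as a standard Gaussian in $\R^d$ divided by its norm and using Gaussian concentration together with the fact that $\|\vec g\|_2 \gtrsim \sqrt d$ with high probability). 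Taking $t = C\,d^{c-1/2}$ makes the exponent equal to $-(d-1)C^2 d^{2c-1}/2 = -\Omega(d^{2c})$, so each fixed pair violates the near-orthogonality bound with probability at most $e^{-\Omega(d^{2c})}$.

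Finally I would take a union bound over all $\binom{N}{2} \leq N^2 = 2^{2\delta d^c}$ pairs: the probability that some pair has $|\vec u^{(i)} \cdot \vec u^{(j)}| > C\,d^{c-1/2}$ is at most $2^{2\delta d^c}\, e^{-\Omega(d^{2c})}$. Since $2c > c$, the exponent $-\Omega(d^{2c})$ dominates $2\delta d^c \ln 2$ as $d \to \infty$ for every fixed $\delta$, so this probability is strictly below $1$ once $d$ exceeds some threshold $d_0(c,\delta)$; for the finitely many $d \leq d_0$ the statement is vacuous (take $S$ a single unit vector). This yields $S$ with $|S| \geq 2^{\delta d^c} = 2^{\Omega(d^c)}$ and pairwise correlations $O(d^{c-1/2}) = O(d^{c-1/2}) \lesssim d^{c-1/2}$, as claimed. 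The only point that really needs care is this exponent comparison — that the per-pair failure probability decays like $e^{-\Omega(d^{2c})}$ while the number of pairs grows only like $2^{O(d^c)}$ — which is precisely what forces (and is comfortably satisfied by) $0 < c < 1/2$, and which also explains why $2^{\Omega(d^c)}$ is the natural cardinality to aim for. One could instead give an explicit construction via binary codes with good minimum distance, but the probabilistic argument is shorter and, as noted in the text preceding the statement, additionally shows that a uniformly random family of $2^{\Omega(d^c)}$ unit vectors has the near-orthogonality property with probability $1 - o(1)$.
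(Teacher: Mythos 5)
Your argument is correct: the sub-Gaussian tail for the inner product of two independent uniform unit vectors gives a per-pair failure probability of $e^{-\Omega(d^{2c})}$, which comfortably beats the union bound over $2^{O(d^c)}$ pairs precisely because $c<1/2$. The paper itself imports this statement without proof from Lemma~3.7 of \cite{DKS17-sq}, and your probabilistic construction is essentially the same argument given there, so there is nothing to fix beyond noting that the treatment of small $d$ is just the usual asymptotic reading of the $\Omega(\cdot)$ and $\lesssim$ notation.
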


The following fact shows that given a one-dimensional marginal $A$ 
that matches $m$ moments with the standard normal, 
the correlation between two hidden direction distributions with
directions  $\vec v$ and $\vec u$ is bounded above roughly by the 
$(m{+}1)$-th power of the correlation of their corresponding directions.
Therefore, using \Cref{fct:near-orth-vec}, we obtain that there exists an exponentially
large (in the dimension $d$) set of distributions with pairwise correlation roughly $d^{-m}$.

\begin{fact}[Lemma~3.4 from~\cite{DKS17-sq}] \label{fct:cor}
    Let $m \in \Z_+$.
    If the univariate distribution $A$ over $\R$ agrees with the first $m$ moments of $\normal(0,1)$,
    then for all $\vec v,\vec u \in \R^d$, we have that
    \begin{equation*}
        |\chi_{\normal(\vec 0,\vec I)}(\p^A_{\vec v}, \p^A_{\vec u})| \leq |\vec v \cdot \vec u|^{m+1} \chi^2(A, \normal(0,1))\;.
    \end{equation*}
\end{fact}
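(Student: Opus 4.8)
The plan is to collapse the $d$-dimensional computation to a one-dimensional Hermite expansion and use the moment-matching hypothesis to annihilate the low-degree terms. First I would note that, by \Cref{def:pv-hidden}, $\p^A_{\vec v}$ and the standard normal $\normal$ share the same Gaussian factor on the hyperplane orthogonal to $\vec v$, so their likelihood ratio is purely one-dimensional: $\p^A_{\vec v}(\x)/\normal(\x) = A(\vec v\cdot\x)/\phi(\vec v\cdot\x)$, where $\phi$ denotes the standard normal density. Assuming $\chi^2(A,\normal(0,1))<\infty$ (the statement being trivial otherwise), we may expand $A(z)/\phi(z) = \sum_{k\ge 0} c_k h_k(z)$ in the $L^2(\phi)$-orthonormal Hermite basis, with $c_k = \E_{z\sim A}[h_k(z)]$. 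Since $h_0,\dots,h_m$ span the polynomials of degree at most $m$ and $A$ matches the first $m$ moments of $\normal(0,1)$, this forces $c_0 = 1$ and $c_k = 0$ for $1\le k\le m$; moreover, Parseval's identity gives $\sum_{k\ge 1} c_k^2 = \E_{z\sim\normal}[(A(z)/\phi(z)-1)^2] = \chi^2(A,\normal(0,1))$.

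Next I would evaluate the pairwise correlation directly from \Cref{def:pc}:
\[
\chi_{\normal}(\p^A_{\vec v},\p^A_{\vec u}) + 1
= \E_{\x\sim\normal}\left[\frac{\p^A_{\vec v}(\x)}{\normal(\x)}\cdot\frac{\p^A_{\vec u}(\x)}{\normal(\x)}\right]
= \E_{\x\sim\normal}\left[\Bigl(1+\sum_{k>m} c_k h_k(\vec v\cdot\x)\Bigr)\Bigl(1+\sum_{j>m} c_j h_j(\vec u\cdot\x)\Bigr)\right].
\]
The key structural input is the Hermite product formula: for unit vectors $\vec v,\vec u$ the pair $(\vec v\cdot\x,\vec u\cdot\x)$ is a centered bivariate Gaussian with unit marginals and correlation $\vec v\cdot\vec u$, whence $\E_{\x\sim\normal}[h_k(\vec v\cdot\x)\,h_j(\vec u\cdot\x)] = \delta_{kj}\,(\vec v\cdot\vec u)^k$, while $\E_{\x\sim\normal}[h_k(\vec v\cdot\x)] = 0$ for $k\ge 1$. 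Substituting and cancelling the constant term yields the exact identity $\chi_{\normal}(\p^A_{\vec v},\p^A_{\vec u}) = \sum_{k>m} c_k^2\,(\vec v\cdot\vec u)^k$.

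Finally, since $\vec v,\vec u$ are unit vectors, $|\vec v\cdot\vec u|\le 1$, so $|\vec v\cdot\vec u|^k\le|\vec v\cdot\vec u|^{m+1}$ for every $k>m$, and therefore
\[
\bigl|\chi_{\normal}(\p^A_{\vec v},\p^A_{\vec u})\bigr|
\le \sum_{k>m} c_k^2\,|\vec v\cdot\vec u|^k
\le |\vec v\cdot\vec u|^{m+1}\sum_{k\ge 1} c_k^2
= |\vec v\cdot\vec u|^{m+1}\,\chi^2(A,\normal(0,1)),
\]
which is the claimed bound. The one place that needs genuine care is the Hermite product formula $\E_{\x\sim\normal}[h_k(\vec v\cdot\x)h_j(\vec u\cdot\x)] = \delta_{kj}(\vec v\cdot\vec u)^k$ (and keeping the normalization of the $h_k$ consistent): I would either cite it or derive it in one line by comparing coefficients of $s^k t^j$ in the identity $\E_{\x\sim\normal}[e^{s(\vec v\cdot\x)-s^2/2}\,e^{t(\vec u\cdot\x)-t^2/2}] = e^{st(\vec v\cdot\vec u)}$, itself immediate from the generating function $\sum_k h_k(z)t^k/\sqrt{k!} = e^{tz-t^2/2}$ and the Gaussian moment generating function. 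Everything else is termwise manipulation of series that converge absolutely thanks to $\chi^2(A,\normal(0,1))<\infty$, so the main obstacle here is bookkeeping rather than any substantive difficulty.
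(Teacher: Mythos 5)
Your proof is correct: the reduction of the likelihood ratio to the one-dimensional function $A(\vec v\cdot\x)/\phi(\vec v\cdot\x)$, the vanishing of the Hermite coefficients $c_1,\dots,c_m$ from moment matching, the identity $\E_{\x\sim\normal}[h_k(\vec v\cdot\x)h_j(\vec u\cdot\x)]=\delta_{kj}(\vec v\cdot\vec u)^k$ for unit vectors, and the final Parseval bound all check out. The paper itself gives no proof of this statement — it is imported verbatim as Lemma~3.4 of \cite{DKS17-sq} — and your Hermite-expansion argument is essentially the standard proof given in that reference, so there is nothing further to reconcile.
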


\subsection{SQ Lower Bound for Learning Halfspaces with Constant-Bounded Massart Noise}\label[sub]{sub:massart-lower-bound}

The problem of learning homogeneous halfspaces with Massart Noise under the Gaussian distribution
is by now well understood.  All previous algorithms fit in the SQ framework and show that the SQ complexity
of learning halfspaces with Massart noise is polynomial in the dimension $d$, the accuracy $\eps$, and the
noise rate $\eta$.  
In this section, we show that if the optimal halfspace $f$ is $\gamma$-biased, i.e., 
$\pr[f(\x) = +1] = \gamma$, the SQ complexity of learning $f$ is quasi-polynomial in the bias $\gamma$, 
that is $d^{\Omega(\log(1/\gamma))}$ SQ queries are required.  We prove the following theorem.

\begin{theorem}\label{thm:lower_bound}
    Let $\D$ be a distribution on $\R^d\times\{\pm1\}$ with standard normal $\x$-marginal that
    satisfies the $\eta$-Massart noise condition with parameter $\eta \in (0, 1/2)$  with respect to
    some unknown $(1-\gamma)$-biased optimal halfspace $f(\x)$, for some $\gamma>0$ less than a
    sufficiently small constant.  Any SQ algorithm that, for any such distribution $\D$, learns a
    hypothesis $h:\R^d \mapsto \{\pm 1\}$ such that $\pr_{(\x,y)\sim \D}[h(\x)\neq y]\leq \opt+\eps$
    for $\eps \leq \gamma (1- 2 \eta)$, either requires queries with tolerance at most
    $d^{-\Omega(\log(\eta/ \gamma))}$ or makes at least $2^{d^{\Omega(1)}}d^{-\log(\eta/ \gamma)}$
    queries.
\end{theorem}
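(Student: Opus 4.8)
The plan is to feed the one-dimensional hard instance of \Cref{pro:tech-overview-sq-construction} (in its detailed form \Cref{pro:construction}) into the ``hidden-direction'' SQ framework of \Cref{def:pv-hidden}--\Cref{def:hid-distr}. First I would invoke that construction to get a distribution $\D^{\mathrm{1d}}$ on $\R\times\{\pm1\}$ with standard normal $z$-marginal that satisfies the $\eta$-Massart condition with respect to a threshold $f(z)=\sgn(z-t^\ast)$ with $\pr_{z\sim\normal}[f(z)=+1]=\gamma$, and for which $\E_{(z,y)\sim\D^{\mathrm{1d}}}[z^k y]=\E[y]\,\E_{z\sim\normal}[z^k]$ for all $k\le m$, where $m\eqdef\Theta(\log(\eta/\gamma))$. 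Writing $A,B$ for the laws of $z$ conditioned on $y=+1,-1$ and $p=\pr[y=+1]$, the identity $pA+(1-p)B=\normal$ together with these moment equalities forces $\E_A[z^k]=\E_B[z^k]=\E_\normal[z^k]$ for $k\le m$, i.e.\ both $A$ and $B$ agree with $\normal$ on the first $m$ moments. The Massart property yields pointwise density bounds $A(z)/\phi(z)\le 1/p$ and $B(z)/\phi(z)\le 1/(1-p)$ with $p\ge\gamma(1-2\eta)$ and $1-p\ge 1/2-\gamma$, so $\chi^2(A,\normal),\chi^2(B,\normal)=\poly(1/(\gamma(1-2\eta)))$.

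Next I would lift to $\R^d$: for a unit vector $\vec v$, set $\D_{\vec v}\eqdef\p^{A,B,p}_{\vec v}$ (\Cref{def:hid-distr}). Its $\x$-marginal is the standard $d$-dimensional normal, and conditioned on $\vec v\cdot\x=z$ the label follows the one-dimensional rule, so $\D_{\vec v}$ satisfies the $\eta$-Massart condition with respect to the $(1-\gamma)$-biased halfspace $\sgn(\vec v\cdot\x-t^\ast)$ and has $\opt(\D_{\vec v})=\E_{z\sim\normal}[\eta(z)]$. Let $\D_0$ be the product distribution with $\x\sim\normal$ and $y$ independent, $\pr[y=+1]=p$ (equivalently $\p^{\normal,\normal,p}_{\vec v}$ for any $\vec v$). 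Taking the near-orthogonal family $S$ of $2^{\Omega(d^{c})}$ unit vectors with pairwise inner products $\lesssim d^{c-1/2}$ from \Cref{fct:near-orth-vec}, a direct computation gives $\chi_{\D_0}(\D_{\vec v},\D_{\vec u})=p\,\chi_{\normal}(\p^A_{\vec v},\p^A_{\vec u})+(1-p)\,\chi_{\normal}(\p^B_{\vec v},\p^B_{\vec u})$; applying \Cref{fct:cor} to each term (this is where the $m$-moment matching of $A$ and $B$ enters) yields $|\chi_{\D_0}(\D_{\vec v},\D_{\vec u})|\le|\vec v\cdot\vec u|^{m+1}\max(\chi^2(A,\normal),\chi^2(B,\normal))\le d^{-\Omega(m)}\poly(1/(\gamma(1-2\eta)))$ for $\vec u\ne\vec v$, while the self-correlations are $\poly(1/(\gamma(1-2\eta)))$. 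Thus $\mathfrak D\eqdef\{\D_{\vec v}:\vec v\in S\}$ is $\big(d^{-\Omega(m)}\poly(1/(\gamma(1-2\eta))),\poly(1/(\gamma(1-2\eta)))\big)$-correlated relative to $\D_0$ and $|\mathfrak D|=2^{\Omega(d^{c})}$.

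It remains to invoke \Cref{lem:sq-from-pairwise} and pass from the decision problem to learning. With these parameters, \Cref{lem:sq-from-pairwise} (after optimizing the auxiliary parameter) shows that $\mathcal B(\mathfrak D,\D_0)$ requires either queries of tolerance $d^{-\Omega(m)}$ or at least $2^{\Omega(d^{c})}d^{-\Omega(m)}$ queries. Then a standard reduction turns an SQ learner with error $\opt+\eps$ into a solver for $\mathcal B(\mathfrak D,\D_0)$: given $\D'$, run the learner to obtain $h$, and query $\pr_{\D'}[h(\x)\ne y]$ with tolerance $\gamma(1-2\eta)/10$; the answer is at most $\opt(\D_{\vec v})+\eps+\gamma(1-2\eta)/10$ in the planted case and at least $\min(p,1-p)-\gamma(1-2\eta)/10$ in the null case. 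Since $\min(p,1-p)-\opt(\D_{\vec v})\ge\gamma(1-2\eta)$ — indeed $p-\E_{z\sim\normal}[\eta(z)]=\E_{z\sim\normal}[\1\{f(z)=+1\}(1-2\eta(z))]\ge\gamma(1-2\eta)$, and the bound for $1-p$ is analogous using the $f=-1$ side — these two ranges are disjoint provided $\eps$ is at most a small enough constant multiple of $\gamma(1-2\eta)$, as in the hypothesis. Hence learning inherits the SQ lower bound of $\mathcal B(\mathfrak D,\D_0)$, and substituting $m=\Theta(\log(\eta/\gamma))$ gives tolerance $d^{-\Omega(\log(\eta/\gamma))}$ and at least $2^{d^{\Omega(1)}}d^{-\log(\eta/\gamma)}$ queries.

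The main obstacle is the construction \Cref{pro:construction} underlying the first step: one must show that the infinite-dimensional feasibility problem ``find a signal function $\beta(\cdot)\colon\R\to[1-2\eta,1]$ with $\E_{z\sim\normal}[f(z)\,p(z)\,\beta(z)]=0$ for every mean-zero polynomial $p$ of degree $\le m$'' is solvable whenever $\gamma\le c^m\eta$ for a sufficiently small absolute constant $c>0$. This calls for an infinite-dimensional theorem of the alternative — so one has to check that the admissible set of signal functions is weak-$*$ compact — which reduces primal feasibility to infeasibility of the dual; the dual asks for a mean-zero degree-$\le m$ polynomial whose sign matches a $(1-\gamma)$-biased Boolean function, which is exactly what \Cref{lem:infisibility-dual} rules out. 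Granting that lemma, everything else is bookkeeping with the hidden-direction machinery; the only remaining subtlety is ensuring that the $\poly(1/(\gamma(1-2\eta)))$ factors from the $\chi^2$-norms are swallowed by the $d^{-\Omega(m)}$ decay of the pairwise correlations when choosing the auxiliary parameter in \Cref{lem:sq-from-pairwise}.
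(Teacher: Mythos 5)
Your proposal is correct and follows essentially the same route as the paper: the one-dimensional moment-matching Massart instance from \Cref{pro:construction} (proved via the infinite-dimensional LP alternative together with \Cref{lem:infisibility-dual}), lifted through the hidden-direction machinery of \Cref{def:hid-distr}, \Cref{fct:near-orth-vec}, and \Cref{fct:cor}, then \Cref{lem:sq-from-pairwise} plus a single error-estimation query to pass from the testing problem to learning, using the gap $\min(p,1-p)-\opt\geq\gamma(1-2\eta)$. The only cosmetic difference is that you obtain the conditional laws $A,B$ and their moment matching from the joint moment conditions, whereas the paper writes down the same densities $A_t,B_t$ explicitly and verifies the matching directly; your bookkeeping of the $\poly(1/(\gamma\beta))$ chi-squared factors matches the paper's treatment as well.
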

\begin{remark}
    {\em We remark that when the bias $\gamma$ is less than $\eps$ one can output the constant guess $-1$ and obtain 
    error at most $\eps$.  Therefore, reasonable values for $\gamma$ are $\Omega(\eps)$.  In the extreme case where 
    $\gamma = \Theta(\eps)$, \Cref{thm:lower_bound} implies a quasi-polynomial SQ lower bound in the accuracy 
    parameter $\eps$.  However, our result is fine-grained with respect to $\gamma$: 
    we show that this quasi-polynomial dependency on the bias is required across the whole regime of $\gamma$.}
\end{remark}

The main structural result of the proof of \Cref{thm:lower_bound} is the proposition that follows.
We prove that we can construct a distribution $\D$ over labeled pairs $(z,y) \in \R \times \{\pm 1\}$ that satisfies the $\eta$-Massart noise assumption with respect to some (biased) halfspace $f$ and whose low-order moments match the moments of
the product distribution of the marginals $\D_z$ and $\D_y$.  In other words, we show that we can construct
and instance $\D$ whose $z$-marginal is a one dimensional Gaussian distribution and $y$ is uncorrelated with 
$z^k$ for any $k$ less than a sufficiently small multiple of $\log(\eta/\gamma)$.

\begin{proposition}\label{pro:construction}
    Fix $\eta \in (0, 1/2)$ and $\gamma>0$ less than a sufficiently small constant.
    There exists a distribution $\D$ on $(z,y) \in \R\times\{\pm 1\}$ whose $z$-marginal is the standard 
    normal distribution with the following properties.
    \begin{itemize}
        \item  $\D$ satisfies the Massart noise condition  with parameter $\eta$
        with respect to a halfspace $f(z)$ with $\pr_{z\sim \D_z}[f(z)=1]=\gamma$.
        
        \item  For any integer $k \leq C\log(\eta/\gamma)$ where $C>0$ is a sufficiently small constant, it holds 
        \(
        \E_{(z, y) \sim \D}[y z^k] =  \E_{y \sim \D_y}[y]  \E_{z \sim \D_z}[z^k]  \,.
        \)
    \end{itemize}
\end{proposition}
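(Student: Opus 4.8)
The plan is to take $\D_z=\normal$, fix the threshold $t>0$ with $\pr_{z\sim\normal}[z>t]=\gamma$, and set the target halfspace to $f(z)=\sgn(z-t)$, so that $\pr_{z\sim\D_z}[f(z)=1]=\gamma$ holds by construction (here $t=\Theta(\sqrt{\log(1/\gamma)})$ by \Cref{fct:gaussian-tails}). What remains is to choose the noise function, equivalently the signal function $\beta(\cdot)$ with $\beta(z)\in[1-2\eta,1]$ a.e., where $y$ is defined through $\E[y\mid z]=f(z)\beta(z)$. Since $\D_z$ is Gaussian with known moments, a direct computation shows that the required identities $\E_{(z,y)\sim\D}[yz^k]=\E_{y}[y]\,\E_{z}[z^k]$ for all integers $k\le K:=\lfloor C\log(\eta/\gamma)\rfloor$ are equivalent to the single family of linear constraints $\E_{z\sim\normal}[f(z)\,p(z)\,\beta(z)]=0$ for every mean-zero polynomial $p$ of degree at most $K$ (the centered monomials $z^k-\E_\normal[z^k]$, $1\le k\le K$, form a basis of $\mathcal P_K^0$). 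Thus the entire proposition reduces to feasibility of the infinite-dimensional primal LP from the technical overview: find $\beta\in L^\infty(\R)$ with $\beta(z)\in[1-2\eta,1]$ a.e.\ and $\E_{z\sim\normal}[f(z)p(z)\beta(z)]=0$ for all $p\in\mathcal P_K^0$.

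I would establish feasibility by an infinite-dimensional theorem of the alternative. Fix a basis $p_1,\dots,p_K$ of $\mathcal P_K^0$ and let $\mathcal S=\{(\E_\normal[f p_i\beta])_{i=1}^K:\ \beta(z)\in[1-2\eta,1]\ \text{a.e.}\}\subseteq\R^K$. The constraint set for $\beta$ is a convex, norm-bounded, weak-$*$ closed subset of $L^\infty(\R,\phi(z)\,dz)=(L^1)^*$, hence weak-$*$ compact by Banach--Alaoglu, and each map $\beta\mapsto\E_\normal[f p_i\beta]$ is weak-$*$ continuous since its kernel $f(z)p_i(z)\phi(z)$ lies in $L^1$; therefore $\mathcal S$ is compact and convex. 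If $0\notin\mathcal S$, the separating hyperplane theorem gives coefficients $c_1,\dots,c_K$, i.e.\ a polynomial $p=\sum_i c_i p_i\in\mathcal P_K^0$, with $\inf_{\beta}\E_\normal[f(z)p(z)\beta(z)]>0$ after possibly replacing $p$ by $-p$. Evaluating this infimum pointwise (take $\beta(z)=1-2\eta$ where $f(z)p(z)>0$ and $\beta(z)=1$ where $f(z)p(z)<0$) shows it equals $(1-2\eta)\,\E_\normal[(fp)^+]-\E_\normal[(fp)^-]$, so $0\notin\mathcal S$ would supply a mean-zero polynomial of degree at most $K$ with $(1-2\eta)\,\E_\normal[(f(z)p(z))^+]>\E_\normal[(f(z)p(z))^-]$.

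To rule this out I would invoke \Cref{lem:tech-overview-infisibility-dual} (the formal version \Cref{lem:infisibility-dual}): for any one-dimensional Boolean $f$, if $\pr_{z\sim\normal}[f(z)=1]\le 2^{-C'k}(1-\beta)$ then no mean-zero polynomial of degree $\le k$ can satisfy $\beta\,\E_\normal[(fp)^+]\ge\E_\normal[(fp)^-]$. Applying it with $\beta=1-2\eta$ and $k=K$, the hypothesis reads $\gamma\le 2^{-C'K}\cdot 2\eta$, i.e.\ $2^{C'K}\le 2\eta/\gamma$; since $\gamma$ is below a small absolute constant, $\log(2\eta/\gamma)=\Theta(\log(\eta/\gamma))$, so picking the constant $C$ in $K=\lfloor C\log(\eta/\gamma)\rfloor$ small enough makes the hypothesis hold and the dual infeasible. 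Hence $0\in\mathcal S$, the primal $\beta(\cdot)$ exists, and setting $\eta(z)=(1-\beta(z))/2\in[0,\eta]$ yields a distribution $\D$ with standard normal $z$-marginal, the $\eta$-Massart property with respect to $f$, $\pr[f(z)=1]=\gamma$, and the desired moment matching. The main obstacle I expect is making the duality step fully rigorous: verifying the weak-$*$ compactness/continuity so that $\mathcal S$ is genuinely closed, confirming the separating functional can be represented by a degree-$\le K$ polynomial (immediate here since $\mathcal P_K^0$ is finite-dimensional), and handling the measure-zero edge cases in the pointwise evaluation of $\inf_\beta$. Once the reduction is in place, \Cref{lem:tech-overview-infisibility-dual} does the substantive work.
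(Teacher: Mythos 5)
Your proposal is correct and follows the paper's overall strategy — reduce the moment-matching requirement to feasibility of the linear system $\E_{z\sim\normal}[f(z)p(z)\beta(z)]=0$ over $p\in\mathcal P_K^0$ with $\beta(z)\in[1-2\eta,1]$, and then kill the dual alternative using the sign-matching impossibility result (\Cref{lem:infisibility-dual}), which indeed does all the substantive work. Where you differ is in how the theorem of the alternative is established. The paper sets this up as a genuinely infinite-dimensional LP (\Cref{lem:duality}), invokes Fan's theorem of the alternative, and must prove that the relevant convex cone in $L^1(\R)\times\R$ is closed (\Cref{lem:feasibility}, Appendix G) — an argument that itself leans on the dual-infeasibility hypothesis to get a compactness bound on $\|p\|_1$. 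You instead exploit that there are only $K$ moment constraints: you map the weak-$*$ compact, convex set of admissible $\beta$'s (Banach--Alaoglu plus the observation that the constraint set is an intersection of weak-$*$ closed half-spaces) through the $K$ weak-$*$ continuous functionals $\beta\mapsto\E_\normal[f p_i\beta]$, obtain a compact convex image $\mathcal S\subseteq\R^K$, and apply the ordinary finite-dimensional separating hyperplane theorem, with the pointwise evaluation of $\inf_\beta\E_\normal[fp\beta]$ recovering exactly the dual condition $(1-2\eta)\E_\normal[(fp)^+]>\E_\normal[(fp)^-]$. This is a more elementary and self-contained route to the same dichotomy, and it sidesteps the closedness-of-the-cone argument entirely; the paper's functional-analytic formulation is more general but not needed here since $\mathcal P_K^0$ is finite-dimensional. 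Two cosmetic remarks: \Cref{lem:infisibility-dual} concludes $\beta\E[(fp)^+]\leq\E[(fp)^-]$ (it rules out strict inequality, not weak), which is what your separation argument produces, so your invocation is fine but your paraphrase with ``$\geq$'' is slightly off; and the obstacles you flag at the end (representation of the separating functional, measure-zero issues) are non-issues precisely because your reduction is finite-dimensional, so the argument is already essentially complete as written.
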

\begin{proof}
    Our goal is to construct a distribution $\D$ on $\R \times \{\pm 1\}$ 
    with Gaussian $z$-marginal satisfying the $\eta$-Massart noise condition.  
    Recall that, for any such distribution, we denote $\beta = 1- 2 \eta$ and $\beta(z) = 1- 2 \eta(z)$.
    It holds 
    \[
    \E_{(x, y) \sim \D}[y | x = z]  =  - f(z) \eta(z) + f(z) ( 1- \eta(z))
    = f(z) (1-2 \eta(z)) = f(z) \beta(z) \,.
    \]
    Therefore, we need to prove that there exists a ``noise'' function $\beta(z): \R \mapsto [\beta, 1]$ 
    such that for every zero mean polynomial $p(z)$ of degree at most $k$, it holds 
    $ \E_{(z, y) \sim \D}[p(z) y] = 
    \E_{z \sim \normal}[p(z) \beta(z) f(z) ] =  0$.
    Recall that by ${\cal P}_{k, d}$ we denote the space of polynomials of degree at most $k$ over $\R^d$.
    In what follows, we will be using polynomials on the subspace of univariate 
    mean-zero polynomials (with respect  to the Gaussian measure) which we denote by
    \[
    {\cal P}_{k}^0 \eqdef \{ p \in {\cal P}_{k,1} : \E_{z \sim \normal}[p(z)] = 0 \} \,.
    \]
    Using duality, we first show that such a noise function $\beta(z)$
    exists when there exists no mean-zero polynomial $p$ such that the expectation
    of $(f(z) p(z))^+$ is $1/\beta$ times larger than the expectation of $(f(z) p(z))^{-}$,
    where $z^+ \eqdef \max(0, z)$ and $z^{-} \eqdef |\min(0, z)|$.
    We prove the following lemma.
    \begin{lemma}[Moment-Matching Duality] \label{lem:duality}
        Let $f: \R \mapsto \{\pm 1\}$ be any one-dimensional Boolean function.
        Assume that for any polynomial $p \in {\cal P}_k^0$, with $p\neq 0$, it holds that 
        \[
        \beta \E_{z \sim \normal}[ (f(z) p(z))^+ ] <
        \E_{z \sim \normal}[ (f(z) p(z))^- ] \,.
        \]
        Then, there exists a function $\beta(z): \R \mapsto \R$ such that $\pr_{z \sim \normal}[\beta \leq \beta(z) \leq 1] = 1$
        and for every polynomial $p \in {\cal P}_k^0$ it holds that $\E_{z \sim \normal}[ f(z) p(z) \beta(z) ] = 0$.
    \end{lemma}
    \begin{remark}
        {\em Even though we used the assumption that $f$ is a one-dimensional
        function, this statement is true for any $m$-dimensional function  as
        long as $f$ is biased enough. Therefore, using similar techniques as in
        $\cite{DKPZ21}$, we can embed this $m$-dimensional subspace 
        to $d$ dimensions and get lower bounds for learning more general classes. 
        In fact, for $m=2$, one can show an SQ lower bound for intersections of two
        homogeneous halfspaces under constant-bounded Massart noise.}
    \end{remark}
    \begin{proof}
        Treating the function $\beta(z)$ as an (infinite dimensional) variable  
        we can formulate the following feasibility linear program.
        \begin{alignat}{2}
            \text{Find }    \qquad  &~~ \beta(z) \in L^\infty(\R)                         && \notag\\ 
            \text{such that} \qquad & \E_{z \sim \normal}[f(z)p(z)\beta(z)]  = 0   && \qquad \forall p \in{\cal P}_{k}^0  \label[lp]{eq:primal-LP-1}\\
            &  \pr_{z \sim \normal}[\beta \leq \beta(z) \leq 1] = 1                \notag
        \end{alignat}
        We claim that the \Cref{eq:primal-LP-1} is equivalent to the following LP:
\begin{alignat}{2}
            \text{Find }    \qquad  &~~ \beta(z) \in L^\infty(\R)                         && \notag\\ 
            \text{such that} \qquad & \E_{z \sim \normal}[f(z)p(z)\beta(z)]  = 0   && \qquad \forall p \in{\cal P}_{k}^0  \label[lp]{eq:primal-LP-2}\\
            &  \E_{z\sim \normal}[\beta(z)h(z)] \leq \E_{z\sim \normal}[h(z)]                   && \qquad \forall h \in L_+^1(\R) \notag \\
            \beta &\E_{z\sim \normal}[T(z)]\leq \E_{z\sim \normal}[\beta(z)T(z)]  &&\qquad\forall T \in L_+^1(\R) \notag 
        \end{alignat}
        Recall that $L^1(\R)$ are all the functions that have bounded $L_1$-norm, and we denote $L_+^1(\R)$ to be the positive functions in $L^1(\R)$.
        \begin{claim}
            The \Cref{eq:primal-LP-1} is equivalent to the \Cref{eq:primal-LP-2}.
        \end{claim}
        \begin{proof}
            We now show the equivalence between the two formulations. We claim
            that the second constraint of \Cref{eq:primal-LP-1} is equivalent with the second and the third constraints of   \Cref{eq:primal-LP-2}.
            This follows by introducing the ``dual variables'' $h:\R \to \R$ and $T:\R\to \R$. First, we so that any valid solution $b(z)$ of \Cref{eq:primal-LP-1} is also a valid solution for the \Cref{eq:primal-LP-2}.
             For any valid solution $b(z)$ of \Cref{eq:primal-LP-1}, it should hold that $b(z)\leq 1$ and $b(z)\geq \beta$ with probability 1, thus for $h(z)\in L_+^1(\R)$ it should hold that $b(z) h(z)\leq h(z)$ and by taking expectation in both sides, we get the third inequality, similarly we get the fourth inequality from $b(z)\geq \beta$.
              To prove that any valid solution $b(z)$ of \Cref{eq:primal-LP-2} is also a valid solution for the \Cref{eq:primal-LP-1}, first we assume, in order to reach a contradiction, that there is a set $A$ with non-zero probability that $b(z)>1$. Then, by taking $h(z)=\1\{z\in A\}$, the third inequality becomes 
            \[ \E_{z\sim \normal}[h(z)] <\E_{z\sim \normal}[\beta(z)h(z)] \leq \E_{z\sim \normal}[h(z)] \;, \]
            which is a contradiction. For the case, we assume, to reach a contradiction, that there is a set $B$ with non-zero probability that $b(z)<\beta$. Similarly, we have $T(z)=\1\{z\in B\}$ and get that $  \beta \E_{z\sim \normal}[T(z)]\leq \E_{z\sim \normal}[\beta(z)T(z)]<\beta \E_{z\sim \normal}[T(z)]$, which is again a contradiction.
        \end{proof}

        At this point, we would like to use ``LP duality'' to argue that \Cref{eq:primal-LP-2} is feasible if and only
        if its ``dual LP'' is infeasible. While such a statement turns out to be true, it requires some care to prove
        since we are dealing with infinite LPs (both in number of variables and constraints). More formally, we have that \Cref{eq:primal-LP-2} is feasible if and only if there is no conical combination that yields the contradicting inequality $1\leq 0$.
We define the ``dual LP'' to be the following:
        \begin{alignat}{2}
            \text{Find }    \qquad  & h\in L_+^1(\R), T\in L_+^1(\R), p\in{\cal P}_{k}^0                          && \notag\\ 
            \text{such that} \qquad & f(z)p(z) + h(z) -T(z)  =0 &&\qquad\forall z \in\R  \label[lp]{eq:primal-LP-3}\\
            &\beta\E_{z\sim \normal}[T(z)] -\E_{z\sim \normal}[h(z)] > 0 \notag 
        \end{alignat}
        The following lemma states that the sufficiently conditions so that \Cref{eq:primal-LP-2} is feasible. Its proof can be found on \Cref{app:lower-bounds}.
        \begin{lemma}\label{lem:feasibility}
            If there is no polynomial $p\in \mathcal P_k^0$ such that $\beta\E_{z\sim \normal}[(f(z)p(z))^+]  > \E_{z\sim \normal}[(f(z)p(z))^-] $ then, the \Cref{eq:primal-LP-2} is feasible if only if \Cref{eq:primal-LP-3} is infeasible.
        \end{lemma}

Note that  if \Cref{eq:primal-LP-3} is feasible for some functions  $(h,T,p)$, then it should hold that $h(z)=(f(z)p(z))^-$ and $T(z)=(f(z)p(z))^+$.
        Therefore, we claim that the \Cref{eq:primal-LP-3} is equivalent to the following LP.
\begin{alignat}{2}
            \text{Find }   \qquad &  p\in{\cal P}_{k}^0                         && \notag\\ 
            \text{such that} \qquad&\beta\E_{z\sim \normal}[(f(z)p(z))^+]  > \E_{z\sim \normal}[(f(z)p(z))^-]   \label[lp]{eq:primal-LP-4}
        \end{alignat}
        Observe that if \Cref{eq:primal-LP-4} is infeasible then from \Cref{lem:feasibility}, we get that \Cref{eq:primal-LP-2} is feasible, therefore the proof of \Cref{lem:duality} follows.
    \end{proof}
    
    \begin{lemma}\label{lem:infisibility-dual}
        Let $f: \R \mapsto \{\pm 1\}$ be any one-dimensional Boolean function, $\beta\in(0,1)$ and $k\in \Z_+$. If $\pr_{z\sim \normal}[f(z)=1]\leq 2^{-Ck}(1-\beta)$ for some sufficiently large constant $C>0$, then  for any polynomial $p\in {\cal P}_k^0$, it holds $\beta\E_{z\sim \normal}[(f(z)p(z))^+]  \leq \E_{z\sim \normal}[(f(z)p(z))^-]$.
    \end{lemma}
    \begin{proof}
        First, notice that, since $\E_{z\sim \normal}[p(z)]=0$, it holds that 
        $\E_{z\sim \normal}[p(z)^+]=\E_{z\sim \normal}[p(z)^-]$.
        Moreover, we have that for any $z \in \R$ 
        we either have $f(z) = +1$ or $f(z) = -1$ and therefore, it holds that $f(z)^+ +f(z)^-=1$.
        Thus, $\E_{z\sim \normal}[p(z)^+ f(z)^+]+\E_{z\sim \normal}[p(z)^+ f(z)^-]=\E_{z\sim \normal}[p(z)^-f(z)^+] +\E_{z\sim \normal}[p(z)^-f(z)^-]$.
        We have that
        \begin{align*}
            \E_{z\sim \normal}[(f(z)p(z))^+]  &= 
            \E_{z\sim \normal}[f(z)^+p(z)^+] +\E_{z\sim \normal}[f(z)^-p(z)^-] \,,
            \\
            \E_{z\sim \normal}[(f(z)p(z))^-]  &= 
            \E_{z\sim \normal}[f(z)^+p(z)^-] +\E_{z\sim \normal}[f(z)^-p(z)^+] \,.
        \end{align*}
        Therefore, \cref{eq:primal-LP-4} implies the following inequality
        \begin{align}
            2\beta\E_{z\sim \normal}[f(z)^+p(z)^+] & > (1+\beta)\E_{z\sim \normal}[f(z)^+p(z)^-] +(1-\beta)\E_{z\sim \normal}[f(z)^-p(z)^+]\nonumber\\
            &\geq(1-\beta)\E_{z\sim \normal}[f(z)^-p(z)^+] \label{eq:feasiblity-1}\;,
        \end{align}
        where we used the fact that $\E_{z\sim \normal}[f(z)^+p(z)^-]\geq 0$.
        Notice that $\E_{z\sim \normal}[|p(z)|]=2\E_{z\sim \normal}[p(z)^+]=2\E_{z\sim \normal}[p(z)^-]$, because $p(z)$ is a zero mean polynomial. Moreover, using $f(z)^-=1-f(z)^+$, it holds $\E_{z\sim \normal}[f(z)^-p(z)^+]=\E_{z\sim \normal}[|p(z)|]/2-\E_{z\sim \normal}[f(z)^+p(z)^+]$. Thus, by substituting this in the \Cref{eq:feasiblity-1}, we get
        \begin{align*}
            2\frac{1+\beta}{1-\beta}\E_{z\sim \normal}[f(z)^+p(z)^+] & > \E_{z\sim \normal}[|p(z)|]\;,
        \end{align*}
        or $4\E_{z\sim \normal}[f(z)^+p(z)^+]/(1-\beta)  > \E_{z\sim \normal}[|p(z)|]$, where we used that $\beta<1$.
        From Cauchy–Schwarz inequality, we have
        \begin{align*}
            \E_{z\sim \normal}[f(z)^+p(z)^+] & \leq  \E_{z\sim \normal}[f(z)^+|p(z)|]\leq (\E_{z\sim \normal}[f(z)^+])^{1/2}(\E_{z\sim \normal}[|p(z)|])^{1/2}\;.
        \end{align*}
        Using Cauchy–Schwarz, it holds that $(\E_{z\sim \normal}[p(z)^2])^{1/2}\leq (\E_{z\sim \normal}[|p(z)|])^{1/3}(\E_{z\sim \normal}[p(z)^4])^{1/6}$ and from \Cref{lem:hypercontractivity}, we have that $(\E_{z\sim \normal}[p(z)^4])^{1/4}\leq 3^{k/2}(\E_{z\sim \normal}[p(z)^2])^{1/2}$.
        Putting everything together, we have that $(\E_{z\sim \normal}[p(z)^2])^{1/2}\leq 2^{Ck}\E_{z\sim \normal}[|p(z)|]$, for $C$ some large enough positive constant, thus
        \begin{align*}
            \E_{z\sim \normal}[|p(z)|]\leq\frac{4}{1-\beta} \E_{z\sim \normal}[f(z)^+p(z)^+] \leq \frac{4}{1-\beta}(\pr[f(z)=1])^{1/2}2^{C k}\E_{z\sim \normal}[|p(z)|]\;,
        \end{align*}
        which is a contradiction if $\pr[f(z)=1]\leq 2^{-C k}(1-\beta)$ for large enough positive constant $C$. Thus, the \Cref{eq:primal-LP-4} is infeasible and thus, the \Cref{eq:primal-LP-1} is feasible.
    \end{proof}
    
    Using \Cref{lem:duality} and \Cref{lem:infisibility-dual}, we get that if $\gamma=\pr_{z\sim \normal}[f(z)=1]\leq 2^{-Ck}(1-\beta)$ 
    for some large enough constant $C>0$, then there exists a function $\beta:\R\mapsto \R$ such that for every polynomial $p\in {\cal P}_k^0$
    it holds $\E_{z\sim\normal}[f(z)p(z)\beta(z)]=0$, thus, for any $m\leq k$, we have $$\E_{(z,y)\sim \D}[yz^m]=\E_{(z,y)\sim \D}[y(z^m-\E_{z\sim \D_z}[z^m])]+\E_{(z,y)\sim \D_y}[y]\E_{z\sim \D_z}[z^m]=\E_{(z,y)\sim \D_y}[y]\E_{z\sim \D_z}[z^m]\;.$$ Moreover, using the fact that $\beta=1-2\eta$, we get that the degree $k$ is less than a sufficiently small multiply of $\log(\eta/\gamma)$, which completes the proof of \Cref{pro:construction}.

\end{proof}

\begin{proposition}[SQ Complexity of Hypothesis Testing]\label{prop:sq-test-massart}
    Fix $\eta  \in (0,1/2)$ and $\gamma>0$ less than a sufficiently small constant. 
    There exist:
    \begin{itemize}
        
        \item  a family of distributions $\mathfrak{D}$ such that every $\D \in \mathfrak{D}$ 
        is a distribution over $(\x,y) \in \R^d \times \{\pm 1\}$, its $\x$-marginal is the standard normal
        distribution, and $\D$ satisfies the $\eta$-Massart noise condition with respect to some $\gamma$-biased halfspace, and
        
        \item  a reference distribution $\cal R$  over $(\x,y) \in \R^d \times \{\pm 1\}$, whose $\x$-marginal is the standard normal and $y$ is independent of $\x$,
    \end{itemize}
    such that any SQ algorithm that decides whether the input distribution belongs to $\mathfrak{D}$ or is equal 
    to the reference $\cal R$ either requires queries with tolerance at most $d^{-\Omega(\log(\eta/\gamma))}/\sqrt{\gamma}$
    or makes at least $2^{d^{\Omega(1)}}d^{-\log(\eta/\gamma)}$ queries.
\end{proposition}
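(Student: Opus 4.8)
The plan is to instantiate the hidden-direction SQ framework of \Cref{def:pv-hidden,def:hid-distr} on top of the one-dimensional moment-matching instance produced by \Cref{pro:construction}. Let $\D_0$ be the one-dimensional distribution on $(z,y)\in\R\times\{\pm1\}$ guaranteed by \Cref{pro:construction} for the given $\eta,\gamma$: its $z$-marginal is $\normal(0,1)$, it satisfies the $\eta$-Massart condition with respect to a halfspace $f$ with $\pr_{z}[f(z)=1]=\gamma$, and $\E_{\D_0}[yz^m]=\E_{\D_0}[y]\,\E_{\D_0}[z^m]$ for every integer $m\le k:=\lfloor C\log(\eta/\gamma)\rfloor$. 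Write $p:=\pr_{\D_0}[y=1]\in(0,1)$ and let $A$ (resp.\ $B$) be the law of $z$ conditioned on $y=+1$ (resp.\ $y=-1$), so that $\D_0=\p^{A,B,p}$ in dimension $d=1$; note $pA+(1-p)B=\normal(0,1)$.

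The first step is a short algebraic reduction of the moment condition. Writing $A_m=\E_A[z^m]$, $B_m=\E_B[z^m]$, we have $\E_{\D_0}[yz^m]=pA_m-(1-p)B_m$, $\E_{\D_0}[y]=2p-1$, and $\E_{\D_0}[z^m]=pA_m+(1-p)B_m$, so $\E[yz^m]=\E[y]\E[z^m]$ collapses to $2p(1-p)(A_m-B_m)=0$, i.e.\ $A_m=B_m$ for all $m\le k$; since $pA_m+(1-p)B_m$ is the $m$-th moment of $\normal(0,1)$, this means \emph{both} $A$ and $B$ agree with the first $k$ moments of $\normal(0,1)$ — exactly what is needed to later invoke \Cref{fct:cor}. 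I would also record the $\chi^2$ estimates: the density of $A$ is $\pr[y=1\mid z]\phi(z)/p\le\phi(z)/p$, so $\chi^2(A,\normal(0,1))=\tfrac1{p^2}\E_{\normal}[\pr[y=1\mid z]^2]-1\le 1/p-1$, and symmetrically $\chi^2(B,\normal(0,1))\le 1/(1-p)-1$.

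Next I would set up the $d$-dimensional family. Let $S$ be the set of $2^{\Omega(d^{c})}$ near-orthogonal unit vectors from \Cref{fct:near-orth-vec} (any fixed $0<c<1/2$), with $|\vec u\cdot\vec v|\lesssim d^{c-1/2}$ for distinct $\vec u,\vec v\in S$, and define $\mathfrak{D}=\{\p^{A,B,p}_{\vec v}:\vec v\in S\}$ and the reference ${\cal R}=\p^{G,G,p}$ with $G=\normal(0,1)$, i.e.\ $\x\sim\normal(\vec 0,\vec I_d)$ with $y$ an independent $\{\pm1\}$-coin of bias $p$ (this is independent of the direction). By the product structure, each $\p^{A,B,p}_{\vec v}$ has standard normal $\x$-marginal, satisfies $\E[y\mid\x]=f(\vec v\cdot\x)(1-2\eta(\vec v\cdot\x))$ with $\eta(\vec v\cdot\x)\le\eta$, and its optimal halfspace (the lift of $f$ to normal direction $\vec v$) has positive-region mass $\pr_{z\sim\normal}[f(z)=1]=\gamma$; hence it satisfies the $\eta$-Massart condition against a $(1-\gamma)$-biased halfspace, as required. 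For the correlations, splitting the chi-squared inner product according to the value of $y\in\{\pm1\}$ gives, for $\vec u,\vec v\in S$,
\[
\chi_{{\cal R}}\!\left(\p^{A,B,p}_{\vec v},\p^{A,B,p}_{\vec u}\right)
= p\,\chi_{\normal(\vec 0,\vec I)}\!\left(\p^A_{\vec v},\p^A_{\vec u}\right)
+ (1-p)\,\chi_{\normal(\vec 0,\vec I)}\!\left(\p^B_{\vec v},\p^B_{\vec u}\right).
\]
Applying \Cref{fct:cor} to $A$ and to $B$ (legal by the moment-matching above) together with the $\chi^2$ bounds yields $p\,|\chi_{\normal}(\p^A_{\vec v},\p^A_{\vec u})|\le|\vec u\cdot\vec v|^{k+1}$ and likewise for $B$, so the pairwise correlation is at most $2|\vec u\cdot\vec v|^{k+1}\le d^{-\Omega(k)}=d^{-\Omega(\log(\eta/\gamma))}$ for $\vec u\neq\vec v$, while the diagonal term is $\le 1$. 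Thus $\mathfrak{D}$ is $(\bar\gamma,\bar\beta)$-correlated relative to ${\cal R}$ with $\bar\gamma=d^{-\Omega(\log(\eta/\gamma))}$, $\bar\beta\le 1$, and $|\mathfrak{D}|=2^{\Omega(d^{c})}$.

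Finally, I would feed these parameters into \Cref{lem:sq-from-pairwise} with $\gamma'=\bar\gamma$: any SQ algorithm for the decision problem $\mathcal{B}(\mathfrak{D},{\cal R})$ either uses a query of tolerance at most $\sqrt{\bar\gamma+\gamma'}=d^{-\Omega(\log(\eta/\gamma))}\le d^{-\Omega(\log(\eta/\gamma))}/\sqrt{\gamma}$, or makes at least $|\mathfrak{D}|\,\gamma'/(\bar\beta-\bar\gamma)\ge\Omega\!\left(2^{\Omega(d^{c})}\,d^{-\Omega(\log(\eta/\gamma))}\right)\ge 2^{d^{\Omega(1)}}d^{-\log(\eta/\gamma)}$ queries, which is the claim. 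Essentially all the real work is inherited from \Cref{pro:construction}; the only new computations are the two-line reduction of the moment condition to $A_m=B_m$ and the label-wise splitting of the $\chi^2$ inner product for the two-component mixture, and I expect the latter — in particular checking that the mixing weights $p,1-p$ cancel exactly against the $1/p$ and $1/(1-p)$ factors in $\chi^2(A,\cdot)$, $\chi^2(B,\cdot)$ so that no $1/\gamma$ blow-up survives in $\bar\gamma$ — to be the one mildly delicate bookkeeping point, everything else being a routine instantiation of the template used for agnostic SQ lower bounds in \cite{DKS17-sq,DKPZ21}.
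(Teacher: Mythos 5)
Your proposal is correct and follows essentially the same route as the paper: the conditionals $A,B$ you define are exactly the paper's densities $A_t,B_t$ (with the same mixing weight $p=(1+c)/2$), and the proof proceeds identically via \Cref{pro:construction}, \Cref{fct:near-orth-vec}, \Cref{fct:cor}, the label-wise splitting of $\chi_{\cal R}$, and \Cref{lem:sq-from-pairwise}. The only deviations are cosmetic improvements in bookkeeping — deriving moment matching from $A_m=B_m$ and using the sharper bounds $\chi^2(A,\normal)\leq 1/p-1$, $\chi^2(B,\normal)\leq 1/(1-p)-1$, which cancel the mixing weights and yield $(d^{-\Omega(\log(\eta/\gamma))},1)$-correlation instead of the paper's $(d^{-\Omega(\log(\eta/\gamma))}/\gamma,4/\gamma)$, a slightly stronger conclusion that still implies the stated tolerance bound $d^{-\Omega(\log(\eta/\gamma))}/\sqrt{\gamma}$.
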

\Cref{prop:sq-test-massart} follows from the lemma below showing that we can construct 
a family of distributions with small pairwise correlation 
\begin{lemma}[Correlated Family of Distributions]
    \label{lem:hard-dist-massart}
    Fix $t \in \R$ such that $|t|$ is larger than some absolute constant.
    Define $\gamma(t) \eqdef \pr_{z\sim\normal}[z\geq |t|]$.  There exist: 
    \begin{itemize}
        \item 
        a set $\mathfrak{D}_t$ of $2^{d^{\Omega(1)}}$ distributions on $\R^d\times\{\pm 1\}$, such that
        every $\D\in \mathfrak{D}_t$ is a distribution over $(\x,y) \in \R^d \times \{\pm 1\}$, its $\x$-marginal is the standard normal distribution, and $\D$ satisfies the $\eta$-Massart condition with $\eta\in(0,\frac{1}{2})$ with respect to a halfspace $f(\x)=\sign(\vec v\cdot \x+t)$ where $\vec v$ is a unit vector in $\R^d$.
        Moreover, all $\D \in \mathfrak{D}_t$ have the same $y$-marginal,
        \item a reference distribution ${\cal R}_t$ in $\R^d\times\{\pm 1\}$,  where for $(\x, y) \sim {\cal R}_t$ we have that
        $\x$ is distributed according to the standard normal $ \normal(\vec 0, \vec I)$, $y$ is independent of $\x$, 
        and the distribution of $y$ is equal with the $y$-marginal of any $\D \in \mathfrak{D}_t$.
    \end{itemize}
    Moreover, the set $\mathfrak{D}_t$ is $(d^{-\Omega(\log(\eta/\gamma(t)))}/\gamma(t),4/\gamma(t))$-correlated with respect to the reference distribution ${\cal R}_t$.
\end{lemma}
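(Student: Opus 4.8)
The plan is to lift the one-dimensional hard instance of \Cref{pro:construction} to $\R^d$ through the hidden-direction framework (\Cref{def:hid-distr}, \Cref{fct:cor}, \Cref{fct:near-orth-vec}). First I would invoke \Cref{pro:construction} with bias $\gamma := \gamma(t) = \pr_{z\sim\normal}[z\ge|t|]$ and noise rate $\eta$, obtaining a distribution $\D_1$ on $(z,y)\in\R\times\{\pm1\}$ with standard normal $z$-marginal that satisfies the $\eta$-Massart condition with respect to a one-dimensional threshold $f_1$ with positive-side mass $\gamma(t)$, and such that $\E_{(z,y)\sim\D_1}[yz^k] = \E_{y}[y]\,\E_z[z^k]$ for every integer $k\le m$, where $m = \Theta(\log(\eta/\gamma(t)))$. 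Since $\gamma(t) = \pr[z\ge|t|] = \pr[z\le -|t|]$, after possibly reflecting the $z$-axis and/or the label (operations of the form $(z,y)\mapsto(-z,-y)$ or $z\mapsto-z$, each of which fixes the Gaussian $z$-marginal and preserves both the Massart property and the moment-matching identity) I can take the associated halfspace to be exactly $\sign(z+t)$.

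Next I would decompose $\D_1$ along the label: set $p = \pr_{\D_1}[y=1]$, let $A$ be the law of $z$ given $y=1$ and $B$ the law of $z$ given $y=-1$, so that $\D_1$ is precisely the mixture of \Cref{def:hid-distr} and $pA + (1-p)B = \normal(0,1)$. The key algebraic point is that adding the marginal identity $p\E_A[z^k] + (1-p)\E_B[z^k] = \E_{\normal}[z^k]$ to the moment-matching identity $p\E_A[z^k] - (1-p)\E_B[z^k] = (2p-1)\E_{\normal}[z^k]$ gives $\E_A[z^k] = \E_{\normal}[z^k]$ for all $k\le m$, and subtracting gives $\E_B[z^k] = \E_{\normal}[z^k]$; hence both $A$ and $B$ agree with the first $m$ moments of $\normal(0,1)$, which is exactly what \Cref{fct:cor} requires. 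I would also record pointwise density-ratio bounds: writing $\E_{\D_1}[y\mid z] = f_1(z)\beta(z)$ with $\beta(z)\in[\beta,1]$, we have $A(z)/\phi(z) = (1+f_1(z)\beta(z))/(2p)\le 1/p$ and $p = \E_{z\sim\normal}[(1+f_1(z)\beta(z))/2]\ge \gamma(t)/2$ (the positive side of $f_1$ has mass $\gamma(t)$ and contributes at least $\gamma(t)/2$), so $\chi^2(A,\normal) = \int (A/\phi)\,A - 1 \le 2/\gamma(t)$; symmetrically $1-p\ge 1/4$ for $\gamma(t)$ small, giving $B(z)/\phi(z)\le 4$ and $\chi^2(B,\normal)\le 3$.

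Now for each unit vector $\vec v$ I would set $\D_{\vec v} := \p^{A,B,p}_{\vec v}$. Its $\x$-marginal is $\normal(\vec 0,\vec I)$ (the $\vec v$-projection is $pA+(1-p)B=\normal(0,1)$, the orthogonal part is standard Gaussian, and the two are independent), its $y$-marginal is the $\mathrm{Bernoulli}(p)$ label and is identical for every $\vec v$, and it satisfies the $\eta$-Massart condition with respect to $\sign(\vec v\cdot\x + t)$ since along $\vec v$ it reproduces $\D_1$ while the orthogonal coordinates carry no signal. Let ${\cal R}_t$ be $\normal(\vec 0,\vec I)$ crossed with the same independent $\mathrm{Bernoulli}(p)$ label. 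Computing with densities (on the slice $y=1$ the Radon--Nikodym factor is $p\,\p^A_{\vec u}\p^A_{\vec v}/\phi_d$, on $y=-1$ it is $(1-p)\,\p^B_{\vec u}\p^B_{\vec v}/\phi_d$) yields the clean identity $\chi_{{\cal R}_t}(\D_{\vec u},\D_{\vec v}) = p\,\chi_{\normal}(\p^A_{\vec u},\p^A_{\vec v}) + (1-p)\,\chi_{\normal}(\p^B_{\vec u},\p^B_{\vec v})$. Applying \Cref{fct:cor} (using the $m$-moment matching of $A$ and of $B$) bounds this by $|\vec u\cdot\vec v|^{m+1}\bigl(p\,\chi^2(A,\normal) + (1-p)\chi^2(B,\normal)\bigr)\le |\vec u\cdot\vec v|^{m+1}\cdot 4/\gamma(t)$, and for $\vec u=\vec v$ the same identity (with $\chi_{\normal}(\p^A_{\vec v},\p^A_{\vec v}) = \chi^2(A,\normal)$) bounds the self-correlation by $2/\gamma(t)+3\le 4/\gamma(t)$. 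Taking $\mathfrak{D}_t = \{\D_{\vec v}:\vec v\in S\}$ for the set $S$ of $2^{\Omega(d^c)}$ near-orthogonal unit vectors of \Cref{fct:near-orth-vec}, so that $|\vec u\cdot\vec v|\lesssim d^{c-1/2}$ for distinct $\vec u,\vec v\in S$, gives $|\vec u\cdot\vec v|^{m+1}\le d^{-\Omega((1/2-c)m)} = d^{-\Omega(\log(\eta/\gamma(t)))}$ for $d$ larger than an absolute constant, which multiplied by $4/\gamma(t)$ yields the claimed off-diagonal bound, while $|\mathfrak{D}_t| = 2^{\Omega(d^c)} = 2^{d^{\Omega(1)}}$. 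I expect the only genuine subtlety to be (i) establishing that $A$ \emph{and} $B$ each match $m$ moments of the standard normal, handled by the additive/subtractive trick above, which crucially uses that the $z$-marginal of $\D_1$ is exactly Gaussian, and (ii) getting the $1/\gamma(t)$ factor in $\chi^2(A,\normal)$ right, which hinges on the lower bound $p\ge\gamma(t)/2$; everything else is the standard hidden-direction correlation computation together with the bookkeeping of the $\pm$ reflections in $t$.
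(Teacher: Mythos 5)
Your proposal is correct and follows essentially the same route as the paper: your $A$ and $B$, defined by conditioning the one-dimensional instance of \Cref{pro:construction} on $y=\pm 1$, are exactly the paper's densities $A_t(z)=(1+\beta(z)f(z))\phi(z)/(1+c)$ and $B_t(z)=(1-\beta(z)f(z))\phi(z)/(1-c)$ with $p=(1+c)/2$, and the rest (moment matching of both conditionals, $\chi^2$ bounds of order $1/\gamma(t)$ via $p\gtrsim\gamma(t)$, lifting through $\p^{A,B,p}_{\vec v}$ over the near-orthogonal set of \Cref{fct:near-orth-vec}, the slice-wise correlation identity, and \Cref{fct:cor}) mirrors the paper's computation. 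The only differences are cosmetic: you derive the moment matching by adding/subtracting the marginal and correlation identities rather than integrating the explicit densities, and you are slightly more explicit about the reflection needed to place the threshold at $\sign(z+t)$.
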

\begin{proof}  
    From \Cref{pro:construction}, we know that for every $t$, with $|t|$ larger than a sufficiently large constant, there exists a distribution $\D_t$ on $ \R \times \{\pm 1\}$ 
    whose $z$-marginal is a standard normal, $\D$ satisfies the $\eta$-Massart noise distribution with respect to a 
    $\gamma$-biased halfspace $f: \R \mapsto \{\pm 1 \}$, with $f(z)=\sign(z+t)$, and for every $k =\Theta( \log(\eta/\gamma))$ it holds that 
    $\E_{(z,y) \sim \D_t}[y z^k] = \E_{y \sim {\D_t}_y}[y] \E_{z \sim \normal}[ z^k] $.
    Let $\beta_t(z)$ be the noise function corresponding to $\D_t$ and $\phi(z)$ be the density function of the
    single dimensional standard normal distribution.
    We define the following densities on $\R$:
    $A_{t}(z) = (1+\beta(z)f(z))\phi(z)/(1+c)$ and 
    $B_{t}(z)=(1-\beta(z)f(z))\phi(z)/(1-c)$ where $c=\E_{z\sim \normal}[\beta(z)f(z)]
    =\E_{y \sim {\D_t}_y}[y]
    $.
    It holds that 
    \begin{align*}
        \E_{z \sim A_t}[z^k]
        &=
        \frac{ \int z^k \phi(z) + z^k \beta(z) f(z) \d z }{1+c}
        = 
        \frac{\E_{z \sim \normal}[z^k]  + \E_{(z,y) \sim \D_t}[z^k y] }{1+c}
        \\
        &= 
        \frac{\E_{z \sim \normal}[z^k]  + \E_{z \sim {\D_t}_z}[z^k] \E_{y \sim {\D_t}_y}[y] }
        {1+ \E_{y \sim {\D_t}_y}[y]}
        = \E_{z \sim \normal}[z^k] \,.
    \end{align*}
    Similarly, we have that  $\E_{z \sim B_t}[z^k] =  \E_{z \sim \normal}[z^k]$.
    Therefore, the distributions $A_t$ and $B_t$ match the first $\Theta(\log(\eta/\gamma))$ moments with $\normal$. 
    
    Moreover, we have that
    \begin{align*} 
        \chi^2(A_t, \normal) 
        &=  \E_{z \sim \normal}\Big[ \Big(\frac{1 + \beta(z) f(z) }{1 + c} \Big)^2 -1\Big] 
        \leq  \frac{4} {(1 + c)^2}\;,
    \end{align*}
    and
    \begin{align*} 
        \chi^2(B_t, \normal) 
        &=  \E_{z \sim \normal}\Big[ \Big(\frac{1 - \beta(z) f(z) }{1 + c} \Big)^2 -1\Big] 
        \leq  \frac{4} {(1 - c)^2}\;,
    \end{align*}
    which means that $\max((1+c)	\chi^2(A_t, \normal) ,(1-c)\chi^2(B_t, \normal) )\lesssim 1/\gamma$.
    Let S be as in \Cref{fct:near-orth-vec}.  
    Choose $p=(1+c)/2$ and consider the mixture distributions $\p_{\vec v}^{A_t,B_t,p}$, for $\vec v \in S$.
    We set the hard family of distributions $\mathfrak{D}_t = \{\p_{\vec v}^{A_t,B_t,p}: \vec v \in S \}$. First, we show that $\p_{\vec v}^{A_t,B_t,p}$ corresponds to a distribution that satisfies the $\eta$-Massart noise condition. We have
    \begin{align*}
        \E_{(\x,y)\sim \p_{\vec v}^{A_t,B_t,p}}[y|\x]
        &=\frac{(1+c)}{2} (1+f(\x \cdot \vec v)\beta(\x \cdot \vec v))\frac{\phi(\x)}{1+c} - \frac{(1-c)}{2} (1-f(\x \cdot \vec v)\beta(\x \cdot \vec v))\frac{\phi(\x)}{1-c} 
        \\ &=f(\x \cdot \vec v)\beta(\x \cdot \vec v)\;,
    \end{align*}
    thus, indeed the distribution $ \p_{\vec v}^{A_t,B_t,p}$ satisfies the $\eta$-Massart noise condition,
    because from \Cref{pro:construction}, we know that $\beta = 1-2\eta \leq \beta(z) \leq 1$ almost surely
    for all $z \in \R$.
    
    Let ${\cal R}_t$ be a distribution on $\R^d\times\{\pm 1\}$ such that if $(\x,y)\sim {\cal R}_t$ then 
    $\x\sim \normal_d$, $y$ is independent of $\x$, and $y=1$ with probability $p$ and $y=-1$ otherwise.  We need to show that for $\vec u , \vec v \in S$ we have
    that $|\chi_{{\cal R}_t}(\p^{A_t,B_t,p}_{\vec v},\p^{A_t,B_t,p}_{\vec u})|$ is small.
    Since ${\cal R}_t,\p^{A_t,B_t,p}_{\vec v},$ and $\p^{A_t,B_t,p}_{\vec u}$ all assign $y=1$ with probability $p$,
    we have that 
    \begin{align*}
        \chi_{{\cal R}_t}(\p^{A_t,B_t,p}_{\vec v},\p^{A_t,B_t,p}_{\vec u})
        = & \; p \; \chi_{{\cal R}_t \mid y=1}\left( (\p^{A_t,B_t,p}_{\vec v} \mid y=1) , (\p^{A_t,B_t,p}_{\vec u} \mid y=1) \right) +    \\
        & (1-p) \; \chi_{{\cal R}_t \mid y=-1} \left( (\p^{A_t,B_t,p}_{\vec v} \mid y=-1) , (\p^{A_t,B_t,p}_{\vec u} \mid y=-1) \right) \\
        = & \; p \; \chi_{\normal_d}(\p^{A_t}_{\vec v} , \p^{A_t}_{\vec u}) + (1-p) \; \chi_{\normal_d}(\p^{B_t}_{\vec v}, \p^{B_t}_{\vec u}).
    \end{align*}
    By \Cref{fct:cor}, it follows that
    \begin{align*}
        \chi_{{\cal R}_t}(\p^{A_t,B_t,p}_{\vec v},\p^{A_t,B_t,p}_{\vec u}) 
        &\leq d^{-\Omega(\log(\eta/\gamma))}(p\,\chi^2(A_t, N(0,1))+(1-p)\chi^2(B_t, N(0,1))) 
        \\
        & = d^{-\Omega(\log(\eta/\gamma))}/\gamma \;.
    \end{align*}
    A similar computation shows that
    \[\chi_{{\cal R}_t}(\p^{A_t,B_t,p}_{\vec v},\p^{A_t,B_t,p}_{\vec v}) = \chi^2(\p^{A_t,B_t,p}_{\vec v},{\cal R}_t)\leq p\,\chi^2(A_t, N(0,1))+(1-p)\chi^2(B_t, N(0,1)\leq 4/\gamma \;.\]
    Thus, the set $\mathfrak{D}_t$ is $(d^{-\Omega(\log(\eta/\gamma))}/\gamma,4/\gamma)$-correlated with respect the reference distribution ${\cal R}_t$.
\end{proof}
\begin{proof}[Proof of \Cref{prop:sq-test-massart}]
    Fix a $\gamma>0$ less than a sufficiently small constant, and let $t\in \R$ such that $\pr_{z\sim\normal}[z\geq |t|]=\gamma$
    Moreover, because $|\mathfrak{D}_t|=2^{d^{\Omega(1)}}$ and the set $\mathfrak{D}_t$ is $(d^{-\Omega(\log(\eta/\gamma))}/\gamma,4/\gamma)$-correlated with respect ${\cal R}_t$ we have that $\mathrm{SD}({\cal B},d^{-\Omega(\log(\eta/\gamma))}/\gamma,4/\gamma)=2^{d^{\Omega(1)}}$, thus an application of \Cref{lem:sq-from-pairwise} completes the proof.
\end{proof}

\begin{proof}[Proof of \Cref{thm:lower_bound}] 
    Let $\mathfrak{D}$ and ${\cal R}$ be as in \Cref{prop:sq-test-massart}. Note that from the construction, for any $\D\in\mathfrak{D}$, it holds that $\pr_{(\x,y)\sim {\cal R}}[y=i]=\pr_{(\x,y)\sim \D}[y=i]=p_i$, for $i\in\{\pm 1\}$ Let $\cal A$ an algorithm that outputs a hypothesis $h$ with respect a distribution $\D$ that satisfies the $\eta$-Massart Noise condition such that $\pr_{(\x,y)\sim \D}[h(\x)\neq y]\leq \opt +\eps$, where $\opt$ is the error achieved by the best classifier. Moreover, from for any classifier $h'$ it holds $\pr_{(\x,y)\sim {\cal R}}[h'(\x)\neq y]\geq \min_{i\in\{\pm 1\}}p_i$. Thus, it holds that $\pr_{(\x,y)\sim {\cal R}}[h'(\x)\neq y]-\opt\geq \E_{\x\sim \D_\x}[\beta(\x)f(\x)]\geq (1-2\eta)\gamma$. Thus, algorithm $\cal A$ for $\eps \leq \gamma\beta$ would solve the  decision problem
    $\mathcal{B}(\mathfrak{D}, {\cal R})$ (with one additional query $\E_{(\x,y)\sim \D}[h(\x)y]$ up to accuracy $\gamma\beta/2$) and from \Cref{prop:sq-test-massart} the result follows. 
\end{proof}
 
\subsection{SQ Lower Bound for Learning Homogeneous Halfspaces with General Massart Noise}\label[sub]{sub:lower-bound-benign}

Learning homogeneous halfspaces under $\eta$-Massart noise for any
constant $\eta < 1/2$ is known to be solvable with polynomially many statistical queries~\cite{DKTZ20}.  
When $\eta = 1/2$, we show that any SQ algorithm for
homogeneous Massart halfspaces requires $d^{\Omega(\log(1/\eps))}$ queries.
For comparison, we recall that the SQ complexity of learning halfspaces 
under adversarial noise is $d^{\Omega(1/\eps^2)}$~\cite{DKPZ21}.  

Our formal result is the following theorem.
\usetikzlibrary{patterns}

\begin{theorem}[$1/2$-Massart Noise SQ-Lower Bound]\label{thm:lower_bound_benign}
Let $\D$ be a distribution on $\R^d\times\{\pm 1\}$ that satisfies the Massart noise condition for
$\eta=1/2$, for homogeneous halfspaces, where the $\x$-marginal is distributed according to the
standard normal.  Any SQ algorithm that, for any such distribution $\D$, finds a hypothesis $h$ such
that $\pr_{(\x,y)\sim \D}[h(\x)\neq y]\leq \opt +\eps$ either requires queries with tolerance at
most $d^{-\Omega(\log(1/\eps))}$ or makes at least $2^{d^{\Omega(1)}}$ queries.
\end{theorem}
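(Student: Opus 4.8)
The plan is to prove the theorem by constructing, directly in $d{+}1$ dimensions, a large family of $\tfrac12$‑Massart instances for \emph{homogeneous} halfspaces together with a reference distribution, all pairwise weakly correlated, and then invoking the SQ‑dimension machinery. The construction ``lifts'' the hard $d$‑dimensional constant‑bounded family $\mathfrak{D}_t$ of \Cref{lem:hard-dist-massart}. Fix a constant $t_0>0$, a (negative) threshold $t$ chosen so that $\gamma_d\eqdef\pr_{w\sim\normal}[w\geq|t|]=\Theta(\sqrt{\eps}\log(1/\eps))$ (so $|t|=\Theta(\sqrt{\log(1/\eps)})$) and the instances have noise rate $\eta_0=\tfrac12-\Omega(1)$, and a thin interval $S=[t_0,t_0+\zeta]$ with $\zeta=\Theta(\sqrt{\eps})$. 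For a unit vector $\vec v\in\R^d$, let $\D^{\mathrm{lift}}_{\vec v}$ be the distribution on $((\x,z),y)\in\R^{d+1}\times\{\pm1\}$ whose $(\x,z)$‑marginal is $\normal_{d+1}$ and whose conditional law of $(\x,y)$ given $z$ is: for $z\in S$, the $d$‑dimensional hard instance of \Cref{lem:hard-dist-massart} with direction $\vec v$ and the \emph{$z$‑dependent} threshold $t_z\eqdef(t/t_0)z$; for $z\notin S$, a fair coin $y$ independent of $\x$. The reference $\mathcal R^{\mathrm{lift}}$ is defined the same way using $\mathcal R_{t_z}$ on the slice. Since each $d$‑dimensional piece has standard normal $\x$‑marginal (the mixture identity in the proof of \Cref{lem:hard-dist-massart}) and we only ever prescribe the conditional law of $y$ given $\x$, the $(\x,z)$‑marginal of every lifted distribution is exactly $\normal_{d+1}$.

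The key observation is that the affine‑in‑$z$ choice $t_z=(t/t_0)z$ makes the Bayes classifier on the slice \emph{exactly} a homogeneous halfspace: with $\vec w^\ast=(\vec v,\,t/t_0)\in\R^{d+1}$ we have $\sgn(\vec w^\ast\cdot(\x,z))=\sgn(\vec v\cdot\x+t_z)$ for every $z\in S$, and the restriction of this homogeneous halfspace to a slice $\{z=z_0\}$ is a genuinely non‑homogeneous halfspace in $\x$. Consequently $\D^{\mathrm{lift}}_{\vec v}$ satisfies the Massart condition with $\eta=\tfrac12$ with respect to $\sgn(\vec w^\ast\cdot(\x,z))$ — the noise rate is at most $\eta_0<\tfrac12$ on the slice and exactly $\tfrac12$ off it — and $\sgn(\vec w^\ast\cdot(\x,z))$ is the Bayes classifier, so $\opt^{\mathrm{lift}}=\tfrac12-\pr[z\in S]\big(\tfrac12-\opt_d\big)$, where $\opt_d$ is the slice‑averaged optimal error of the $d$‑dimensional instances; since $\tfrac12-\opt_d=\tfrac12\E[1-2\eta_d]\geq\tfrac12(1-2\eta_0)$, this pins $\opt^{\mathrm{lift}}$ a distance $\Theta(\zeta)$ below $\tfrac12$. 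Moreover, because $t_z$ stays within a $(1+o(1))$ factor of $t$, the biases $\gamma_d(t_z)=\pr_{w\sim\normal}[w\geq|t_z|]$ all equal $\gamma_d$ up to constants.

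Next I would bound pairwise correlations. Since two lifted distributions agree off $S$, conditioning on $z$ gives $\chi_{\mathcal R^{\mathrm{lift}}}(\D^{\mathrm{lift}}_{\vec v},\D^{\mathrm{lift}}_{\vec u})=\int_S\phi(z)\,\chi_{\mathcal R_{t_z}}\big(\p^{A_{t_z},B_{t_z},p_{t_z}}_{\vec v},\p^{A_{t_z},B_{t_z},p_{t_z}}_{\vec u}\big)\,\,\d z$; invoking \Cref{lem:hard-dist-massart} the integrand is at most $d^{-\Omega(\log(\eta_0/\gamma_d))}/\gamma_d$ for $\vec v\neq\vec u$ and at most $4/\gamma_d$ on the diagonal, so $|\chi^{\mathrm{lift}}|\leq \pr[z\in S]\,d^{-\Omega(\log(1/\eps))}/\gamma_d$ off the diagonal and $O(\pr[z\in S]/\gamma_d)$ on it. With $\vec v,\vec u$ ranging over the near‑orthogonal set of \Cref{fct:near-orth-vec} (of size $2^{d^{\Omega(1)}}$), and since the choice above gives $\pr[z\in S]=\Theta(\zeta)$ and $\pr[z\in S]/\gamma_d=O(1/\log(1/\eps))=O(1)$, the family $\mathfrak{D}^{\mathrm{lift}}$ is $\big(d^{-\Omega(\log(1/\eps))},\,O(1)\big)$‑correlated relative to $\mathcal R^{\mathrm{lift}}$. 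By \Cref{lem:sq-from-pairwise}, any SQ algorithm distinguishing $\mathfrak{D}^{\mathrm{lift}}$ from $\mathcal R^{\mathrm{lift}}$ needs tolerance $d^{-\Omega(\log(1/\eps))}$ or $2^{d^{\Omega(1)}}$ queries. Finally, a putative SQ learner achieving error $\opt+\eps$ on every $\tfrac12$‑Massart homogeneous‑halfspace instance solves this decision problem: run it and query $\E[\1\{h(\x,z)\neq y\}]$; in the reference case every hypothesis has error at least $\tfrac12-\pr[z\in S]\,\overline{|c|}/2$, which exceeds $\opt^{\mathrm{lift}}+\eps$ by $\Omega(\pr[z\in S]\gamma_d)=\Omega(\eps\log(1/\eps))$ (using that the slice‑averaged $\E[1-2\eta_d]-\overline{|c|}$ is $\Omega(\gamma_d)$), so a query of tolerance $\Theta(\eps\log(1/\eps))\gg d^{-\Omega(\log(1/\eps))}$ separates the two cases, contradicting \Cref{lem:sq-from-pairwise} unless the learner uses tolerance $d^{-\Omega(\log(1/\eps))}$ or $2^{d^{\Omega(1)}}$ queries.

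The main obstacle is the parameter balancing, and I expect it to be the bulk of the work: $\gamma_d$ must be polynomially small in $\eps$ so that the hardness exponent $\log(\eta_0/\gamma_d)$ is $\Theta(\log(1/\eps))$; the slice must be thin enough that $\pr[z\in S]/\gamma_d$ stays $O(1)$ (so the self‑correlations are $O(1)$) and that $t_z$ barely moves off $t$ (so all the $d$‑dimensional pieces remain genuinely hard); yet $\pr[z\in S]\gamma_d$ must dominate $\eps$ so that the learner's error on a lifted hard instance is strictly below the best error achievable on the reference. The choice $\gamma_d=\Theta(\sqrt{\eps}\log(1/\eps))$, $\zeta=\Theta(\sqrt{\eps})$ threads this needle, but verifying the constants — in particular that the reference's advantage gap $\E[1-2\eta_d]-\overline{|c|}$ is $\Theta(\gamma_d)$ rather than smaller (it equals twice the slice‑average of $\min\big(\E[(1-2\eta_{t_z})\1\{f=1\}],\,\E[(1-2\eta_{t_z})\1\{f=-1\}]\big)$, each term being between $\eta_0$‑independent multiples of $\gamma_d$) — is where the delicate estimates lie.
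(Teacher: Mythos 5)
Your proposal is correct and takes essentially the same route as the paper's own proof: it lifts the constant-bounded hard family of \Cref{lem:hard-dist-massart} onto a thin slab in an extra coordinate with pure $\eta=1/2$ noise elsewhere, arranges the slice thresholds so that the Bayes classifier is a homogeneous $(d+1)$-dimensional halfspace, bounds pairwise correlations by conditioning on $z$, and finishes with \Cref{lem:sq-from-pairwise} plus the error-gap test. The only deviations are immaterial parameter choices (the paper takes the slice threshold equal to $z$ itself, i.e., normal vector $(\vec u,1)$, with the band at $t_0=\Theta(\sqrt{\log(1/\eps)})$ and bias $\gamma=4\sqrt{\eps}$, rather than your rescaled $t_z=(t/t_0)z$ with a constant-location band and $\gamma_d=\Theta(\sqrt{\eps}\log(1/\eps))$), and both settings satisfy the same constraints.
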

To prove the theorem above, we are going to use a ``reduction'' to the problem of 
learning general halfspaces with constant-bounded Massart noise.

\begin{proposition}[SQ Complexity of Hypothesis Testing]\label{prop:sq-test-benign}
    Let $\eps>0$ less than a sufficiently small constant. There exist:
    \begin{itemize}
        
        \item  a family of distributions $\mathfrak{D}_B$ such that every $\D \in \mathfrak{D}$ 
        is a distribution over $(\x',y) \in \R^{d+1} \times \{\pm 1\}$, its $\x'$-marginal is the standard normal
        distribution, and $\D$ satisfies the Massart noise condition for $\eta=1/2$, with respect to some unbiased halfspace, and
        
        \item  a reference distribution $\cal R$  over $((\x,z),y) \in \R^{d+1} \times \{\pm 1\}$, whose $(\x,z)$-marginal is the standard normal and $y$  depends only on $z$,
    \end{itemize}
    such that any SQ algorithm that decides whether the input distribution belongs to $\mathfrak{D}$ or is equal 
    to the reference $\cal R$ either requires queries with tolerance at most $d^{-\Omega(\log(1/\eps))}$
    or makes at least $2^{d^{\Omega(1)}}d^{-\log(1/\eps)}$ queries.
\end{proposition}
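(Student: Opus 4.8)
The plan is to carry out the ``slice reduction'' sketched in \Cref{fig:hardband}: I would embed the constant-bounded Massart hard instances of \Cref{pro:construction}, lifted to $\R^d$ by the hidden-direction construction of \Cref{lem:hard-dist-massart}, into a thin slice of an auxiliary coordinate $z$, and set $\eta(\x,z)=1/2$ (uniform labels, $\x$ independent of $z$) for every $z$ outside the slice. The one nontrivial design choice is to pick the geometry so that the resulting optimal halfspace in $\R^{d+1}$ is \emph{homogeneous} (hence unbiased): fix a constant $\eta_0\in(0,1/2)$, set $\gamma=C_1\eps$ for a large absolute constant $C_1$, let $t=\Theta(\sqrt{\log(1/\eps)})$ be defined by $\pr_{z\sim\normal}[z\ge t]=\gamma$, choose a constant slice centre $z_0$ and a small constant width $\zeta$ (with $\zeta<2z_0$), put $I=[z_0-\zeta/2,z_0+\zeta/2]$ and $q:=\pr_{z\sim\normal}[z\in I]=\Theta(\zeta)$, and take the unit normal $(a\vec v,c)\in\R^{d+1}$ ($\|\vec v\|=1$, $a^2+c^2=1$) with $c/a=t/z_0=\Theta(\sqrt{\log(1/\eps)})$, so $a=\Theta(1/\sqrt{\log(1/\eps)})$ and $c=\Theta(1)$. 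The target halfspace is $g_{\vec v}(\x,z)=\sgn(a\,\vec v\cdot\x+cz)$, which is homogeneous and, since $a\,\vec v\cdot\x+cz\sim\normal(0,1)$, unbiased.

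For $z\in I$ I would take the in-slice threshold $t(z)=(c/a)z$ and use the instance of \Cref{pro:construction}/\Cref{lem:hard-dist-massart} with bias $\gamma(z)=\pr_{u\sim\normal}[u\ge t(z)]$; since $t(z)$ stays within a constant factor of $t$ across $I$, every $\gamma(z)$ is $\poly(\eps)$ and the moment-matching degree is a uniform $m_0=\Theta(\log(1/\eps))$. Concretely, $\D_{\vec v}$ is: $z\sim\normal$; conditioned on $z\notin I$, output $(\x,y)$ with $\x\sim\normal_d$ and $y$ uniform, all independent; conditioned on $z\in I$, output $((\x,y)\mid z)\sim\p^{A_{t(z)},B_{t(z)},p(z)}_{\vec v}$, where $A_{t(z)},B_{t(z)}$ (matching the first $m_0$ moments of $\normal$) and the $\vec v$-independent weight $p(z)$ are as in \Cref{lem:hard-dist-massart}. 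Using $p(z)A_{t(z)}+(1-p(z))B_{t(z)}=\phi$ I would check the $(\x,z)$-marginal is exactly $\normal_{d+1}$, and the computation in \Cref{lem:hard-dist-massart} gives $\E_{\D_{\vec v}}[y\mid\x,z]=g_{\vec v}(\x,z)\,\beta_{t(z)}(\vec v\cdot\x)$ on $I$ and $0$ off $I$; thus $\D_{\vec v}$ is $1/2$-Massart with respect to the homogeneous unbiased halfspace $g_{\vec v}$, and $g_{\vec v}$ is Bayes-optimal for it. I would set $\mathfrak{D}_B=\{\D_{\vec v}:\vec v\in S\}$ with $S\subseteq\R^d$ the nearly-orthogonal set of \Cref{fct:near-orth-vec} ($|S|=2^{d^{\Omega(1)}}$), and let $\cal R$ be the distribution that replaces, inside $I$, the law of $(\x,y)$ given $z$ by ``$\x\sim\normal_d$, $y=+1$ with probability $p(z)$, independent''; then $\cal R$ has standard normal $(\x,z)$-marginal, $y$ depending only on $z$, and the same $y$-marginal as every $\D_{\vec v}$.

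For the SQ bound I would factor the $\chi$-correlation over $z$: since all three distributions share the $\normal$ $z$-marginal and the law of $y\mid z$, only the slice contributes, so
\[
\chi_{\cal R}(\D_{\vec v},\D_{\vec u})=\int_I\phi(z)\,\chi_{{\cal R}\mid z}\!\big(\p^{A_{t(z)},B_{t(z)},p(z)}_{\vec v},\,\p^{A_{t(z)},B_{t(z)},p(z)}_{\vec u}\big)\,\d z .
\]
Exactly as in the proof of \Cref{lem:hard-dist-massart}, the inner $\chi$ splits into a $p(z)$-weighted combination of $\chi_{\normal_d}(\p^{A_{t(z)}}_{\vec v},\p^{A_{t(z)}}_{\vec u})$ and $\chi_{\normal_d}(\p^{B_{t(z)}}_{\vec v},\p^{B_{t(z)}}_{\vec u})$; since $A_{t(z)},B_{t(z)}$ match $m_0=\Theta(\log(1/\eps))$ moments of $\normal$, \Cref{fct:cor} together with $|\vec v\cdot\vec u|\lesssim d^{c-1/2}$ bounds it by $d^{-\Omega(\log(1/\eps))}/\gamma(z)\lesssim d^{-\Omega(\log(1/\eps))}/\eps$ for $\vec v\ne\vec u$, and by $\lesssim 1/\gamma(z)\lesssim1/\eps$ for $\vec v=\vec u$. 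Integrating over $I$ (mass $q\le1$), $\mathfrak{D}_B$ is $(d^{-\Omega(\log(1/\eps))}/\eps,\,O(1/\eps))$-correlated relative to $\cal R$ and has size $2^{d^{\Omega(1)}}$, so \Cref{lem:sq-from-pairwise} (taking $\gamma'$ equal to the pairwise-correlation parameter) gives: any SQ algorithm for $\mathcal B(\mathfrak D_B,{\cal R})$ either uses tolerance $d^{-\Omega(\log(1/\eps))}$ (the $1/\sqrt\eps$ absorbed for $d$ large) or makes $2^{d^{\Omega(1)}}d^{-\log(1/\eps)}$ queries. Finally, choosing $C_1$ large enough that $q\beta_0\gamma>\eps$, and using $1-p(z)\ge\E[\eta\mid z]+\beta_0\gamma(z)$, gives the separation $\min_h\pr_{\cal R}[h\ne y]-\opt_{\D_{\vec v}}\ge q\beta_0\gamma>\eps$, which is what turns this testing lower bound into the learning lower bound of \Cref{thm:lower_bound_benign}.

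The hard part is the parameter balancing in the construction: homogeneity forces the in-slice thresholds to be $(c/a)z$, unbiasedness fixes $a^2+c^2=1$, and I need simultaneously that $z_0,\zeta$ be constants (so $q=\Theta(1)$), that $q\beta_0\gamma>\eps$, and that every in-slice bias $\gamma(z)$ be $\poly(\eps)$ (so the moment-matching degree is a uniform $\Theta(\log(1/\eps))$) — one must verify these are mutually satisfiable, which they are with $\gamma=C_1\eps$ for a sufficiently large constant $C_1$. The remaining points — that $g_{\vec v}$ is genuinely Bayes-optimal for $\D_{\vec v}$ so that ``$\opt+\eps$'' refers to it, and that the $(\x,z)$-marginal stays exactly Gaussian — are routine consequences of the identity $p(z)A_{t(z)}+(1-p(z))B_{t(z)}=\phi$; everything else follows the template of \Cref{lem:hard-dist-massart}.
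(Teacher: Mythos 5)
Your proposal is correct and follows essentially the same slice reduction as the paper's proof (\Cref{lem:hard-dist-benign}): embed the biased constant-bounded Massart instances of \Cref{pro:construction}/\Cref{lem:hard-dist-massart} in a thin slab of an auxiliary coordinate $z$, set the noise to $1/2$ off the slab, observe that the resulting $(d+1)$-dimensional distribution is $1/2$-Massart with respect to a homogeneous (hence unbiased) halfspace, factor the $\chi$-correlation over $z$, and conclude via \Cref{fct:cor} and \Cref{lem:sq-from-pairwise}. The only substantive difference is the parameterization: the paper places a slab of mass $\gamma/2$ with $\gamma=4\sqrt{\eps}$ at height $t_0=\Theta(\sqrt{\log(1/\eps)})$ and uses the fixed normal $(\vec u,1)$ (in-slice threshold equal to $z$), so the $1/\gamma$ in the per-$z$ correlation is cancelled by the slab mass (constant diagonal bound) and the error separation comes from $(\text{slab mass})\times(\text{bias})=\Theta(\gamma^2)\geq 2\eps$; you instead keep a constant-mass slab at constant height and tilt the normal by $c/a=\Theta(\sqrt{\log(1/\eps)})$ so that the in-slice biases are $\poly(\eps)$ with $\gamma=C_1\eps$, which works equally well but leaves $\poly(1/\eps)$ factors in both the pairwise and diagonal correlations that must be (and legitimately can be) absorbed into $d^{-\Omega(\log(1/\eps))}$ and into the query count. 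Two small fixes to your bookkeeping (neither affects the proposition, and the separation is only needed for \Cref{thm:lower_bound_benign}): first, $\zeta<2z_0$ alone does not guarantee $\gamma(z)=\poly(\eps)$ at the lower edge of the slab (as $\zeta\to 2z_0$ the smallest in-slice threshold tends to $0$ and the moment-matching degree collapses to $O(1)$), so take $\zeta$ a sufficiently small constant fraction of $z_0$; second, since $\gamma(z)$ varies by an $\eps^{\Theta(\zeta/z_0)}$ factor across $I$, the separation bound should not be stated as $q\beta_0\gamma$ pointwise — restrict to the lower half of the slab, where $\gamma(z)\geq\gamma$, to get $\geq(q/2)\beta_0\gamma\geq 2\eps$ for $C_1$ large.
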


\begin{proof}
    \Cref{prop:sq-test-benign} follows from the lemma below showing that we can construct 
    a family of distributions with small pairwise correlation 
    \begin{lemma}[Correlated Family of Distributions]
        \label{lem:hard-dist-benign}
        Let $\eps>0$ less than a sufficiently small constant. There exist:
        \begin{itemize}
            \item 
            a set $\mathfrak{D}_B$ of $2^{d^{\Omega(1)}}$ distributions on $\R^{d+1}\times\{\pm 1\}$, such that
            every $\D\in \mathfrak{D}_B$ is a distribution over $(\x',y) \in \R^{d+1} \times \{\pm 1\}$, its $\x'$-marginal is the standard normal distribution, and $\D$ satisfies the Massart noise condition for $\eta=1/2$ with respect to a halfspace $f(\x')=\sign((\vec v,1)\cdot \x')$ where $\vec v$ is a unit vector in $\R^d$.
            Moreover, all $\D \in \mathfrak{D}_B$ have the same $y$-marginal.
            \item A reference distribution ${\cal R}$ in $\R^{d+1}\times\{\pm 1\}$,  where for $((\x,z), y) \sim {\cal R}$ we have that
            $\x'$ is a standard Gaussian $ \normal(\vec 0, \vec I)$, $y$ depends only on $z$, 
            and for any function $h:\R^{d+1}\mapsto \{\pm 1\}$ and any $\D\in \mathfrak{D}_B$, it holds that $\pr_{(\x',y)\sim {\cal R}}[h(\x')\neq y] - \min_{f\in \mathcal C}\pr_{(\x',y)\sim {\D}}[f(\x')\neq y]\geq 2\eps$.
        \end{itemize}
        Moreover, the set $\mathfrak{D}_B$ is $(d^{-\Omega(\log(1/\eps))},16\eps)$-correlated with respect to the reference distribution ${\cal R}$.
    \end{lemma}
    \begin{proof}
        Fix $t_0>0$ and $\zeta>0$ such that $\pr_{z\sim \normal}[z\geq t_0]=\gamma$ for  $\gamma=4\sqrt{\eps}$ and $\pr_{z\sim \normal}[t_0+\zeta>z\geq t_0]=\gamma/2$.
        We are going to construct a new set of distributions $\mathfrak{D}_B$ as follows: 
        \begin{enumerate}
            \item  
            For $\eta = 1/4$ and some threshold $t$ denote $(\mathfrak{D}_t,{\cal R}_t)$ be the family of distributions 
            and their corresponding reference distributions from \Cref{lem:hard-dist-massart}.
            Recall that the distributions in the family $\mathfrak{D}_t$ are indexed by 
            a set of unit vectors $S$, i.e., $\mathfrak{D}_t = \{ \D_{\vec u,t}: \vec u \in S \}$.
            Every $\D_{\vec u,t}\in \mathfrak{D}_t$ is a distribution in $\R^d$ that satisfies the $\eta$-Massart 
            noise condition with $\eta=1/4$ and with respect to the halfspace $f(\x)=\sign(\vec u \cdot \x +t)$.
            \item 
            Fix some direction $\vec u \in S$.  
            We define a new distribution $\D_{\vec u}'$ on $((\x, z), y) \in \R^{d+1}\times\{\pm 1\}$
            where the ``extra'' coordinate $z$ is drawn from the standard normal distribution $\normal(0,1)$.
            When $z$ falls inside some thin interval $[t_0, t_0 + \zeta]$ we sample $(\x, y)$ from
            the $1/4$-Massart noise distribution $\D_{\vec u, z}$.  This corresponds to the blue/red region 
            in \Cref{fig:hardband}.  When $z$ falls outside $[t_0, t_0 + \zeta]$, we draw 
            $\x \sim \normal(\vec 0, \vec I)$ and set $y$ to be $\pm 1$ with probability $1/2$, independently of $\x$.
            This ``high-noise'' area corresponds to the gray area of \Cref{fig:hardband}.
            More formally, we define
            $$\D'_{\vec u}((\x,z),y) =\begin{cases} \D'_{\vec u,z}(\x,y)\phi(z) \quad \text{if } z\in[t_0,t_0+\zeta]\\
                \frac{1}{2}\phi_{d+1}((\x,z)) \quad \text{otherwise} \;.
            \end{cases} $$
            Let $\mathfrak{D}_B$ be the set of these distributions.
            \item 
            We define a reference distribution $\cal R$  on $((\x, z), y) \in \R^{d+1}\times\{\pm 1\}$
            similarly. The ``extra'' coordinate $z$ is again drawn from the standard normal distribution $\normal(0,1)$.
            When $z$ falls inside some thin interval $[t_0, t_0 + \zeta]$, i.e., the blue/red region of \Cref{fig:hardband} 
            we sample $(\x, y)$ from the reference distribution ${\cal R}_t$. 
            When $z$ falls outside $[t_0, t_0 + \zeta]$, i.e., 
            in the gray area of \Cref{fig:hardband}, we draw $\x \sim \normal(\vec 0, \vec I)$ and set $y$ to be $\pm 1$ with probability $1/2$, independently of $\x$.
            We have
            $$
            {\cal R}((\x,z),y) =
            \begin{cases} {\cal R}_z(\x,y)\phi(z) \quad \text{if } z\in[t_0,t_0+\zeta]\\
                \frac{1}{2}\phi_{d+1}((\x,z)) \quad \text{otherwise} \;.
            \end{cases} 
            $$
        \end{enumerate}
        Our first step is to prove that any ``hidden-direction'' $(d+1)$-dimensional 
        distribution $\D'_{\vec u}$ that we create out of the Massart-noise instances $\D_{\vec u, t}$ 
        satisfies the $1/2$-Massart noise condition with a homogeneous optimal halfspace.   We show the following claim.
        \begin{claim}
            \label{clm:homogeneous-benign-from-massasrt}
            For the distribution $\D_{\vec u}'$, the optimal hypothesis is $f'(\x')=\sign(\x' \cdot (\vec u,1))$, 
            thus $\D_{\vec u}'$ satisfies the Massart noise condition for $\eta=1/2$ with respect to a homogeneous $(d+1)$-dimensional halfspace.
        \end{claim}
        \begin{proof}
            In what $A$ be the event that $z\in[t_0,t_0+\zeta]$. 
            Assume in order to reach to a contradiction that $f'(\x')$ is not the optimal hypothesis. Let $h(\x')$ to be the optimal hypothesis for the distribution $\D_{\vec u'}$. 
            First, observe that for any hypothesis $h'(\x')$ it holds
            \begin{equation*}
                \pr_{(\x',y)\sim \D_{\vec u}'}[h'(\x')\neq y, A^c]=\pr_{(\x',y)\sim \D_{\vec u}'}[A^c]/2\;.
            \end{equation*}
            Thus, we need only to consider the error inside the $A$. For the function $f'(\x')$, we have that 
            \begin{align}
                \pr_{(\x',y)\sim \D_{\vec u}'}[f'(\x')\neq y, A]&=\int_{[t_0,t_0+\zeta]} \pr_{((\x,t),y)\sim \D_{\vec u}'}[f'((\x,t))\neq y|t=z] \phi(z)\d z \nonumber\\
                &=\int_{[t_0,t_0+\zeta]} \pr_{(\x,y)\sim \D_{\vec u,z}}[\sign(\vec u \cdot \x + z)\neq y] \phi(z)\d z \label{eq:optimal-err-ben1}\;.
            \end{align}
            Similarly, for any other hypothesis $h'(\x')$, we have that 
            \begin{align}
                \pr_{(\x',y)\sim \D_{\vec u}'}[h'(\x')\neq y, A]=\int_{[t_0,t_0+\zeta]} \pr_{(\x,y)\sim \D_{\vec u,z}}[h'((\x,z))\neq y] \phi(z)\d z \label{eq:optimal-err-ben2}\;.
            \end{align}
            Moreover, for the  $h'(\x')$ it should hold that $ \pr_{(\x',y)\sim \D_{\vec u}'}[f'(\x')\neq y, A]- \pr_{(\x',y)\sim \D_{\vec u}'}[h(\x')\neq y, A]>0$, thus combining with \cref{eq:optimal-err-ben1} and \cref{eq:optimal-err-ben2}, we have that there exists $z\in [t_0,t_0+\zeta]$ such that $\pr_{(\x,y)\sim \D_{\vec u,z}}[\sign(\vec u \cdot \x + z)\neq y]>\pr_{(\x,y)\sim \D_{\vec u,z}}[h'((\x,z))\neq y]$, which is a contradiction, because for each $\D_{\vec u,t}$ the optimal classifier is $f(\x)=\sign(\vec u \cdot \x + t)$.
        \end{proof}
        
        We next have to show that the family of ``hidden-direction'' distributions $\mathfrak{D}_B$ is pairwise correlated.
        We have the following claim.
        \begin{claim}\label{clm:correl-benign}
            For every $\D \in \mathfrak{D}_B$ it holds $\chi_{\cal R}(\D,\D)\leq 4$
            and for every distinct $\D_1, \D_2 \in \mathfrak{D}_B$ it holds 
            $\chi_{\cal R}(\D_1,\D_2)\leq d^{-\Omega(\log(1/\gamma))}$.
            In other words, the family $\mathfrak{D}_B$ is $(d^{-\Omega(\log(1/\gamma))},4)$-correlated with respect to the reference distribution $\cal R$.
        \end{claim} 
        \begin{proof}
            Note that by the construction of the distributions $\D_1,\D_2$, for a point $\x'=(\x,z)\in \R^{d+1}$, 
            such that $z\not\in[t_0,t_0+\zeta]$ the conditional distributions $\D_1\mid z$, $\D_2\mid z$ and $\mathcal R \mid z$ coincide.
            Therefore, since $\D_1,\D_2,\mathcal R$ give the same distribution on $z$, for $z\not\in[t_0,t_0+\zeta]$ it holds $\chi_{\mathcal R \mid z}((\D_1\mid z),(\D_2\mid z))=0$, for $z\not\in[t_0,t_0+\zeta]$.  Thus, to compute the correlation
            of $\D_1, \D_2$ it suffices to consider $z \in [t_0, t_0 + \zeta]$, i.e.,
            \begin{align*}
                \chi_{\cal R}(\D_1,\D_2)&= \E_{z\sim \normal}[\chi_{\mathcal R_z}((\D_1\mid z),(\D_2\mid z))]
                \\&= \int_{t_0}^{t_0+\zeta}\chi_{\mathcal R_z}((\D_1\mid  z),(\D_2 \mid z))\phi(z)\d z\;,
            \end{align*}
            where $(\D_1\mid z)$ (resp. $(\D_2\mid z)$) is the pdf $\D_1$ (resp. $\D_2$) conditioned on $z$. From \Cref{lem:hard-dist-massart}, it holds that 
            $$ 	\chi_{\cal R}(\D_1,\D_2)\leq  \frac{d^{-\Omega(\log(1/\gamma))}}{\gamma}\int_{t_0}^{t_0+\zeta}\phi(z)\d z=d^{-\Omega(\log(1/\gamma))} \;,$$
            where we used that $\pr_{z\sim \normal}[t_0+\zeta>z\geq t_0]=\gamma/2$.
            The second part follows similarly.
        \end{proof}
        Moreover, if $r$ is the best hypothesis for $\cal R$ and $f$ for $\D_{\vec u}'$, then from \Cref{lem:hard-dist-massart}, we have
        \[
        \pr_{(\x',y)\sim {\cal R}}[r(\x')\neq y]-\pr_{(\x',y)\sim \D_{\vec u}'}[f(\x')\neq y]\geq \frac{1}{3} \gamma \int_{t_0}^{t_0 +\zeta}  \phi(z)\d z\geq \frac{\gamma^2}{6}\;,
        \]
        where in the last inequality we used that $t_0, \zeta$ are chosen such that $ \pr_{z\sim \normal}[t_0+\zeta>z\geq t_0]=\gamma/2$, by substituting  $\gamma=4\sqrt{\eps}$ the result follows.
    \end{proof}
    
    To prove \Cref{prop:sq-test-benign}, from \Cref{lem:hard-dist-benign}, we have that the set $\mathfrak{D}_B$ is $(d^{-\Omega(\log(1/\eps))},16)$-correlated with respect ${\cal R}$.
    Thus, we have that $\mathrm{SD}({\cal B},d^{-\Omega(\log(1/\eps))},16)=2^{d^{\Omega(1)}}$ and an application of  \Cref{lem:sq-from-pairwise} completes the proof.
\end{proof}
\begin{proof}[Proof of \Cref{thm:lower_bound_benign}]
    Fix $\eps>0$ less than a sufficiently small constant. Let $\mathfrak{D}_B$ and $\cal R$ as in \Cref{lem:hard-dist-benign}. Let $\cal A$ be an algorithm that given $\eps'>0$ and $\D$ that satisfies the $1/2$-Massart Noise Condition with respect the halfspace $f'(\x')$ computes a hypothesis $h$ such that
    $$\pr_{(\x',y)\sim \D}[h(\x')\neq y]\leq \pr_{(\x',y)\sim \D}[f(\x')\neq y] +\eps'\;.$$ 
    We show that $\cal A$ can solve the Decision Problem ${\cal B}(\mathfrak{D}_B,{\cal R})$. Let $\D\in \mathfrak{D}_B$ with optimal halfspace $f(\x)$, then from \Cref{lem:hard-dist-benign}, we have that for any hypothesis $h$ it holds  $$\pr_{(\x',y)\sim {\cal R}}[h(\x')\neq y] - \pr_{(\x',y)\sim {\D}}[f(\x')\neq y]\geq 2\eps\;.$$

    The algorithm $\cal A$ for $\eps' \leq 2\eps$ would solve the  decision problem
    $\mathcal{B}(\mathfrak{D}_B, {\cal R})$ (with one additional query $\E_{(\x',y)\sim \D}[h(\x')y]$ up to accuracy $\eps$), thus from \Cref{prop:sq-test-benign}, we get our result.
\end{proof}

\bibliographystyle{alpha}
\bibliography{clean2}

\newcommand{\etalchar}[1]{$^{#1}$}
\begin{thebibliography}{DKK{\etalchar{+}}21b}

\bibitem[ABHU15]{AwasthiBHU15}
P.~Awasthi, M.~F. Balcan, N.~Haghtalab, and R.~Urner.
\newblock Efficient learning of linear separators under bounded noise.
\newblock In {\em Proceedings of The 28th Conference on Learning Theory, {COLT}
  2015}, pages 167--190, 2015.

\bibitem[ABHZ16]{AwasthiBHZ16}
P.~Awasthi, M.~F. Balcan, N.~Haghtalab, and H.~Zhang.
\newblock Learning and 1-bit compressed sensing under asymmetric noise.
\newblock In {\em Proceedings of the 29th Conference on Learning Theory, {COLT}
  2016}, pages 152--192, 2016.

\bibitem[AL88]{AL88}
D.~Angluin and P.~Laird.
\newblock Learning from noisy examples.
\newblock {\em Machine Learning}, 2(4):343--370, 1988.

\bibitem[BBH{\etalchar{+}}21]{BBHLS21}
M.~Brennan, G.~Bresler, S.~B. Hopkins, J.~Li, and T.~Schramm.
\newblock Statistical query algorithms and low-degree tests are almost
  equivalent.
\newblock In {\em Proceedings of The 34th Conference on Learning Theory,
  {COLT}}, 2021.

\bibitem[BK09]{beigman2009learning}
E.~Beigman and B.~B. Klebanov.
\newblock Learning with annotation noise.
\newblock In {\em Proceedings of the Joint Conference of the 47th Annual
  Meeting of the ACL and the 4th International Joint Conference on Natural
  Language Processing of the AFNLP}, pages 280--287, 2009.

\bibitem[BZ17]{BZ17}
M.-F. Balcan and H.~Zhang.
\newblock Sample and computationally efficient learning algorithms under
  s-concave distributions.
\newblock In {\em Advances in Neural Information Processing Systems}, pages
  4796--4805, 2017.

\bibitem[CKMY20]{CKMY20}
S.~Chen, F.~Koehler, A.~Moitra, and M.~Yau.
\newblock Classification under misspecification: Halfspaces, generalized linear
  models, and connections to evolvability.
\newblock In {\em Advances in Neural Information Processing Systems, NeurIPS},
  2020.

\bibitem[CM84]{chhikara1984linear}
R.~S. Chhikara and J.~McKeon.
\newblock Linear discriminant analysis with misallocation in training samples.
\newblock {\em Journal of the American Statistical Association},
  79(388):899--906, 1984.

\bibitem[CW01]{CW:01}
A.~Carbery and J.~Wright.
\newblock {Distributional and $L^q$ norm inequalities for polynomials over
  convex bodies in $R^n$}.
\newblock {\em Mathematical Research Letters}, 8(3):233--248, 2001.

\bibitem[Dan16]{Daniely16}
A.~Daniely.
\newblock Complexity theoretic limitations on learning halfspaces.
\newblock In {\em Proceedings of the 48th Annual Symposium on Theory of
  Computing, {STOC} 2016}, pages 105--117, 2016.

\bibitem[DFT{\etalchar{+}}15]{DFTWW15}
D.~Dachman{-}Soled, V.~Feldman, L.{-}Y. Tan, A.~Wan, and K.~Wimmer.
\newblock Approximate resilience, monotonicity, and the complexity of agnostic
  learning.
\newblock In Piotr Indyk, editor, {\em Proceedings of the Twenty-Sixth Annual
  {ACM-SIAM} Symposium on Discrete Algorithms, {SODA} 2015}, pages 498--511.
  {SIAM}, 2015.

\bibitem[DGT19]{DGT19}
I.~Diakonikolas, T.~Gouleakis, and C.~Tzamos.
\newblock Distribution-independent pac learning of halfspaces with massart
  noise.
\newblock In H.~Wallach, H.~Larochelle, A.~Beygelzimer, F.~d'Alch\'{e} Buc,
  E.~Fox, and R.~Garnett, editors, {\em Advances in Neural Information
  Processing Systems 32}, pages 4751--4762. Curran Associates, Inc., 2019.

\bibitem[DK20]{DK20-SQ-Massart}
I.~Diakonikolas and D.~M. Kane.
\newblock Hardness of learning halfspaces with massart noise.
\newblock {\em CoRR}, abs/2012.09720, 2020.

\bibitem[DKK{\etalchar{+}}20]{DKKTZ20}
I.~Diakonikolas, D.~M. Kane, V.~Kontonis, C.~Tzamos, and N.~Zarifis.
\newblock A polynomial time algorithm for learning halfspaces with tsybakov
  noise.
\newblock {\em arXiv}, 2020.

\bibitem[DKK{\etalchar{+}}21a]{DKKTZ21}
I.~Diakonikolas, D.~M. Kane, V.~Kontonis, C.~Tzamos, and N.~Zarifis.
\newblock Agnostic proper learning of halfspaces under gaussian marginals.
\newblock In {\em Proceedings of The 34th Conference on Learning Theory,
  {COLT}}, 2021.

\bibitem[DKK{\etalchar{+}}21b]{DKKTZ21b}
I.~Diakonikolas, D.~M. Kane, V.~Kontonis, C.~Tzamos, and N.~Zarifis.
\newblock Efficiently learning halfspaces with tsybakov noise.
\newblock {\em STOC}, 2021.

\bibitem[DKPZ21]{DKPZ21}
I.~Diakonikolas, D.~M. Kane, T.~Pittas, and N.~Zarifis.
\newblock The optimality of polynomial regression for agnostic learning under
  gaussian marginals.
\newblock In {\em Proceedings of The 34th Conference on Learning Theory,
  {COLT}}, 2021.

\bibitem[DKS17]{DKS17-sq}
I.~Diakonikolas, D.~M. Kane, and A.~Stewart.
\newblock Statistical query lower bounds for robust estimation of
  high-dimensional gaussians and gaussian mixtures.
\newblock In {\em 2017 IEEE 58th Annual Symposium on Foundations of Computer
  Science (FOCS)}, pages 73--84, 2017.

\bibitem[DKTZ20a]{DKTZ20}
I.~Diakonikolas, V.~Kontonis, C.~Tzamos, and N.~Zarifis.
\newblock Learning halfspaces with massart noise under structured
  distributions.
\newblock In {\em Conference on Learning Theory, {COLT} 2020}, volume 125 of
  {\em Proceedings of Machine Learning Research}, pages 1486--1513. {PMLR},
  2020.

\bibitem[DKTZ20b]{DKTZ20b}
I.~Diakonikolas, V.~Kontonis, C.~Tzamos, and N.~Zarifis.
\newblock Learning halfspaces with tsybakov noise.
\newblock {\em arXiv}, 2020.

\bibitem[DKZ20]{DKZ20}
I.~Diakonikolas, D.~M. Kane, and N.~Zarifis.
\newblock Near-optimal {SQ} lower bounds for agnostically learning halfspaces
  and {ReLUs} under {G}aussian marginals.
\newblock In {\em Advances in Neural Information Processing Systems 33: Annual
  Conference on Neural Information Processing Systems 2020, NeurIPS 2020},
  2020.

\bibitem[Fan68]{Fan68}
K.~Fan.
\newblock On infinite systems of linear inequalities.
\newblock {\em Journal of Mathematical Analysis and Applications}, 21(3):475 --
  478, 1968.

\bibitem[Fel16]{Feldman16b}
V.~Feldman.
\newblock Statistical query learning.
\newblock In {\em Encyclopedia of Algorithms}, pages 2090--2095. 2016.

\bibitem[FGKP06]{FeldmanGKP06}
V.~Feldman, P.~Gopalan, S.~Khot, and A.~K. Ponnuswami.
\newblock New results for learning noisy parities and halfspaces.
\newblock In {\em 47th Annual {IEEE} Symposium on Foundations of Computer
  Science {(FOCS)}}, pages 563--574. {IEEE} Computer Society, 2006.

\bibitem[FGR{\etalchar{+}}17]{FeldmanGRVX17}
V.~Feldman, E.~Grigorescu, L.~Reyzin, S.~Vempala, and Y.~Xiao.
\newblock Statistical algorithms and a lower bound for detecting planted
  cliques.
\newblock {\em J. {ACM}}, 64(2):8:1--8:37, 2017.

\bibitem[FGV17]{FeldmanGV17}
V.~Feldman, C.~Guzman, and S.~S. Vempala.
\newblock Statistical query algorithms for mean vector estimation and
  stochastic convex optimization.
\newblock In Philip~N. Klein, editor, {\em Proceedings of the Twenty-Eighth
  Annual {ACM-SIAM} Symposium on Discrete Algorithms, {SODA} 2017}, pages
  1265--1277. {SIAM}, 2017.

\bibitem[FS97]{FreundSchapire:97}
Y.~Freund and R.~Schapire.
\newblock A decision-theoretic generalization of on-line learning and an
  application to boosting.
\newblock {\em Journal of Computer and System Sciences}, 55(1):119--139, 1997.

\bibitem[GGK20]{GGK20}
S.~Goel, A.~Gollakota, and A.~R. Klivans.
\newblock Statistical-query lower bounds via functional gradients.
\newblock In {\em Advances in Neural Information Processing Systems 33: Annual
  Conference on Neural Information Processing Systems 2020, NeurIPS 2020},
  2020.

\bibitem[GHR92]{GHR:92}
M.~Goldmann, J.~H{\aa}stad, and A.~Razborov.
\newblock Majority gates vs. general weighted threshold gates.
\newblock {\em Computational Complexity}, 2:277--300, 1992.

\bibitem[GR06]{GR:06}
V.~Guruswami and P.~Raghavendra.
\newblock {Hardness of learning halfspaces with noise}.
\newblock In {\em Proc.\ 47th IEEE Symposium on Foundations of Computer Science
  (FOCS)}, pages 543--552. IEEE Computer Society, 2006.

\bibitem[Han09]{hanneke09}
S.~Hanneke.
\newblock Theoretical foundations of active learning.
\newblock Technical report, CARNEGIE-MELLON UNIV PITTSBURGH PA MACHINE LEARNING
  DEPT, 2009.

\bibitem[Hau92]{Haussler:92}
D.~Haussler.
\newblock {Decision theoretic generalizations of the PAC model for neural net
  and other learning applications}.
\newblock {\em Information and Computation}, 100:78--150, 1992.

\bibitem[Haz16]{hazan2016introduction}
E.~Hazan.
\newblock Introduction to online convex optimization.
\newblock {\em Foundations and Trends{\textregistered} in Optimization},
  2(3-4):157--325, 2016.

\bibitem[HY15]{HannekeY15}
S.~Hanneke and L.~Yang.
\newblock Minimax analysis of active learning.
\newblock {\em J. Mach. Learn. Res.}, 16:3487--3602, 2015.

\bibitem[KB09]{beigman2009annotator}
B.~B. Klebanov and E.~Beigman.
\newblock From annotator agreement to noise models.
\newblock {\em Computational Linguistics}, 35(4):495--503, 2009.

\bibitem[KB10]{KB10}
B.~B. Klebanov and E.~Beigman.
\newblock Some empirical evidence for annotation noise in a benchmarked
  dataset.
\newblock In {\em Human Language Technologies: The 2010 Annual Conference of
  the North American Chapter of the Association for Computational Linguistics},
  pages 438--446, 2010.

\bibitem[Kea98]{Kearns:98}
M.~J. Kearns.
\newblock Efficient noise-tolerant learning from statistical queries.
\newblock {\em Journal of the ACM}, 45(6):983--1006, 1998.

\bibitem[KKMS08]{KKMS:08}
A.~Kalai, A.~Klivans, Y.~Mansour, and R.~Servedio.
\newblock Agnostically learning halfspaces.
\newblock {\em SIAM Journal on Computing}, 37(6):1777--1805, 2008.
\newblock Special issue for FOCS~2005.

\bibitem[KSS94]{KSS:94}
M.~Kearns, R.~Schapire, and L.~Sellie.
\newblock {Toward Efficient Agnostic Learning}.
\newblock {\em Machine Learning}, 17(2/3):115--141, 1994.

\bibitem[MN06]{Massart2006}
P.~Massart and E.~Nedelec.
\newblock Risk bounds for statistical learning.
\newblock {\em Ann. Statist.}, 34(5):2326--2366, 10 2006.

\bibitem[MP68]{MinskyPapert:68}
M.~Minsky and S.~Papert.
\newblock {\em {P}erceptrons: an introduction to computational geometry}.
\newblock MIT Press, Cambridge, MA, 1968.

\bibitem[MT94]{MT:94}
W.~Maass and G.~Turan.
\newblock How fast can a threshold gate learn?
\newblock In S.~Hanson, G.~Drastal, and R.~Rivest, editors, {\em Computational
  Learning Theory and Natural Learning Systems}, pages 381--414. MIT Press,
  1994.

\bibitem[MT99]{MT99}
E.~Mammen and A.~B. Tsybakov.
\newblock Smooth discrimination analysis.
\newblock {\em Ann. Statist.}, 27(6):1808--1829, 12 1999.

\bibitem[MV19]{MangoubiV19}
O.~Mangoubi and N.~K. Vishnoi.
\newblock Nonconvex sampling with the metropolis-adjusted langevin algorithm.
\newblock In {\em Conference on Learning Theory, {COLT} 2019}, pages
  2259--2293, 2019.

\bibitem[Nov62]{Novikoff:62}
A.~Novikoff.
\newblock On convergence proofs on perceptrons.
\newblock In {\em Proceedings of the Symposium on Mathematical Theory of
  Automata}, volume XII, pages 615--622, 1962.

\bibitem[O'D14]{ODonnellbook}
R.~O'Donnell.
\newblock {\em Analysis of Boolean Functions}.
\newblock Cambridge University Press, 2014.

\bibitem[Ros58]{Rosenblatt:58}
F.~Rosenblatt.
\newblock The {P}erceptron: a probabilistic model for information storage and
  organization in the brain.
\newblock {\em Psychological Review}, 65:386--407, 1958.

\bibitem[RS94]{RivestSloan:94}
R.~Rivest and R.~Sloan.
\newblock A formal model of hierarchical concept learning.
\newblock {\em Information and Computation}, 114(1):88--114, 1994.

\bibitem[Slo88]{Sloan88}
R.~H. Sloan.
\newblock Types of noise in data for concept learning.
\newblock In {\em Proceedings of the First Annual Workshop on Computational
  Learning Theory}, COLT '88, pages 91--96, San Francisco, CA, USA, 1988.
  Morgan Kaufmann Publishers Inc.

\bibitem[Slo92]{Sloan92}
R.~H. Sloan.
\newblock Corrigendum to types of noise in data for concept learning.
\newblock In {\em Proceedings of the Fifth Annual {ACM} Conference on
  Computational Learning Theory, {COLT} 1992}, page 450, 1992.

\bibitem[Slo96]{Sloan96}
R.~H. Sloan.
\newblock {\em Pac Learning, Noise, and Geometry}, pages 21--41.
\newblock Birkh{\"a}user Boston, Boston, MA, 1996.

\bibitem[SSBD14]{SB14}
S.~Shalev-Shwartz and S.~Ben-David.
\newblock {\em Understanding machine learning: From theory to algorithms}.
\newblock Cambridge university press, 2014.

\bibitem[STC00]{CristianiniShaweTaylor:00}
J.~Shawe-Taylor and N.~Cristianini.
\newblock {\em An introduction to support vector machines}.
\newblock Cambridge University Press, 2000.

\bibitem[Tsy04]{tsybakov2004optimal}
A.~Tsybakov.
\newblock Optimal aggregation of classifiers in statistical learning.
\newblock {\em The Annals of Statistics}, 32(1):135--166, 2004.

\bibitem[Val84]{Valiant:84}
L.~Valiant.
\newblock A theory of the learnable.
\newblock {\em Communications of the ACM}, 27(11):1134--1142, 1984.

\bibitem[Vap82]{Vapnik82}
V.~Vapnik.
\newblock {\em Estimation of Dependences Based on Empirical Data: Springer
  Series in Statistics}.
\newblock Springer-Verlag, Berlin, Heidelberg, 1982.

\bibitem[Vap98]{Vapnik:98}
V.~Vapnik.
\newblock {\em Statistical Learning Theory}.
\newblock Wiley-Interscience, New York, 1998.

\bibitem[Ver18]{Ver18}
R.~Vershynin.
\newblock {\em High-Dimensional Probability: An Introduction with Applications
  in Data Science}.
\newblock Cambridge Series in Statistical and Probabilistic Mathematics.
  Cambridge University Press, 2018.

\bibitem[Yao90]{Yao:90}
A.~Yao.
\newblock On {ACC} and threshold circuits.
\newblock In {\em Proceedings of the Thirty-First Annual Symposium on
  Foundations of Computer Science}, pages 619--627, 1990.

\bibitem[YZ17]{YanZ17}
S.~Yan and C.~Zhang.
\newblock Revisiting perceptron: Efficient and label-optimal learning of
  halfspaces.
\newblock In {\em Advances in Neural Information Processing Systems 30: Annual
  Conference on Neural Information Processing Systems 2017}, pages 1056--1066,
  2017.

\bibitem[ZL21]{ZL21}
C.~Zhang and Y.~Li.
\newblock Improved algorithms for efficient active learning halfspaces with
  massart and tsybakov noise.
\newblock In {\em Proceedings of The 34th Conference on Learning Theory,
  {COLT}}, 2021.

\bibitem[ZLC17]{ZhangLC17}
Y.~Zhang, P.~Liang, and M.~Charikar.
\newblock A hitting time analysis of stochastic gradient langevin dynamics.
\newblock In {\em Proceedings of the 30th Conference on Learning Theory, {COLT}
  2017}, pages 1980--2022, 2017.

\bibitem[ZSA20]{ZSA20}
C.~Zhang, J.~Shen, and P.~Awasthi.
\newblock Efficient active learning of sparse halfspaces with arbitrary bounded
  noise.
\newblock In {\em Advances in Neural Information Processing Systems, NeurIPS},
  2020.

\end{thebibliography}

\clearpage
\appendix
\section{``Massart'' Noise with $\eta>1/2$ is Equivalent to Agnostic Learning}
\label{app:semi-random-agnostic}
\begin{lemma}[Learning with $\eta > 1/2$]
	Let $\mathcal{C}$ be a class of Boolean functions on $\R^d$ and let $\mathcal{F}$ 
	be a class of distributions over $\R^d$.
	Fix $\eta \in (1/2, 1]$
	and  let $\cal A$ be a learning algorithm that given $m$ 
	samples from a distribution $\D'$ with $\eta$-semi-random noise, learns a hypothesis
	$h: \R^d \mapsto \{\pm 1\}$ such that
	\[
	\pr_{(\bx, y) \sim \D'}[h(\bx) \neq y ] \leq 
	\min_{c \in \mathcal{C}} \pr_{(\bx, y) \sim \D'}[c(\bx) \neq y ]  + \eps
	\,.
	\]
	Then $\mathcal{A}$ can learn $\mathcal{C}$ in the agnostic PAC learning model.
\end{lemma}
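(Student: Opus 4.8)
The plan is to reduce agnostic PAC learning of $\mathcal{C}$ over the distribution family $\mathcal{F}$ to learning with $\eta$-semi-random noise by artificially injecting fresh random label noise. First I would take an arbitrary agnostic instance $\D$ on $\R^d\times\{\pm 1\}$ whose $\x$-marginal lies in $\mathcal{F}$, set $q \eqdef 1-\eta$, which satisfies $q\in[0,1/2)$ precisely because $\eta\in(1/2,1]$, and define $\D'$ to be the distribution obtained from $\D$ by flipping each label independently with probability $q$. The $\x$-marginal of $\D'$ equals that of $\D$, hence still lies in $\mathcal{F}$, and a draw from $\D'$ is simulated from a draw of $\D$ together with one fresh $\mathrm{Bernoulli}(q)$ coin, so $m$ samples from $\D'$ cost $m$ samples from $\D$.

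Second, I would check that $\D'$ satisfies the $\eta$-semi-random noise condition with respect to a best concept. Writing $p(\x)=\pr_{(\x,y)\sim\D}[y=1\mid\x]$, the post-flip conditional is $p'(\x)=q+(1-2q)p(\x)\in[q,1-q]=[1-\eta,\eta]$. Thus for \emph{any} target $c\in\mathcal{C}$ the induced flip probability $\eta_c(\x)\eqdef\pr_{(\x,y')\sim\D'}[y'\neq c(\x)\mid\x]$ equals either $1-p'(\x)$ or $p'(\x)$, both of which lie in $[1-\eta,\eta]\subseteq[0,\eta]$. Hence $\D'$ is an $\eta$-semi-random instance with respect to any $c\in\mathcal{C}$, in particular with respect to $c^\ast\in\argmin_{c\in\mathcal{C}}\pr_{(\x,y)\sim\D}[c(\x)\neq y]$, so the hypothesis of the lemma applies to $\D'$.

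Third, the key algebraic identity: conditioning on $(\x,y)$ and then on the injected coin gives, for every fixed classifier $g:\R^d\mapsto\{\pm 1\}$,
\[
\pr_{(\x,y')\sim\D'}[g(\x)\neq y'] = q + (1-2q)\,\pr_{(\x,y)\sim\D}[g(\x)\neq y].
\]
Since $1-2q=2\eta-1>0$, the right-hand side is an increasing affine function of the $\D$-error, so the minimizing concept and the value of $\opt$ transfer: $\min_{c\in\mathcal{C}}\pr_{\D'}[c(\x)\neq y'] = q + (1-2q)\opt$, where $\opt = \min_{c\in\mathcal{C}}\pr_{\D}[c(\x)\neq y]$. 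Running $\mathcal{A}$ on $\D'$ with accuracy parameter $\eps'\eqdef\eps(2\eta-1)>0$ returns $h$ with $\pr_{\D'}[h(\x)\neq y']\leq q+(1-2q)\opt+\eps'$; applying the identity with $g=h$ and dividing by $1-2q$ yields $\pr_{\D}[h(\x)\neq y]\leq\opt+\eps'/(2\eta-1)=\opt+\eps$. Thus $h$ is an $\eps$-accurate agnostic hypothesis, obtained from $m(\eps(2\eta-1))$ samples of $\D$. (The converse direction of the claimed equivalence is immediate, since $\eta$-semi-random noise with $\eta\leq 1$ is a special case of agnostic label noise.)

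There is essentially no technical obstacle here; the only points requiring care are (i) that $q<1/2$ \emph{strictly}, which is exactly where $\eta>1/2$ is used and which makes the affine error map invertible with a controlled slope $2\eta-1$, and (ii) verifying that noise injection preserves the $\x$-marginal so that $\mathcal{A}$'s distributional assumption $\mathcal{F}$ continues to hold.
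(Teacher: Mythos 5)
Your proof is correct and follows essentially the same route as the paper: injecting independent label flips with probability $1-\eta$ (the paper phrases this as keeping the label with probability $2\eta-1$ and randomizing otherwise, which is the same distribution) and using the affine relation between errors under $\D$ and $\D'$. You are in fact a bit more careful than the paper, since you explicitly rescale the accuracy parameter to $\eps(2\eta-1)$ and check that the $\x$-marginal is preserved.
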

\begin{proof}
	Let $\D$ be any distribution on $\R^d \times \{\pm 1\}$.  
	Since $\eta > 1/2$, we can create a 
	distribution $\D'$ with $\eta$-semi-random noise as follows: 
	\begin{enumerate}
		\item  Draw $(\x,y)\sim \D$.
		\item  With probability $2\eta-1$ return $(\x, y)$
		and with probability $2 (1 -\eta)$ return $(\x, \hat{y})$ where
		$\hat{y} \in \{\pm 1\}$ is uniformly random and independent of $\x$.
	\end{enumerate}
	For any $h \in \mathcal{C}$ (in fact for any classifier in general), it holds that 
	$ \pr_{(\x,y) \sim \D'}[h(\x) = y|\x]  \geq  2(1-\eta)/2 \geq 1-\eta$
	and thus we have that $\eta(\x) \leq \eta$.  The classifier $f$ of \Cref{def:massart-learning}
	can therefore be any classifier in $\mathcal{C}$.
	For every classifier $h$ it holds 
	\begin{equation}\label{eq:agnostic-reduction}
		\pr_{(\x,y) \sim \D'}[h(\x) \neq y]  
		= 
		(2 \eta - 1) \pr_{(\x,y) \sim \D}[h(\x) \neq y] 
		+ 2(\eta - 1) (1/2) \,.
	\end{equation}
	We can therefore, use
	$\mathcal{A}$ on the samples from $\D'$ and obtain a classifier $h$
	such that 
	\[
	\pr_{(\x,y) \sim \D'}[h(\x) = y]  \leq   
	\min_{c \in \mathcal{C}} \pr_{(\x,y) \sim \D'}[c(\x) = y]  + \eps \,.
	\]
	Using \Cref{eq:agnostic-reduction} we obtain that for the same classifier $h$
	it holds 
	\[
	\pr_{(\x,y) \sim \D}[h(\x) = y]  \leq   
	\min_{c \in \mathcal{C}} \pr_{(\x,y) \sim \D}[c(\x) = y]  + \eps \,,
	\]
	and therefore $\mathcal{A}$ can learn the class $\mathcal C$ with respect
	to any distribution $\D$, i.e., in the agnostic PAC learning setting.
\end{proof}
\section{Benign Noise is Equivalent to $1/2$-Massart Noise}\label{app:benign-massart}
\begin{fact}
	Let $\D$ be a distribution on $\R^d\times\{\pm 1\}$. $\D$  satisfies the $1/2$-Massart noise condition with respect
	to a halfspace $f(\x)$ if only if the distribution $\D$ satisfies the Benign noise condition with respect to a halfspace $f(\x)$.
\end{fact}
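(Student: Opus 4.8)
The plan is to show that both noise conditions, once phrased in terms of the conditional label distribution $\x \mapsto \pr_{(\x,y)\sim\D}[y = \cdot \mid \x]$, reduce to exactly the same pointwise constraint on the halfspace $f$: that $f(\x)$ is, for $\D_\x$-almost every $\x$, a label of maximum conditional probability. First I would recall the basic fact that the Bayes optimal classifier for $0$-$1$ loss predicts, on input $\x$, any element of $\argmax_{b \in \{\pm 1\}} \pr_{(\x,y)\sim\D}[y = b \mid \x]$; writing $\E[y \mid \x]$ for the conditional expectation, a Boolean function $g$ is a Bayes optimal classifier if and only if $g(\x)\,\E[y\mid\x] \geq 0$ for $\D_\x$-almost every $\x$ (points with $\E[y\mid\x]=0$ being irrelevant, since there every prediction is optimal). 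Consequently, ``the Bayes optimal classifier lies in the class of halfspaces'' — the benign noise condition with respect to a halfspace $f$ — is equivalent to the existence of a halfspace $f$ with $f(\x)\,\E[y\mid\x] \geq 0$ almost surely.

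Next I would translate the $1/2$-Massart condition via \Cref{def:massart-learning} with $\eta = 1/2$: $\D$ satisfies it with respect to a halfspace $f$ iff there is a function $\eta(\x) \leq 1/2$ with $\pr[y = f(\x)\mid\x] = 1-\eta(\x)$ and $\pr[y = -f(\x)\mid\x] = \eta(\x)$. This forces $\pr[y = f(\x)\mid\x] \geq 1/2 \geq \pr[y = -f(\x)\mid\x]$, i.e. $f(\x)\,\E[y\mid\x] \geq 0$ for almost every $\x$. Conversely, given a halfspace $f$ with $f(\x)\,\E[y\mid\x]\geq 0$ a.s., I would simply define $\eta(\x) \eqdef \pr[y \neq f(\x)\mid\x]$; then $\eta(\x) \leq 1/2$, and this $\eta$ exactly reproduces the conditional law of $y$ given $\x$, so the $1/2$-Massart condition holds with the same target $f$. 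Combining the two translations, both the benign and the $1/2$-Massart conditions (witnessed by the same $f$) are equivalent to $f(\x)\,\E[y\mid\x] \geq 0$ a.s., which yields the claimed equivalence.

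I do not expect a genuine obstacle here; the single point deserving a line of care is the set $\{\x : \E[y\mid\x]=0\}$, on which both the Bayes-optimality requirement and the Massart requirement are satisfied automatically (take $\eta(\x) = 1/2$ there), so it does not affect the argument. The only other bookkeeping remark is that the equivalence is witnessed by the same halfspace in both directions, so no change of target function is ever needed.
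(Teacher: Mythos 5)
Your proof is correct. It takes a mildly different route from the paper's: you reduce both definitions to a single common pointwise condition, namely $f(\x)\,\E[y\mid\x]\geq 0$ for $\D_\x$-almost every $\x$, invoking the standard characterization of Bayes-optimal classifiers, whereas the paper (Appendix B) proves each implication separately by contradiction on global errors, using the identity $\pr[h(\x)\neq y]=\E_{\x}[\1\{h(\x)\neq f(\x)\}(1-2\eta(\x))]+\E_{\x}[\eta(\x)]$ together with a carefully chosen witness classifier $h$ (in one direction, the $h$ that flips $f$ precisely on $\{\x:\eta(\x)>1/2\}$). The two arguments carry the same mathematical content --- the paper's witness construction is exactly the pointwise argument in disguise --- but your packaging is more symmetric and makes the degenerate set $\{\x:\E[y\mid\x]=0\}$ (where the paper is silent) explicitly harmless, at the cost of citing the Bayes characterization as known rather than re-deriving it. One small bookkeeping remark: in your converse direction, the inequality $f(\x)\,\E[y\mid\x]\geq 0$ holds only almost surely, so the induced $\eta(\x)=\pr[y\neq f(\x)\mid\x]$ may exceed $1/2$ on a $\D_\x$-null set; you should note that $\eta$ can be redefined there (e.g.\ set to $1/2$) without changing the joint distribution, so the Massart condition is met --- this is the same level of measure-theoretic care the paper itself elides.
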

\begin{proof}
We prove each direction separately.
\begin{claim}
	If $\D$ satisfies the Massart noise condition with $\eta=1/2$ with respect to a halfspace $f(\x)$, then $\D$ satisfies the Benign noise condition with respect to the halfspace $f(\x)$.
\end{claim}
\begin{proof}
	Assume in order to reach in contradiction that the optimal classifier is not $f$ and is $h$ the optimal one. Therefore, it holds  $\pr_{(\x,y)\sim \D}[h(\x)\neq y] < \pr_{(\x,y)\sim \D}[h(\x)\neq y]$.
	It holds that 
	\begin{align*}\pr_{(\x,y)\sim \D}[h(\x)\neq y] &= \E_{\x\sim \D_\x}[\1\{h(\x)\neq f(\x)\}(1-\eta(\x))]+\E_{\x\sim \D_\x}[\1\{h(\x)= f(\x)\}\eta(\x)]
	\\ &=\E_{\x\sim \D_\x}[\1\{h(\x)\neq f(\x)\}(1-2\eta(\x))]+\E_{\x\sim \D_\x}[\eta(\x)]\;.
	\end{align*}
	Note that $\pr_{(\x,y)\sim \D}[h(\x)\neq y]=\E_{\x\sim \D_\x}[\eta(\x)]$, therefore, we have
	\[
	\E_{\x\sim \D_\x}[\1\{h(\x)\neq f(\x)\}(1-2\eta(\x))]<0\;,
	\] 
	which means that there is a point (a mass with non-zero measure) with $\eta(\x)>1/2$, which leads to a contradiction. Therefore, if $\eta(\x)\leq 1/2$ then the optimal classifier is $f(\x)$. 
		
\end{proof}
Next, we prove the other direction.
\begin{claim}
	If $\D$ satisfies the Benign noise condition with respect to a halfspace $f(\x)$, then $\D$ satisfies the $1/2$-Massart noise condition with respect to the halfspace $f(\x)$.	
\end{claim}
\begin{proof}
	It suffices to prove that $\E_{\x\sim \D_\x}[\1\{\eta(\x)>1/2\}]=0$. Because $f(\x)$ is the optimal classifier, that means that other classifier $h(\x)$ gets more error, therefore it holds
	\[
	\pr_{(\x,y)\sim \D}[h(\x)\neq y]=	\E_{\x\sim \D_\x}[\1\{h(\x)\neq f(\x)\}(1-2\eta(\x))] + 	\E_{\x\sim \D_\x}[\eta(\x)]\;.
	\]
	Because $f(\x)$ is the optimal classifier, it holds that $\pr_{(\x,y)\sim \D}[h(\x)\neq y]\geq \pr_{(\x,y)\sim \D}[f(\x)\neq y]$, therefore 
	\[
	\E_{\x\sim \D_\x}[\1\{h(\x)\neq f(\x)\}(1-2\eta(\x))]\geq 0 \;.
	\]
	Assume that $\E_{\x\sim \D_\x}[\1\{\eta(\x)>1/2\}]=a>0$, therefore we can set $h(\x)=f(\x)$ when $\eta(\x)\leq 1/2$ and $h(\x)\neq f(\x)$ otherwise. Hence, we have $\E_{\x\sim \D_\x}[\1\{h(\x)\neq f(\x)\}(1-2\eta(\x))]= \E_{\x\sim \D_\x}[(1-2\eta(\x))\1\{\eta(\x)>1/2\}]<0$, which is a contradiction. Therefore, $\E_{\x\sim \D_\x}[\1\{\eta(\x)>1/2\}]=0$, that means that the measure of points that $\eta(\x)>1/2$ is 0, hence, it satisfies the $1/2$-Massart noise condition.
\end{proof}
\end{proof}
\section{A Non-Continuous Certificate}
\label{app:non-continuous-certificate}
\paragraph{Non-Continuous Certificates.}
When we do not restrict our search to continuous functions 
in order to find a certificate we can search over functions 
of the form 
$T(x) = \1\{\sgn(\ell(\x)) \neq \sgn(\vec v \cdot \x - b)\} $
for some $\vec v \in \R^d, b \in \R$.
\begin{fact}
	Let $\D$ be a distribution on $\R^{d} \times \{\pm 1\}$, with standard normal $\x$-marginal, that satisfies the $\eta$-Massart
	noise condition with respect to the optimal halfspace $f(\x)$. Then, for any linear function $\ell(\x)$ such that $\pr_{(\x,y)\sim \D}[\sgn(\ell(\x))\neq y]\geq \opt+\eps$, let $T(\x) = \1\{ \sgn(\ell(\x)) \neq f(\x)\}$, then we have that
	\begin{align*}
		\E_{(\x, y) \sim \D}[\ell(\x) y~T(\x)] \leq - \Omega(\eps^2 ) \|\ell(\x)\|_2 \,.
	\end{align*}
\end{fact}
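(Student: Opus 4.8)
The plan is to exploit the Massart structure directly, mirroring the ``exact disagreement'' certificate already hinted at in \Cref{app:non-continuous-certificate}. Write $\ell^\ast(\x) = \vec w^\ast \cdot \x - t^\ast$ for the optimal halfspace $f(\x) = \sgn(\ell^\ast(\x))$, set $\beta(\x) = 1 - 2\eta(\x) \in [0,1]$, and take $T(\x) = \1\{\sgn(\ell(\x)) \neq f(\x)\}$, the indicator of the disagreement region $R$ of our guess $\ell$ and the truth $f$. Since $\D$ has $\eta$-Massart noise, $\E[y \mid \x] = f(\x)\beta(\x)$, so
\[
\E_{(\x,y)\sim\D}[\ell(\x)\, y\, T(\x)] = \E_{\x\sim\D_\x}[\ell(\x)\, f(\x)\, \beta(\x)\, \1_R(\x)].
\]
On the disagreement region $R$ we have $\sgn(\ell(\x)) = -f(\x)$, hence $\ell(\x) f(\x) = -|\ell(\x)|$, and the integrand equals $-|\ell(\x)|\beta(\x)\1_R(\x) \le 0$ pointwise. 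So the expectation is $-\E_{\x}[|\ell(\x)|\beta(\x)\1_R(\x)]$, and it remains to lower bound this quantity by $\Omega(\eps^2)\|\ell(\x)\|_2$.

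First I would normalize: since the claimed inequality is invariant under scaling $\ell$, assume $\|\ell(\x)\|_2 = (\E_{\x\sim\normal}[\ell(\x)^2])^{1/2} = 1$, which for $\ell(\x) = \vec w \cdot \x - t$ means $\|\vec w\|_2^2 + t^2 = 1$; in particular $\|\vec w\|_2 \le 1$ and $|t| \le 1$. The hypothesis $\pr_{(\x,y)\sim\D}[\sgn(\ell(\x)) \neq y] \ge \opt + \eps$ translates, exactly as in the proof of \Cref{prop:warm-start}, into $\E_{\x\sim\D_\x}[\beta(\x)\1_R(\x)] \ge \eps$ (using $\pr[\sgn(\ell(\x))\neq y] = \E[\beta\1_R] + \opt$ and $\opt = \E[\eta(\x)]$). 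So I need to pass from a lower bound on $\E[\beta\1_R]$ to a lower bound on $\E[|\ell|\beta\1_R]$, i.e.\ I must show the mass of $\beta\1_R$ cannot be concentrated in the thin slab where $|\ell(\x)|$ is tiny. Split $R = (R \cap \{|\ell(\x)| < c\eps\}) \cup (R \cap \{|\ell(\x)| \ge c\eps\})$ for a small absolute constant $c$. On the first piece, $|\ell(\x)| < c\eps$ and $\beta \le 1$, and since $\ell(\x)/\|\vec w\|_2$ is a standard Gaussian shifted by $t/\|\vec w\|_2$, the Gaussian anti-concentration bound (\Cref{fct:gaussian-tails}, item 2, after rescaling by $\|\vec w\|_2 \le 1$) gives $\pr_{\x\sim\D_\x}[|\ell(\x)| < c\eps] \le \pr[|\text{shifted normal}| < c\eps/\|\vec w\|_2]$; being careful here, the slab $\{|\ell(\x)| < c\eps\}$ has $\D_\x$-measure at most $2c\eps/(\sqrt{2\pi}\|\vec w\|_2)$, which is \emph{not} automatically small if $\|\vec w\|_2$ is tiny. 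This is the one subtlety, so I would handle the case $\|\vec w\|_2$ small separately: if $\|\vec w\|_2 \le \eps/10$ then $|t| \ge 1 - \eps^2/100 \ge 1/2$, and $\ell(\x) = \vec w\cdot\x - t$ has $|\ell(\x)| \ge |t| - |\vec w\cdot\x| \ge 1/4$ except on an event of $\D_\x$-probability $\pr[|\vec w\cdot\x| \ge 1/4] \le \pr_{z\sim\normal}[|z| \ge 1/(4\|\vec w\|_2)] \le 2e^{-1/(32\|\vec w\|_2^2)}$, which is $\le \eps/2$ for $\eps$ below an absolute constant (and for larger $\eps$ the statement is trivial up to adjusting constants); so on the bulk $|\ell| \gtrsim 1$ and $\E[|\ell|\beta\1_R] \ge \tfrac14(\E[\beta\1_R] - \eps/2) \gtrsim \eps \gtrsim \eps^2$. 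When $\|\vec w\|_2 > \eps/10$, the slab has measure $\le 20c/(\sqrt{2\pi}) < 1$, and choosing $c$ a small enough absolute constant makes the contribution of the first piece at most $\eps/2$: $\E[\beta\1_R\1\{|\ell|<c\eps\}] \le \pr[|\ell|<c\eps] \le \eps/2$. Hence $\E[\beta\1_R\1\{|\ell|\ge c\eps\}] \ge \eps/2$, and therefore $\E[|\ell|\beta\1_R] \ge c\eps \cdot \eps/2 = (c/2)\eps^2 = \Omega(\eps^2) = \Omega(\eps^2)\|\ell(\x)\|_2$ after undoing the normalization.

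The main obstacle, as flagged, is the degenerate regime where $\|\vec w\|_2$ is very small (the ``near-constant'' hypotheses): there the slab $\{|\ell| < c\eps\}$ can carry all the $\D_\x$-mass, so the naive anti-concentration split fails and one genuinely needs the observation that a near-constant $\ell$ with $\|\ell\|_2 = 1$ is bounded away from zero on almost all of Gaussian space. Once that case split is in place the rest is elementary: pointwise sign analysis on $R$ plus one application of Gaussian anti-concentration and one of the Gaussian tail bound. I would write the proof as: (i) reduce to $\|\ell\|_2=1$; (ii) record $\E[\beta\1_R]\ge\eps$; (iii) dispatch $\|\vec w\|_2 \le \eps/10$ via the tail bound; (iv) dispatch $\|\vec w\|_2 > \eps/10$ via anti-concentration of the thin slab; (v) combine to get $-\E[|\ell|\beta\1_R] \le -\Omega(\eps^2)\|\ell\|_2$, which is exactly $\E_{(\x,y)\sim\D}[\ell(\x)y\,T(\x)] \le -\Omega(\eps^2)\|\ell(\x)\|_2$.
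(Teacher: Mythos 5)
Your overall skeleton matches the paper's: reduce via the Massart condition to $\E_{(\x,y)\sim\D}[\ell(\x)yT(\x)]=-\E_{\x\sim\D_\x}[|\ell(\x)|\beta(\x)\1_R(\x)]$, record $\E_{\x\sim\D_\x}[\beta(\x)\1_R(\x)]\geq\eps$, and then argue that $|\ell(\x)|$ cannot be small on most of this mass. The paper dispatches the last step in one line by invoking Carbery--Wright anti-concentration (\Cref{lem:carbery-wright}) for the degree-one polynomial $\ell$, giving $\pr_{\x\sim\D_\x}[|\ell(\x)|\geq\|\ell(\x)\|_2\,\eps/C]\geq 1-\eps/2$ directly, with no case analysis on $\vec w$ versus $t$.

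Your replacement of Carbery--Wright by an elementary two-case analysis has a genuine gap in case (iv). With $\|\ell\|_2=1$, i.e.\ $\|\vec w\|_2^2+t^2=1$, your only tool there is the unshifted density bound, which gives $\pr_{\x\sim\D_\x}[|\ell(\x)|<c\eps]\leq 2c\eps/(\sqrt{2\pi}\|\vec w\|_2)$; under the case hypothesis $\|\vec w\|_2>\eps/10$ this is only $O(c)$, a constant, and you in fact write the bound $\leq 20c/\sqrt{2\pi}$ before asserting, without justification, that it is $\leq\eps/2$. That assertion does not follow: take $\|\vec w\|_2=\eps/5$ and $|t|\approx 1$, a regime covered by neither your case (iii) (which needs $\|\vec w\|_2\leq\eps/10$) nor the bound you cite in case (iv). The conclusion $\pr[|\ell(\x)|<c\eps]\leq\eps/2$ is still true there, but to see it you must use the shift: $\ell(\x)\sim\normal(-t,\|\vec w\|_2^2)$, so when $\|\vec w\|_2$ is below an absolute constant we have $|t|\geq 1/2$, and for $s\leq|t|/2$ the density of $\ell(\x)$ on $[-s,s]$ is at most $\tfrac{1}{\sqrt{2\pi}\|\vec w\|_2}e^{-t^2/(8\|\vec w\|_2^2)}$, which is bounded by an absolute constant uniformly in $\|\vec w\|_2$ (the function $u\mapsto u^{-1}e^{-a/u^2}$ is bounded). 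With that shifted-density estimate, the correct case split is at $\|\vec w\|_2$ equal to an absolute constant (say $1/2$), not at $\eps/10$, and one obtains $\pr[|\ell(\x)|<s]\lesssim s$ for all small $s$, which is exactly the $k=1$ instance of the Carbery--Wright bound the paper cites; the rest of your argument then goes through verbatim.
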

\begin{proof}
	Using the $\eta$-Massart noise condition, we have
	\begin{align*}
		\E_{(\x, y) \sim \D}[\ell(\x) y~T(\x)] &=
		\E_{\x \sim \D_\x}[ \ell(\x) f(\x) \beta(\x) \1\{ \sgn(\ell(\x)) \neq f(\x)\} ] 
		\\&=-\E_{\x \sim \D_\x}\left[|\ell(\x)|\beta(\x) \1\{ \sgn(\ell(\x)) \neq f(\x)\} \right] \;.
	\end{align*}
	From \Cref{lem:carbery-wright}, we have that $\pr_{\x\sim \D_\x}[|\ell(\x)|\geq \|\ell(\x)\|_2\eps/C]\geq 1-\eps/2$, for some absolute constant $C>0$. From the assumptions, we have that $\E_{\x\sim \D_\x}[|\beta(\x) \1\{ \sgn(\ell(\x)) \neq f(\x)\} ]\geq \eps$, therefore it holds $$\E_{\x \sim \D_\x}\left[\{|\ell(\x)|\geq \|\ell(\x)\|_2\eps/c\}\beta(\x) \1\{ \sgn(\ell(\x)) \neq f(\x)\} \right]\geq \eps/2\;,$$ and we have
	\begin{align*}
		\E_{\x\sim \D_\x}[|\ell(\x)|\beta(\x) &\1\{ \sgn(\ell(\x)) \neq f(\x)\} ]\\&\geq 
		(\|\ell(\x)\|_2\eps/C)\E_{\x \sim \D_\x}[\{|\ell(\x)|\geq \|\ell(\x)\|_2\eps/c\}\beta(\x) \1\{ \sgn(\ell(\x)) \neq f(\x)\} ]
		\\& \gtrsim \|\ell(\x)\|_2\eps^2/C\;.
	\end{align*}
	And this completes the proof.
\end{proof} \section{Learning General Halfspaces with General Massart Noise}\label{app:general_benign}
In this section, we provide the algorithm of learning biased halfspaces with general Massart Noise.
\begin{theorem}[Learning General Halfspaces with General Massart Noise]
    \label{thm:benign-general}
    Let $\D$ be a distribution on $\R^{d} \times \{\pm 1\}$, with standard normal
    $\x$-marginal, that satisfies the Massart noise condition for $\eta=1/2$ with respect to
    some optimal (possibly biased) halfspace $f \in {\cal C}$.
    Let $\eps, \delta \in (0,1]$.
    There exists an algorithm that draws $N = d^{O(\log(1/\eps))} \log(1/\delta)$
    samples from $\D$, runs in time $\poly(N, d) 2^{\poly(1/\eps)}$,
    and computes a halfspace $h \in {\cal C}$ such that with probability at least $1-\delta$,
    \[
    \pr_{(\x, y) \sim \D}[h(\x) \neq y] \leq \opt  + \eps
    \,.
    \]
\end{theorem}

The proposition below is similar to \Cref{prop:warm-start}.
\begin{proposition} \label{prop:warm-start-general}
	Let $\D$ be a distribution on $\R^{d} \times \{\pm 1\}$, with standard
	normal $\x$-marginal, that satisfies the Massart noise condition for $\eta =1/2$,
	 with respect to some optimal halfspace $f(\x) = \sgn(\vec w^\ast \cdot
	 \x+t^\ast)$.  Let $\vec w \in \R^d$ be a unit vector and $t\in \R$ such that $\pr_{(\x, y) \sim
	 \D}[\sign(\vec w \cdot \x+t) \neq y] \geq \opt + \eps$ and $\theta(\vec w),\wstar)\leq \pi-\eps$, for some $\eps \in
	 (0,1]$.  There exists an algorithm that draws
	 $N=d^{O(\log(1/\eps))}\log(1/\delta)$ samples from $\D$, runs in time
	 $\poly(N,d)$ and, with probability at least $1-\delta$ returns a basis of a subspace $V\subseteq \vec w^\perp$ such that 
	 $\|\proj_V(\wperp)\|_2=\poly(\eps)$.
\end{proposition}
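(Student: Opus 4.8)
The plan is to follow the three-step architecture of \Cref{prop:warm-start} verbatim, with only the band-projection step upgraded to accommodate the unknown threshold $t^\ast$ of $f(\x)=\sgn(\wstar\cdot\x+t^\ast)$. As a preliminary reduction, dispose of the heavily-biased case: if $\gamma\eqdef\min(\pr_{\x\sim\normal}[f(\x)=+1],\pr_{\x\sim\normal}[f(\x)=-1])<\eps$, then the constant hypothesis equal to the majority label of $f$ has excess error $\E_{\x}[\beta(\x)\,\1\{f(\x)=\text{minority}\}]\le\gamma<\eps$, so it already meets the target and there is nothing to do; hence we may assume $\gamma\ge\eps$, which by Gaussian tail bounds (\Cref{fct:gaussian-tails}) forces $|t^\ast|=O(\sqrt{\log(1/\eps)})$. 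With $|t^\ast|$ controlled, I would (i) prove the affine analogue of \Cref{lem:band-projection}, (ii) apply \Cref{prop:nontriavial_angle} --- which is already stated for biased halfspaces --- to the band-projected instance $\D^\perp$ with $\vec v^\ast=\wperp$, $\zeta=\Theta(\eps/\sqrt{\log(1/\eps)})$, and band width $\rho=\poly(\eps)$ chosen below the $\Theta(\zeta^3)$ threshold that proposition requires, and (iii) take the union $V$ of the subspaces returned over all $\poly(\sqrt{\log(1/\eps)}/\rho)$ candidate band positions, exactly as in the proof of \Cref{prop:warm-start}; this yields $\dim V=\poly(1/\eps)$, $V\subseteq\vec w^\perp$, and $\|\proj_V(\wperp)\|_2=\poly(\eps)$ within the sample/time budget $d^{O(\log(1/\eps))}$.

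The affine band-projection lemma is where all the real work is. Conditioning $\D$ on a thin band $B=\{t_1\le\vec w\cdot\x\le t_2\}$ with $t_2-t_1=\rho$ and projecting onto $\vec w^\perp$, the optimal halfspace of the projected instance is $f^\perp(\x^\perp)=\sgn(\wperp\cdot\x^\perp-b)$ with $b=-(\lambda_1 t_1+t^\ast)/\lambda_2$, where $\lambda_1=\cos\theta$, $\lambda_2=\sin\theta$, $\theta=\theta(\vec w,\wstar)$; the only change from the homogeneous case is the extra $t^\ast$ in the numerator of $b$. The confinement of $\{\eta^\perp>1/2\}$ to an $O(\rho/\eps)$-strip at the boundary of $f^\perp$ (conclusion 2 of \Cref{lem:band-projection}) is unchanged, using only $\lambda_2=\sin\theta\ge\sin\eps\gtrsim\eps$. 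The substantive statement is conclusion 1: one must exhibit a band for which the $\beta$-weighted disagreement between the (constant-on-$B$) hypothesis $\sgn(\vec w\cdot\x+t)$ and $f^\perp$ lands on the \emph{minority tail} of $f^\perp$, i.e. $\E_{\D^\perp}[\beta^\perp(\x^\perp)\,\1\{f^\perp(\x^\perp)\sgn(b)>0\}]\gtrsim\eps/\sqrt{\log(1/\eps)}-\rho/\eps$; this inequality is precisely what then forces $|b|=O(\sqrt{\log(1/\eps)})$ by a Gaussian tail estimate, and hence keeps the sign-matching polynomial of \Cref{lem:correlation-polynomial} at degree $O(\log(1/\eps))$ instead of an uncontrolled $\Theta(b^2)$.

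The main obstacle, then, is producing such a band in the affine setting. I would start by applying \Cref{clm:random_band} to the disagreement region $\{\sgn(\vec w\cdot\x+t)\neq f(\x)\}$, whose $\beta$-weighted mass is $\ge\eps$, to obtain a band located within $O(\sqrt{\log(1/\eps)})$ of the origin and carrying weighted disagreement $\gtrsim\eps\rho/\sqrt{\log(1/\eps)}$; on such a band $\sgn(\vec w\cdot\x+t)$ is a constant $s$ and the disagreement region is $\{f^\perp=-s\}$. What needs care is showing the band position $t_1$ can be chosen (among the grid of multiples of $\rho$ in the search range) so that $-s$ is the minority value of $f^\perp$, equivalently $\sgn(t_1+t)=\sgn(\lambda_1 t_1+t^\ast)$, while still retaining non-trivial weighted disagreement. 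When $\theta$ is bounded away from $0$ and $\pi/2$ this is automatic, since $\lambda_2\gtrsim1$ makes $|b|=O(\sqrt{\log(1/\eps)})$ for \emph{every} band in range; when $\theta$ is near $\pi/2$ one has $b\approx-t^\ast$ directly; and when $\theta$ is small one uses bands near $t_1\approx-t^\ast/\lambda_1$, where $\lambda_1 t_1+t^\ast$ is tiny and $f^\perp$ is essentially unbiased --- valid since $|t^\ast|=O(\sqrt{\log(1/\eps)})$ --- noting that if $\theta$ is so small that no usable grid point remains, then $\theta<\poly(\eps)$, in which case $\vec w$ already satisfies the direction requirement (its best-threshold error is $\le\opt+O(\theta)$) and \Cref{prop:warm-start-general} is not invoked. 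Assembling these cases gives the affine band-projection lemma, after which the proposition follows word-for-word from the argument for \Cref{prop:warm-start}; its downstream use inside \Cref{thm:benign-general} --- drawing a random unit vector of $V$ via \Cref{fct:random_initialization} and applying \Cref{lem:corr-improv} --- is entirely unchanged.
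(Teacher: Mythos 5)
Your high-level architecture (affine band-projection lemma, then reuse \Cref{prop:nontriavial_angle} and the union-over-bands argument from \Cref{prop:warm-start}) is the same as the paper's, but the way you handle the unknown threshold has a genuine gap, and it sits exactly where you flagged "what needs care". You keep the given hypothesis threshold $t$ and apply \Cref{clm:random_band} to the disagreement region of $\sgn(\vec w\cdot\x+t)$ with $f$, and then must ensure the band can be chosen so that $\sgn(t_1+t)=\sgn(\lambda_1 t_1+t^\ast)$, i.e.\ so that the disagreement lands on the $\sgn(b)$ side of $f^\perp$ (this is genuinely needed: with $\eta=1/2$ the only guaranteed $\beta$-weight is on the disagreement region, and both \Cref{lem:correlation-polynomial} and conclusion 2 of the band lemma are tied to the $\sgn(b)$ side). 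Your case analysis does not establish this. For $\theta$ bounded away from $0$ and $\pi/2$ you argue $|b|=O(\sqrt{\log(1/\eps)})$, but bounding $|b|$ says nothing about the sign-alignment condition. Worse, the condition can fail on \emph{every} band that carries weight: take $t=0$, $t^\ast=1$, $\sin\theta=\eps$, and let $\beta(\x)$ be a constant on the slab $\{-0.9\le \vec w\cdot\x\le -0.1\}$ (total weight $2\eps$) and zero elsewhere. Then the given hypothesis has excess error $2\eps$, $\gamma\ge\eps$, and $\theta$ is neither below your $\poly(\eps)$ escape clause nor large; yet every band carrying $\beta$-weight lies between the hypothesis boundary and the line $\vec w\cdot\x=-t^\ast/\cos\theta$, in such bands the disagreement is the \emph{majority} side of $f^\perp$ and $|b|\approx 1/\sin\theta$ is huge, and the bands near $t_1\approx -t^\ast/\lambda_1$ that you propose to use carry no $\beta$-weight at all. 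In fact in this configuration the conditional distribution on any band reveals essentially no information about $\wperp$, so no variant of the argument can rescue the single-threshold statement; your fallback "\Cref{prop:warm-start-general} is not invoked" changes the statement rather than proving it (and your preliminary $\gamma<\eps$ reduction has the same character: outputting the constant hypothesis does not produce the subspace the proposition promises).

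The paper's way out is precisely to strengthen the hypothesis: \Cref{lem:band-projection_app} assumes that $\E_{\x}[\beta(\x)\1\{f(\x)\neq\sgn(\vec w\cdot\x+t')\}]\ge\eps$ for \emph{every} threshold $t'$ (and \Cref{thm:benign-general} accordingly only calls the oracle when $\min_{t'\in\R}$ of the error exceeds $\opt+\eps$; in my example above the direction $\vec w$ admits a threshold with excess error $\approx 0$, so the oracle is never called). Under that assumption the proof swaps in a judiciously chosen comparison threshold $t'$ (the paper takes $t'=t^\ast/\sin\theta$) before invoking \Cref{clm:random_band}, which is what makes the band's disagreement align with the $\sgn(b)$ side; the all-$t'$ assumption also automatically forces $\E[\beta\,\1\{f=\pm 1\}]\ge\eps$ on both sides of $f$, which is strictly stronger than your $\gamma\ge\eps$ reduction and is what keeps $|b|$ (hence the polynomial degree) at $O(\sqrt{\log(1/\eps)})$. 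To repair your write-up you should (i) restate the proposition with the min-over-thresholds hypothesis (as the paper implicitly does) and (ii) replace the case analysis on $\theta$ by a comparison against a threshold $t'$ chosen as a function of $t^\ast$ and $\theta$ so that the sign-alignment holds on the band returned by \Cref{clm:random_band}. Everything else in your outline (the $O(\rho/\eps)$ control of the $\eta^\perp>1/2$ strip, the application of \Cref{prop:nontriavial_angle} with $\zeta=\Theta(\eps/\sqrt{\log(1/\eps)})$, and the union over band positions) matches the paper and is fine.
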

The proof of \Cref{prop:warm-start-general} is the same as \Cref{prop:warm-start} with the only difference be that, instead of
\Cref{lem:band-projection} we use the lemma below.

\begin{lemma}
	\label{lem:band-projection_app}
	Let $\D$ be a distribution on $\R^{d} \times \{\pm 1\}$, with standard normal $\x$-marginal,
	that satisfies the Massart noise condition with $\eta=1/2$ with
	respect to $f(\x)=\sign(\vec w^\ast \cdot \x+t)$ with $t\in \R$. Fix $\eps>0$ and $\rho>0$ such that $\rho\lesssim \eps$. Let $\vec w$ be a unit vector
	such that for any $t'\in \R$, it holds $\E_{\x\sim \D_\x}[\beta(\x)\1\{f(\x)\neq \sign(\vec w \cdot
	\x+t')\}]\geq \eps$.
	For $t_1, t_2 \in \R$ consider the band $B=\{t_1\leq\vec w \cdot \x \leq t_2\}$.
	Denote $\D^\perp=\D_B^{\perp_{\bw}}$, i.e., $\D$
	is the orthogonal projection onto $\vec w^\perp$ of the conditional distribution on $B$,
	and consider the halfspace $f^\perp: \vec w^\perp \mapsto \{\pm 1\}$,
	with $f^\perp(\x) = \sgn(\dotp{\x}{(\vec w^\ast)^{\perp_{\vec w} }} -b)$, for
	some threshold $b \in \R$.
	Moreover, define the noise function
	$$
	\eta^\perp(\x) = \pr_{(\vec z, y) \sim \D^\perp}[ y \neq f^\perp(\vec z) | \vec z = \x] \,.$$
	There exist $t_1, t_2 \in \R$ multiples of $\rho$, with $|t_1-t_2|=\rho$, such that:
	\begin{itemize}
\item  $\E_{\x \sim \D_\x^\perp}[(1-2\eta^\perp(\x))\1\{f^{\perp}(\x)\sgn(b) >0\}]\gtrsim \eps/\sqrt{\log(1/\eps)}-\rho/\eps$,
		\item if $\eta^\perp(\x)>1/2$, then  $0<\sign(b)(\dotp{\x}{(\vec w^\ast)^{\perp_{\vec w} }}-b)\leq \rho/\eps$.
	\end{itemize}
\end{lemma}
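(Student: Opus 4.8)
The plan is to adapt the proof of \Cref{lem:band-projection} to the biased setting, with the key observation being that the only place homogeneity was used there was to conclude that $\theta(\vec w, \vec w^\ast) = \Omega(\eps)$ (which followed from $\pr[\sgn(\vec w \cdot \x) \neq f(\x)] \geq \eps$ together with \Cref{fct:gaussian-halfspaces}). Here the analogous hypothesis is that $\E_{\x\sim \D_\x}[\beta(\x)\1\{f(\x)\neq \sign(\vec w \cdot \x+t')\}]\geq \eps$ \emph{for every threshold $t'$}, and by \Cref{fct:gaussian-halfspaces} this forces $\theta(\vec w, \vec w^\ast) \gtrsim \eps$ directly: if $\theta$ were smaller, then picking $t' = t^\ast$ would make the disagreement probability $O(\theta) < \eps$, a contradiction. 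So the ``quantified over $t'$'' hypothesis is precisely the right substitute, and it buys us the same $\lambda_2 = \sin\theta \gtrsim \eps$ bound we need.

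First I would set up the coordinates exactly as in \Cref{lem:band-projection}: write $\vec w^\ast = \lambda_1 \vec w + \lambda_2 (\vec w^\ast)^{\perp_{\vec w}}$ with $\lambda_1 = \cos\theta$, $\lambda_2 = \sin\theta$, and decompose $\x = (\x_{\vec w}, \x^\perp)$. The only change is that now the optimal affine form is $\ell^\ast(\x) = \vec w^\ast \cdot \x + t^\ast$, so conditioned on $\x_{\vec w} \in [t_1, t_1+\rho]$ we get $\vec w^\ast \cdot \x + t^\ast = \lambda_1 t_1 + \lambda_2 \dotp{(\vec w^\ast)^{\perp_{\vec w}}}{\x^\perp} + t^\ast + s\rho$ for some $s \in [-1,1]$. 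Hence the projected threshold becomes $b = -(\lambda_1 t_1 + t^\ast)/\lambda_2$, and the region where $f^\perp(\x^\perp)$ may disagree with $\sgn(\ell^\ast(\x))$ is again an interval of length $\rho/\lambda_2$ in the $(\vec w^\ast)^{\perp_{\vec w}}$-coordinate. The Gaussian anti-concentration bound then gives $\pr_{\x^\perp \sim \D_\x^\perp}[\eta^\perp(\x^\perp) > 1/2] \lesssim \rho/\lambda_2 \lesssim \rho/\eps$, proving the second item, exactly as before.

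For the first item I would invoke \Cref{clm:random_band} verbatim, applied to the set $S = \{\x : f(\x) \neq \sign(\vec w \cdot \x + t_0)\}$ for the specific $t_0$ that (together with the $\rho$-grid) will define the band --- but here a subtlety arises: in the homogeneous case the set $S$ was independent of the band's offset, whereas now the ``current hypothesis'' $\sign(\vec w \cdot \x + t')$ depends on which slice we land in. This is exactly why the hypothesis is phrased as a uniform lower bound over all $t'$: for \emph{every} grid point $t_i$, the set $S_i = \{f(\x) \neq \sign(\vec w\cdot\x+(-t_i\lambda_1/\lambda_2))\}$ --- equivalently the disagreement between $f$ and the halfspace whose restriction to the slice $\{\vec w\cdot \x \in [t_i, t_{i+1}]\}$ is $f^\perp$ --- has $\beta$-weighted measure at least $\eps$ on the whole space, so applying \Cref{clm:random_band} and then the argument of \Cref{lem:band-projection} (conditioning on $B$, using Gaussian anti-concentration to divide by $\pr[\x \in B] \lesssim \rho$, and the triangle inequality with \Cref{eq:agnostic-tiny}) yields $\E_{\x \sim \D^\perp_\x}[(1-2\eta^\perp(\x))\1\{f^\perp(\x)\sgn(b)>0\}] \gtrsim \eps/\sqrt{\log(1/\eps)} - \rho/\eps$ for the right choice of slice.

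The main obstacle I anticipate is bookkeeping the dependence of the ``current hypothesis'' on the slice: in \Cref{lem:band-projection} the hypothesis $\sign(\vec w\cdot\x)$ is fixed and the band is chosen afterwards, but here the effective one-dimensional threshold $b = -(\lambda_1 t_1 + t^\ast)/\lambda_2$ moves with $t_1$, so one must be careful that \Cref{clm:random_band} is being applied to a set $S$ that genuinely has large $\beta$-mass \emph{for the slice that gets selected}. The clean way to handle this is to note that $\sum_i \E[\1\{\x \in B_i\} \1\{S_i\}\beta(\x)] \geq \eps/2$ where each $S_i$ depends on $i$ but each summand uses its own $S_i$; since \emph{every} $S_i$ has global $\beta$-mass $\geq \eps$ by hypothesis, the pigeonhole step of \Cref{clm:random_band} goes through with $S_i$ replacing the fixed $S$. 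Once this is in place the rest is a routine transcription of the homogeneous proof, and the statement of \Cref{thm:benign-general} then follows by feeding \Cref{prop:warm-start-general} into \Cref{alg:benign-noise} in place of \Cref{prop:warm-start}, together with the observation that the quantifier-over-$t'$ hypothesis can always be arranged: if it fails for some $t'$, then $\sign(\vec w \cdot \x + t')$ is already an $\opt+\eps$ hypothesis and we are done.
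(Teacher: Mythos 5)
Your setup (the coordinates, the identification $b=-(\lambda_1 t_1+t^\ast)/\lambda_2$, the $\rho/\lambda_2$-interval where the projected noise can exceed $1/2$, and the observation that the quantified-over-$t'$ hypothesis forces $\theta(\vec w,\vec w^\ast)\gtrsim\eps$ and hence $\lambda_2\gtrsim\eps$) matches the paper and is fine; the gap is in the step you flag as the main subtlety and then resolve with a modified pigeonhole. You claim that $\sum_i \E_{\x\sim\D_\x}[\1\{\x\in B_i\}\1\{\x\in S_i\}\beta(\x)]\geq\eps/2$ follows "since every $S_i$ has global $\beta$-mass $\geq\eps$ by hypothesis." This does not follow. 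The pigeonhole in \Cref{clm:random_band} rests on the identity $\sum_i \E[\1\{B_i\}\1\{S\}\beta]=\E[\1\{\cup_i B_i\}\1\{S\}\beta]$, which requires the \emph{same} set $S$ in every summand; with slice-dependent sets $S_i$ there is no such additivity, and each $S_i$ could in principle place all of its $\beta$-mass outside its own band $B_i$, making every summand tiny even though each $S_i$ has global mass $\geq\eps$. You give no structural argument ruling this out, so the "clean way" as stated is not a proof. (A secondary issue: your slice-dependent threshold $-t_i\lambda_1/\lambda_2$ is lifted from the homogeneous \Cref{lem:band-projection} and omits $t^\ast$, so within $B_i$ it does not obviously reduce to the constant $\sign(-b_i)$ that the first bullet's quantity $\1\{f^\perp(\x)\sgn(b)>0\}$ actually compares against.)

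The fix is what the paper does, and it removes the slice-dependence rather than tolerating it: choose a \emph{single} fixed threshold $t'$ determined only by $t$ and $\theta$ (the paper takes $t'=t/\sin\theta$) so that on every thin band $B_i$ the halfspace $\sign(\vec w\cdot\x+t')$ is a constant and that constant coincides with $\sign(-b_i)$ for that band. Then the quantified hypothesis gives $\E[\beta(\x)\1\{f(\x)\neq\sign(\vec w\cdot\x+t')\}]\geq\eps$ for this one fixed set, \Cref{clm:random_band} applies verbatim to produce a good band $B_{i'}$ with $\beta$-weighted disagreement $\gtrsim\eps\rho/\sqrt{\log(1/\eps)}$, and conditioning on $B_{i'}$, dividing by $\pr[\x\in B_{i'}]\lesssim\rho$, and subtracting the $O(\rho/\eps)$ violation region via the triangle inequality gives the first bullet exactly as in the homogeneous proof. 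Your closing observation (that the all-$t'$ hypothesis can be assumed, else some $\sign(\vec w\cdot\x+t')$ is already $\opt+\eps$-good) is correct and is how the lemma is used, but the core pigeonhole step needs to be repaired as above.
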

\begin{proof}
	First, we consider the region $B=\{t_1\leq \vec w \cdot \x \leq t_2\}$, for any $t_1,t_2\in \R$ with $|t_1-t_2|=\rho$. We denote by $\x^\perp$ the projection of $\x$ onto the subspace $\vec w^\perp$.
	Define the distribution $\D^\perp = \D_{B}^{\proj_{\vec w^\perp}}$, that is the distribution $\D$ conditioned on the set $B$ and projected onto $\vec w^\perp$,
	the hypothesis $f^\perp(\x^\perp) = \sgn( \dotp{\x^\perp}{\wperp}-b)$ where $b\in \R$ is chosen appropriately below, and the noise function
	$\eta^\perp(\x^\perp) = \pr_{(\vec z, y) \sim \D^\perp}[ y \neq f^\perp(\vec z) | \vec z = \x^\perp] $.
	
	Note that the distribution $\D^\perp$ does not satisfy the $1/2$-Massart noise condition anymore. We first illustrate how the noise function changes. The orthogonal projection on $\vec w^\perp$ creates a region where the Massart condition is violated, i.e., a region where $\eta^\perp(\x^\perp) \geq 1/2$, but we can control the probability that we get points inside this region. More formally, we show that$
		\pr_{(\x^\perp,y) \sim \D^\perp}[\eta^\perp(\x^\perp) \geq 1/2] \lesssim \rho/\eps $.
	
	To show that first notice that $\theta(\vec v, \wstar) \geq \eps$, otherwise we would have $\pr_{\x\sim \D_\x}[f(\x)\neq \sign(\vec w\cdot \x)]\leq \eps$.
	We can assume that 
	$\vec w^\ast= \lambda_1 \vec w+ \lambda_2 (\vec w^\ast)^{\perp_{\vec w} }$,
	where $\lambda_1 = \cos\theta$ and $\lambda_2 = \sin \theta$.
Next we set $\vec x=(\x_{\vec w},\vec x^\perp)$, where $\x_{\vec w}=\dotp{\vec w}{\vec x}$. We show that the hypothesis $f^\perp(\x)=\sign((\vec w^\ast)^{\perp_{\vec w} }\cdot \vec\x^\perp + (t+t_1\lambda_1)/\lambda_2)$ is almost as good as the $f(\x)$ for the distribution $\D^\perp$.
	
	Conditioned on $\x \in B$, i.e., $\x_{\vec w} \in [t_1, t_1 + \rho]$, it holds that
	$$
	\dotp{\vec w^\ast}{\vec x}
	=  \lambda_1 \x_{\vec w} + \lambda_2 \dotp{(\vec w^{\ast})^{\perp_{\vec w}}}{\vec x^\perp}
	= \lambda_1 t_1+ \lambda_2 \dotp{(\vec w^{\ast})^{\perp_{\vec w}}}{\vec x^\perp}+ s \rho
	\,,
	$$
	for some $s \in [-1,1]$ (recall that $|\lambda_1|\leq 1$). Let $b=-(t+\lambda_1 t_1)/\lambda_2$.  Notice that when
	$0\leq \sign(b)(\dotp{(\vec w^{\ast})^{\perp_{\vec w}}}{\vec x^\perp}-b) < \rho/\lambda_2$,
	$f^\perp(\x^\perp)$ is not equal to the sign of $(\dotp{\wstar}{\x}+t)$ (recall that $\lambda_2 > 0$),
	and therefore we are inside the region that Massart noise is violated.
	Thus, we need to bound the probability of this event $I^\xi$.
	We have that
	\begin{align*}
	\pr_{(\x^\perp,y) \sim \D^\perp}[\eta^\perp(\x^\perp) > 1/2]
		=
		\frac{ \pr_{\x \sim \D_\x}\left[ \dotp{(\vec w^{\ast})^{\perp_{\vec w}}}{\vec x} \in I^{\xi} \right]}
		{\pr_{\x \sim \D_\x}[\x \in B]}
		\lesssim \rho/\lambda_2\lesssim \rho/\eps
		\,,
	\end{align*}
	where we used the anti-concentration property of the Gaussian distribution and the last inequality holds because we have that $\lambda_2\gtrsim \eps$.
	
	It remains to prove that there is a choice $t_1,t_2$ and a band $B=\{t_1\leq \vec w \cdot \x \leq t_2\}$ with respect the $t_1,t_2$ such that $\E_{\x\sim (\D_B^\perp)_\x}[(1-2\eta^\perp(\x))\1\{f(\x)^\perp \neq \sgn(-b)\}]\gtrsim \frac{\eps}{\sqrt{\log(1/\eps)}}$, where $\D_B^\perp$ is the distribution $\D$ conditioned on the set $B$ and projected onto $\vec w^\perp$. We first show the following claim
Let $t_i=i\rho$ and $t_{-i}=-i\rho$, for $0\leq i\leq C\log(1/\eps)/\rho$ where $C>0$ is a large enough constant. We define $B_i=\{t_i\leq\vec v \cdot \x \leq t_{i+1}\}$ and $B_{-i}=\{-t_{i+1}\leq\vec v \cdot \x \leq -t_{i}\}$.
	For each $B_i$, we define the distributions $\D_{B_i}^\perp$, the hypothesis $f_i^\perp(\x^\perp)= \sgn( \dotp{\x^\perp}{\wperp}-b_i)$ and the noise functions $\eta_i^\perp(\x^\perp)$. 
	We remind that from the assumptions, we have that for any $t'\in \R$, it holds $\E_{\x\sim \D_\x}[\beta(\x)\1\{f(\x)\neq \sign(\vec w \cdot \x+t')\}]\geq \eps$.
	Choose $t'=t/\sin\theta$, where $\theta=\theta(\vec w,\vec w^\ast)$ and an application of \Cref{clm:random_band} to the set $\{\x\in \R^d:f(\x)\neq \sign(\vec w \cdot \x+t')\}$, gives us that there exists an index $i'$ such that
	\[  \E_{\x\sim\D_\x}[\1\{f(\x)\neq \sign(\vec w \cdot \x+t')\}\1\{\x\in B_{i'}\}\beta(\x)]\gtrsim\frac{\eps\rho}{\sqrt{\log(1/\eps)}}\;.\]
	Moreover, note that for the distribution $\D_{B_{i'}}$, that is the distribution $\D$ conditioned on $B$, it holds
	$\E_{\x\sim (\D_{B_{i'})_\x}}[\beta(\x)\1\{f(\x)\neq \sign(\vec w \cdot \x+t')\}]\gtrsim \frac{\eps}{\sqrt{\log(1/\eps)}}$,
	where we used the Gaussian concentration.
	We have $f_{i'}^\perp(\x)$ agrees almost everywhere with $f(\x)$ with respect the distribution $\D_{B_{i'}}$, i.e., we have that $\E_{(\x_{\vec w},\x^\perp)\sim (\D_{B_{i'}})_i}[\1\{f((\x_{\vec w},\x^\perp))\neq f^\perp(\x^\perp)\}]\lesssim \rho/\eps$.
	Thus using the triangle inequality, we have
	\[ \E_{(\x_{\vec w},\x^\perp)\sim (\D_{B_{i'}})_\x}\left[(1-2\eta^\perp(\x^\perp))\1\{\sign(\x_{\vec w} +t')\neq f^\perp(\x^\perp)\}\right] \gtrsim \frac{\eps}{\sqrt{\log(1/\eps)}}-\rho/\eps
	\;.\]
	The proof concludes by noting that in each $B_{i'}$, it holds $\sign(\x_{\vec w} +t')=\sign(-b_i)$ from construction of $f^{\perp}(\x)$.
	This completes the proof of  \Cref{lem:band-projection_app}.

\end{proof}

We are now ready to prove the \Cref{thm:benign-general}. Note that this is the same proof as \Cref{thm:benign} the only difference is that we need to test the different thresholds.
\begin{proof}[Proof of \Cref{thm:benign-general}]
    First, let $\widehat\D_N$ be the empirical distribution of $\D$ using $N\in \Z_+$ samples. Let
    $\vec w^{(i)}$ be the current guess. If $\min_{t\in \R}\pr_{(\x,y)\sim \D}[\sign(\vec
    w^{(i)}\cdot \x+t)\neq y]\geq \opt +\eps$ and $\theta(\vec w\ith,\vec w^ast)\leq \pi-\eps$, then from \Cref{prop:warm-start-general} with $N_1=d^{O(\log(1/\eps))}\log(1/\delta_1)$
    samples from $\widehat \D$ and $\poly(N_1,d)$ time, we compute a subspace $V$ such that $\|\proj_V(\vec w^\ast)\|_2\geq \poly(\eps)$ and from \Cref{fct:random_initialization}, we get a random a unit vector $\vec v\in V$
    such that  $\vec v \cdot \wstar =\poly(\eps)$ and $\vec v \cdot \vec w^{(i)}=0$,
     with probability $(1-\delta_1)/3$. We call this
    event $\mathcal{E}_i$. By conditioning on the event $\mathcal{E}_i$; from \Cref{lem:corr-improv},
    after we update our current hypothesis vector $\vec w^{(i)}$ with $\vec v$, we get a unit vector $\vec
    w^{(i+1)}$ such that $\vec w^{(i+1)} \cdot \wstar \geq \vec w^{(i+1)} \cdot
    \wstar + \poly(\eps)$.
    
    After running the update step $k$ times and conditioning on the events
    $\mathcal{E}_1,\ldots, \mathcal{E}_k$, then $\vec w^{(k)} \cdot \wstar \geq
    \vec w^{(1)} \cdot \wstar + k\,\poly(\eps)$; therefore, for $k=\poly(1/\eps)$, we get
    that the vector $\vec w^{(k)}$ that is competitive with the optimal hypothesis, i.e.,
    $\min_{t\in \R}\pr_{(\x,y)\sim \D}[\sign(\vec w^{(k)}\cdot \x+t)\neq y]\leq
    \opt +\eps$. The probability that all the events $\mathcal{E}_1,\ldots,
    \mathcal{E}_k$ hold simultaneously is at least $(1-k\delta_1)+(1/3)^k$, and thus by choosing $\delta_1\leq 1/(3k)$, the
    probability of success is at least $\delta_2=(1/3)^k$. By running the algorithm above $M=\log(1/
    \delta) /\delta_2$ times and along with an application of Hoeffding's inequality, we get a
    list $L$ of $M$ vectors such that contains all the unit vectors $\vec w$ the algorithm calculated along with $-\vec w$, therefore $L$ contains a vector such $\vec w$ that such that
    $\min_{t\in \R}\pr_{(\x,y)\sim \D}[\sign(\vec w\cdot \x+t)\neq y]\leq \opt +\eps$, with
    probability $1-\delta/2$. Moreover, from \Cref{fct:gaussian-halfspaces} we get that if we set $\mathcal T=\{\pm \eps^2,\pm 2\eps^2,\ldots, \pm 4\sqrt{\log(1/\eps)}\}$ we have that 
	 $\min_{t\in \mathcal T}\pr_{(\x,y)\sim \D}[\sign(\vec w\cdot \x+t)\neq y] \leq \min_{t\in \R}\pr_{(\x,y)\sim \D}[\sign(\vec w\cdot \x+t)\neq y] +\eps^2$.
	  Therefore, we construct the list $\mathcal H=\{(\vec w,t): \vec w\in L,t\in \mathcal T \}$. Finally, to evaluate all the vectors from the list,
    we need a few samples, from the distribution $\D$ to obtain the best
    among them, i.e., the one that minimizes the zero-one loss.
    
    The size of the list of candidates is at most $M\leq 2^{\poly(1/\eps)}\log(1/\delta)$.  Therefore,
    from Hoeffding's inequality, it follows that $ O(\poly(1/\eps)\log(1/\delta))$
    samples are sufficient to guarantee that the excess error of the chosen
    hypothesis is at most $\eps$ with probability at least $1-\delta/2$. Thus, with
    $N=d^{\log(1/\eps)}\log(1/\delta)$ samples and $\poly(N,d,
    2^{\poly(1/\eps)})$ runtime, we get a hypothesis $(\hat{\vec w},\hat{t})$ such that
    $\pr_{(\x,y)\sim \D}[\sign(\hat{\vec w}\cdot \x+\hat{t})\neq y]\leq \opt +\eps$ with
    probability $1-\delta$. This
    completes the proof.
\end{proof}

\section{Omitted Proofs from \Cref{sec:benign}}\label{app:sec3}
We restate and prove \Cref{fct:estimate-chow}.
    \begin{lemma}
       Fix $m\in \Z_+$ and $\eps,\delta\in(0,1)$. Let $\D$ be a distribution in $\R^d\times\{\pm 1\}$ with standard normal $\x$-marginals. There is an algorithm that with $N=d^{O(m)} \log(1/\delta)/\eps^2$ samples and $\poly(d,N)$ runtime, outputs an approximation $\vec T'^m$ of the order-$m$ Chow-parameter tensor $\vec T^m$ of $\D$ such that with probability $1-\delta$, it holds
       \[
       \|\vec T'^m-\vec T^m\|_F\leq \eps\;.
       \]
    \end{lemma}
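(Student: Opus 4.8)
The plan is to use the natural unbiased empirical estimator and then boost its success probability by a median-of-means argument. Concretely, draw $N$ i.i.d.\ samples $(\x^{(j)},y^{(j)})\sim\D$ and form $\wt{\vec T}^m=\frac1N\sum_{j=1}^N\vec H^m(\x^{(j)})\,y^{(j)}$. Since $\E_{(\x,y)\sim\D}[\vec H^m(\x)y]=\vec T^m$, this is unbiased, and by independence, for each coordinate $\alpha$ we have $\var\big((\wt{\vec T}^m)_\alpha\big)=\frac1N\var\big((\vec H^m(\x))_\alpha y\big)\le\frac1N\E\big[(\vec H^m(\x))_\alpha^2\big]$, using $y^2=1$. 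Summing over $\alpha$,
\[
\E\big[\|\wt{\vec T}^m-\vec T^m\|_F^2\big]\;\le\;\frac1N\,\E_{\x\sim\normal}\big[\|\vec H^m(\x)\|_F^2\big]\,.
\]

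The key combinatorial step I would carry out next is to evaluate $\E_{\x\sim\normal}[\|\vec H^m(\x)\|_F^2]$. Indexing the entries of $\vec H^m$ by sequences $(i_1,\dots,i_m)\in[d]^m$, we have $(\vec H^m(\x))_{i_1\dots i_m}=h_\alpha(\x)$, where $\alpha\in\N^d$ records the multiplicities of $1,\dots,d$ among $i_1,\dots,i_m$ and $h_\alpha$ is the normalized $d$-variate Hermite polynomial. Hence $\|\vec H^m(\x)\|_F^2=\sum_{|\alpha|=m}\binom{m}{\alpha}h_\alpha(\x)^2$, where $\binom{m}{\alpha}=m!/(\alpha_1!\cdots\alpha_d!)$ counts the sequences giving rise to $\alpha$. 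Using the orthonormality of the normalized Hermite polynomials under the Gaussian measure, $\E_{\x\sim\normal}[h_\alpha(\x)^2]=1$, so $\E_{\x}[\|\vec H^m(\x)\|_F^2]=\sum_{|\alpha|=m}\binom{m}{\alpha}=d^m$ by the multinomial theorem. Therefore $\E[\|\wt{\vec T}^m-\vec T^m\|_F^2]\le d^m/N$.

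To upgrade this second-moment bound into an $\eps$-accurate estimate that holds with probability $1-\delta$, I would use the median-of-means trick in the Hilbert space $(\R^{d^m},\|\cdot\|_F)$. Split the sample into $k=\Theta(\log(1/\delta))$ groups of $N/k$ points, compute $\wt{\vec T}^m_1,\dots,\wt{\vec T}^m_k$ on the respective groups, so that $\E[\|\wt{\vec T}^m_i-\vec T^m\|_F^2]\le d^m k/N=:\sigma^2$; by Markov, $\pr[\|\wt{\vec T}^m_i-\vec T^m\|_F>2\sigma]\le 1/4$ for each $i$. A Chernoff bound then gives that with probability $1-\delta$ strictly more than half of the $\wt{\vec T}^m_i$ lie within $2\sigma$ of $\vec T^m$; outputting any $\wt{\vec T}^m_j$ that lies within $4\sigma$ of more than half of the others and invoking the triangle inequality (two subsets of size $>k/2$ intersect) yields $\vec T'^m$ with $\|\vec T'^m-\vec T^m\|_F\le 6\sigma=6\sqrt{d^m k/N}$. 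Choosing the absolute constant so that $N=d^{O(m)}\log(1/\delta)/\eps^2$ makes this at most $\eps$. Each $\vec H^m(\x^{(j)})$ can be formed in $\poly(d^m)\le\poly(N)$ time and the post-processing costs $\poly(k,d^m)$, so the total runtime is $\poly(d,N)$.

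The only genuinely nontrivial point is the Frobenius-norm identity $\E_{\x}[\|\vec H^m(\x)\|_F^2]=d^m$; once that is established, unbiasedness and the median-of-means boosting are routine. (Alternatively, one can avoid median-of-means by bounding each coordinate via Gaussian hypercontractivity — each $h_\alpha(\x)y$ satisfies $\E[|h_\alpha(\x)y|^p]\le(p-1)^{mp/2}$ by \Cref{lem:hypercontractivity} — and union-bounding over the $d^m$ coordinates, which for the relevant range of $m$ gives the same sample complexity up to the extra $\log d$ factor absorbed into $d^{O(m)}$.)
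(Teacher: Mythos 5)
Your proof is correct and follows essentially the same route as the paper: the unbiased empirical Hermite--Chow estimator, a second-moment bound via the orthonormality $\E[h_\alpha^2]=1$ (giving total Frobenius mean-squared error $d^m/N$), Markov/Chebyshev, and a standard confidence-amplification step. The only (cosmetic) difference is that the paper estimates each of the $d^{O(m)}$ coordinates to accuracy $\eps\, d^{-\Theta(m)}$ and union-bounds before boosting, whereas you control the Frobenius error globally and boost via a vector median-of-means; both give $N=d^{O(m)}\log(1/\delta)/\eps^2$.
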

	\begin{proof}

\begin{fact} Fix $m\in \Z_+$ and $\eps,\delta\in(0,1)$. Let $\D$ be a distribution in $\R^d\times\{\pm 1\}$ and let $N=d^{O(m)}\log(1/\delta)/\eps^2$. Let $\alpha$ be a multi-index satisfying $|\alpha|\leq m$. Then there is an algorithm that with $N$ samples and runtime $poly(d,N)$, computes with probability $1-\delta$ estimates $\hat{h}_\alpha$ such that
	\[|\E_{(\x,y)\sim \D}[yh_\alpha(\x)]-\hat h_{\alpha}|\leq \eps \;,\]
	for all $|\alpha|\leq m$.
\end{fact}
\begin{proof}
	Let $\widehat \D_N$ be the empirical distribution of $\D$ with $N$ samples. Using Markov's inequality, we have
	\begin{align*}
		\pr[|\E_{(\x,y)\sim \widehat\D}[h_\alpha(\x)y]- \E_{(\x,y)\sim \D}[h_\alpha(\x)y]|\geq \eps]
		& \leq \frac{1}{N\eps^2}\var[h_\alpha(\x)y] \\         
		& \leq \frac{1}{N\eps^2}\E_{(\x,y)\sim \D}[h_\alpha^2(\x)]\\
		& \leq O\left(\frac{1}{N\eps^2}\right)\;,
	\end{align*}
	where we used the fact that for the Hermite polynomials it holds $\E_{\x\sim \normal_d}[h_\alpha^2(\x)]=1$. Moreover, all possible choices of $\alpha$ are at most $d^m$. Hence, using the fact that $N=O(d^m/\eps^2)$ and applying a union bound for all different $h_\alpha$, we get that with constant probability it holds
	$$|\E_{(\x,y)\sim \D}[yh_\alpha(\x)]-\E_{(\x,y)\sim \widehat\D}[yh_\alpha(\x)]|\leq \eps \;,$$
	for all $|\alpha|\leq m$.
	By applying a standard probability amplification technique (see, e.g., Ex. 1, Chapter 13 of \cite{SB14}), 
	we can boost the confidence to $1-\delta$ with $N'=O(N\log(1/\delta))$ samples.
\end{proof}

    In order to estimate up to order-$m$ Chow tensors, with accuracy $\eps$, we need to learn each order-$m$ Chow 
    parameters up to accuracy $\eps^2 d^{-m}$, therefore we need to $d^{O(m)}\poly(1/\eps)\log(1/\delta)$ samples.
	\end{proof}
We restate and prove \Cref{lem:corr-improv}.
	 \begin{lemma}
    For unit vectors $\vec v^{\ast}, \vec v \in \R^d$,
    let $\vec {u }\in \R^d$ such that
    $\dotp{\vec {u }}{\vec v^{\ast}} \geq c$, $\dotp{\vec {u }}{\vec v} = 0$,
    and  $\snorm{2}{\vec {u }}\leq 1$, with $c>0$.
    Then, for $\vec v'=\frac{{\vec v}+\lambda{\vec{ u}}}{\snorm{2}{{\vec v}+\lambda{\vec{u}} }}$,
    with $\lambda=c/2$, we have that
    $\dotp{\vec v'} {\vec v^{\ast}}\geq  \dotp{\vec v} {\vec v^{\ast}}+\lambda^2/2 $.
\end{lemma}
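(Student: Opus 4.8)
The plan is an elementary computation; no serious obstacle is expected. First I would introduce the shorthand $a \eqdef \dotp{\vec v}{\vec v^{\ast}} \in [-1,1]$ and record two basic facts. Since $\dotp{\vec u}{\vec v} = 0$, $\snorm{2}{\vec v} = 1$, and $\snorm{2}{\vec u} \le 1$, the normalizing denominator $D \eqdef \snorm{2}{\vec v + \lambda \vec u} = \sqrt{1 + \lambda^2 \snorm{2}{\vec u}^2}$ satisfies $1 \le D \le \sqrt{1+\lambda^2}$. Also, by Cauchy--Schwarz, $c \le \dotp{\vec u}{\vec v^{\ast}} \le \snorm{2}{\vec u}\snorm{2}{\vec v^{\ast}} \le 1$, so $\lambda = c/2 \le 1/2$, and in particular $\sqrt{1+\lambda^2} \le 2$ and $\tfrac{1}{\sqrt{1+\lambda^2}} \ge \tfrac{1}{1+\lambda^2/2} \ge 1 - \lambda^2/2$.

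Next I would expand the inner product and use $\dotp{\vec u}{\vec v^{\ast}} \ge c > 0$ together with $D \le \sqrt{1+\lambda^2}$ on the (nonnegative) second summand:
\[
\dotp{\vec v'}{\vec v^{\ast}} = \frac{a + \lambda\,\dotp{\vec u}{\vec v^{\ast}}}{D} \ \ge\ \frac{a}{D} + \frac{\lambda c}{\sqrt{1+\lambda^2}}\,.
\]
The only point requiring a little care is the term $a/D$, whose sign depends on $a$. If $a \ge 0$, then $a/D \ge a/\sqrt{1+\lambda^2} \ge a(1-\lambda^2/2) \ge a - \lambda^2/2$, using $a \le 1$. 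If $a < 0$, then $a/D \ge a$ because $D \ge 1$ and $a$ is negative, so again $a/D \ge a - \lambda^2/2$. Hence in all cases $\dotp{\vec v'}{\vec v^{\ast}} \ge a - \lambda^2/2 + \frac{\lambda c}{\sqrt{1+\lambda^2}}$.

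Finally, since $\lambda = c/2$ gives $\lambda c = 2\lambda^2$ and $\sqrt{1+\lambda^2} \le 2$, we get $\frac{\lambda c}{\sqrt{1+\lambda^2}} = \frac{2\lambda^2}{\sqrt{1+\lambda^2}} \ge \lambda^2$, and therefore $\dotp{\vec v'}{\vec v^{\ast}} \ge a - \lambda^2/2 + \lambda^2 = \dotp{\vec v}{\vec v^{\ast}} + \lambda^2/2$, which is the claimed bound. The whole argument is a short chain of inequalities; the ``hardest'' step is merely remembering to case on the sign of $\dotp{\vec v}{\vec v^{\ast}}$ before dividing by $D$.
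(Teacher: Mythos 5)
Your proof is correct and follows essentially the same route as the paper's: expand $\dotp{\vec v'}{\vec v^{\ast}}$, use $\dotp{\vec u}{\vec v}=0$ to compute the normalizer $\sqrt{1+\lambda^2\snorm{2}{\vec u}^2}$, bound it, and plug in $\lambda=c/2$ (the paper phrases this via the decomposition of $\vec v^{\ast}$ into its components along $\vec v$ and $(\vec v^{\ast})^{\perp_{\vec v}}$, but the computation is the same). If anything, your explicit case distinction on the sign of $\dotp{\vec v}{\vec v^{\ast}}$ before dividing by the normalizer is slightly more careful than the paper's argument, which tacitly treats that inner product as nonnegative in one displayed inequality.
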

\begin{proof}
    We will show that $\dotp{\vec v'}{\vec v^\ast}=\cos\theta'\geq\cos \theta+\lambda^2/2$, where
    $\cos\theta=\dotp{\vec v}{\vec v^{\ast}}$. We have that
    \begin{equation}\label{eq:square_bound}
        \snorm{2}{\vec v+\lambda\vec{ u}}= \sqrt{1+\lambda^2\snorm{2}{\vec{ u} }^2
            +2\lambda \dotp{\vec{ u} }{\vec{ v} } }\leq 1+\lambda^2\snorm{2}{\vec{ u} }^2 \;,
    \end{equation}
    where we used that $\sqrt{1+a}\leq 1+a/2$.
    Using the update rule, we have
    \begin{align*}
        \dotp{ \vec v'}{\vec v^{\ast}} & =  \dotp{ \vec v'}{(\vec v^{\ast})^{\perp_{\vec v}}}\sin\theta +  \dotp{ \vec v'}{\vec v}\cos\theta
        = \frac{\lambda\dotp{\vec{ u}}{ (\vec v^{\ast})^{\perp_{\vec v}} }}{\snorm{2}{\vec v+\lambda{\vec{u}} }}\sin\theta +
        \frac{\dotp{ {\vec v}+\lambda\vec{ u}}{{\vec v}}}{\snorm{2}{{\vec v}+\lambda{\vec{u}} }}\cos\theta \;.
    \end{align*}
    Now using \Cref{eq:square_bound}, we get
    \begin{align*}
        \dotp{\vec v'}{\vec v^{\ast}}
        & \geq \frac{\lambda\dotp{\vec{ u}}{(\vec v^{\ast})^{\perp_{\vec v}} }}{1+\lambda^2\snorm{2}{\vec{ u} }^2}\sin\theta +
        \frac{\cos\theta}{1+\lambda^2\snorm{2}{\vec{ u} }^2}
        = \cos\theta + \frac{\lambda\dotp{\vec{ u}}{ (\vec v^{\ast})^{\perp_{\vec v}} }}{1+\lambda^2\snorm{2}{\vec{ u} }^2}\sin\theta
        +\frac{-\lambda^2\snorm{2}{\vec{ u} }^2\cos\theta}{1+\lambda^2\snorm{2}{\vec{ u} }^2}\;.
    \end{align*}
    Then, using that $\dotp{\vec {u}}{\vec v^{\ast}}=\dotp{\vec{u}}{(\vec v^{\ast})^{\perp_{\vec v}}\sin\theta}$,
    we have  that $\dotp{\vec {u}}{(\vec v^{\ast})^{\perp_{\vec v}}} \geq \frac{c}{\sin\theta}$,
    thus
    \begin{align*}
        \dotp{{\vec v}'}{\vec v^{\ast}}
        & \geq \cos\theta + \frac{\lambda  c-\lambda^2\snorm{2}{\vec{ u}}^2}{1+\lambda^2\snorm{2}{\vec{u} }^2}
        \geq 	\cos\theta + \frac{\lambda  c-\lambda^2 }{1+\lambda^2\snorm{2}{\vec{ u}}^2}
        = \cos\theta + \frac {1}{4} \frac{c^2}{1+\lambda^2\snorm{2}{\vec{ u} }^2} \;,
    \end{align*}
    where in the first inequality we used that $\snorm{2}{\vec{u}} \leq 1$ and in the
    second that for $\lambda= c/2$ it holds $c-\lambda\geq  c/2$.
    Finally, we have that
    \begin{align*}
        \cos\theta'=\dotp{{\vec v}'}{\vec v^{\ast}} \geq \cos\theta + \frac{1}{4} \frac{c^2}{1+\lambda^2\snorm{2}{\vec u}^2 }
        \geq \cos\theta +\frac{1}{8}c^2 = \cos\theta +\frac{1}{2} \lambda^2 \;.
    \end{align*}
    This completes the proof.
\end{proof}

\section{Omitted Proofs from \Cref{sec:constant-bounded-arbitrary}}\label{app:upper-bound-polynomial}
\begin{lemma}[Theorem 8 of \cite{CW:01}]\label{lem:carbery-wright} Let $p:\R^d \mapsto R$ be a polynomial of degree at most $n$.
	Then there is an absolute constant $C>0$ such that for any $0<q<\infty$ and $t\geq 0$, it holds
	$ \pr_{\x\sim \normal_d}[|p(\x)|\leq \gamma]\leq C q \gamma^{1/n} (\E_{\x\sim \normal_d}[|p(\x)|^{q/n}])^{-1/q}\;.$
	
	\begin{lemma}
		
		\label{lem:taylor-exponential-polynomial-upper}
		Let $t \geq 0$. There exists an absolute constant $C'\geq 1$ such that for any univariate polynomial of degree $k$ it holds
		\[
		\frac{ \E_{x \sim \normal}[p^2(x) \1\{x \geq t\}]
		}
		{
			\E_{x \sim \normal}[p^2(x) \1\{x \leq t\}] 
		} 
		\leq  e^{C'k\log k - t^2/C' }  \,.
		\]
	\end{lemma}
	
	\begin{proof}
		We start by bounding from above the $\E_{x \sim \normal}[p^2(x) \1\{x \geq t\}]$. Using the Cauchy-Schwarz inequality, we get 
		\begin{equation}
			\label{eq:taylor-exp-app-eq1}
			\E_{x \sim \normal}[p^2(x) \1\{x \geq t\}]\leq  (\E_{x \sim \normal}[p^4(x) ])^{1/2}(\pr_{x\sim \normal}[x\geq t])^{1/2}\lesssim (\E_{x \sim \normal}[p^4(x) ])^{1/2}e^{-t^2/4}\;.
		\end{equation}
		In order to bound the $ \E_{x \sim \normal}[p^2(x) \1\{x \leq t\}] $ from below, we are going to use \Cref{lem:carbery-wright}. By setting $q=4k, n=k$ and $\gamma= (\frac{1}{12Ck})^{k} (\E_{x\sim \normal}[p^4(x)])^{1/4}$, on \Cref{lem:carbery-wright}. We get that
		\[
		\pr_{x \sim \normal}\left[|p(x)|\leq \left(\frac{1}{12Ck}\right)^{k} \left(\E_{x\sim \normal}[p^4(x)]\right)^{1/4}\right] \leq 1/3\;.
		\]
		Therefore, by squaring we get
		\begin{equation}
			\label{eq:taylor-exp-app-eq2}
			\pr_{x \sim \normal}\left[p^2(x)\geq \left(\frac{1}{12Ck}\right)^{2k} \left(\E_{x\sim \normal}[p^4(x)]\right)^{1/2}\right] \geq 2/3\;.
		\end{equation}
		Furthermore,  using the assumption that $t\geq 0$, we have $ \E_{x \sim \normal}[p^2(x) \1\{x \leq t\}]\geq \E_{x \sim \normal}[p^2(x) \1\{x \leq 0\}]$, hence, combing the with \Cref{eq:taylor-exp-app-eq2}, we get
		\begin{equation}
			\label{eq:taylor-exp-app-eq3}
			\E_{x \sim \normal}[p^2(x) \1\{x \leq t\}]\gtrsim \left(\frac{1}{12Ck}\right)^{2k} \left(\E_{x\sim \normal}[p^4(x)]\right)^{1/2}\;.
		\end{equation}
		Combining \cref{eq:taylor-exp-app-eq1} and \cref{eq:taylor-exp-app-eq3}, we get
		\[\frac{
			\E_{x \sim \normal}[p^2(x) \1\{x \geq t\}]
		}
		{
			\E_{x \sim \normal}[p^2(x) \1\{x \leq t\}] 
		}  \lesssim 
(12Ck)^{2k}e^{-t^2/4} \leq 
		e^{C'k\log k - t^2/C' } \,,
		\]
		for some $C'\geq 1$ absolute constant.
	\end{proof}
	
\end{lemma}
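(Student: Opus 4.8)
The plan is to prove the two one-sided estimates separately and then divide, mirroring the structure of the lower-bound half of \Cref{lem:sos-ratio}. We may assume $p\not\equiv 0$ (so $\E_{x\sim\normal}[p^4(x)]>0$) and, after normalizing, that $\E_{x\sim\normal}[p^4(x)]=1$; the cases of very small $k$ can be absorbed into the constant $C'$, so assume $k\ge 2$. First I would bound the numerator $\E_{x\sim\normal}[p^2(x)\1\{x\ge t\}]$ from above: by Cauchy--Schwarz it is at most $(\E_{x\sim\normal}[p^4(x)])^{1/2}(\pr_{x\sim\normal}[x\ge t])^{1/2}$, and the Gaussian tail upper bound of \Cref{fct:gaussian-tails} gives $\pr_{x\sim\normal}[x\ge t]\le e^{-t^2/2}/2$, so the numerator is $\lesssim e^{-t^2/4}$.

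Next I would lower bound the denominator $\E_{x\sim\normal}[p^2(x)\1\{x\le t\}]$. Since $t\ge 0$ it suffices to lower bound $\E_{x\sim\normal}[p^2(x)\1\{x\le 0\}]$, which removes all dependence on $t$. Here the tool is the anti-concentration inequality for Gaussian polynomials (\Cref{lem:carbery-wright}): applying it with $n=k$, $q=4k$, and threshold $\gamma=(12Ck)^{-k}$ (where $C$ is the Carbery--Wright constant) yields $\pr_{x\sim\normal}[|p(x)|\le\gamma]\le 1/3$, i.e.\ $p^2(x)\ge (12Ck)^{-2k}$ with probability at least $2/3$. Intersecting this event with $\{x\le 0\}$ (probability $1/2$) gives an event of probability at least $1/6$ on which $p^2(x)\ge(12Ck)^{-2k}$, hence $\E_{x\sim\normal}[p^2(x)\1\{x\le 0\}]\gtrsim (12Ck)^{-2k}$.

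Dividing the two bounds cancels the normalization and leaves the ratio at most $(12Ck)^{2k}e^{-t^2/4}$, up to an absolute constant; writing $(12Ck)^{2k}=e^{2k\log(12Ck)}$ and absorbing $\log(12C)$ into the exponent, this is at most $e^{C'k\log k-t^2/C'}$ for a suitable absolute constant $C'\ge 1$ (enlarging $C'$ to also swallow the $O(1)$ multiplicative slack and the $-t^2/4$ vs.\ $-t^2/C'$ discrepancy). The only real subtlety is in the middle step: one must pick the Carbery--Wright parameters so that the anti-concentration failure probability is a fixed constant strictly below $1/2$ (so that its intersection with $\{x\le 0\}$ still has positive probability) while keeping $\gamma$ of the form $\poly(1/k)^{k}$, and one must carry the universal Carbery--Wright constant through to the final $C'$; the rest is a routine combination of Cauchy--Schwarz and Gaussian tail bounds.
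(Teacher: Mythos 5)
Your proposal is correct and follows essentially the same route as the paper: Cauchy--Schwarz plus the Gaussian tail bound for the numerator, Carbery--Wright with $n=k$, $q=4k$, $\gamma=(12Ck)^{-k}(\E[p^4])^{1/4}$ for the denominator, and division with the constants absorbed into $C'$. Your explicit intersection of the anti-concentration event with $\{x\le 0\}$ (probability at least $1/6$) just spells out a step the paper leaves implicit, so there is nothing substantively different.
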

We prove below \Cref{lem:empirical_objective_error}, we restate it for convenience.
    \begin{fact}[Estimation of $\vec M$]
        Let $\Omega = \{\vec A \in \mathcal{S}^m:\ \snorm{F}{\vec A}
        \leq 1\}$ and $\eps,\delta\in(0,1)$. Let $\ell(\x)=\vec w \cdot \x+t$ with $|\ell(\x)|^2\leq C$ and $\widetilde{\vec M}
        = \frac{1}{N} \sum_{i=1}^N
        \vec m(\sample{\bx}{i}) \vec m(\sample{\bx}{i})^\top \1_B(\sample{\bx}{i})\sample{y}{i}
        \ell(\sample{\bx}{i})$. There exists an algorithm that draws
        $N =
        \frac{ d^{O(\log\frac{1}{\gamma \beta})}}{C\eps^2}
        \log(1/\delta)$
        samples from $\D$, runs in $\poly(N,d)$ time and
        with probability at least $1-\delta$ outputs a matrix $\wt{\vec M}$ such
        that
        $$
        \pr
        \left[
        \sup_{\vec A \in \Omega}
        \left| \tr(\vec A \wt{\vec M}) - \tr(\vec A \vec M) \right|
        \geq \eps
        \right]
        \leq 1-\delta\, .
        $$
    \end{fact}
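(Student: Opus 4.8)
The plan is to bound the operator-norm-type deviation $\sup_{\vec A \in \Omega} |\tr(\vec A \wt{\vec M}) - \tr(\vec A \vec M)|$ by $\|\wt{\vec M} - \vec M\|_F$, via Cauchy--Schwarz on the trace inner product: for $\|\vec A\|_F \le 1$ we have $|\tr(\vec A (\wt{\vec M}-\vec M))| = |\langle \vec A, \wt{\vec M}-\vec M\rangle| \le \|\vec A\|_F \|\wt{\vec M}-\vec M\|_F \le \|\wt{\vec M}-\vec M\|_F$. So it suffices to show that $N = d^{O(\log(1/(\gamma\beta)))}\log(1/\delta)/(C\eps^2)$ samples make $\|\wt{\vec M}-\vec M\|_F \le \eps$ with probability $1-\delta$. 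Since $\wt{\vec M}$ is an empirical average of i.i.d.\ copies of the random matrix $\vec Z^{(i)} = \vec m(\bx^{(i)})\vec m(\bx^{(i)})^\top \1_B(\bx^{(i)}) y^{(i)} \ell(\bx^{(i)})$ whose mean is $\vec M$, I would first reduce to controlling each of the $m^2 = \binom{d+k}{k}^2 = d^{O(k)}$ coordinates: by a union bound it is enough that every entry $(\wt{\vec M})_{S,S'}$ is within $\eps/m$ of $(\vec M)_{S,S'}$, since then $\|\wt{\vec M}-\vec M\|_F \le \eps$.

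For a fixed pair of multi-indices $(S,S')$ with $|S|,|S'|\le k$, the relevant scalar random variable is $W = \bx^{S}\bx^{S'} \1_B(\bx) y \ell(\bx)$. First I would bound its variance: $\E[W^2] \le \E_{\bx\sim\normal}[\bx^{2S}\bx^{2S'}\ell(\bx)^2] \le C\,\E_{\bx\sim\normal}[\bx^{2S}\bx^{2S'}]$ using $|\ell(\x)|^2 \le C$ and $\1_B, y^2 \le 1$; the latter moment is a product of Gaussian moments of total degree at most $4k$, hence bounded by $(4k)^{O(k)} = d^{O(k)}$ (using $k = \Theta(\log(1/(\gamma\beta)))$ so this is $d^{O(\log(1/(\gamma\beta)))}$). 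Then a Chebyshev/Bernstein-type estimate gives that $O(\var(W) \log(1/\delta')/(\eps/m)^2)$ samples suffice for $|\wt W - \E W| \le \eps/m$ with probability $1-\delta'$; to reach confidence $1-\delta$ across all $m^2$ entries I set $\delta' = \delta/m^2$, which only changes the sample bound by a $\log(m/\delta) = O(k\log d + \log(1/\delta))$ factor. Multiplying the variance bound $d^{O(k)}$, the factor $m^2 = d^{O(k)}$ from rescaling $\eps \to \eps/m$, and the $C^{-1}$ from the $|\ell|^2 \le C$ normalization, the total sample complexity is $N = d^{O(\log(1/(\gamma\beta)))}\log(1/\delta)/(C\eps^2)$, as claimed. (Alternatively, one can apply a matrix Bernstein inequality directly to $\sum_i (\vec Z^{(i)} - \vec M)$ after bounding $\|\vec Z^{(i)}\|_F$ and its matrix variance; since we only need the Frobenius deviation, the coordinate-wise union bound is cleanest and I would follow that.) The runtime is $\poly(N,d)$ since forming $\wt{\vec M}$ requires evaluating the degree-$k$ monomial vector $\vec m(\bx^{(i)})\in\R^m$ at each of the $N$ samples and summing outer products, which is $O(Nm^2) = \poly(N,d)$.

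The only mild subtlety — and the main thing to get right — is the high-moment bound $\E_{\bx\sim\normal}[\bx^{2S}\bx^{2S'}] = d^{O(k)}$, which one obtains by noting the product is a monomial of degree $\le 4k$ and applying the double-factorial bound for Gaussian moments (\Cref{fct:double-fact}), together with the bookkeeping that $k = \Theta(\log(1/(\gamma\beta)))$ so that $(4k)^{O(k)}$ is absorbed into $d^{O(\log(1/(\gamma\beta)))}$ as long as $k \le d$ (if $k > d$ the problem is trivial). Everything else is a routine concentration argument; no new ideas beyond Cauchy--Schwarz on the trace pairing plus a union bound over the $d^{O(k)}$ monomials are needed. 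This mirrors the proof of Lemma~3.8 in \cite{DKTZ20b}, adapted to our band-restricted certifying function and the $L_4$-bounded normalization.
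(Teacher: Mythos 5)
Your overall route is the same as the paper's: Cauchy--Schwarz on the trace pairing to reduce to $\snorm{F}{\wt{\vec M}-\vec M}\leq \eps$, then an entry-wise variance bound of the form $\var[\wt{\vec M}_{S_1,S_2}]\lesssim \frac{C}{N}\E_{\x\sim\normal}[\x^{2(S_1+S_2)}\cdot(\cdot)] = \frac{C}{N}d^{O(k)}$ with $k=\Theta(\log\frac{1}{\gamma\beta})$, giving $N=d^{O(k)}/\eps^2$ up to the confidence factor. So the core of the argument is fine and matches the paper (which sums the entry variances and applies Markov's inequality once to $\snorm{F}{\wt{\vec M}-\vec M}^2$, rather than doing a per-entry union bound at accuracy $\eps/m$; that difference only costs you an extra $d^{O(k)}$ factor, which is harmless).

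The genuine gap is in how you obtain the $\log(1/\delta)$ dependence. You assert that a ``Chebyshev/Bernstein-type estimate'' gives $O(\var(W)\log(1/\delta')/(\eps/m)^2)$ samples per entry with $\delta'=\delta/m^2$. Chebyshev only yields a $1/\delta'$ (not $\log(1/\delta')$) dependence, and Bernstein's inequality does not apply as stated: the summand $W=\x^{S}\x^{S'}\1_B(\x)\,y\,\ell(\x)$ is an unbounded degree-$\Theta(k)$ polynomial of a Gaussian, so it is neither bounded nor sub-exponential. The best direct tail bound (via Gaussian hypercontractivity, \Cref{lem:hypercontractivity}) decays like $\exp(-c\,(t/\sigma)^{2/\Theta(k)})$, which would inflate the sample complexity by a $\log(1/\delta)^{\Theta(k)}$ factor rather than the clean $\log(1/\delta)$ claimed. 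The paper sidesteps this entirely: it first gets a constant-probability (say $3/4$) Frobenius-error guarantee from Markov's inequality with $N=d^{O(k)}/\eps^2$ samples, and then amplifies to confidence $1-\delta$ by repeating the estimator $O(\log(m/\delta))$ times independently and taking the coordinate-wise median. Your proof is repaired by the same median-of-means amplification step (your matrix-Bernstein alternative has the identical unboundedness problem and should be dropped or given the same fix).
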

\begin{proof}
	Using the Cauchy-Schwarz inequality, we get
	$$
	\tr\left( \vec A (\vec M - \widetilde{\vec M}) \right)
	\leq
	\snorm{F}{\vec A} \snorm{F}{\vec M - \widetilde{\vec M}}
	\;. $$
	Therefore, it suffices to bound the probability
	that $\snorm{F}{\vec M - \widetilde{\vec M}} \geq \eps$.
	From Markov's inequality, we have
	\begin{equation}
		\label{eq:moment_matrix_markov}
		\pr\left[\snorm{F}{\vec M - \widetilde{\vec M}}
		\geq \eps \right]
		\leq \frac{1}{\eps^2} \E\left[\snorm{F}{\vec M - \widetilde{\vec M}}^2\right]\,.
	\end{equation}
	Using multi-indices $S_1$, $S_2$ that correspond to the monomials
	$\bx^{S_1}, \bx^{S_2}$ (as indices of the matrix $\vec M$), we have
	$$
	\E\left[\snorm{F}{\vec M - \widetilde{\vec M}}^2\right]
	= \sum_{S_1, S_2: |S_1|, |S_2|\leq k}
	(\vec M_{S_1,S_2} - \wt {\vec M}_{S_1,S_2})^2
	= \sum_{S_1, S_2: |S_1|, |S_2|\leq k}
	\var[\wt{\vec M}_{S_1,S_2}]\,.
	$$
	Using the fact that the samples $(\sample{\bx}{i},\sample{y}{i})$ are
	independent, we can bound from above the variance of each entry $(S_1,S_2)$ of
	$\wt{\vec M}$
	\begin{align*}
		\var[\wt{\vec M}_{S_1,S_2}]
		&\leq \frac{1}{N}
		\E_{(\bx, y) \sim \D}
		\left[ \bx^{2(S_1+S_2)}
		\left(\1_B(\bx) \ell(\x) y\right)^2
		\right] \leq
		\frac{2}{N}
		\E_{\bx \sim \D_{\bx}}
		\left[ \bx^{2(S_1+S_2)} (\snorm{2}{\bx}^{2} +t^2)\right] \\
		&\leq
		\frac{2 C}{N}
		\E_{\bx \sim \D_{\bx}}
		\left[ (\snorm{2}{\bx}^{2})^{|S_1+S_2|+1} \right] \,.
	\end{align*}
	For every $n \geq 1$, we have $
	\E_{\bx \sim \D_{\bx}}
	\left[ (\snorm{2}{\bx}^{2})^{n} \right]= d^{O(n)}$.
	Using the above bound for the variance and summing over all pairs $S_1, S_2$
	with $|S_1|, |S_2| \leq k$, we obtain
	\begin{align}
		\label{eq:momemt_matrix_frobenius_bound}
		\E\left[\snorm{F}{\vec M - \widetilde{\vec M}}^2\right]
		&=
		\frac{2C}{N}d^{O(k)}
	\end{align}
	Combining
	\Cref{eq:moment_matrix_markov} and
	\Cref{eq:momemt_matrix_frobenius_bound} we obtain that with $N = d^{O(m)}/(C\eps^2)$ samples we can estimate $\vec M$
	within the target accuracy with probability at least $3/4$. To amplify the
	probability to $1-\delta$, we can simply use the above empirical estimate $\ell$
	times to obtain estimates $\sample{\wt {\vec M}}{1}, \ldots, \sample{\wt{\vec
			M}}{\ell}$ and keep the coordinate-wise median as our final estimate. It
	follows that $O(\log(m/\delta))$ repetitions suffice to guarantee
	confidence probability at least $1-\delta$.

\end{proof}
We restate and prove \Cref{lem:algorithm_function_ell}.
\begin{lemma}[Estimating the function $r_i$]
    Let $\D$ be a distribution on $\R^d\times\{\pm 1\}$ with standard normal $\x$-marginal and let $T\ith(\x)$ be a non-negative function returned by a $(2\rho)$-certificate oracle. Moreover, assume that $T\ith(\x)$ has bounded $\ell_4$ norm, i.e., $\|T\ith(\x)\|_4\leq 1$.
    Then after drawing $O(d\log(1/\eps)/\eps^2\log(d/\delta))$ samples from $\D$, 
    with probability at least $1-\delta$, we can compute an estimator $\hat{r}_i$ that
    satisfies the following conditions:
    \begin{itemize}
        \item $\snorm{2}{\nabla \hat r_i(\vec w,t) - \E_{(\bx,y) \sim \D} [(T\ith(\vec x) +\rho) y(\x,1)]} \leq \eps/\sqrt{\log(1/\eps)}$

        \item $\snorm{2}{\nabla \hat r_i(\vec w,t)} \leq 2\sqrt{d}$.\end{itemize}
\end{lemma}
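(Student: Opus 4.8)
The plan is to construct the estimator $\nabla\hat r_i$ as a straightforward empirical average and then control its deviation from the true gradient via a variance bound together with a median-of-means amplification. Recall that $r_i(\vec w, t) = -\E_{(\bx,y)\sim\D}[(T\ith(\bx)+\rho)y(\bx,1)]\cdot(\vec w,t)$ is linear in $(\vec w,t)$, so its gradient is the constant vector $\nabla r_i(\vec w,t) = -\E_{(\bx,y)\sim\D}[(T\ith(\bx)+\rho)y(\bx,1)] \in \R^{d+1}$, which does not depend on the point $(\vec w,t)$ at all. Thus it suffices to estimate this single $(d+1)$-dimensional vector to additive $\ell_2$-error $\eps/\sqrt{\log(1/\eps)}$ with probability $1-\delta$, and separately argue it has norm at most $2\sqrt d$.

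First I would define, from $N$ i.i.d.\ samples $(\sample{\bx}{j},\sample{y}{j})$, the empirical estimate $\vec g = -\frac1N\sum_{j=1}^N (T\ith(\sample{\bx}{j})+\rho)\sample{y}{j}(\sample{\bx}{j},1)$, and set $\nabla\hat r_i(\vec w,t)\eqdef\vec g$ (after possibly projecting onto the $\ell_2$-ball of radius $2\sqrt d$). To bound $\E[\snorm{2}{\vec g - \nabla r_i}^2]$, I would compute $\E[\snorm{2}{\vec g - \nabla r_i}^2] = \frac1N \var[(T\ith(\bx)+\rho)y(\bx,1)] \leq \frac1N\E_{(\bx,y)\sim\D}[(T\ith(\bx)+\rho)^2(\snorm{2}{\bx}^2+1)]$. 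Since $\rho\leq 1$ and $\|T\ith\|_4\leq 1$, two applications of Cauchy--Schwarz give $\E[(T\ith(\bx)+\rho)^2(\snorm{2}{\bx}^2+1)] \leq 4\|T\ith(\bx)+\rho\|_4^2\, \|\snorm{2}{\bx}^2+1\|_2 \lesssim d$, using that $\|\snorm{2}{\bx}^2+1\|_2 = O(d)$ for the standard Gaussian. Hence $\E[\snorm{2}{\vec g - \nabla r_i}^2] = O(d/N)$, and by Markov's inequality, taking $N = O(d\log(1/\eps)/\eps^2)$ samples makes $\snorm{2}{\vec g - \nabla r_i}\leq \eps/\sqrt{\log(1/\eps)}$ with probability at least $3/4$; a median-of-means / repetition argument over $O(\log(d/\delta))$ independent batches (keeping the geometric median, or coordinate-wise median) boosts the confidence to $1-\delta$, for a total of $N = O(d\log(1/\eps)/\eps^2 \cdot \log(d/\delta))$ samples. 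For the second bullet, note $\snorm{2}{\nabla r_i(\vec w,t)} = \snorm{2}{\E[(T\ith(\bx)+\rho)y(\bx,1)]} \leq \E[(T\ith(\bx)+\rho)\snorm{2}{(\bx,1)}] \leq 2\|T\ith(\bx)+\rho\|_2\|(\bx,1)\|_2 \lesssim \sqrt d$ by Cauchy--Schwarz; since $\eps/\sqrt{\log(1/\eps)}\leq\sqrt d$, the triangle inequality gives $\snorm{2}{\nabla\hat r_i}\leq 2\sqrt d$ (and projecting onto that ball only helps and does not affect the first bullet since $\nabla r_i$ itself lies in the ball).

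The main obstacle — really the only nonroutine point — is handling the fact that the certifying function $T\ith$ is not bounded in $L_\infty$ (unlike the setup in \cite{DKKTZ20}) but only in $L_4$. This is why the variance computation routes through Cauchy--Schwarz twice: one factor absorbs the $L_4$ norm of $T\ith$ and the other absorbs the polynomial growth of $\snorm{2}{\bx}$; one must check that the product of a degree-$8$ Gaussian moment factor (from $\|T\ith\|_4$, squared, paired against $\snorm{2}{\bx}^2$) stays $\poly(d)$ and in particular independent of the certificate's polynomial degree $k$. Since $\|T\ith(\bx)\|_4\le 1$ is exactly the normalization guaranteed by \Cref{prop:sdp-oracle}, this goes through cleanly, and the argument is complete.
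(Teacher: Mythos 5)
Your proposal is correct and follows essentially the same route as the paper's proof: an empirical average of $(T\ith(\bx)+\rho)y(\bx,1)$, a variance bound via Cauchy--Schwarz exploiting $\|T\ith\|_4\leq 1$ together with the $O(d)$ fourth-moment bound on $\|\bx\|_2$, Markov's inequality, median-based amplification over $O(\log(d/\delta))$ batches, and a direct Cauchy--Schwarz bound for $\snorm{2}{\nabla r_i}$. The only (cosmetic) difference is that you estimate the full $(d+1)$-dimensional constant gradient at once, whereas the paper splits the $\x$-component from the threshold component (using $|t|\leq 4\sqrt{\log(1/\eps)}$); your constant-factor slack in the $2\sqrt d$ bullet mirrors the paper's own (its proof yields $8\sqrt d$), and only $O(\sqrt d)$ is used downstream.
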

\begin{proof}
	Let $\D_N$ be the empirical distribution of $\D$ with $N$ samples. It suffices to find $N$, such that with probability $1-\delta$ the
	$$ \left\| \E_{(\x,y)\sim \D_N}[(T(\x)+\rho)y\x ] -\E_{(\x,y)\sim \D}[(T(\x)+\rho)y\x ]  \ \right\|_2\leq \eps\;.$$
	Let $\widehat{\vec R}= \E_{(\x,y)\sim \D_N}[(T(\x)+\rho)y\x ]$ and $\vec R=\E_{(\x,y)\sim \D}[(T(\x)+\rho)y\x ]$. From Markov's inequality, we have that
	\begin{align*}
		\pr\left[\left\| \widehat{\vec R}-\vec R \ \right\|_2\geq \eps\right]\leq\frac{1}{\eps^2} \E[\| \widehat{\vec R} -\vec R\|_2^2] \leq \frac{1}{N \eps^2}\E_{(\x,y)\sim \D}[\|(T(\x)+\rho)y\x\|_2^2] \;.
	\end{align*}
From Cauchy-Schwarz, we have that $\E_{\x\sim \D_\x}[\|(T(\x)+\rho)\x\|_2^2]\leq (\E_{\x\sim \D_\x}[(T(\x)+\rho)^4])^{1/2} (\E_{\x\sim \D_\x}[\|\x\|_2^4])^{1/2}$. From the fact that $\|T\|_4\leq 1$, we have that $\E_{\x\sim \D_\x}[(T(\x)+\rho)^4]\leq 4(1+\rho^4)$. Moreover, $(\E_{\x\sim \D_\x}[\|\x\|_2^4])^{1/2}\lesssim d$, hence
	\[
	\pr\left[\left\| \widehat{\vec R}-\vec R \ \right\|_2\geq \eps\right]\lesssim\frac{4d} {N\eps^2}\;.
	\] 
	For $N'=O(\frac{d}{\eps^2})$ samples, we get that the above holds with constant probability. Like in \Cref{lem:empirical_objective_error}, we can amplify the probability to $1-\delta$, using $N=N'\log(d/\delta)$ samples.
	
	Therefore, let $\widetilde{\vec R}$ be the estimator of $\vec R$, we have  
	\begin{align}
		\snorm{2}{\widetilde{\vec R}} \leq \snorm{2}{\vec R}+\eps \leq  4\sqrt{(1+\rho^2)}\sqrt{d} +\eps\leq 8\sqrt{d}\;.
	\end{align}
	Finally, it remains to show that we can compute an estimator $\widetilde{\vec M}=(\widetilde{\vec R},\tilde{m})$ such that for any unit vector $\vec v\in \R^d$ and any $|t|\leq 4\sqrt{\log(1/\eps)}$, we have with probability $1-\delta$ that
	\[
	| \widetilde{\vec R}\cdot \vec v +\tilde{m}t -\E_{(\x,y)\sim \D}[(T(\x)+\rho)y(\x\cdot \vec v+t) ] |\leq \eps\;.
	\]
	From the above, for any unit vector $\vec v\in \R^d$, we have with probability $1-\delta$ that
	$ |\widetilde{\vec R}\cdot \vec v -\vec R\cdot \vec v|\leq \|\widetilde{\vec R} -\vec R\|_2\leq \eps$.
	Therefore, it remains to estimate $\widetilde{m}$. Let $\D_{N'}$ be the empirical distribution of $\D$ with $N'$ samples. From Markov's inequality we have for any $\eps'>0$ that
	\[
	\pr[|\E_{(\x,y)\sim \D_{N'}}[ (T(\x)+\rho)yt] -\E_{(\x,y)\sim \D}[ (T(\x)+\rho)yt]|\geq \eps' ]\leq \frac{\E_{(\x,y)\sim \D}[ (T(\x)+\rho)^2]t^2}{N'\eps^2}\leq \frac{4t^2}{N'\eps^2}\;.
	\]
	Therefore, using $N'=O(\log(1/\eps)/\eps^2)$ we get an estimator with constant probability and hence by using the boosting technique as before, we get that with overall $N'\log(1/\delta)$ samples, we can boost the probability to $1-\delta$.
\end{proof}

 \section{Omitted Proofs from \Cref{sec:massart-biased-lower-bound}} \label{app:lower-bounds}
In this section, we prove \Cref{lem:feasibility}. We restate the lemma for convenience.
        \begin{lemma}
            If there is no polynomial $p\in \mathcal P_k^0$ such that $\beta\|(pf)^+ \|_1\geq \|(pf)^-\|_1 $ then, the \Cref{eq:primal-LP-2} is feasible if only if \Cref{eq:primal-LP-3} is infeasible.
        \end{lemma}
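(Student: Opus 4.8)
The plan is to realize the claimed equivalence as an (infinite-dimensional) Farkas-type theorem of the alternative, exploiting the fact that although the variable $\beta(\cdot)$ lives in the infinite-dimensional space $L^\infty(\R)$, the \emph{equality} constraints $\E_{z\sim\normal}[f(z)p(z)\beta(z)]=0$ range only over the \emph{finite-dimensional} space $\mathcal{P}_k^0$. First I would dispatch the easy (weak-duality) direction, which needs no hypothesis: if \Cref{eq:primal-LP-3} is feasible with certificate $(h,T,p)$ and $\beta(\cdot)$ is any feasible point of \Cref{eq:primal-LP-2}, then from the pointwise identity $f(z)p(z)=T(z)-h(z)$ and the equality constraint one gets $\E_{z\sim\normal}[\beta(z)T(z)]=\E_{z\sim\normal}[\beta(z)h(z)]$; combining this with the two inequality constraints of \Cref{eq:primal-LP-2} (namely $\beta\,\E_{z\sim\normal}[T(z)]\le\E_{z\sim\normal}[\beta(z)T(z)]$ and $\E_{z\sim\normal}[\beta(z)h(z)]\le\E_{z\sim\normal}[h(z)]$) yields $\beta\,\E_{z\sim\normal}[T(z)]-\E_{z\sim\normal}[h(z)]\le 0$, contradicting the dual constraint. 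Hence primal feasibility always implies dual infeasibility.

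For the converse I would first observe that under the stated hypothesis \Cref{eq:primal-LP-3} is already infeasible: writing any dual-feasible triple as $h=(fp)^- + g$, $T=(fp)^+ + g$ with $g=\min(h,T)\ge 0\in L^1$, a one-line computation gives $\beta\,\E_{z\sim\normal}[T(z)]-\E_{z\sim\normal}[h(z)] = \big(\beta\,\E_{z\sim\normal}[(f(z)p(z))^+]-\E_{z\sim\normal}[(f(z)p(z))^-]\big) - (1-\beta)\E_{z\sim\normal}[g(z)] \le \beta\,\E_{z\sim\normal}[(f(z)p(z))^+]-\E_{z\sim\normal}[(f(z)p(z))^-]\le 0$, so no dual-feasible triple can exist. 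Thus the right-hand side of the claimed ``iff'' holds automatically under the hypothesis, and it only remains to prove that \Cref{eq:primal-LP-2} itself is feasible.

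To establish primal feasibility I would invoke the equivalence with \Cref{eq:primal-LP-1} (the preceding Claim), so that it suffices to produce $\beta(\cdot)\in L^\infty(\R)$ with $\beta\le\beta(z)\le 1$ a.e.\ and $\E_{z\sim\normal}[f(z)p(z)\beta(z)]=0$ for all $p\in\mathcal{P}_k^0$. Fix a basis $p_1,\dots,p_k$ of $\mathcal{P}_k^0$, let $\mathcal I=\{g\in L^\infty(\R):\ \beta\le g\le 1\ \text{a.e.}\}$, and let $\Phi(g)=\big(\E_{z\sim\normal}[f(z)p_i(z)g(z)]\big)_{i=1}^k\in\R^k$. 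The set $\mathcal I$ is convex and weak-$*$ compact in $L^\infty=(L^1)^*$ by Banach--Alaoglu, and each coordinate of $\Phi$ is the pairing against the fixed $L^1$ function $f\,p_i\,\phi$ (integrable since a polynomial times a Gaussian is integrable and $f$ is bounded), hence weak-$*$ continuous; therefore $K=\Phi(\mathcal I)$ is a convex compact subset of $\R^k$. If $0\notin K$ I would separate strictly, obtaining $c\in\R^k$ with $\inf_{v\in K} c\cdot v>0$; setting $p=\sum_i c_i p_i\in\mathcal{P}_k^0$ this reads $\E_{z\sim\normal}[f(z)p(z)g(z)]>0$ for every $g\in\mathcal I$. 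Minimizing the left side over $g\in\mathcal I$ pointwise (take $g=\beta$ where $f(z)p(z)>0$ and $g=1$ where $f(z)p(z)<0$) gives $\beta\,\E_{z\sim\normal}[(f(z)p(z))^+]-\E_{z\sim\normal}[(f(z)p(z))^-]>0$, contradicting the hypothesis. Hence $0\in K$, and the corresponding $g\in\mathcal I$ is precisely a feasible point of \Cref{eq:primal-LP-1}, hence of \Cref{eq:primal-LP-2}.

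The hard part will be the compactness/closedness step: one must ensure the attainable-moment set $K$ is genuinely closed, which here is handled cleanly by the weak-$*$ compactness of the order interval $\mathcal I$ together with the weak-$*$ continuity of the moment functionals --- both resting on $f\,p_i\,\phi\in L^1$. Once that is in place, the finite dimensionality of $\mathcal{P}_k^0$ makes the separating-hyperplane argument entirely elementary, and the identity $\min_{g\in\mathcal I}\E_{z\sim\normal}[f(z)p(z)g(z)]=\beta\,\E_{z\sim\normal}[(f(z)p(z))^+]-\E_{z\sim\normal}[(f(z)p(z))^-]$ is just the pointwise optimization of a linear integrand over the box $[\beta,1]$.
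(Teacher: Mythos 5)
Your proposal is correct, but it takes a genuinely different route from the paper. The paper proves the equivalence by invoking an infinite-dimensional theorem of the alternative (Fan's theorem) applied to the convex cone generated by the constraints, and the hypothesis is used there to show that this cone is closed: along any sequence $(p_n,h_n,T_n,\lambda_n)$ one bounds $\|p_n\|_1$ via a uniform gap $\|(pf)^-\|_1-\beta\|(pf)^+\|_1\geq c\|p\|_1$ (obtained by compactness of the unit sphere of ${\cal P}_k^0$), extracts limits, and then reads off the equivalence from $(0,1)\notin{\cal S}_+'$. You instead make the biconditional degenerate: weak duality (your first paragraph) shows primal feasibility always implies dual infeasibility; the decomposition $h=(fp)^-+g$, $T=(fp)^++g$ shows \Cref{eq:primal-LP-3} is infeasible outright under the hypothesis; and you then prove \Cref{eq:primal-LP-2} is feasible directly, via weak-$*$ compactness of the order interval $\{g:\beta\le g\le 1\}$ in $L^\infty$ (Banach--Alaoglu plus weak-$*$ closedness of the a.e.\ constraints), weak-$*$ continuity of the finitely many moment functionals $g\mapsto\E[f p_i g]$ (since $fp_i\phi\in L^1$), and a separating-hyperplane argument in $\R^{\dim{\cal P}_k^0}$, with the hypothesis entering only through the exact pointwise-minimization identity $\min_{g\in[\beta,1]}\E[f p\, g]=\beta\E[(fp)^+]-\E[(fp)^-]$ (which also correctly rules out the degenerate $p=0$, since separation forces a strictly positive value). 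This is logically a complete proof of the stated lemma, and in fact it directly yields the downstream content (feasibility of \Cref{eq:primal-LP-1}, i.e., \Cref{lem:duality}) without routing through \Cref{eq:primal-LP-4}. What each approach buys: the paper's argument sits inside a general LP-duality framework but requires the delicate closedness verification in $L^1(\R)\times\R$; yours is more elementary and self-contained, trading Fan's theorem for Banach--Alaoglu plus finite-dimensional separation, which is available precisely because the equality constraints range over the finite-dimensional space ${\cal P}_k^0$.
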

  \begin{proof}
         First we introduce some notation.
        We use $(\tilde{h}, c)$ for the inequality $\E_{z \sim \normal}[\beta(z) \tilde{h}(z)] +c\leq 0$,
        where $\tilde{h}\in L^1(\R)$ and $c\in \R$. Moreover, let $\cal S$ be the set that contains all such tuples that describe the target system. For the set $\cal S$, the closed convex cone over $L^1(\R)\times R$ is the smallest closed set ${\cal S}_+$ satisfying, if $A\in {\cal S}_+$ and $B\in {\cal S}_+$ then $A+B \in {\cal S}_+$ and, if $A\in {\cal S}_+$ then $\lambda A \in {\cal S}_+$ for all $\lambda\geq 0$. Note that the ${\cal S}_+$ contains the same feasible solutions as $\cal S$.
        In order to prove this, we need the following functional analysis result from \cite{Fan68}.
        \begin{fact}[Theorem 1 of \cite{Fan68}]\label{fct:fan}
            If $\cal X$ is a locally convex, real separated vector space then, a linear system described by $\cal S$ is feasible (i.e., there exists a $g\in {\cal X}^*$) if and only if $(0,1)\not\in {\cal S}_+$.
        \end{fact}
        Our LP is defined by the following inequalities: $(pf,0)$ for $p\in {\cal P}_{k}^0$, $(h,-\|h\|_1)$ for all $h\in L_+^1(\R)$, $(-T,\beta\|T\|_1)$ for all $T\in L_+^1(\R)$. By taking the convex cone defined from the above inequalities, we get the following set
        $${\cal S}_+'=\{(pf+h-T,-\|h\|_1+\beta \|T\|_1):p\in{\cal P}_{k}^0,h\in L_+^1(\R),T\in L_+^1(\R)\}\;.$$  We have the following lemma.
        \begin{lemma}\label{lem:feasibility-app}
            If  there is no polynomial $p\in \mathcal P_k^0$ such that $\beta\|(pf)^+ \|_1\geq \|(pf)^-\|_1 $ then, the $LP$ described by $\cal S$ is feasible if only if $(0,1)\not\in {\cal S}_+'$.
        \end{lemma}
        \begin{proof}
            This proof follows from \Cref{fct:fan} by showing that ${\cal S}_+'$ is closed, i.e., 
            ${\cal S}_+' = {\cal S}_+$.
             Let $F = fp + h -T$ and $\lambda \geq \|h\|_1-\beta \|T\|_1$, therefore the above LP is equivalent to 
             $\E_{z\sim \normal}[\beta(z)F(z)]\leq \lambda$. 
             Let $(p_n,h_n,T_n,\lambda_n)$ be a sequence that satisfies our constraints, i.e., $(fp_n+h_n-T_n,\lambda_n)\in S_+'$, with $F_n=fp_n+h_n-T_n$ converging with respect the $\ell_1$-norm to $F_0$ and $\lambda_n$ converging to $\lambda_0$.

             Note that $\lambda_n\geq \| (F_n - (fp_n))^+ \|_1 -\beta \|(F_n - (fp_n))^-  \|_1$. This holds because $h_n$ and $T_n$ are positive functions and $F_n -fp_n=h_n-T_n$, therefore $h_n=(F_n - (fp_n))^+$ and
             $T_n= (F_n - (fp_n))^-$. We have that $$ \| (F_n - (fp_n))^+ \|_1 -\beta \|(F_n - (fp_n))^-  \|_1\geq -2 \|F_n\|_1 + \|(fp_n)^-\|_1-\beta \|(fp_n)^+\|_1\;.$$
            Note that from our assumption we have that for any polynomial $p\in \mathcal P_k^0$, it holds
            $\|(pf)^-\|_1 -\beta\|(pf)^+ \|_1>0$. Because this is homogeneous with respect $p$, we can assume that $\|p\|_1=1$.  Therefore, using the fact that the set $\|p\|_1=1$ and $\mathcal P_k^0$ is compact, we have that all the limits are inside $\mathcal P_k^0$, thus there exists a $c>0$, such that  $\|(pf)^-\|_1 -\beta\|(pf)^+ \|_1>c$. Therefore, it holds   $\|(pf)^-\|_1 -\beta\|(pf)^+ \|_1>c\|p\|_1$.

            From the above, we have that
            $\lambda_n +2\|F_n\|_1\geq c \|p_n\|_1\;,$
            and because $F_n\rightarrow F_0$ and $\lambda_n\rightarrow \lambda_0$, that means that $\|p_n\|_1$ is bounded.
             Since, an $L^1$ ball in $\mathcal P_k^0$ is compact, there is a subsequence such that  $p_n\rightarrow p_0$.
        	 By setting $h_0=(F_0-(fp_0))^+$ and $T_0=(F_0-(fp_0))^-$, we find appropriate $p_0,h_0,T_0$ that give the appropriate limit points. Therefore, the set $\mathcal S_+'$ is closed and the lemma follows from \Cref{fct:fan}.
        \end{proof}
        The proof of \Cref{lem:feasibility}, follows from \Cref{lem:feasibility-app}, by noting that $(0,1)\not\in {\cal S}_+'$ is equivalent to the infeasibility of \Cref{eq:primal-LP-3}.
\end{proof}

\end{document}